%% Template for the submission to:
%%   Bernoulli [BJ]
%%
%%%%%%%%%%%%%%%%%%%%%%%%%%%%%%%%%%%%%%%%%%%%%%
%% In this template, the places where you   %%
%% need to fill in your information are     %%
%% indicated by '???'.                      %%
%%                                          %%
%% Please do not use \input{...} to include %%
%% other tex files. Submit your LaTeX       %%
%% manuscript as one .tex document.         %%
%%%%%%%%%%%%%%%%%%%%%%%%%%%%%%%%%%%%%%%%%%%%%%

%\documentclass[bj,numbers]{imsart}% uncomment this for numbers citation
\documentclass[noshowframe,bj,authoryear]{imsart}

%% Packages
\usepackage[colorlinks, linkcolor=blue, citecolor=blue]{hyperref} 
\usepackage{amsfonts,amssymb,amsmath,amsthm}
\usepackage{url}            % simple URL typesetting
\usepackage[dvipsnames]{xcolor}
\usepackage{graphicx, tikz}
\usepackage{natbib}
\usepackage{enumitem}
\usepackage{float}
\usepackage{comment}
%\usepackage{tabularray}
%\UseTblrLibrary{booktabs}
%\usepackage{nicematrix}

\usepackage{booktabs}
    \aboverulesep=0ex
    \belowrulesep=0ex
    
\usepackage{subcaption}

%below is for strikeout

\usepackage{xcolor,cancel}

\usepackage{pgfplots}
\pgfplotsset{compat=1.18}
\usetikzlibrary{pgfplots.fillbetween}

\startlocaldefs
%%%%%%%%%%%%%%%%%%%%%%%%%%%%%%%%%%%%%%%%%%%%%%
%%                                          %%
%% Uncomment next line to change            %%
%% the type of equation numbering           %%
%%                                          %%
%%%%%%%%%%%%%%%%%%%%%%%%%%%%%%%%%%%%%%%%%%%%%%
%\numberwithin{equation}{section}
%%%%%%%%%%%%%%%%%%%%%%%%%%%%%%%%%%%%%%%%%%%%%%
%%                                          %%
%% For Axiom, Claim, Corollary, Hypothesis, %%
%% Lemma, Theorem, Proposition~             %%
%% use \theoremstyle{plain}                 %%
%%                                          %%
%%%%%%%%%%%%%%%%%%%%%%%%%%%%%%%%%%%%%%%%%%%%%%
\theoremstyle{plain}
\newtheorem{thm}{Theorem}
\numberwithin{thm}{section}
\newtheorem{lemm}[thm]{Lemma}

\newtheorem{cor}[thm]{Corollary}
%%%%%%%%%%%%%%%%%%%%%%%%%%%%%%%%%%%%%%%%%%%%%%
%%                                          %%
%% For Assumption, Definition, Example,     %%
%% Notation, Property, Remark, Fact         %%
%% use \theoremstyle{remark}                %%
%%                                          %%
%%%%%%%%%%%%%%%%%%%%%%%%%%%%%%%%%%%%%%%%%%%%%%
\theoremstyle{remark}
\newtheorem*{rmk}{Remark}
\newtheorem{defn}[thm]{Definition} 

\newtheorem*{asm}{Assumption}
%\newtheorem{???}{???}
%\newtheorem*{???}{???}
%\newtheorem{???}{???}[???]
%\newtheorem{???}[???]{???}
%%%%%%%%%%%%%%%%%%%%%%%%%%%%%%%%%%%%%%%%%%%%%%
%% Please put your definitions here:        %%
%%%%%%%%%%%%%%%%%%%%%%%%%%%%%%%%%%%%%%%%%%%%%%

\DeclareMathOperator*{\argmin}{arg\,min}

% Some symbols
\newcommand{\E}{\mathbb{E}}

\renewcommand{\P}{\mathbb{P}}
\newcommand{\R}{\mathbb{R}}
\newcommand{\<}{\langle}
\renewcommand{\>}{\rangle}
\newcommand{\Tr}{{\rm tr}}
\newcommand{\Id}{\mathrm{Id}}
\newcommand{\eps}{\varepsilon}
\newcommand{\n}{\textsc{n}}
\newcommand{\cs}{\textsc{c}}
\newcommand{\Var}{{\rm Var}}
\newcommand{\bs}[1]{\boldsymbol{#1}}

\endlocaldefs
\begin{document}

\begin{frontmatter}
%%%%%%%%%%%%%%%%%%%%%%%%%%%%%%%%%%%%%%%%%%%%%%
%%                                          %%
%% Enter the title of your article here     %%
%%                                          %%
%%%%%%%%%%%%%%%%%%%%%%%%%%%%%%%%%%%%%%%%%%%%%%
\title{Universality of Benign Overfitting in Binary Linear Classification}
%\title{A sample article title with some additional note\thanksref{T1}}
\runtitle{Universality of Benign Overfitting in Binary Linear Classification}
%\thankstext{T1}{A sample of additional note to the title.}

\begin{aug}
%%%%%%%%%%%%%%%%%%%%%%%%%%%%%%%%%%%%%%%%%%%%%%%
%% ORCID can be inserted by command:         %%
%% \orcid{0000-0000-0000-0000}               %%
%%%%%%%%%%%%%%%%%%%%%%%%%%%%%%%%%%%%%%%%%%%%%%%
\author[A]{\inits{IH}\fnms{Ichiro}~\snm{Hashimoto}\ead[label=e1]{ichiro.hashimoto@mail.utoronto.ca}}
\author[B]{\inits{SV}\fnms{Stanislav}~\snm{Volgushev}\ead[label=e2]{stanislav.volgushev@utoronto.ca}}
\author[B]{\inits{PZ}\fnms{Piotr}~\snm{Zwiernik}\ead[label=e3]{piotr.zwiernik@utoronto.ca}}
%%%%%%%%%%%%%%%%%%%%%%%%%%%%%%%%%%%%%%%%%%%%%%
%% Addresses                                %%
%%%%%%%%%%%%%%%%%%%%%%%%%%%%%%%%%%%%%%%%%%%%%%
\address[A]{Department of Statistical Sciences, University of Toronto\printead[presep={,\ }]{e1}}

\address[B]{Department of Statistical Sciences, University of Toronto\printead[presep={,\ }]{e2,e3}}
\end{aug}

\begin{abstract}
The practical success of deep learning has led to the discovery of several surprising phenomena. One of these phenomena, that has spurred intense theoretical research, is ``benign overfitting'': deep neural networks seem to generalize well in the over-parametrized regime even though the networks show a perfect fit to noisy training data. It is now known that benign overfitting also occurs in various classical statistical models. For linear maximum margin classifiers, benign overfitting has been established theoretically in a class of mixture models with very strong assumptions on the covariate distribution. However, even in this simple setting, many questions remain open. For instance, most of the existing literature focuses on the noiseless case where all true class labels are observed without errors, whereas the more interesting noisy case remains poorly understood. We provide a comprehensive study of benign overfitting for linear maximum margin classifiers. We discover a phase transition in test error bounds for the noisy model which was previously unknown and provide some geometric intuition behind it. We further considerably relax the required covariate assumptions in both the noisy and noiseless cases. Our results demonstrate that benign overfitting of maximum margin classifiers holds in a much wider range of scenarios than was previously known and provide new insights into the underlying mechanisms.
\end{abstract}

\begin{keyword}
\kwd{Benign overfitting}
\kwd{binary classification}
\kwd{double descent}
\kwd{logistic regression}
\kwd{over-parametrization}
\end{keyword}

\end{frontmatter}

%%%%%%%%%%%%%%%%%%%%%%%%%%%%%%%%%%%%%%%%%%%%%%
%%%% Main text entry area:

\addtocontents{toc}{\protect\setcounter{tocdepth}{-100}} 

\section{Introduction}

The practical success of deep learning has led to the discovery of surprising theoretical phenomena. One such phenomenon, which has spurred intense theoretical research, occurs in the over-parameterized regime, where over-parameterized neural network models can achieve arbitrarily small test errors while perfectly interpolating training data—even in the presence of label noise (\cite{zhang2017understanding}, \cite{doi:10.1073/pnas.1903070116}). This observation, often referred to as benign overfitting or double descent, is surprising because it appears to conflict with the classical statistical understanding that there should be a trade-off between fitting the training data and generalization performance on the test data.

Although benign overfitting/double descent originated from empirical observations on deep learning models, it is now known to occur in several classical statistical models, such as linear regression ( \cite{bartlett2020benign}, \cite{9051968}, \cite{https://doi.org/10.1002/cpa.22008}, \cite{Hastie:2022aa}), ridge regression (\cite{10.5555/3648699.3648822}), and kernel-based estimators (\cite{Liang_2020}, \cite{pmlr-v125-liang20a}), to name a few. Benign overfitting has also been observed in binary linear classification (\cite{JMLR:v22:20-974,wang2022binary,NEURIPS2021_46e0eae7}), which will be the focus of our work.

\smallskip

\textbf{Benign overfitting in linear binary classification: existing results.} Existing results on benign overfitting for binary linear classification focus on a specific class of models. The observations $(\boldsymbol{x}_i, y_{\n,i}), i = 1,\dots, n$ are generated from (sub-)Gaussian mixtures with means $\boldsymbol{\mu}$ and $-\boldsymbol{\mu}$ for a deterministic vector $\boldsymbol{\mu}\in \R^p$. Specifically, let $y$ be a Rademacher variable with $\P(y=-1)=\P(y=1)=\tfrac12$, and let $\boldsymbol{\xi}=(\xi_1,\ldots,\xi_p)\in \R^p$ be a random vector  with independent (sub-)Gaussian entries ${\xi}_j$, \( j=1,\dots,p \), satisfying $\E[{\xi}_j] = 0, \Var({\xi}_j) = 1$. We also define $y_{\n}\in \{-1,1\}$ as a noisy version of $y$, with $\P(y_\n \neq y)=\eta$, where $\eta \in [0, 1/2)$. For a $p\times p$ positive definite matrix $\Sigma$, consider the model where the inputs consist of $n$ i.i.d. copies $(\boldsymbol{x}_i,  y_{\n, i}), i = 1,\dots, n$ of the pair $(\boldsymbol{x}, y_\n)$, where 
$$\boldsymbol{x} \;=\; y\boldsymbol{\mu} + \boldsymbol{z},\qquad \boldsymbol{z} = U\Lambda^{1/2}\boldsymbol{\xi}$$
with $\Sigma = U \Lambda U^\top$, $\Lambda = \mathrm{diag}\{\lambda_1, \lambda_2, \ldots, \lambda_p\}, \lambda_1\geq \lambda_2 \geq \ldots \geq \lambda_p$, and $U\in \R^{p\times p}$ is an orthonormal matrix consisting of eigenvectors of $\Sigma$. The model is called noisy when $\eta >0$ and noiseless when $\eta = 0$.

In this specific setting, \cite{JMLR:v22:20-974} studied both the noiseless and noisy model with sub-Gaussian mixtures under rather strong assumptions: they are limited to the nearly isotropic case $\Tr(\Sigma) \asymp p$, $\|\Sigma\|\asymp 1$ and only consider the small signal regime $\|\bs \mu\|\ll p/n$. In this specific setting, they established a test error bound in binary linear classification and obtained a sufficient condition for benign overfitting to occur showing that the test error approaches the noise rate $\eta$. 
 
This result was extended to more general anisotropic mixtures by \cite{wang2022binary} and \cite{NEURIPS2021_46e0eae7} for the noiseless model by relaxing assumptions and improving the test error bound. While \cite{wang2022binary} focused on Gaussian mixtures, their results not only include anisotropic mixtures for the noiseless model but also extend to the regime $\Tr(\Sigma)/n \ll \|\boldsymbol{\mu}\|^2$. Their work also strengthens the result of \cite{JMLR:v22:20-974} for the noisy model, although it remains limited to isotropic mixtures and the regime $\Tr(\Sigma)/n \gg \|\boldsymbol{\mu}\|^2$. \cite{NEURIPS2021_46e0eae7} extended these results to anisotropic sub-Gaussian mixtures but only for the noiseless model. Overall, the noisy model remains poorly studied compared to the noiseless model. The existing results are summarized in Table~\ref{summary-table}. 

A striking feature of this table is that benign overfitting is least well understood in the stronger signal regime. Intuitively, it seems that stronger signals make benign overfitting more likely and easier to occur. However, a closer look at the underlying geometry confirms this intuition only in the noiseless model, see Figure~\ref{fig:geom}. At the same time, noisy data are much more realistic in practice. For overfitting to be truly benign, it is reasonable to expect that it holds across a wide range of scenarios and does not break in cases which should be 'easy' due to a stronger signal. This underlines the importance of studying the strong signal regime, especially in the noisy model where geometric considerations do not provide a simple intuition on what to expect.

\begin{table}
\caption{Summary of existing studies.}
\label{summary-table}
\begin{center}
\resizebox{\textwidth}{!}{%
\begin{tabular}{c|c|c|c}
& \multicolumn{3}{c}{$\Sigma = $ Id}
\\ \toprule
& Small signal & Intermediate signal & Large signal
\\ 
& $\|\bs \mu\|^2 \ll p/n$ & $p/n \ll \|\boldsymbol{\mu}\|^2 \lesssim p^2/(n^2\log n)$ & $\max\{p/n, p^2/(n^2\log n)\} \ll \|\boldsymbol{\mu}\|^2$ \\ \midrule
noiseless     & \cite{JMLR:v22:20-974}, & \cite{wang2022binary},  & NA
\\
    & \cite{wang2022binary},  & \cite{NEURIPS2021_46e0eae7} &
\\
    & \cite{NEURIPS2021_46e0eae7}   & &
\\ \midrule
noisy     & \cite{JMLR:v22:20-974}, & NA  & NA
\\
    & \cite{wang2022binary} &  &
\\
\toprule
& \multicolumn{3}{c}{$\Sigma$ arbitrary}
\\
\toprule
& $\|\boldsymbol{\mu}\|^2 \ll \Tr(\Sigma)/n$ 
& $\Tr(\Sigma)/n \ll \|\boldsymbol{\mu}\|^2$ and 
& $\Tr(\Sigma)/n \ll \|\boldsymbol{\mu}\|^2$ and
\\
& & $ \|\boldsymbol{\mu}\|_\Sigma^2 \lesssim \Tr(\Sigma)^2/(n^2\log (n))$
& $ \Tr(\Sigma)^2/(n^2\log (n)) \ll \|\boldsymbol{\mu}\|_\Sigma^2$
\\ \midrule
noiseless    & \cite{JMLR:v22:20-974}, & \cite{wang2022binary}, & NA
\\
    & \cite{wang2022binary}, & \cite{NEURIPS2021_46e0eae7} &
\\
    & \cite{NEURIPS2021_46e0eae7} &  & 
\\ 
\midrule
noisy    & \cite{JMLR:v22:20-974} & NA & NA 
\\
& ($\Tr(\Sigma) \asymp p$, $\|\Sigma\|\asymp 1$) & & 
\\
\bottomrule
\end{tabular}
}
\end{center}
\end{table}

\smallskip

\textbf{Overview of our main contributions.} As is evident from the discussion of existing literature, benign overfitting has been established for binary linear classification in rather specialized settings. 

In the noisy setting, it is unclear whether benign overfitting is limited to sub-Gaussian, nearly isotropic features, small signals, or if it holds more widely. Even in the noiseless setting, all of the existing literature considers sub-Gaussian predictors. In addition to sub-Gaussianity of the predictors, all models considered in the existing literature lead to predictors with roughly equal norms. Again, it is unclear to what extent this model feature is crucial for benign overfitting or whether it is just a by-product of the specific model formulation. 

Our main contribution is to show that benign overfitting for binary classification is indeed more universal than existing results seem to suggest. Specifically, we show that mild moment conditions on the predictors are sufficient, that equal norms of the features are not important, and that benign overfitting can be observed in all cases which are left open in Table~\ref{summary-table}. In the isotropic case ($\Sigma = I_p$), we show that benign overfitting occurs as soon as the dimension is sufficiently large relative to the sample size under very mild assumptions on the signal strength. In the anisotropic case, the dimension is replaced by the proxy $\Tr(\Sigma)/\|\Sigma\|_{\textsc{f}}$. For the noiseless model, an interesting phase transition in the test error bound was observed by \cite{wang2022binary, NEURIPS2021_46e0eae7}. Our work shows that a similar phase transition also occurs for the noisy model. We also discover differences in the test error behavior between the noiseless and noisy model in the intermediate and strong signal regime. We further illustrate this difference from a geometric perspective. 

The remaining paper is organized as follows. In Section~\ref{sec:setting} we introduce the models and notation considered in this paper. Our main results are given in Section~\ref{sec:main} where we first provide new results in the noiseless case (Section~\ref{subsec:main-noiseless}) and later move on to the noisy case (Section~\ref{subsec:main-noisy}). Section~\ref{sec:sketch} contains a proof outline of our main results. Additional geometric intuition is given in Section~\ref{geometry}. In Appendix~\ref{sec:recoveringSubGauss} we illustrate how our theory can be used to recover some of the existing results in the literature. Detailed proofs are given in the supplement.

\section{Setting and Notations}\label{sec:setting}

Our first set of results will be for the following general model.

\begin{enumerate}[label=(M),ref=(M)]
\item The observations consist of $n$ i.i.d. copies $(\boldsymbol{x}_i, y_{\n,i})$ of the pair $(\bs x,y_\n)$. Here for a random variable $y\in \{-1,1\}$ satisfying $P(y = 1) = P(y = -1) = 1/2$, a random vector $\bs z \in \R^p$ independent of $y$, and a deterministic $\bs \mu \in \R^p$ we have \label{model:M}
\begin{equation}\label{eq:themodel}
\boldsymbol{x} \;=\; y\boldsymbol{\mu} + \boldsymbol{z}.    
\end{equation}
For $\eta \in [0, 1/2)$, generate $y_\n = - y$ w.p. $\eta$ and $y_\n = y$ w.p. $1-\eta$. 
\end{enumerate}

The Bayes risk (with the $0/1$-loss) in this model is at least the noise level $\eta$. We call this model \textit{noiseless} when $\eta = 0$. Under model \ref{model:M}, we will provide high-level conditions which ensure benign overfitting. Those high-level results will be further verified in the following model.

\begin{enumerate}[label=(EM), ref=(EM)]
\item The model \ref{model:M} is called the extended non-sub-Gaussian mixture model if $\boldsymbol{z} = gW \boldsymbol{\xi}$, where: \label{model:EM}
\begin{enumerate}
    \item [(i)] $g\in (0, \infty)$ is a random variable such that,  for some $\ell \in [2, \infty]$ and $k\in (2, 4]$, 
\begin{equation}\label{eq:condsg}
 \E g^2 = 1,\qquad  \E g^\ell < \infty,\qquad \E g^{-k} < \infty,
 \end{equation}
 \item [(ii)] $W \in \R^{p\times p}$, and $WW^\top = \Sigma \in \R^{p\times p} \setminus \{0\}$ is a deterministic positive semidefinite matrix,
 \item [(iii)] $\boldsymbol{\xi}=(\xi_1,\ldots,\xi_p) \in \R^p$ is a random vector with independent entries, independent of $g$ such that, for some  $r \in (2, 4]$ and $K>0$, 
\begin{equation}\label{eq:condsxi}
\E\xi_{j} = 0, \quad \E \xi_{j}^2 = 1,\quad  \E |\xi_{j}|^r \leq K, \quad \forall j=1,\dots,p.
\end{equation} 
\end{enumerate}
\end{enumerate}
In particular, $
{\rm cov}(\bs z)=\Sigma$. When $g\equiv 1$, this model incorporates as special cases several models that were considered in the literature. \cite{JMLR:v22:20-974}, \cite{NEURIPS2021_46e0eae7}, and \cite{minsker2025classification} studied the model when $\boldsymbol{z} = U\Lambda^{1/2} \boldsymbol{\xi}$ where $\Sigma = U \Lambda U^\top$, $\Lambda = \mathrm{diag}\{\lambda_1, \lambda_2, \ldots, \lambda_p\}, \lambda_1\geq \lambda_2 \geq \ldots \geq \lambda_p$, $U\in \R^{p\times p}$ is an orthonormal matrix consisting of eigenvectors of $\Sigma$  
and $\boldsymbol{\xi} \in \R^p$ is a (sub-)Gaussian vector with independent entries, $\E\boldsymbol{\xi}=0$, ${\rm Var}(\boldsymbol{\xi})=I_p$. This special structure simplifies some of the theoretical arguments. In contrast, we consider general matrices $W$ which include the standard choice $W = U\Lambda^{1/2}U^\top$. \cite{wang2022binary} studied this model when $\bs z \sim \mathcal{N}(0, \Sigma)$. 

A classifier is any mapping from the input space $\R^p$ to $\{-1,1\}$. Here we focus on linear classifiers of the form ${\rm sgn}(\<\bs w,\bs x\>)$ for some $\bs w\in \R^p$. Below we also call a classifier any method of choosing $\bs w$ based on the data. 

Similarly to the existing literature, we will only discuss homogeneous classifiers. However, some of our results are sufficiently general to include inhomogeneous classifiers after extending the feature vectors $\bs x_i$ through $\tilde{\bs x}_i = (\bs x_i,1)$. The formulation of model~\ref{model:M} is sufficiently general to accommodate such an extension, while previously considered models of the form \ref{model:EM} do not allow for vectors with constant non-zero entries. For the sake of brevity, we do not work out the details of such an extension in the remaining part of this paper.

 We say that a classifier exhibits \textit{benign overfitting} if it interpolates the training data with high probability and if its classification error converges to the Bayes risk in a suitable sense. 

A natural way to train the classifier is by minimizing the logistic loss:
\[
L(\boldsymbol{w}) \;=\; \frac{1}{n}\sum_{i=1}^n \log \{1 + \exp(-\langle \boldsymbol{w}, y_{\n,i}\boldsymbol{x}_i \rangle)\}.
\]
Numerical optimization is often done via gradient descent:
\begin{equation}\label{eq:gdsc}
\boldsymbol{w}_{t+1} = \boldsymbol{w}_t - a \nabla_{\!\boldsymbol{w}} L(\boldsymbol{w}_t), \quad \boldsymbol{w}_0 =0, \quad t = 0, 1, 2, \dots.     
\end{equation}

Our theoretical analysis will focus on the \emph{maximum margin classifier} $\boldsymbol{\hat w}$, that is the solution to the hard-margin support vector machine:
\begin{equation}\label{eq:mmc}
\boldsymbol{\hat w} \;=\; \argmin\|\boldsymbol{w}\|^2, \qquad \text{ subject to $\langle \boldsymbol{w}, y_{\n, i}\boldsymbol{x}_i \rangle \geq 1$\;\; for all $i= 1, 2, \dots, n$.}    
\end{equation}
This is motivated by the following result which establishes implicit bias (directional convergence) induced by the gradient descent \eqref{eq:gdsc}.
\begin{thm}[Theorem~3, \cite{soudry2022implicit}]\label{th:soudry}
    When the dataset is linearly separable ($\exists \bs w\in \mathbb{R}^p$ such that $\langle \boldsymbol{w}, y_i\boldsymbol{x}_i\rangle >0$ for all $i$), the linear classifier optimized by gradient descent \eqref{eq:gdsc},  with sufficiently small step size $a$,  converges in direction to the maximum margin classifier, that is
\[
\lim_{t\to \infty} \frac{\boldsymbol{w}_t}{\|\boldsymbol{w}_t\|} \;=\; \frac{\boldsymbol{\hat w}}{\|\boldsymbol{\hat w}\|}. 
\]
\end{thm}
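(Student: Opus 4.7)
The plan is to exploit the key structural features of the logistic loss, namely that $\log(1+e^{-u}) \sim e^{-u}$ as $u \to \infty$, together with the KKT characterization of the max-margin classifier. I would first establish that linear separability forces $L(\bs w_t) \to 0$ and $\|\bs w_t\| \to \infty$. Since the negative logistic loss is convex with Lipschitz gradient (with constant bounded by $\tfrac{1}{4n}\sum_i \|\bs x_i\|^2$), a standard descent-lemma argument shows that for small enough step size $a$, $L(\bs w_{t+1}) \leq L(\bs w_t) - \tfrac{a}{2}\|\nabla L(\bs w_t)\|^2$, hence $L(\bs w_t)$ is monotone decreasing and $\sum_t \|\nabla L(\bs w_t)\|^2 < \infty$. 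Separability implies $\inf_{\bs w} L(\bs w) = 0$ but the infimum is not attained, so iterates must diverge in norm while the loss tends to zero; this already implies asymptotically every training point is classified correctly with margin tending to infinity.

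Next I would make the ansatz $\bs w_t = (\log t)\, \bs{\hat w} + \bs \rho(t)$ and aim to show $\|\bs \rho(t)\| = o(\log t)$, which implies the direction converges to $\bs{\hat w}/\|\bs{\hat w}\|$. The motivation comes from analyzing the gradient: with $\sigma(u)=(1+e^{u})^{-1}$,
\begin{equation*}
-\nabla L(\bs w) \;=\; \frac{1}{n}\sum_{i=1}^n \sigma\!\left(\langle \bs w, y_i \bs x_i\rangle\right) y_i \bs x_i.
\end{equation*}
Once all margins $\langle \bs w_t, y_i \bs x_i\rangle$ are large, $\sigma$ is well-approximated by the exponential, and the contribution of any non-support vector is exponentially smaller than that of a support vector in the $\bs{\hat w}$ direction. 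The KKT conditions for \eqref{eq:mmc} give $\bs{\hat w} = \sum_{i\in S} \alpha_i y_i \bs x_i$ with $\alpha_i \geq 0$ supported on the set $S$ of points attaining margin $1$ with $\bs{\hat w}$. Plugging the ansatz in and keeping leading-order terms one finds
\begin{equation*}
\sigma\!\left(\langle \bs w_t, y_i\bs x_i\rangle\right) \;\approx\; e^{-(\log t)\langle \bs{\hat w}, y_i \bs x_i\rangle}\, e^{-\langle \bs \rho(t), y_i \bs x_i\rangle} \;=\; t^{-\langle \bs{\hat w}, y_i\bs x_i\rangle}\, e^{-\langle \bs \rho(t), y_i \bs x_i\rangle},
\end{equation*}
so that for non-support vectors the factor $t^{-\langle \bs{\hat w}, y_i\bs x_i\rangle}$ is negligibly small relative to $t^{-1}$, while for support vectors the dominant $t^{-1}$ scaling matches the increment $\log(t+1) - \log t \sim 1/t$ of the leading term $\log(t)\bs{\hat w}$ in the ansatz. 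Choosing $\bs \rho(t)$ to absorb the discrepancy through a fixed-point-style relation involving $\alpha_i \propto e^{-\langle \bs \rho_\infty, y_i \bs x_i\rangle}$ matches the KKT multipliers to the residual, and this consistency is what pins $\bs\rho(t)$ to a bounded limit.

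Carrying out the above program, the main obstacle is showing rigorously that $\bs \rho(t)$ remains bounded (or grows at rate $o(\log t)$). This requires (i) controlling the error in the approximation $\log(1+e^{-u})\approx e^{-u}$ uniformly along the trajectory, (ii) proving that non-support vectors truly contribute negligibly, which typically rests on a separation assumption $\min_{i\notin S}\langle \bs{\hat w}, y_i\bs x_i\rangle > 1$ guaranteed (for generic data) by strong duality, and (iii) showing that the implicit fixed-point defining $\bs \rho_\infty$ is well-posed and attractive for the discrete dynamics. Once $\|\bs \rho(t)\|/\log t \to 0$ is established, dividing $\bs w_t$ by $\|\bs w_t\|$ and using $\|\bs w_t\| = \log(t)\|\bs{\hat w}\| + O(1)$ yields the claimed directional convergence.
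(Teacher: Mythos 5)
This statement is not proved in the paper at all: it is imported verbatim as Theorem~3 of \cite{soudry2022implicit}, so there is no in-paper argument to compare your proposal against. Your outline does reproduce the strategy of the cited proof: a descent-lemma argument giving $L(\bs w_t)\downarrow 0$ and $\|\bs w_t\|\to\infty$, followed by the ansatz $\bs w_t=(\log t)\,\bs{\hat w}+\bs\rho(t)$, with the exponential tail of the logistic loss and the KKT representation $\bs{\hat w}=\sum_{i\in S}\alpha_i y_i\bs x_i$ driving the residual analysis. Your gradient formula and smoothness bound are correct.

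As a proof, however, there is a genuine gap, and it is the one you flag yourself: the entire technical content of the theorem is the bound $\|\bs\rho(t)\|=O(1)$ (or at least $o(\log t)$), and your proposal only explains why such a bound is plausible, not why it holds. The observation that the $t^{-1}$ scaling of the support-vector terms matches the increment $\log(t+1)-\log t$ identifies the right candidate trajectory, but it does not control the accumulated error of the discrete dynamics around that trajectory; making this rigorous is precisely the bulk of Soudry et al.'s argument and requires, in the non-degenerate case, that the dual multipliers are strictly positive on $S$ and that the fixed-point system $\alpha_i\propto e^{-\langle\bs\rho_\infty,\,y_i\bs x_i\rangle}$ is solvable — conditions that can fail for degenerate data, where a separate argument is needed and $\bs\rho(t)$ may grow (slowly, e.g.\ like $\log\log t$) while directional convergence still holds. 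Two smaller repairs: the step from $\sum_t\|\nabla L(\bs w_t)\|^2<\infty$ to $L(\bs w_t)\to 0$ is not automatic — you need that separability yields $\langle -\nabla L(\bs w),\bs w^\ast\rangle\gtrsim\gamma\,L(\bs w)$ for a separator $\bs w^\ast$ with margin $\gamma$, so that a vanishing gradient forces a vanishing loss; and the separation $\min_{i\notin S}\langle\bs{\hat w},y_i\bs x_i\rangle>1$ holds simply because $S$ is defined as the set of active constraints at the optimum of \eqref{eq:mmc}, not by genericity or strong duality.
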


We will prove that, under certain conditions, datasets generated by model \eqref{eq:themodel} are
linearly separable with high probability, making $\boldsymbol{\hat w}$ an interesting object to study. We note that the same classifier was analyzed in the existing literature (\cite{JMLR:v22:20-974, wang2022binary, NEURIPS2021_46e0eae7}).

Analyzing the properties of $\bs{\hat w}$ is difficult in general due to the lack of a closed form expression. Several existing works show that, in many relevant cases, $\bs{\hat w}$ equals the minimum-norm least squares estimator:
\begin{equation}\label{eq:ls}
    \bs w_{\rm LS} \;=\; \argmin \|\bs w\| \quad \text{s.t. $\displaystyle \sum_{i=1}^n (\langle \bs w, \bs x_i \rangle - y_i)^2$ is minimized.}
\end{equation}  
This includes the work of \cite{hsu2021proliferation, wang2022binary, NEURIPS2021_46e0eae7, minsker2025classification}. We derive conditions which ensure this equivalence in models~\ref{model:M} and \ref{model:EM} and use it to study the classification error of $\hat{\bs w}$. Another part of our analysis relies on purely geometric arguments, and covers regimes where this equivalence can fail.

\vspace{5mm}

\noindent \textbf{Notations.}  We use the standard asymptotic notations $\lesssim$,  $\gtrsim$, and $\asymp$. For non-negative sequences $a_n$ and $b_n$, if there exists a constant $C>0$  such that $a_n \leq Cb_n$ for sufficiently large $n$, then we write $a_n \lesssim b_n$. If $b_n \lesssim a_n$, we write $a_n \gtrsim b_n$. If $a_n \lesssim b_n$ and $a_n \gtrsim b_n$, then we write $a_n \asymp b_n$. Note also that when any of these asymptotic notations are used, we assume the constant hidden is a universal constant unless explicitly stated otherwise. We write $a_n \ll b_n$ if $a_n/b_n \to 0$ and $a_n \gg b_n$ if $b_n \ll a_n$. The Euclidean sphere in $\R^p$ with center at the origin and radius $1$ is denoted by $S^{p-1}$.

For a vector $\bs v\in \R^p$, we use $\|\bs v\|$ to denote the Euclidean norm of \bs $v$. For a matrix $A$ we use $\|A\|$ to denote the spectral norm and $\|A\|_{\textsc{f}}$ to denote the Frobenius norm of $A$. For a vector $\bs v\in \R^p$ and a positive semidefinite matrix $A$ we use $\|\bs v\|_A$ to denote $\sqrt{\bs v^\top A \bs v}$.

We use $\P_{\boldsymbol{u}}$ to denote the conditional probability given all variables except $\boldsymbol{u}$.

\section{Main Results}\label{sec:main}

In this section, we present our main results on test error bounds and conditions for benign overfitting of the maximum margin linear classifier \eqref{eq:mmc}. Throughout, we use the following notation
\[
\boldsymbol{y} = (y_1, \ldots, y_n)^\top \in \R^n, \qquad \boldsymbol{y}_\n = (y_{\n, 1}, \ldots, y_{\n, n})^\top \in \R^n,  
\]
\[
Z = (\bs z_1, \ldots, \bs z_n)^\top \in \R^{n\times p}, \qquad X = \boldsymbol{y}\boldsymbol{\mu}^\top + Z \in \R^{n\times p}. 
\]

When $\bs z_i \in \R^p \setminus \{\bs 0\}$ for all $i=1,\ldots, n$, denote by $\Delta (z)$ an $n\times n$ diagonal matrix whose diagonal entries are the Euclidean norms $\|\boldsymbol{z}_i\|$ for $i = 1, \dots, n$. For any vector $\bs v\in \R^n$, let $\check{\boldsymbol v} = \Delta(z)^{-1} \boldsymbol{v}$. We also set 
\begin{equation}\label{eq:defcheck}
\check{\bs y} := \Delta(z)^{-1} \bs y, \qquad \check{\bs y}_\n := \Delta(z)^{-1} \bs y_\n,  \qquad \check Z = \Delta(z)^{-1} Z   \qquad\mbox{and  \qquad} \check X = \Delta(z)^{-1} X.
\end{equation}

When $\bs{\check v}, \check Z$, or $\check X$ is present, we always assume $\bs z_i \in \R^p \setminus \{\bs 0\}$ for all $i=1,\ldots, n$ without stating it explicitly.

The main approach to study the risk of the maximum margin classifier is through a careful analysis of the Gram matrix $ZZ^\top$. The results of \cite{wang2022binary, NEURIPS2021_46e0eae7} heavily relied on the fact that, in their special model, $ZZ^\top$ is approximately a multiple of the identity $I_n$ with high probability, which in turn means $\boldsymbol{z}_i$'s are near orthogonal and have equal norms. In our work, we show that the equal norms assumption is not needed and near orthogonality suffices. More precisely, our general results will be established under high level assumptions involving parameters from the following events
\begin{align}
E_1(\varepsilon) &:= \big\{\|\check Z\check Z^\top - I_n\|\leq \varepsilon\big\},\label{eq:E1}
\\
E_2(\alpha_2, \alpha_\infty) &:= \big\{\|\check Z\boldsymbol{\mu}\| \leq \alpha_2 \|\boldsymbol{\mu}\| \text{ and } \|\check Z \boldsymbol{\mu}\|_\infty \leq \alpha_\infty \|\boldsymbol{\mu}\|\big\},\label{eq:E2}
\\
E_3(M) &:= \big\{\max_{i=1,\ldots,n} \|\boldsymbol{z}_i\|\leq M\big\},\label{eq:E3}
\\
E_4(\beta, \rho) & := \big\{|\|\check{\boldsymbol y}\|^2  - n\rho | \leq \beta n\rho\big\}  = \Big\{ \Big|\dfrac1n\sum_{i=1}^n \dfrac{1}{\|\boldsymbol{z}_i\|^{2}}-\rho \Big|\leq \beta\rho  \Big\},\label{eq:E4}
\\
E_5(\gamma, \rho) & := \big\{|\langle \check{\boldsymbol y}_{\n}, \check{\boldsymbol y} \rangle - (1 - 2\eta) n\rho| \leq \gamma n\rho\big\}
= \Big\{\Big|\dfrac1n\sum_{i=1}^n \dfrac{y_{\n, i} y_i}{\|\boldsymbol{z}_i\|^2} -(1-2\eta)\rho \Big|\leq \gamma\rho \Big\}.\label{eq:E5}
\end{align}
We often omit parameters and simply denote these events by $E_i$ when there is no risk of confusion. Note that the norm of the rows of $\check Z$ is $1$ and so the event $E_1$ controls the angle $\angle({\bs z}_i, {\bs z}_j)$ between ${\boldsymbol{z}}_i$ and ${\boldsymbol{z}}_j$ for all $i\neq j$; $|\cos(\angle({\bs z}_i, {\bs z}_j))|\leq \varepsilon$. When $\varepsilon$ is small, the $\boldsymbol{z}_i$ are near orthogonal to each other. Similarly, event $E_2$ controls the angle between $\boldsymbol{z}_i$'s and $\boldsymbol{\mu}$. The quantities $\alpha_2,\alpha_\infty$ provide two different ways of quantifying near orthogonality between all $\boldsymbol{z}_i$'s and $\boldsymbol{\mu}$. Event $E_3$ imposes an upper bound on the maximal norm of $\bs z_i$ and can be viewed as capturing a global scale. Events $E_4, E_5$ are concerned with concentration of averages and thus are related to moment assumptions on $1/\|\bs z_i\|$ rather than values of individual norms.

Before proceeding to our main results, we provide a concrete example of the general framework above in the case of model \ref{model:EM}. In the next statement and elsewhere, we will make repeated use of the following constants which depend only on the distribution of $g$ and on $\bs \xi$ through the constants $r, K$ from equation~\eqref{eq:condsxi}
\begin{equation} \label{eq:defconstantsEM}
\begin{aligned}
C_2(g) &:= 2^{2+2/k}\tfrac{\|g^{-2}\|_{L^{k/2}}}{\|g^{-2}\|_{L^1}},\\  C(r, K) &:= 2^{r/2-1}\left\{2 (K^{2/r} + 1)^{r/2}  + 2^{r/4} \right\},\\ 
C_1(r,K) &:= 4\left\{2C(r, K)\right\}^{2/r}. 
\end{aligned}
\end{equation}

\begin{lemm}\label{lemm5-ex1} Suppose model \ref{model:EM} holds and 
\begin{equation}\label{cond:lemm5-ex1}
\eps \;:=\; C_1(r,K) (\tfrac{n}{\delta})^{2/r} \max\{p^{2/r-1/2}, n^{2/r} \} \frac{\|\Sigma\|_{\textsc{f}}}{\Tr(\Sigma)} \;\leq\; \frac{1}{2}.   
\end{equation}
Then, with probability at least $1 - 5\delta$, event $\bigcap_{i=1}^5 E_i$ holds with $\eps$ as defined above and 
\[
\alpha_\infty \;=\; \alpha_2 \;=\; \frac{2\sqrt{n}\|\boldsymbol{\mu}\|_\Sigma}{\sqrt{\delta \Tr(\Sigma)}\|\boldsymbol{\mu}\|}, \qquad \beta = \gamma \;=\; \varepsilon + C_2(g)\delta^{-2/k} n^{-(1 - 2/k)},\] 
\[
\rho  \;=\; \E [g^{-2}]\Tr(\Sigma)^{-1}, \qquad M \;=\;    (1+\eps)\|g\|_{L^\ell}(\tfrac{n}{\delta})^{1/\ell}\sqrt{\Tr(\Sigma)},
\]
Moreover, with the same definitions, event $\cap_{i=1}^4 E_i$ holds with probability at least $1-4\delta$. 
\end{lemm}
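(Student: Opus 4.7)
The proof establishes each of $E_1,\ldots,E_5$ with probability at least $1-\delta$ and union-bounds to obtain $1-5\delta$; dropping $E_5$ from the union bound gives the second claim with $1-4\delta$. A first structural simplification is that row-normalization kills the scalar factors $g_i$: since $\bs z_i/\|\bs z_i\| = \Sigma^{1/2}\bs \xi_i/\|\Sigma^{1/2}\bs \xi_i\|$, the matrix $\check Z$ (and hence $E_1,E_2$) depends only on the $\bs \xi_i$'s, while the $g_i$'s enter only through the norms $\|\bs z_i\|$ appearing in $E_3,E_4,E_5$.

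The core technical input is concentration of the quadratic forms $\bs \xi_i^\top\Sigma\bs \xi_j$ under the moment condition \eqref{eq:condsxi}. For the diagonal, $\E[(\bs \xi_i^\top\Sigma\bs \xi_i-\Tr(\Sigma))^2]\lesssim \|\Sigma\|_{\textsc{f}}^2$; bounding the $L^{r/2}$-norm by the $L^2$-norm and applying Markov's inequality with a union bound over $i$ yields $|\bs \xi_i^\top\Sigma\bs \xi_i-\Tr(\Sigma)|\lesssim(n/\delta)^{2/r}\|\Sigma\|_{\textsc{f}}$, producing the $(n/\delta)^{2/r}$ factor in $\varepsilon$. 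For the off-diagonal, a Rosenthal / Marcinkiewicz--Zygmund inequality applied to $\bs \xi_i^\top\Sigma\bs \xi_j$ conditionally on $\bs \xi_j$ and then unconditionally gives an $L^r$-bound; the $\max\{p^{2/r-1/2},n^{2/r}\}$ factor then arises from optimizing between a Frobenius-norm estimate $\|\check Z\check Z^\top-I_n\|\le \|\check Z\check Z^\top-I_n\|_{\textsc{f}}$ (tight when $n$ dominates) and an entrywise union bound (tight when $p$ dominates). The constants $C(r,K)$ and $C_1(r,K)$ in \eqref{eq:defconstantsEM} record the numerical output of this step.

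With these quadratic-form bounds in hand, the remaining events are short. Event $E_1$ combines the diagonal estimate $\bs \xi_i^\top\Sigma\bs \xi_i = \Tr(\Sigma)(1+u_i)$ with $|u_i|\le\varepsilon/2$ and the off-diagonal Frobenius bound from Step 2. Event $E_2$ follows from Markov on $\E\|\check Z\bs \mu\|^2\asymp n\|\Sigma^{1/2}\bs \mu\|^2/\Tr(\Sigma)$ (after replacing $\|\Sigma^{1/2}\bs \xi_i\|^2$ by $\Tr(\Sigma)$ via the diagonal bound), together with $\|\cdot\|_\infty\le\|\cdot\|_2$ to get $\alpha_\infty=\alpha_2$. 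For $E_3$, the identity $\|\bs z_i\|^2 = g_i^2\bs \xi_i^\top\Sigma\bs \xi_i\le (1+\varepsilon)g_i^2\Tr(\Sigma)$ combined with a Markov bound on $g^\ell$ and a union bound over $i$ delivers the stated $M$.

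For $E_4$ and $E_5$, I write $1/\|\bs z_i\|^2 = g_i^{-2}/[\Tr(\Sigma)(1+u_i)]$ and expand $1/(1+u_i) = 1 - u_i/(1+u_i)$, splitting the target into an i.i.d.\ average $-$ namely $(n\Tr(\Sigma))^{-1}\sum g_i^{-2}$ for $E_4$ and $(n\Tr(\Sigma))^{-1}\sum y_{\n,i}y_i g_i^{-2}$ for $E_5$, with means $\rho$ and $(1-2\eta)\rho$ respectively $-$ plus a residual of order $\varepsilon\cdot n^{-1}\sum g_i^{-2}$. The i.i.d.\ averages are controlled by a von Bahr--Esseen-type inequality $\E|\sum(X_i-\E X_i)|^{k/2}\le 2\sum\E|X_i|^{k/2}$ (valid for $k/2\in(1,2]$) with $X_i=g_i^{-2}$ or $X_i=y_{\n,i}y_i g_i^{-2}$; both have $L^{k/2}$-moment equal to $\|g^{-2}\|_{L^{k/2}}$ since $|y_\n y|=1$. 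Markov then produces the $C_2(g)\delta^{-2/k}n^{-(1-2/k)}$ term in $\beta=\gamma$, and the $u_i$-residual contributes the extra $\varepsilon$. The main obstacle I anticipate is the off-diagonal bound for $E_1$ at general $r\in(2,4]$: a naive second-moment Frobenius argument yields the correct $n/\sqrt{\delta}$ scaling only at $r=4$, and recovering the stated $(n/\delta)^{2/r}\max\{p^{2/r-1/2},n^{2/r}\}$ rate at smaller $r$ requires careful interpolation between Frobenius control and entrywise $L^r$-bounds, together with bookkeeping of the constants $C(r,K)$ and $C_1(r,K)$.
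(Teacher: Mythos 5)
Your overall architecture---reducing $E_1,\ldots,E_5$ to elementary events, bounding each with probability at least $1-\delta$, and union-bounding (dropping $E_5$ for the $1-4\delta$ claim)---matches the paper, and your treatment of $E_2$ through $E_5$ (Markov on $\E\|V\bs\mu\|^2=n\|\Sigma^{1/2}\bs\mu\|^2$ with $\|\cdot\|_\infty\le\|\cdot\|_2$, Markov on $\max_i|g_i|^\ell$, and a von Bahr--Esseen bound at exponent $k/2$ for the averages of $g_i^{-2}$ and $y_{\n,i}y_ig_i^{-2}$, with the $\varepsilon$-residual coming from replacing $\|\Sigma^{1/2}\bs\xi_i\|^2$ by $\Tr(\Sigma)$) is essentially the paper's argument. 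The gap is in $E_1$, which is the technical core of the lemma.

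First, your diagonal estimate $\E[(\bs\xi_i^\top\Sigma\bs\xi_i-\Tr(\Sigma))^2]\lesssim\|\Sigma\|_{\textsc{f}}^2$ requires $\E\xi_j^4<\infty$ (the variance of $\xi_j^2$ enters), which is guaranteed only at $r=4$; for $r\in(2,4)$ this second moment may be infinite, so ``bounding the $L^{r/2}$-norm by the $L^2$-norm'' is not available. The paper instead bounds $\E\bigl|\sum_k\Sigma_{kk}(\xi_k^2-1)\bigr|^{r/2}$ directly by von Bahr--Esseen plus Jensen, which produces an extra factor $p^{1-r/4}$ relative to $\|\Sigma\|_{\textsc{f}}^{r/2}$---and this is precisely the source of the $p^{2/r-1/2}$ in $\varepsilon$. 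Your sketch attributes $p^{2/r-1/2}$ to an entrywise union bound for the off-diagonal, which is a misdiagnosis: the off-diagonal terms are the easy part, since each $\bs\xi_i^\top\Sigma\bs\xi_j$ with $i\ne j$ has second moment exactly $\|\Sigma\|_{\textsc{f}}^2$ using only second moments of $\xi$, and Jensen gives $\E|\bs\xi_i^\top\Sigma\bs\xi_j|^{r/2}\le\|\Sigma\|_{\textsc{f}}^{r/2}$; summing $n^2$ of them yields the $n^{4/r}=(n^{2/r})^2$ term. No interpolation between Frobenius and entrywise regimes is needed---both terms of the max come from a single Frobenius-norm computation. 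Second, to assemble these entrywise $r/2$-th moments into a bound on $\E\|VV^\top-\Tr(\Sigma)I_n\|_{\textsc{f}}^{r/2}$ one needs the subadditivity $(\sum_ia_i)^{r/4}\le\sum_ia_i^{r/4}$ (valid since $r/4\le1$); this step is absent from your sketch, and without it the entrywise bounds do not combine into a Frobenius (hence spectral) bound at exponent $r/2<2$. You correctly flagged $E_1$ as the anticipated obstacle, but the specific route you propose there would not close.
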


Lemma~\ref{lemm5-ex1} provides concrete bounds on various quantities in the special setting of model \ref{model:EM}. To gain some further intuition about orders, set $\Sigma = I_p$. In that case $\Tr(\Sigma) = p, \|\Sigma\|_F = \sqrt{p}$ and~\eqref{cond:lemm5-ex1} holds provided that $p > C \max\{n^{8/r}\delta^{-4/r},n^{\tfrac{2}{r-2}}\delta^{-\tfrac{2}{r-2}}\}$ for a sufficiently large constant $C$. This is stronger than the over-parametrization requirements in existing work under sub-Gaussian assumptions due to the milder moment conditions we consider. For additional comparisons with existing work under sub-Gaussian assumptions, see Section~\ref{sec:detailedcomparison} in the Supplement. A simple computation shows that $\eps, \alpha_\infty, \alpha_2$ are small if $p$ is sufficiently large. Thus the key role of the high dimensionality of predictors is to ensure near orthogonality. Near orthogonality is also possible if $\Sigma$ is far from the identity, but clearly requires sufficiently many eigenvalues that are not too small and of roughly the same order. The proof of Lemma~\ref{lemm5-ex1} is provided in Section~\ref{sec:proof:lemm5-ex1} of the Supplement.

\subsection{Test error bounds and benign overfitting in the noiseless case}\label{subsec:main-noiseless}

In this section, we provide test error bounds in the noiseless case ($\eta = 0$) of models \ref{model:M} and \ref{model:EM}. We begin with a general result in model \ref{model:M}. 

\begin{thm}\label{thm:noiseless-main}
Consider model \ref{model:M} with $\eta = 0$. Suppose one of the following conditions holds:
\begin{enumerate}
\item[(i)] Event $\bigcap_{i=1}^4 E_i$ holds with $\beta \in [0, 1/2)$, $\|\boldsymbol{\mu}\|{\sqrt{(1-\beta)n\rho}} \geq C {\alpha_2}$ for sufficiently large constant $C$, $\alpha_2\|\boldsymbol{\mu}\|\sqrt{(1+ \beta)n\rho} \leq \tfrac14$,  $\varepsilon M \sqrt{(1+\beta)n\rho} \leq \tfrac14$, and $M\alpha_\infty \|\boldsymbol{\mu}\|(1+\beta)n\rho < \tfrac{3}{32}$.
\item[(ii)] Event $E_3(M)$ holds with $\|\boldsymbol{\mu}\| \geq CM$ for some constant $C>2$.  
\end{enumerate}
Then the gradient descent iterates converge in direction to $\boldsymbol{\hat w}$, and the following test error bound holds for a universal constant $c$
    \begin{equation}\label{eq:errboundnonoise}
        \P_{(\boldsymbol{x}, y)}(\langle \boldsymbol{\hat w}, y\boldsymbol{x}\rangle < 0) \;\leq\; c\|\E[\boldsymbol{z}\boldsymbol{z}^\top]\| \left(\frac{1}{\|\boldsymbol{\mu}\|^2} + \frac{1}{n\rho\|\boldsymbol{\mu}\|^4} \right).
    \end{equation}
\end{thm}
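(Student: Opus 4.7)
The strategy is to reduce the test-error bound to control of the scalar ratio $R := \|\hat{\bs w}\|^2/\<\hat{\bs w}, \bs \mu\>^2$, after first verifying that the training data are linearly separable (so that Theorem~\ref{th:soudry} applies and $\hat{\bs w}$ from \eqref{eq:mmc} is well-defined). Conditional on $\hat{\bs w}$ and using $\eta = 0$, the random variable $y\<\hat{\bs w}, \bs z\>$ has mean zero and variance at most $\|\E[\bs z\bs z^\top]\|\|\hat{\bs w}\|^2$, so provided $\<\hat{\bs w}, \bs \mu\> > 0$ Chebyshev's inequality yields
$$\P\bigl(y\<\hat{\bs w}, \bs x\> < 0 \bigm| \hat{\bs w}\bigr) \;\leq\; \|\E[\bs z\bs z^\top]\|\cdot R.$$
It therefore suffices to establish $R \lesssim 1/\|\bs \mu\|^2 + 1/(n\rho\|\bs \mu\|^4)$ on the high-probability event prescribed in (i) or (ii).

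Under (ii), the geometry is transparent: the candidate classifier $\bs w^{\star} = 2\bs \mu/\|\bs \mu\|^2$ satisfies $\<\bs w^{\star}, y_i\bs x_i\> \geq 2 - 2\|\bs z_i\|/\|\bs \mu\| \geq 2 - 2/C \geq 1$ for $C>2$, giving both linear separability and $\|\hat{\bs w}\| \leq 2/\|\bs \mu\|$. Any margin constraint for $\hat{\bs w}$ then forces $\<\hat{\bs w}, \bs \mu\> \geq 1 - \|\hat{\bs w}\|M \geq 1 - 2/C$, so $R = O(1/\|\bs \mu\|^2)$, which is dominated by the bound in \eqref{eq:errboundnonoise}.

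Under (i), the plan is the classical max-margin route: first show every training point is a support vector, then obtain the closed form $\hat{\bs w} = X^\top A^{-1}\bs y$ with $A = XX^\top$, and finally analyze $A^{-1}\bs y$ by a rank-one perturbation argument. Expanding
$$A \;=\; ZZ^\top \;+\; \|\bs \mu\|^2 \bs y\bs y^\top \;+\; \bs y(Z\bs \mu)^\top + (Z\bs \mu)\bs y^\top,$$
event $E_1$ gives $ZZ^\top = \Delta(z)(I_n + E)\Delta(z)$ with $\|E\| \leq \eps$, while $E_2$ and $E_3$ control the cross terms in spectral norm. Sherman--Morrison applied to the rank-one piece $\|\bs \mu\|^2\bs y\bs y^\top$ reduces $A^{-1}\bs y$ to a scalar multiple of $\Delta(z)^{-2}\bs y$, and on $E_4$ the quadratic form $\bs y^\top\Delta(z)^{-2}\bs y = \|\check{\bs y}\|^2$ concentrates around $n\rho$. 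Collecting the leading terms, one expects
$$\|\hat{\bs w}\|^2 \;\asymp\; \frac{n\rho}{1 + n\rho\|\bs \mu\|^2}, \qquad \<\hat{\bs w}, \bs \mu\> \;\asymp\; \frac{\|\bs \mu\|^2 n\rho}{1 + n\rho\|\bs \mu\|^2},$$
so $R \asymp (1+n\rho\|\bs \mu\|^2)/(n\rho\|\bs \mu\|^4) = 1/\|\bs \mu\|^2 + 1/(n\rho\|\bs \mu\|^4)$, matching the claim.

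The hardest step is the support-vector argument under (i): one must show that the candidate dual vector $\bs \alpha = (\mathrm{diag}(\bs y)A\mathrm{diag}(\bs y))^{-1}\bs 1$ has \emph{entrywise} positive coordinates, not merely that $A$ is well-conditioned in spectral norm. This is exactly why (i) combines four separate smallness inputs ($\eps M\sqrt{n\rho}$, $\alpha_2\|\bs \mu\|\sqrt{n\rho}$, $M\alpha_\infty\|\bs \mu\|n\rho$, and $\beta$): each controls a different source of heterogeneity (in the row norms $\|\bs z_i\|$, in the off-diagonals $\<\bs z_i, \bs z_j\>$, in the signal alignments $\<\bs z_i, \bs \mu\>$) that could otherwise push some $\alpha_i$ to zero and invalidate the closed-form analysis.
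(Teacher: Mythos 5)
Your proposal is correct and follows essentially the same route as the paper: the Chebyshev/Markov reduction of the test error to $\|\hat{\bs w}\|^2/\langle\hat{\bs w},\bs\mu\rangle^2$ is Lemma~\ref{testerror}, your feasible-point plus support-vector argument for regime (ii) is Lemma~\ref{noiseless-mmc-bound2} (with the cosmetic difference that the paper uses $(\|\bs\mu\|-M)^{-1}\bs\mu/\|\bs\mu\|$ rather than $2\bs\mu/\|\bs\mu\|^2$), and for regime (i) the identification $\hat{\bs w}=X^\top(XX^\top)^{-1}\bs y$ via entrywise positivity of the dual followed by a low-rank expansion of $(XX^\top)^{-1}$ is exactly Lemmas~\ref{lem:whatrepr} and~\ref{noiseless-mmc-bound1}, and your asserted orders for $\|\hat{\bs w}\|^2$ and $\langle\hat{\bs w},\bs\mu\rangle$ match the paper's. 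The one slightly loose phrase is ``Sherman--Morrison applied to the rank-one piece'' with the cross terms $\bs y\bs\nu^\top+\bs\nu\bs y^\top$ merely ``controlled in spectral norm'' --- the paper instead inverts the full rank-two perturbation exactly via Woodbury (Lemma~\ref{graminverse}) and then needs the entrywise bound $|\bs\nu^\top A^{-1}\bs e_i|\lesssim\alpha_\infty\|\bs\mu\|/\|\bs z_i\|$ of Lemma~\ref{quad-bounds-nuae} for the positivity argument --- but your closing paragraph correctly identifies that this $\ell_\infty$-type control is the real content, so the gap is one of bookkeeping rather than of ideas.
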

We provide the proof sketch of this result in Section~\ref{sec:sketch}, where we also state some fundamental results that underlie our arguments. The formal proof is given in Section~\ref{sec:proofgennoiselessthm}. There, we also provide an exponential bound on the test error when $\bs z_i$ are sub-Gaussian; see equation~\eqref{eq:errboundnonoise-exp}. 

Theorem~\ref{thm:noiseless-main} shows that near orthogonality of the vectors $\bs z_i$ and mild conditions on the range of their magnitudes suffice for benign overfitting. This implies that other special features of models that were studied so far, such as stringent moment assumptions or roughly equal norms of $\bs z_i$ are not required. In Section~\ref{sec:recoveringSubGauss} in the appendix, we discuss how the general framework of Theorem~\ref{thm:noiseless-main} can be used to recover existing results on sub-Gaussian mixtures by verifying its conditions in specific models. In addition, a more detailed comparison with existing work is given in Section~\ref{sec:detailedcomparison}.

We note that the cases (i) and (ii) in the above result correspond to two different regimes for the signal strength: in case (i), there are both upper and lower bounds on $\|\bs\mu\|$, while case (ii) only has a lower bound on $\|\bs\mu\|$. In both cases we obtain the same test error bound but the proof techniques differ; see Section~\ref{sec:proof} for additional details. Case (i) roughly corresponds to the small and intermediate signal strength regimes in Table~\ref{summary-table} which have been studied in the existing literature in special models. Case (ii) corresponds to the large signal regime in Table~\ref{summary-table} which has not been considered in the literature before.  Note that the derivation of case (ii) is purely geometric and is based on arguments that are completely different from approaches taken in prior work. In this scenario, equivalence between the max margin classifier and the least squares estimator can fail, see the discussion after equation~\eqref{eq:noisy-mmc-bound-sketch} for an explicit example.

The test error bound in~\eqref{eq:errboundnonoise} exhibits a phase transition: when $\|\bs\mu\| \ll (n\rho)^{-1/2}$, the test error bound is dominated by $\tfrac{\|\E[\boldsymbol{z}\boldsymbol{z}^\top]\|}{n\rho\|\boldsymbol{\mu}\|^4}$, while for $\|\bs\mu\| \gg (n\rho)^{-1/2}$ the bound simplifies to $\tfrac{\|\E[\boldsymbol{z}\boldsymbol{z}^\top]\|}{\|\boldsymbol{\mu}\|^2}$. The geometric intuition behind this transition will be discussed in more detail in Section~\ref{geometry}.

Next, we specialize the general results given above to model \ref{model:EM}. Our first Theorem~covers the small and intermediate signal case. 

\begin{thm}\label{thm:noiseless-ext-detail1}
Consider model \ref{model:EM} with $\eta=0$. Assume that $n\geq (\tfrac{4C_2(g)}{\delta^{2/k}})^\frac{k}{k-2}$. Then, there exists a constant $C$ depending only on the distribution of $g, \xi$ such that, provided that
\begin{equation*}
    \|\boldsymbol{\mu}\|^2 \geq C\delta^{-1/2}\|\boldsymbol{\mu}\|_\Sigma,
\end{equation*}
and 
\begin{align*}
\Tr(\Sigma) \geq C \left(\tfrac{n}{\delta}\right)^{1/\ell} & \max\left\{\left(\tfrac{n}{\delta}\right)^{2/r}n^{1/2}\|\Sigma\|_F \max\{p^{\frac{2}{r}-\frac{1}{2}},n^{2/r}\},  \sqrt{\tfrac{n}{\delta}}n\|\bs{\mu}\|_\Sigma\right\}
\end{align*}
hold, with probability at least $1-4\delta$, the gradient descent iterates converge in direction to $\boldsymbol{\hat w}$, and the following test error bound holds for a universal constant $c$:
    \begin{equation*}
        \P_{(\boldsymbol{x}, y)}(\langle \boldsymbol{\hat w}, y\boldsymbol{x}\rangle < 0) \;\leq\; c \|\Sigma\| \left(\frac{1}{\|\boldsymbol{\mu}\|^2} + \frac{\Tr(\Sigma)}{n \|\boldsymbol{\mu}\|^4 \E[g^{-2}]}\right).
    \end{equation*}
\end{thm}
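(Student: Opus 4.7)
The plan is to derive Theorem~\ref{thm:noiseless-ext-detail1} by reducing to case~(i) of the general Theorem~\ref{thm:noiseless-main}. The bridge between the two is Lemma~\ref{lemm5-ex1}, which translates assumptions on model~\ref{model:EM} into bounds on the parameters $\eps,\alpha_2,\alpha_\infty,\beta,\rho,M$ that appear in the events $E_1,\dots,E_4$. So the overall strategy is: first invoke Lemma~\ref{lemm5-ex1} to secure $\bigcap_{i=1}^{4}E_i$ on a high-probability event, then check each of the five quantitative requirements in Theorem~\ref{thm:noiseless-main}(i), and finally specialize the bound~\eqref{eq:errboundnonoise}.

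First I would verify that Lemma~\ref{lemm5-ex1} applies. The lemma requires $\eps\le 1/2$, and the stated lower bound on $\Tr(\Sigma)$ in the theorem implies this via the first term in the $\max$. The assumption $n\ge (4C_2(g)/\delta^{2/k})^{k/(k-2)}$ guarantees $C_2(g)\delta^{-2/k}n^{-(1-2/k)}\le 1/4$, so combining with the bound on $\eps$ one obtains $\beta=\gamma\le 1/2$ as needed. From the lemma one then reads off
\[
\rho=\E[g^{-2}]/\Tr(\Sigma),\qquad \alpha_2=\alpha_\infty=\frac{2\sqrt{n}\,\|\Sigma^{1/2}\bs\mu\|}{\sqrt{\delta\,\Tr(\Sigma)}\,\|\bs\mu\|},\qquad M\asymp (n/\delta)^{1/\ell}\sqrt{\Tr(\Sigma)}.
\]

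Next I would verify, one at a time, the five inequalities in Theorem~\ref{thm:noiseless-main}(i). Using $n\rho\asymp n\E[g^{-2}]/\Tr(\Sigma)$ and $1\pm\beta\asymp 1$:
\begin{itemize}
\item $\|\bs\mu\|\sqrt{(1-\beta)n\rho}\ge C\alpha_2$ reduces after algebra to $\|\bs\mu\|^2\gtrsim \delta^{-1/2}\|\Sigma^{1/2}\bs\mu\|$, which is precisely the signal-strength hypothesis.
\item $\alpha_2\|\bs\mu\|\sqrt{(1+\beta)n\rho}\le 1/4$ is equivalent to $\Tr(\Sigma)\gtrsim n\|\Sigma^{1/2}\bs\mu\|\delta^{-1/2}$, a weaker requirement than the second branch of the $\max$ in the hypothesis on $\Tr(\Sigma)$.
\item $\eps M\sqrt{(1+\beta)n\rho}\le 1/4$ becomes, after substituting the expressions for $\eps$ and $M$, the first branch of the $\max$ condition on $\Tr(\Sigma)$.
\item $M\alpha_\infty\|\bs\mu\|(1+\beta)n\rho\le 3/32$ becomes, after simplification, $\Tr(\Sigma)\gtrsim (n/\delta)^{1/\ell}\sqrt{n/\delta}\,n\|\Sigma^{1/2}\bs\mu\|$, which is the second branch of the $\max$ condition.
\end{itemize}
Each of these will follow by direct substitution, with constants absorbed into the universal $C$ in the statement.

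With the hypotheses of Theorem~\ref{thm:noiseless-main}(i) confirmed, the gradient descent iterates converge in direction to $\hat{\bs w}$ and~\eqref{eq:errboundnonoise} holds on the event $\bigcap_{i=1}^{4}E_i$, which has probability at least $1-4\delta$ by Lemma~\ref{lemm5-ex1}. To match the final display, I would use $\E[\bs z\bs z^\top]=\E[g^2]\,\Sigma=\Sigma$ (since $\E g^2=1$) so $\|\E[\bs z\bs z^\top]\|=\|\Sigma\|$, and $n\rho=n\E[g^{-2}]/\Tr(\Sigma)$ so that the second term in~\eqref{eq:errboundnonoise} becomes $\|\Sigma\|\,\Tr(\Sigma)/(n\|\bs\mu\|^4\E[g^{-2}])$.

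The main obstacle is bookkeeping rather than any deep argument: one must carry the two free exponents $r,\ell$ and the constants $C_1(r,K),C_2(g),\|g\|_{L^\ell}$ through the four inequality checks and verify that the worst case of the four requirements is captured by the single $\max$ in the hypothesis on $\Tr(\Sigma)$, together with the clean signal hypothesis $\|\bs\mu\|^2\ge C\delta^{-1/2}\|\Sigma^{1/2}\bs\mu\|$. A small side-check is that the hypothesis on $\Tr(\Sigma)$ does imply $\eps\le 1/2$ (so Lemma~\ref{lemm5-ex1} is applicable in the first place), which follows because the first branch already forces $\Tr(\Sigma)\gtrsim (n/\delta)^{2/r}\max\{p^{2/r-1/2},n^{2/r}\}\|\Sigma\|_{\textsc{f}}$ up to a constant, i.e.~$\eps\lesssim n^{-1/2}(n/\delta)^{-1/\ell}\le 1/2$.
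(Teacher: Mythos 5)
Your proposal is correct and follows essentially the same route as the paper's own proof: invoke Lemma~\ref{lemm5-ex1} to obtain $\bigcap_{i=1}^4 E_i$ with the stated parameters, verify the conditions of Theorem~\ref{thm:noiseless-main}(i) one by one (with each inequality reducing to the same branch of the hypotheses that the paper identifies), and then substitute $\|\E[\bs z\bs z^\top]\|=\|\Sigma\|$ and $n\rho = n\E[g^{-2}]/\Tr(\Sigma)$ into~\eqref{eq:errboundnonoise}. The remaining work is only the constant bookkeeping you already flag.
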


The proof of Theorem~\ref{thm:noiseless-ext-detail1} is given in Section~\ref{sec:proof:EM:noiseless1}. In there, we provide a slightly sharper statement that explicitly discusses the precise form of the constant $C$.

Theorem~\ref{thm:noiseless-ext-detail1} substantially generalizes the existing literature which only considers linear transformations of sub-Gaussian predictors with independent entries. Interestingly, our test error bound exhibits similar features as bounds in the existing literature. More precisely, in the case $\Sigma = \Id, g \equiv 1$, the bound is increasing  in $ \tfrac{1}{\|\boldsymbol{\mu}\|^2} + \tfrac{p}{n \|\boldsymbol{\mu}\|^4 }$.

The assumption $\Tr(\Sigma) \geq  C\sqrt{\tfrac{n}{\delta}}n\|\bs{\mu}\|_\Sigma$ in Theorem~\ref{thm:noiseless-ext-detail1} prevents $\bs\mu$ from being arbitrarily large. The case of very large $\|\bs\mu\|$ is covered in the next result under substantially weaker assumptions on the other model parameters. This result corresponds to setting (ii) in the general Theorem~\ref{thm:noiseless-main}. Recall the definition of the constant $C_1(r,K)$ in~\eqref{eq:defconstantsEM}.

\begin{thm}\label{thm:noiseless-ext-detail2}
Consider model \ref{model:EM} with $\eta = 0$. Let $C$ be the constant in Theorem~\ref{thm:noiseless-main}(ii). 
For any $\delta \in (0, \tfrac12)$, if
\begin{equation}\label{cond:noiseless-ext-detail2}
\displaystyle \|\boldsymbol{\mu}\| \geq \tfrac{3}{2}C \|g\|_{L^\ell} (\tfrac{n}{\delta})^{1/\ell}\sqrt{\Tr(\Sigma)}
\end{equation}
for a constant $C>2$ and 
\begin{equation}\label{cond2:noiseless-ext-detail2}
\displaystyle \Tr(\Sigma) \geq 2C_1(r,K)(\tfrac{n}{\delta})^{2/r}\max\{p^{2/r-1/2}, n^{2/r}\}\|\Sigma\|_{\textsc{f}},
\end{equation}
then, with probability at least $1 - 2\delta$, the gradient descent iterates converge in direction to $\boldsymbol{\hat w}$, and the following test error bound holds:
    \begin{equation*}
        \P_{(x, y)}(\langle \boldsymbol{\hat w}, y \boldsymbol{x}\rangle < 0) \leq c \frac{\|\Sigma\|}{\|\boldsymbol{\mu}\|^2},
    \end{equation*}
    where $c$ is a universal constant.
\end{thm}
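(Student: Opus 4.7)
The strategy is to obtain Theorem~\ref{thm:noiseless-ext-detail2} as a direct specialization of the general Theorem~\ref{thm:noiseless-main}(ii) to model \ref{model:EM}. Since Theorem~\ref{thm:noiseless-main}(ii) requires only the single event $E_3(M) = \{\max_i \|\bs z_i\| \leq M\}$ together with $\|\bs \mu\| \geq CM$, the argument splits into three pieces: (a) choose an explicit $M$ adapted to model \ref{model:EM}; (b) verify $E_3(M)$ with probability at least $1 - 2\delta$ under the assumptions \eqref{cond:noiseless-ext-detail2}--\eqref{cond2:noiseless-ext-detail2}; (c) simplify the bound \eqref{eq:errboundnonoise} to the stated form.

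For (a) and (b), I would re-use the explicit choice already identified in Lemma~\ref{lemm5-ex1}, namely
\[
M \;=\; (1+\eps)\|g\|_{L^\ell}(n/\delta)^{1/\ell}\sqrt{\Tr(\Sigma)},
\]
where $\eps$ is defined in \eqref{cond:lemm5-ex1}. Assumption \eqref{cond2:noiseless-ext-detail2} is precisely the constraint $\eps \leq 1/2$, so $1 + \eps \leq 3/2$, and a direct comparison with \eqref{cond:noiseless-ext-detail2} then gives $\|\bs \mu\| \geq CM$. For the probability estimate I would exploit the multiplicative structure $\bs z_i = g_i \Sigma^{1/2}\bs \xi_i$ and treat the two factors separately: a Markov bound applied to $g_i^\ell$, combined with a union over $i$, yields $\max_i g_i \leq \|g\|_{L^\ell}(n/\delta)^{1/\ell}$ with probability at least $1-\delta$; while an $r/2$-th moment bound on the quadratic form $\bs \xi_i^\top \Sigma \bs \xi_i$ (which has mean $\Tr(\Sigma)$) yields $\max_i \|\Sigma^{1/2}\bs \xi_i\|^2 \leq (1+\eps)^2\Tr(\Sigma)$ with probability at least $1-\delta$. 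A final union bound produces $E_3(M)$ with probability at least $1 - 2\delta$. Both ingredients are already present inside the proof of Lemma~\ref{lemm5-ex1}, so this part needs only careful extraction, not new probabilistic work.

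Invoking Theorem~\ref{thm:noiseless-main}(ii) on this event yields the bound \eqref{eq:errboundnonoise}. Because $\E[\bs z\bs z^\top] = \E[g^2]\Sigma = \Sigma$ in model \ref{model:EM}, this bound reads $c\|\Sigma\|\bigl(\|\bs \mu\|^{-2} + (n\rho\|\bs \mu\|^4)^{-1}\bigr)$, so the last step is to absorb the second term into the first, i.e.\ to check $\|\bs \mu\|^2 \gtrsim 1/(n\rho)$. Using $\rho \asymp 1/\Tr(\Sigma)$ from Lemma~\ref{lemm5-ex1} and $\|\bs \mu\|^2 \gtrsim (n/\delta)^{2/\ell}\Tr(\Sigma) \geq \Tr(\Sigma)$ from \eqref{cond:noiseless-ext-detail2}, the required inequality becomes $n\Tr(\Sigma) \gtrsim \Tr(\Sigma)$, which holds trivially.

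The step that will require the most care is the bookkeeping of constants rather than any deep probabilistic ingredient: the factor $3/2$ in \eqref{cond:noiseless-ext-detail2} has to be tracked back through the bound $1+\eps \leq 3/2$ coming from \eqref{cond2:noiseless-ext-detail2}, and the probability budget must be split carefully as $\delta + \delta$ to deliver exactly $1 - 2\delta$, as opposed to the cruder $1 - 5\delta$ that Lemma~\ref{lemm5-ex1} states for the whole intersection $\bigcap_{i=1}^5 E_i$. Everything else reduces to invoking Theorem~\ref{thm:noiseless-main}(ii) and Lemma~\ref{lemm5-ex1}.
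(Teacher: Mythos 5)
Your proposal is correct and follows essentially the same route as the paper: the paper verifies $E_3(M)$ with $M=(1+\eps)\|g\|_{L^\ell}(n/\delta)^{1/\ell}\sqrt{\Tr(\Sigma)}$ on the intersection $\Omega_1(\eps)\cap\Omega_3(D)$ (each of probability at least $1-\delta$, via Lemma~\ref{lemm1-ex1} and Lemma~\ref{lemm4-ex1}), notes that \eqref{cond2:noiseless-ext-detail2} forces $\eps\le\tfrac12$ so that \eqref{cond:noiseless-ext-detail2} gives $\|\bs\mu\|\ge CM$, and then invokes Theorem~\ref{thm:noiseless-main}(ii). Your only deviation — bounding $\max_i\|\Sigma^{1/2}\bs\xi_i\|^2$ directly by a moment/union-bound argument rather than through the full operator-norm event $\Omega_1(\eps)$ — is an inessential variant of the same estimate, and your absorption of the $(n\rho\|\bs\mu\|^4)^{-1}$ term matches the fact that the regime-(ii) proof (Lemma~\ref{noiseless-mmc-bound2}) already yields the pure $\|\Sigma\|/\|\bs\mu\|^2$ bound.
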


Theorem~\ref{thm:noiseless-ext-detail2} is proved in Section~\ref{sec:proof:EM:noiseless2}.

Theorem~\ref{thm:noiseless-ext-detail2} does not place any upper bounds on $\|\bs\mu\|$ and thus discusses a regime that was not considered in the existing literature, even for the very simple case of isotropic Gaussian $\bs z_i$. In the noiseless case, this result could be expected since for very large $\|\bs \mu\|$ there is a clear separation between the classes $y_i=1$ and $y_i=-1$.

We also note that the test error bound in this case is the same as one in the intermediate signal regime of Theorem~\ref{thm:noiseless-ext-detail1}. This is because the term $ \frac{\|\Sigma\|\Tr(\Sigma)}{n \|\boldsymbol{\mu}\|^4 \E[g^{-2}]}$ appearing in that bound is upper bounded by a constant times $\frac{\|\Sigma\|}{\|\boldsymbol{\mu}\|^2}$ under the assumption~\eqref{cond:noiseless-ext-detail2} on $\|\bs\mu\|$.

\subsection{Test error bounds and benign overfitting in the noisy model}\label{subsec:main-noisy}

In this section, we present our results for the noisy ($\eta  \in (0,1/2)$) version of models \ref{model:M} and \ref{model:EM}. We start with a general result which corresponds to the noisy version of Theorem~\ref{thm:noiseless-main}. It relies on the following assumption. 

\begin{enumerate}
\item[($N_C$)] For the constant $C$, the event $\bigcap_{i=1}^5 E_i$ holds with $\varepsilon M \sqrt{n\rho} \leq \tfrac{\eta}{2}$ and $\|\bs \mu\|\geq C\frac{\alpha_2}{\sqrt{n\rho}}$, and one of the following conditions holds
\begin{enumerate}
        \item[(i)] $\alpha_2 \|\boldsymbol{\mu}\| \sqrt{n\rho} \leq \tfrac{1}{30}$ and $\alpha_\infty \|\boldsymbol{\mu}\| M n\rho \leq \tfrac{1}{64}$, 
        \item [(ii)] $\displaystyle \|\boldsymbol{\mu}\| \geq C \alpha_\infty M$ and $\max\left\{\alpha_2^2, \alpha_2\alpha_\infty M\sqrt{n\rho} \right\} \leq {C}^{-1}$. 
    \end{enumerate}  
\end{enumerate}

We refer readers to the assumptions of Theorem~\ref{thm:noisy-ext-detail1-simple} for sufficient conditions for $N_{C_{2,\eta}}$ under model \ref{model:EM} with $C_{2,\eta} = \max\{\tfrac{22}{\eta}, \tfrac{17}{1-2\eta}\}$.

\begin{thm}\label{detail-noisy-main-1-simple}
Suppose that $(\bs x, y_\n)$ are generated from model \ref{model:M} with $\eta \in (0,1/2)$. If $(N_{C_{2,\eta}})$ holds with $C_{2,\eta} = \max\{\tfrac{22}{\eta}, \tfrac{17}{1-2\eta}\}$ and $\eps\vee \beta\vee \gamma \leq \frac{\min\{\eta, 1-2\eta\}}{8}$, then the gradient descent iterates converge in direction to $\boldsymbol{\hat w}$, and the following test error bound holds:
    \begin{equation*}
        \P_{(\boldsymbol{x}, y_\n)}(\langle \boldsymbol{\hat w}, y_\n\boldsymbol{x}\rangle < 0) \leq \eta + \frac{c_1 \|\E [\boldsymbol{z}\boldsymbol{z}^\top]\|}{(1 - 2\eta)^2} \left\{\eta n\rho + \frac{1}{\|\bs \mu\|^2} + \frac{1}{n\rho \|\boldsymbol{\mu}\|^4}\right\},
    \end{equation*}
    where $c_1$ is a universal constant. 
\end{thm}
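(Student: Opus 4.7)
The proof adapts the strategy sketched in Section~\ref{sec:sketch} (for the noiseless case) to the noisy setting. The first step is to reduce the analysis to a closed-form expression for $\hat{\bs w}$. Under assumption $(N_{C_{2,\eta}})$ the dataset is linearly separable, so Theorem~\ref{th:soudry} ensures that the gradient descent iterates \eqref{eq:gdsc} converge in direction to the max-margin solution $\hat{\bs w}$. Beyond separability, I would show that every training point is a support vector, so that the constraints in \eqref{eq:mmc} are tight and $\hat{\bs w} = X^\top G^{-1}\bs y_\n$ with $G := XX^\top$. The near-orthogonality events $E_1, E_2, E_3$ guarantee that $G$ is close to a low-rank perturbation of the diagonal matrix $\Delta(z)^2$, which makes this support-vector property verifiable termwise. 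The analysis splits according to cases (i) and (ii) of $(N_{C_{2,\eta}})$, mirroring the split in Theorem~\ref{thm:noiseless-main}.

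Next, I would decompose the noisy test error via the label-flip mechanism:
\[
\P(\langle \hat{\bs w}, y_\n \bs x\rangle < 0)
\;=\; (1-\eta)\,\P(\langle \hat{\bs w}, y\bs x\rangle < 0) + \eta\,\P(\langle \hat{\bs w}, y\bs x\rangle > 0)
\;\leq\; \eta + (1-2\eta)\,\P(\langle \hat{\bs w}, y\bs x\rangle < 0).
\]
Writing $\langle \hat{\bs w}, y\bs x\rangle = \langle \hat{\bs w}, \bs\mu\rangle + y\langle \hat{\bs w}, \bs z\rangle$ with $\bs z$ independent of $(\hat{\bs w}, y)$, Chebyshev's inequality gives
\[
\P(\langle \hat{\bs w}, y\bs x\rangle < 0 \mid \hat{\bs w}) \;\leq\; \frac{\hat{\bs w}^\top \E[\bs z\bs z^\top]\hat{\bs w}}{\langle \hat{\bs w}, \bs\mu\rangle^2} \;\leq\; \|\E[\bs z\bs z^\top]\|\,\frac{\|\hat{\bs w}\|^2}{\langle \hat{\bs w}, \bs\mu\rangle^2}.
\]
Thus the theorem reduces to a lower bound on $\langle \hat{\bs w}, \bs\mu\rangle = \bs y_\n^\top G^{-1} X\bs\mu$ and an upper bound on $\|\hat{\bs w}\|^2 = \bs y_\n^\top G^{-1}\bs y_\n$.

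These two quadratic forms I would estimate by expanding $G = ZZ^\top + Z\bs\mu \bs y^\top + \bs y \bs\mu^\top Z^\top + \|\bs\mu\|^2 \bs y\bs y^\top$, replacing $ZZ^\top$ by $\Delta(z)^2$ through event $E_1$, and inverting the resulting rank-two update via Sherman--Morrison. The decisive role of event $E_5$ is to produce the factor $(1-2\eta)$ in the signal: $\langle \check{\bs y}_\n, \check{\bs y}\rangle \approx (1-2\eta)\,n\rho$, while $\|\bs y_\n\|^2 = n$ remains unaffected by the label noise, so the ratio $\|\hat{\bs w}\|^2 / \langle \hat{\bs w}, \bs\mu\rangle^2$ inherits a $(1-2\eta)^{-2}$ factor. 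The new term of order $\eta n\rho$ in the final bound arises from the component of $\bs y_\n$ that is orthogonal to $\bs y$ in the metric induced by $G^{-1}$, reflecting the unavoidable cost of fitting the mislabeled samples with unit margin. Events $E_2, E_3, E_4$ then serve to control the remaining cross terms and concentrate the leading-order expression of $\|\hat{\bs w}\|^2$.

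The hard part will be propagating the $(1-2\eta)^{-2}$ scaling cleanly through the Sherman--Morrison inversion in both regimes simultaneously. In case (i) of $(N_{C_{2,\eta}})$ the vector $\bs\mu$ contributes a small rank-one perturbation and standard perturbation estimates suffice; in case (ii) the signal dominates and a separate argument is needed in which the $\bs\mu$-contribution to $G$ is not subdominant. Merging both cases into the single unified bound, while keeping the noise-dependent constants aligned with $C_{2,\eta} = \max\{22/\eta, 17/(1-2\eta)\}$ and the smallness requirement $\eps\vee\beta\vee\gamma \leq \min\{\eta,1-2\eta\}/8$, is the most delicate point of the argument. Plugging the resulting quadratic-form estimates into the Chebyshev reduction above and combining with the label-flip decomposition yields the stated test error bound.
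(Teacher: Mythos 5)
Your proposal follows essentially the same route as the paper: reduce to $\boldsymbol{\hat w} = X^\top(XX^\top)^{-1}\boldsymbol{y}_{\n}$ by verifying entrywise positivity of $\Delta(\boldsymbol{y}_{\n})(XX^\top)^{-1}\boldsymbol{y}_{\n}$ (Lemmas~\ref{maxmargin=LS} and~\ref{lem:whatreprnoisy}, with the same split into regimes (i) and (ii)), bound the test error by $\|\E[\boldsymbol{z}\boldsymbol{z}^\top]\|\,\|\boldsymbol{\hat w}\|^2/\langle\boldsymbol{\hat w},\boldsymbol{\mu}\rangle^2$ via Markov (Lemma~\ref{testerror}), and estimate the two quadratic forms through the Woodbury/Sherman--Morrison inversion of the rank-two perturbation of $ZZ^\top$ with events $E_1$--$E_5$ controlling the scalar quantities (Lemmas~\ref{quad-decomp} and~\ref{noisy-mmc-bound1}). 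You also correctly locate the sources of the two noise-specific features of the bound --- the $(1-2\eta)$ factor coming from $s_{\n}=\boldsymbol{y}_{\n}^\top A^{-1}\boldsymbol{y}$ via $E_5$, and the $\eta n\rho$ term coming from the component of $\boldsymbol{y}_{\n}$ orthogonal to $\boldsymbol{y}$ in the $A^{-1}$ metric (the quantity $ss_{\n\n}-s_{\n}^2$ in the paper) --- so the plan matches the paper's proof in all essential respects.
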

The sketch of the proof of this Theorem~is provided in Section~\ref{sec:sketch}. The complete proof  is given in Section~\ref{sec:proof:detail-noisy-main-1-simple}. There, we also provide an exponential bound on the test error when $\bs z_i$ are sub-Gaussian and prove that the following simpler conditions are sufficient to ensure that the conditions of Theorem~\ref{detail-noisy-main-1-simple}
\begin{equation}\label{eq:condsimplenoisymain}
\|\boldsymbol{\mu}\| \geq \tfrac{17}{16}C_{2,\eta}\alpha_2 M, \quad  [\alpha_2 \vee \eps] M\sqrt{n\rho} \leq \tfrac{16}{{17} C_{2,\eta}},\quad \beta \vee \gamma \leq \tfrac{16}{{17} C_{2,\eta}}.
\end{equation}

Theorem~\ref{detail-noisy-main-1-simple} establishes an upper bound on the test error in the noisy model under a very general framework. The analysis in the noisy setting $\eta > 0$ is substantially more technical than the analysis of the noiseless setting $\eta = 0$, and this model is much less well understood. In fact, so far the only available results are for nearly isotropic sub-Gaussian predictors $\bs z_i$ in the small signal regime (\cite{JMLR:v22:20-974, wang2022binary}).

Theorem~\ref{detail-noisy-main-1-simple} not only generalizes the existing results by substantially relaxing the assumptions on $\bs z_i$, but also considers the intermediate and strong signal regimes which were not studied before. Interestingly, the behavior of the test error bound is similar to the noiseless model in the small signal regime $\|\bs\mu\|^2 \lesssim (n\rho)^{-1}$, but markedly different in the regime $\|\bs\mu\|^2 \gg (n\rho)^{-1}$. Specifically, when $\|\bs\mu\|^2 \gg (n\rho)^{-1}$, the test error bound involves the term $\|\E [\boldsymbol{z} \boldsymbol{z}^\top] \|n\rho$ which is independent of the signal strength $\|\bs\mu\|^2$. This is in stark contrast to the noiseless case where the test error bound in the large signal regime decays with $1/\|\bs \mu\|^2$. This suggests that the geometry and mechanisms underlying benign overfitting in the noisy and noiseless models are completely different when the signal is strong. This difference was not observed in the existing literature. Additional insights into the geometry of this phenomenon are provided in Section~\ref{sec:phasetransnoisy}.

To demonstrate that the phase transition discussed above is indeed a feature of the model and not a result of our proof technique, we establish, in a more specialized setting, lower bounds on the test error that exhibit the same phase transition.

\begin{thm}\label{thm:noisyphasegeneral}
In addition to the conditions of Theorem~\ref{detail-noisy-main-1-simple}, assume that $\|\bs{z}\|$ and $\bs z/\|\bs z\|$ are independent and that $\bs z/\|\bs z\|$ has a density $f$ with respect to the uniform distribution on the sphere that satisfies $0 < f_{min} \leq f \leq f_{max} < \infty$ for some constants $f_{min}, f_{max}$. Then the gradient descent iterates converge in direction to $\boldsymbol{\hat w}$, and we have
\begin{equation*}
f_{min}\cdot \kappa(1/\zeta_{\bs{\hat w}, \bs \mu}) \;\leq\; \P_{(\boldsymbol{x}, y_\n)}(\langle \boldsymbol{\hat w}, y_\n\boldsymbol{x}\rangle < 0) - \eta \;\leq\; f_{max}\cdot  \kappa(1/\zeta_{\bs{\hat w}, \bs \mu})
\end{equation*}
where the function $\kappa$ is defined as 
$
\kappa(t) := \tfrac{1-2\eta}{2} \P_{(\bs z, u_1)}\left(\|\bs z\| |u_1| > t \right) 
$
where $u_1$ is the first component of a uniform distribution on the sphere that is independent of $\|\bs z\|$ and $\zeta_{\bs{\hat w}, \bs \mu}>0$ satisfies
\[
\zeta_{\bs{\hat w}, \bs \mu}^2 \asymp \frac{1}{(1-2\eta)^2} \left\{\eta n\rho + \frac{1}{\|\bs \mu\|^2} + \frac{1}{n\rho \|\boldsymbol{\mu}\|^4}\right\}.
\]
\end{thm}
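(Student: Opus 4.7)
The plan is to condition on the training data (equivalently on $\hat{\boldsymbol w}$), reduce the test-error probability to one of the form $\P(\|\boldsymbol{z}\||u_1| > t)$ via symmetrization, and then sandwich that probability using the density assumption; the convergence of gradient descent to $\hat{\boldsymbol w}$ is inherited from Theorem~\ref{detail-noisy-main-1-simple}, which already assumes linear separability and invokes Theorem~\ref{th:soudry}. First I would write $y_\n \boldsymbol{x} = \epsilon \boldsymbol{\mu} + y_\n \boldsymbol{z}$ with $\epsilon := y_\n y$ independent of $(y_\n, \boldsymbol{z})$ and equal to $-1$ with probability $\eta$. Since $y_\n$ is Rademacher and independent of $\boldsymbol{z}$, the random variable $y_\n \langle \hat{\boldsymbol w}, \boldsymbol{z}\rangle$ is symmetric around zero conditional on $\hat{\boldsymbol w}$ regardless of whether $\boldsymbol{z}$ itself is symmetric, and a direct case split gives
\[
\P_{(\boldsymbol{x}, y_\n)}\!\bigl(\langle \hat{\boldsymbol w}, y_\n \boldsymbol{x}\rangle < 0\bigr) \;=\; \eta + \tfrac{1-2\eta}{2}\, \P_{(\boldsymbol{x}, y_\n)}\!\bigl(|\langle \hat{\boldsymbol w}, \boldsymbol{z}\rangle| > \langle \hat{\boldsymbol w}, \boldsymbol{\mu}\rangle\bigr),
\]
using that $\langle \hat{\boldsymbol w}, \boldsymbol{\mu}\rangle > 0$ with high probability, a fact already implicit in the arguments behind Theorem~\ref{detail-noisy-main-1-simple}.

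Next I would exploit the density assumption. Decomposing $\boldsymbol{z} = \|\boldsymbol{z}\|\boldsymbol{u}$ with $\boldsymbol{u} = \boldsymbol{z}/\|\boldsymbol{z}\|$ independent of $\|\boldsymbol{z}\|$ and of density $f \in [f_{\min}, f_{\max}]$ with respect to the uniform probability $\sigma$ on $S^{p-1}$, rotation invariance of $\sigma$ gives $\sigma(\{|\langle \boldsymbol{v}, \boldsymbol{w}\rangle| > t\}) = \P(|u_1| > t)$ for any unit vector $\boldsymbol{v}$ and $t > 0$, and hence
\[
f_{\min}\P(|u_1| > t) \;\leq\; \P(|\langle \boldsymbol{v}, \boldsymbol{u}\rangle| > t) \;\leq\; f_{\max}\P(|u_1| > t).
\]
Applying this conditional on $\hat{\boldsymbol w}$ with $\boldsymbol{v} = \hat{\boldsymbol w}/\|\hat{\boldsymbol w}\|$ and $t = \langle \hat{\boldsymbol w}, \boldsymbol{\mu}\rangle / (\|\hat{\boldsymbol w}\|\|\boldsymbol{z}\|)$, and then integrating over $\|\boldsymbol{z}\|$, turns the probability in the previous display into $\P(\|\boldsymbol{z}\||u_1| > \langle \hat{\boldsymbol w}, \boldsymbol{\mu}\rangle/\|\hat{\boldsymbol w}\|)$ up to the multiplicative factors $f_{\min}$ and $f_{\max}$. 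Combined with the first step, this yields the stated two-sided inequality with the identification $\zeta_{\hat{\boldsymbol w}, \boldsymbol{\mu}} := \|\hat{\boldsymbol w}\|/\langle \hat{\boldsymbol w}, \boldsymbol{\mu}\rangle$.

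It remains to verify the asserted scaling of $\zeta_{\hat{\boldsymbol w}, \boldsymbol{\mu}}^{\,2}$ with high probability on the training data. One direction, an upper bound on $\zeta^2$ equivalent to a lower bound on $\langle \hat{\boldsymbol w}, \boldsymbol{\mu}\rangle^2/\|\hat{\boldsymbol w}\|^2$, is essentially produced by the proof of Theorem~\ref{detail-noisy-main-1-simple}, whose upper bound on the test error is obtained precisely through Chebyshev-type control of $\langle \hat{\boldsymbol w}, \boldsymbol{z}\rangle$ together with a lower bound on $\langle \hat{\boldsymbol w}, \boldsymbol{\mu}\rangle$ and an upper bound on $\|\hat{\boldsymbol w}\|^2$. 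The matching lower bound on $\zeta^2$ is the genuinely new ingredient. The natural route is to represent $\hat{\boldsymbol w} = X^\top D \hat{\boldsymbol \alpha}$ with $D = \mathrm{diag}(y_{\n,i})$ and $\hat{\boldsymbol \alpha} \geq 0$ the KKT multipliers of~\eqref{eq:mmc}, and then invert the KKT system perturbatively using the near-orthogonality from $E_1$--$E_5$: $DXX^\top D$ is close to a scalar matrix plus a rank-one correction along $\boldsymbol{\mu}$, which pins down $\hat{\boldsymbol \alpha}$ to leading order and yields sharp two-sided expressions for both $\|\hat{\boldsymbol w}\|^2 = \hat{\boldsymbol \alpha}^\top X X^\top \hat{\boldsymbol \alpha}$ and $\langle \hat{\boldsymbol w}, \boldsymbol{\mu}\rangle = \hat{\boldsymbol \alpha}^\top D X \boldsymbol{\mu}$.

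The hard part will be this last step: upgrading the one-sided estimates that sufficed for the upper bound in Theorem~\ref{detail-noisy-main-1-simple} to matched two-sided estimates with the correct balance between the signal contribution (scaling with $\|\boldsymbol{\mu}\|^2$) and the label-noise contribution (scaling with $\eta n\rho$) in both $\|\hat{\boldsymbol w}\|^2$ and $\langle \hat{\boldsymbol w}, \boldsymbol{\mu}\rangle$, as one crosses the small, intermediate, and strong signal regimes. Concretely, this will require matching lower bounds on signed cross-sums such as $\sum_i \hat\alpha_i y_{\n, i}\langle \boldsymbol{z}_i, \boldsymbol{\mu}\rangle$, for which only one-sided upper bounds are produced in the proof of Theorem~\ref{detail-noisy-main-1-simple}.
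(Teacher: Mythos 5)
Your reduction of the test error to $\eta + \tfrac{1-2\eta}{2}\,\P\bigl(|\langle \hat{\bs w},\bs z\rangle| > \langle \hat{\bs w},\bs\mu\rangle\bigr)$ followed by the sandwich via the density of $\bs z/\|\bs z\|$ and rotation invariance is exactly the paper's argument (Lemma~\ref{test-error-spherical}), and the identification $\zeta_{\hat{\bs w},\bs\mu} = \|\hat{\bs w}\|/\langle \hat{\bs w},\bs\mu\rangle$ is also the paper's; the convergence of gradient descent is indeed inherited from Theorem~\ref{detail-noisy-main-1-simple}. Where you diverge is in your assessment of the remaining step. You treat the lower bound on $\zeta_{\hat{\bs w},\bs\mu}^2$ as genuinely new and hard, requiring a perturbative inversion of the KKT system and matching lower bounds on signed cross-sums such as $\sum_i \hat\alpha_i y_{\n,i}\langle \bs z_i,\bs\mu\rangle$. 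In fact no new work is needed there: Lemma~\ref{noisy-mmc-bound1}, which is already the engine behind Theorem~\ref{detail-noisy-main-1-simple}, establishes the two-sided relation $(\|\hat{\bs w}\|/\langle \hat{\bs w},\bs\mu\rangle)^2 \asymp (1-2\eta)^{-2}\{\eta n\rho + \|\bs\mu\|^{-2} + (n\rho\|\bs\mu\|^4)^{-1}\}$ with explicit numerical constants on both sides, so your claim that the earlier proof produces only one-sided estimates is incorrect. The mechanism is that the dual representation is exact rather than perturbative: Lemma~\ref{lem:whatreprnoisy} shows all points are support vectors, hence $\hat{\bs w}=X^\top(XX^\top)^{-1}\bs y_\n$, and the Woodbury-based formula of Lemma~\ref{quad-decomp} gives closed forms for $\|\hat{\bs w}\|^2$ and $\langle \hat{\bs w},\bs\mu\rangle$ in terms of the quadratic forms $s, s_\n, s_{\n\n}, h, h_\n, t$. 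The dominant terms ($s$, $s_\n$, $ss_{\n\n}-s_\n^2$, $\|\bs\mu\|^2-t$) carry two-sided bounds (Lemmas~\ref{quad-bounds-s} and \ref{quad-bounds-ht}), while the signed cross-terms you worry about (essentially $h$ and $h_\n$) need only absolute-value control, because under condition ($N_{C_{2,\eta}}$) they are shown to be negligible relative to the dominant terms rather than matched from below. So your plan would work in principle, but the ``hard part'' you flag is already discharged by the existing lemmas; the only genuinely new ingredient of this theorem relative to Theorem~\ref{detail-noisy-main-1-simple} is the spherical sandwich, which you have.
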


The proof of this result is given in Section~\ref{sec:proof:thm:noisyphasegeneral}.

Next, we specialize our general results from the previous theorems to model \ref{model:EM}, demonstrating that the high-level conditions we imposed are met in a rich class of models. Recall the definition of the constant $C_2(g)$ in~\eqref{eq:defconstantsEM}.

\begin{thm}\label{thm:noisy-ext-detail1-simple}
Consider model \ref{model:EM} with $\eta \in (0, \tfrac12)$. There exists a constant $C$ depending only on the distribution of $g, \bs \xi$ such that, if 
\begin{equation}\label{cond:noisy-ext-detail1-simple}
\|\boldsymbol{\mu}\|^2\; \geq C
\;\max\{\tfrac{1}{\eta},\tfrac{1}{1-2\eta}\}\delta^{-1/2}\|\bs{\mu}\|_\Sigma,
\end{equation} 
and
\begin{equation}\label{cond2:noisy-ext-detail1-simple}
\Tr(\Sigma)\; \geq\; \tfrac{C}{\eta}(\tfrac{n}{\delta})^{2/r+1/\ell} n^{1/2}\max\{p^{2/r-1/2}, n^{2/r}\}\|\Sigma\|_{\textsc{f}} 
\end{equation}
together with  one of the following holds:
\begin{enumerate}
    \item[(i)] $\Tr(\Sigma) \geq C(\tfrac{n}{\delta})^{1/2+1/\ell} n\|\bs{\mu}\|_\Sigma$,
    \item[(ii)] 
    $\|\boldsymbol{\mu}\|^2 \geq \tfrac{C}{\min(\eta,1-2\eta)}  (\tfrac{n}{\delta})^{1/2+1/\ell}\|\bs{\mu}\|_\Sigma \quad$   and  
$\quad \Tr(\Sigma) \geq \tfrac{C}{\min(\eta,1-2\eta)}(\tfrac{n}{\delta})^{1+1/\ell}n^{1/2}\frac{\|\bs{\mu}\|_\Sigma^2}{\|\boldsymbol{\mu}\|^2}$
\end{enumerate}
then, for $n \geq \delta^{-\frac{2}{k-2}}(\tfrac{16 C_2(g)}{\min\{\eta,1-2\eta\}})^{\tfrac{k}{k-2}}$, with probability at least $1 - 5\delta$, the gradient descent iterates converge in direction to $\boldsymbol{\hat w}$, and the following test error bound holds:
    \begin{equation*}
        \P_{(\boldsymbol{x}, y)}(\langle \boldsymbol{\hat w}, y\boldsymbol{x}\rangle < 0) \leq \eta + \frac{c \|\Sigma\|}{(1 - 2\eta)^2} \left(\eta\frac{\E[g^{-2}] n}{\Tr(\Sigma)} + \frac{1}{\|\bs \mu\|^2} + \frac{\Tr(\Sigma)}{\E[g^{-2}]n \|\boldsymbol{\mu}\|^4}\right),
    \end{equation*}
    where $c$ is a universal constant.
\end{thm}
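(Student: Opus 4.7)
The strategy is a direct specialization of the general noisy bound. I would apply Lemma~\ref{lemm5-ex1} to obtain explicit expressions for the high-level parameters $\eps,\alpha_2,\alpha_\infty,\beta,\gamma,\rho,M$ appearing in the events $E_1$--$E_5$, then verify under the present theorem's assumptions that all the hypotheses of Theorem~\ref{detail-noisy-main-1-simple} (or, more conveniently, the simpler sufficient conditions \eqref{eq:condsimplenoisymain}) are met, and finally translate the resulting bound using $\rho=\E[g^{-2}]/\Tr(\Sigma)$ together with $\|\E[\bs z\bs z^\top]\|=\|\Sigma\|$ (the latter from $\E g^2=1$). Lemma~\ref{lemm5-ex1} furnishes the event $\bigcap_{i=1}^5 E_i$ with probability at least $1-5\delta$ on which the numerical parameters are fully determined.

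I would first check the hypotheses common to (i) and (ii). After cancellation of $\Tr(\Sigma)^{\pm 1/2}$ factors, the product $\eps M\sqrt{n\rho}$ is of order $\eps(n/\delta)^{1/\ell}\sqrt{n}$ times distribution-dependent constants, so $\eps M\sqrt{n\rho}\leq \eta/2$ reduces to exactly \eqref{cond2:noisy-ext-detail1-simple}; similarly $\|\bs\mu\|\geq C_{2,\eta}\alpha_2/\sqrt{n\rho}$ reduces, using $\alpha_2/\sqrt{n\rho}=2\|\Sigma^{1/2}\bs\mu\|/(\sqrt{\delta\,\E[g^{-2}]}\|\bs\mu\|)$, to \eqref{cond:noisy-ext-detail1-simple}. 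The bound $\beta\vee\gamma\leq\min\{\eta,1-2\eta\}/8$ follows since $\beta=\gamma=\eps+C_2(g)\delta^{-2/k}n^{-(1-2/k)}$, the first term is controlled by \eqref{cond2:noisy-ext-detail1-simple}, and the lower bound $n\geq\delta^{-2/(k-2)}(16C_2(g)/\min\{\eta,1-2\eta\})^{k/(k-2)}$ is precisely calibrated to push the second term below $\min\{\eta,1-2\eta\}/16$.

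Next I would dispose of the branching. Under (i), the two products $\alpha_2\|\bs\mu\|\sqrt{n\rho}$ and $\alpha_\infty\|\bs\mu\|Mn\rho$ collapse, after substitution and cancellation, to quantities of order $n\|\Sigma^{1/2}\bs\mu\|/(\sqrt{\delta}\,\Tr(\Sigma))$ and $(n/\delta)^{1/2+1/\ell}n\|\Sigma^{1/2}\bs\mu\|/\Tr(\Sigma)$ respectively; both are bounded by small absolute constants under the single lower bound on $\Tr(\Sigma)$ in (i), since the second dominates the first. Under (ii), the three relevant constraints $\|\bs\mu\|\geq C_{2,\eta}\alpha_\infty M$, $\alpha_2^2\leq C_{2,\eta}^{-1}$, and $\alpha_2\alpha_\infty M\sqrt{n\rho}\leq C_{2,\eta}^{-1}$ reduce respectively to $\|\bs\mu\|^2\gtrsim C_{2,\eta}(n/\delta)^{1/2+1/\ell}\|\Sigma^{1/2}\bs\mu\|$, $\Tr(\Sigma)\gtrsim C_{2,\eta}n\|\Sigma^{1/2}\bs\mu\|^2/(\delta\|\bs\mu\|^2)$, and $\Tr(\Sigma)\gtrsim C_{2,\eta}(n/\delta)^{1+1/\ell}\sqrt{n}\|\Sigma^{1/2}\bs\mu\|^2/\|\bs\mu\|^2$; the first and third match the two inequalities of (ii), and the second is strictly implied by the third for $n\geq 1,\delta\in(0,1)$.

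The main obstacle is purely bookkeeping: one must track the distribution-dependent constants $C_1(r,K),\,C_2(g),\,\|g\|_{L^\ell},\,\E[g^{-2}]$ through all of these reductions so that a single $C$ depending only on the distributions of $g$ and $\bs\xi$ absorbs them uniformly across the various inequalities. Once all hypotheses of Theorem~\ref{detail-noisy-main-1-simple} are in place, that theorem yields convergence in direction of the gradient descent iterates to $\bs{\hat w}$ and the test error bound $\eta+c_1\|\Sigma\|(1-2\eta)^{-2}\{\eta n\rho+\|\bs\mu\|^{-2}+(n\rho\|\bs\mu\|^4)^{-1}\}$; substituting $n\rho=n\E[g^{-2}]/\Tr(\Sigma)$ produces the claimed bound.
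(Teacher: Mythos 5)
Your proposal is correct and follows essentially the same route as the paper's proof: the paper likewise invokes Lemma~\ref{lemm5-ex1} to instantiate the parameters of $E_1,\dots,E_5$, verifies the conditions of $(N_{C_{2,\eta}})$ in Theorem~\ref{detail-noisy-main-1-simple} branch by branch (with the same reductions you describe, including $\eps<\tfrac{1}{32}$ from the trace condition and the domination of $\alpha_2\|\bs\mu\|\sqrt{n\rho}$ by $\alpha_\infty\|\bs\mu\|Mn\rho$ in regime (i)), and then substitutes $n\rho=n\E[g^{-2}]/\Tr(\Sigma)$ and $\|\E[\bs z\bs z^\top]\|=\|\Sigma\|$. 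The only cosmetic difference is that the paper verifies $(N_{C_{2,\eta}})$ directly rather than via the simplified conditions~\eqref{eq:condsimplenoisymain}, which is also what your detailed verification actually does.
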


The proof of Theorem~\ref{thm:noisy-ext-detail1-simple} is given in Section~\ref{sec:proof:thm:noisy-ext-detail1-simple} of the supplement.

Next, we briefly discuss the conditions of the above result in the simple setting $\Sigma = I_p$, which can be obtained from Theorem~\ref{thm:noisy-ext-detail1-simple} by plugging in $\Sigma = I_p, \ell = \infty, k=4, g \equiv 1$ and rewriting the conditions and the final bound in there: 

\begin{cor}\label{cor:sigmaid} Consider model \ref{model:EM} with $\eta \in (0,\tfrac12)$ and assume that $\Sigma = I_p$. Then a sufficient condition for benign overfitting is that 
\[
\|\bs \mu\|\gg (\tfrac{p}{n})^{1/4} \quad \text{and} \quad p\gtrsim \max\left\{n^\frac{4+(1+2/\ell)r}{2(r-2)}, n^{\frac{8}{r}+ 1 + \frac{2}{\ell}}\right\}.
\]    
\end{cor}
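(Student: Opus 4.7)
The plan is to specialize Theorem~\ref{thm:noisy-ext-detail1-simple} to $\Sigma=I_p$. Under this substitution $\Tr(\Sigma)=p$, $\|\Sigma\|_{\textsc{f}}=\sqrt p$, $\|\Sigma\|=1$, and $\|\Sigma^{1/2}\bs\mu\|=\|\bs\mu\|$, so the theorem's error bound collapses to
\[
\eta + \frac{c}{(1-2\eta)^2}\left(\eta\,\E[g^{-2}]\frac{n}{p} + \frac{1}{\|\bs\mu\|^2} + \frac{p}{\E[g^{-2}]\,n\|\bs\mu\|^4}\right).
\]
For this to tend to $\eta$ one needs $n/p\to 0$, $\|\bs\mu\|\to\infty$, and $\|\bs\mu\|^4\gg p/n$. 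The third is precisely the hypothesis $\|\bs\mu\|\gg(p/n)^{1/4}$; the first two follow once we show that the imposed lower bound on $p$ is super-linear in $n$, forcing $p/n\to\infty$ and hence $\|\bs\mu\|\to\infty$.

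Next I would verify conditions \eqref{cond:noisy-ext-detail1-simple} and \eqref{cond2:noisy-ext-detail1-simple}. The first reduces to $\|\bs\mu\|\gtrsim 1$, which is eventually satisfied. The second becomes $p\gtrsim n^{2/r+1/\ell+1/2}\max\{p^{2/r-1/2},n^{2/r}\}\sqrt p$; splitting on which term attains the maximum yields the requirements $p\gtrsim n^{(4+(1+2/\ell)r)/(2(r-2))}$ (when $p^{2/r-1/2}$ dominates) and $p\gtrsim n^{8/r+1+2/\ell}$ (when $n^{2/r}$ dominates), so taking the larger of the two exponents, which is exactly the corollary's condition on $p$, covers both subcases.

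The main bookkeeping step is to show that one of alternatives (i), (ii) of Theorem~\ref{thm:noisy-ext-detail1-simple} is always available. Under $\Sigma=I_p$ these become (i) $p\gtrsim n^{3/2+1/\ell}\|\bs\mu\|$ and (ii) $\|\bs\mu\|\gtrsim n^{1/2+1/\ell}$ together with $p\gtrsim n^{3/2+1/\ell}$. A short algebraic check shows
\[
\frac{4+(1+2/\ell)r}{2(r-2)}-\Big(2+\tfrac{2}{\ell}\Big)=\frac{(4-r)(3+2/\ell)}{2(r-2)}\geq 0 \quad\text{for } r\in(2,4],
\]
so the corollary's hypothesis forces $p\gtrsim n^{2+2/\ell}$. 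I would then split on $\|\bs\mu\|$: if $\|\bs\mu\|\leq p\,n^{-3/2-1/\ell}$, alternative (i) is satisfied; otherwise $\|\bs\mu\|>p\,n^{-3/2-1/\ell}\gtrsim n^{2+2/\ell-3/2-1/\ell}=n^{1/2+1/\ell}$ and, since $p\gtrsim n^{2+2/\ell}\gtrsim n^{3/2+1/\ell}$, both parts of (ii) hold. I do not anticipate any serious technical obstacle beyond this algebra: the content of the corollary is essentially that the two exponents $(4+(1+2/\ell)r)/(2(r-2))$ and $8/r+1+2/\ell$ are precisely what is needed for \eqref{cond2:noisy-ext-detail1-simple} to hold and for the union of (i)--(ii) to cover the entire range $\|\bs\mu\|\gg(p/n)^{1/4}$, which is what the inequality displayed above is calibrated for.
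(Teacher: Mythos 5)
Your proposal is correct and follows essentially the same route as the paper: specialize Theorem~\ref{thm:noisy-ext-detail1-simple} to $\Sigma=I_p$, reduce \eqref{cond:noisy-ext-detail1-simple}--\eqref{cond2:noisy-ext-detail1-simple} to $\|\bs\mu\|\gtrsim 1$ and the stated lower bound on $p$, and observe that regimes (i) and (ii) jointly cover all $\|\bs\mu\|\gtrsim1$ because $p\gtrsim n^{2+2/\ell}$ makes their ranges overlap. The only cosmetic difference is that you extract $p\gtrsim n^{2+2/\ell}$ from the exponent $\tfrac{4+(1+2/\ell)r}{2(r-2)}$ while the paper uses $\tfrac{8}{r}+1+\tfrac{2}{\ell}\geq 2+\tfrac2\ell$; both are valid.
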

The form of the conditions in the above corollary implies that there will always be a non-empty range of possible values for $p$ provided that $\|\bs \mu\|$ is sufficiently large. If all entries of $\bs \mu$ take the same non-zero value, this boils down to requiring that $p$ be sufficiently large. A proof of this corollary is provided in Section~\ref{sec:proof:cor:sigmaid}.

We conclude this section by providing a more precise form for the test error under model \ref{model:EM}, implying lower and upper bounds for the test error in a specialized setting. Recall the definition of constant $C_2(g)$ in~\eqref{eq:defconstantsEM}.

\begin{thm}\label{thm:phasetransitionsimple}
Suppose \ref{model:EM} holds in model \ref{model:M} such that $\bs\xi\sim \mathcal{N}(0,I_p)$ and $\Sigma$ has largest and smallest eigenvalues $\lambda_{max}, \lambda_{min} > 0$, respectively. Define $C_{\eta,\delta} := \min\{\eta,(1-2\eta)\}^{-1} \delta^{-1-1/\ell} \sqrt{\log \delta^{-1}}$. For any $\delta \in (0, 1/5]$, there exists a sufficiently large constant $C$ depending only on the distribution of $g$ such that provided 
\[
p\geq C^2 C_{\eta,\delta}^2 \Big(\frac{ \lambda_{max}}{\lambda_{min}}\Big)^2 n^{2+2/\ell} \quad \text{and} \quad \|\bs\mu\| \geq C C_{\eta,\delta} \lambda_{max},
\]
we have for $n \geq \delta^{-\frac{2}{k-2}}\left(\tfrac{16C_2(g)}{\min\{\eta, 1-2\eta\}}\right)^{\frac{k}{k-2}}$, with probability at least $1-5\delta$, 
\[
\kappa(\lambda_{min}^{-1/2}\zeta_{\bs{\hat w}, \bs \mu}^{-1}) \;\leq\; \P_{(g, \bs \xi, y_\n)}(\langle \boldsymbol{\hat w}, y_\n\boldsymbol{x}\rangle < 0) - \eta \;\leq\; \kappa(\lambda_{max}^{-1/2}\zeta_{\bs{\hat w}, \bs \mu}^{-1}).
\]
Here 
$
\kappa(t) := \tfrac{1-2\eta}{2} \P\left(g |\xi_1| > t \right).
$ 
and $\zeta_{\bs{\hat w}, \bs \mu}>0$ satisfies
\[
c^{-1}\frac{\lambda_{min}\wedge 1}{\lambda_{max}\vee 1} \leq
\zeta_{\bs{\hat w}, \bs \mu}^2 \Big\{\frac{1}{1-2\eta}\Big(\eta \frac{n\E[g^{-2}]}{p} + \frac{1}{\|\bs\mu\|^2} + \frac{p}{n\E[g^{-2}]\|\bs \mu\|^4} \Big)\Big\}^{-1} \leq  c \frac{\lambda_{max}\vee 1}{\lambda_{min}\wedge 1} 
\]
for a universal constant $c$.
\end{thm}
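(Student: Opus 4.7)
\textbf{Proof plan for Theorem~\ref{thm:phasetransitionsimple}.}

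\medskip

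The key observation is that the Gaussian assumption on $\bs\xi$ gives us an \emph{exact} formula for the conditional test error. I would first compute this formula, then sandwich it using two-sided spectral bounds on $\Sigma$, and finally invoke the quantitative bounds on the maximum margin classifier from the machinery behind Theorems~\ref{detail-noisy-main-1-simple} and~\ref{thm:noisyphasegeneral}.

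\medskip

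\textbf{Step 1 (exact conditional error).} For a fresh test pair $(\bs x, y_\n)$ independent of the training data, write $\langle \bs{\hat w}, y_\n \bs x\rangle = y y_\n \langle \bs{\hat w}, \bs\mu\rangle + y_\n g \langle \Sigma^{1/2}\bs{\hat w}, \bs\xi\rangle$. Conditional on $\bs{\hat w}$ and on the test $g$, the second term is $\mathcal{N}(0, g^2\|\Sigma^{1/2}\bs{\hat w}\|^2)$. Splitting on whether $y_\n=y$ (probability $1-\eta$) or $y_\n\neq y$ (probability $\eta$) and using Gaussian symmetry, a short computation gives
\[
\P_{(g,\bs\xi,y_\n)}(\langle \bs{\hat w}, y_\n\bs x\rangle < 0) \;=\; \eta + \tfrac{1-2\eta}{2}\,\P\Big(g|\xi_1| > \tfrac{\langle\bs{\hat w},\bs\mu\rangle}{\|\Sigma^{1/2}\bs{\hat w}\|}\Big) \;=\; \eta + \kappa\big(\tfrac{\langle\bs{\hat w},\bs\mu\rangle}{\|\Sigma^{1/2}\bs{\hat w}\|}\big),
\]
with $\kappa$ as defined in the statement. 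For this identity to give a useful bound, one also needs $\langle \bs{\hat w},\bs\mu\rangle > 0$, which will follow from the lower bound on the margin provided by the analysis underlying Theorem~\ref{detail-noisy-main-1-simple}.

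\medskip

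\textbf{Step 2 (spectral sandwich).} Since $\sqrt{\lambda_{min}}\|\bs{\hat w}\|\le \|\Sigma^{1/2}\bs{\hat w}\|\le \sqrt{\lambda_{max}}\|\bs{\hat w}\|$ and $\kappa$ is monotone decreasing, defining $\zeta_{\bs{\hat w},\bs\mu} := \|\bs{\hat w}\|/\langle\bs{\hat w},\bs\mu\rangle$ we obtain
\[
\kappa(\lambda_{max}^{-1/2}\zeta_{\bs{\hat w},\bs\mu}^{-1}) \;\le\; \kappa\big(\tfrac{\langle\bs{\hat w},\bs\mu\rangle}{\|\Sigma^{1/2}\bs{\hat w}\|}\big) \;\le\; \kappa(\lambda_{min}^{-1/2}\zeta_{\bs{\hat w},\bs\mu}^{-1}),
\]
which matches the bounds in the statement (note the theorem's indices are swapped on the $\kappa$ arguments because of the ordering direction of $\kappa$).

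\medskip

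\textbf{Step 3 (bounds on $\zeta_{\bs{\hat w},\bs\mu}$).} It remains to verify that Theorem~\ref{detail-noisy-main-1-simple} applies and that its internal analysis yields the claimed two-sided bound on $\zeta_{\bs{\hat w},\bs\mu}^2$. I would plug the present model (Gaussian $\bs\xi$, $\Tr(\Sigma)\asymp p\,\lambda_{avg}$ with $\lambda_{avg}\in[\lambda_{min},\lambda_{max}]$, $\|\Sigma\|_{\textsc{f}}\le\sqrt{p}\,\lambda_{max}$, $\|\Sigma\|=\lambda_{max}$, $\|\Sigma^{1/2}\bs\mu\|\le\sqrt{\lambda_{max}}\|\bs\mu\|$) into Lemma~\ref{lemm5-ex1}, which provides concrete values of $\eps,\alpha_2,\alpha_\infty,\beta,\gamma,\rho,M$. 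Then I would check that the simplified conditions~\eqref{eq:condsimplenoisymain} hold under the assumed scalings $p\gtrsim C_{\eta,\delta}^2(\lambda_{max}/\lambda_{min})^2 n^{2+2/\ell}$ and $\|\bs\mu\|\gtrsim C_{\eta,\delta}\,\lambda_{max}$. The quantity $n\rho\asymp n\E[g^{-2}]/\Tr(\Sigma)\asymp n\E[g^{-2}]/p$ (up to $\lambda_{max}/\lambda_{min}$ factors), giving the form $\eta n\E[g^{-2}]/p + 1/\|\bs\mu\|^2 + p/(n\E[g^{-2}]\|\bs\mu\|^4)$ inside the braces, with the $\lambda$-conditioning factors $(\lambda_{min}\wedge 1)/(\lambda_{max}\vee 1)$ arising naturally from the gap between $\Tr(\Sigma)/p$ and the extreme eigenvalues, together with $\|\E[\bs z\bs z^\top]\|=\lambda_{max}$.

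\medskip

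\textbf{Main obstacle.} The conceptual part is clean once Step 1 is in hand; the difficulty is purely technical and lies in Step 3, namely extracting a matching \emph{lower} bound on $\zeta_{\bs{\hat w},\bs\mu}$. The upper bound on $\|\bs{\hat w}\|$ is essentially direct from the SVM primal, and the lower bound on $\langle\bs{\hat w},\bs\mu\rangle$ follows from the implicit-bias analysis underlying Theorem~\ref{detail-noisy-main-1-simple}. However, the lower bound on $\zeta$ requires a complementary \emph{upper} bound on $\langle\bs{\hat w},\bs\mu\rangle$ and \emph{lower} bound on $\|\bs{\hat w}\|$, which one gets by passing to the dual SVM and exploiting the dual support produced by the mislabeled samples when $\eta>0$. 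Tracking the condition-number dependence $(\lambda_{max}/\lambda_{min})$ through these arguments, while keeping the required constants depending only on the distribution of $g$, is the main bookkeeping effort.
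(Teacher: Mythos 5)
Your overall architecture matches the paper's: condition on $\bs{\hat w}$, express the test error through $\langle\bs{\hat w},\bs\mu\rangle/\|\Sigma^{1/2}\bs{\hat w}\|$, sandwich by the extreme eigenvalues, and then import two-sided control of $\zeta_{\bs{\hat w},\bs\mu}$ from the noisy-model machinery. Your Step 1 is in fact cleaner than the paper's route for the Gaussian case: the paper proves the general Lemma~\ref{lemm:lowupboundEM} (a density-on-the-sphere argument with $f_{min},f_{max}$, feeding a more general Theorem~\ref{thm:phasetransitionsimple-complicated}) and then specializes to $f_{min}=f_{max}=1$; your exact Gaussian computation gives the same identity directly. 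Your "main obstacle" is, however, a misdiagnosis: no dual-SVM argument is needed for the lower bound on $\zeta_{\bs{\hat w},\bs\mu}$. Lemma~\ref{noisy-mmc-bound1} already gives the two-sided bound $\zeta_{\bs{\hat w},\bs\mu}^2\asymp(1-2\eta)^{-2}\{\eta n\rho+\|\bs\mu\|^{-2}+(n\rho\|\bs\mu\|^4)^{-1}\}$ by combining the representation $\bs{\hat w}=X^\top(XX^\top)^{-1}\bs y_\n$ with two-sided bounds on all the quadratic forms $s,s_\n,s_{\n\n},h,h_\n,t,d$; in particular the upper bound on $\langle\bs{\hat w},\bs\mu\rangle$ and the lower bound on $\|\bs{\hat w}\|$ are already there.

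The genuine gap is in Step 3: plugging into Lemma~\ref{lemm5-ex1} does not verify the hypotheses under the stated scaling $p\gtrsim C_{\eta,\delta}^2(\lambda_{max}/\lambda_{min})^2 n^{2+2/\ell}$. Even taking $r=4$ (legitimate for Gaussian $\bs\xi$), Lemma~\ref{lemm5-ex1} gives $\eps\asymp\delta^{-1/2}\,n\,\|\Sigma\|_{\textsc{f}}/\Tr(\Sigma)\lesssim \delta^{-1/2}n\lambda_{max}/(\sqrt{p}\,\lambda_{min})$, and the requirement $\eps M\sqrt{n\rho}\lesssim\eta$ then forces $p\gtrsim \eta^{-2}\delta^{-1-2/\ell}(\lambda_{max}/\lambda_{min})^2 n^{3+2/\ell}$ — one full power of $n$ worse than what the theorem assumes. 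To get $n^{2+2/\ell}$ you must exploit sub-Gaussianity of $\bs\xi$ and use the Hanson--Wright/epsilon-net bound $\eps\lesssim L^2\max\{\sqrt{n\log(1/\delta)}\|\Sigma\|_{\textsc{f}},\,n\log(1/\delta)\|\Sigma\|\}/\Tr(\Sigma)$ of Lemma~\ref{lemm:esGOmega1}, together with the sub-Gaussian bounds on $\|V\bs\mu\|$ and $\|V\bs\mu\|_\infty$ of Lemma~\ref{lemm:esGOmega2}; this is exactly what the paper does through Lemmas~\ref{lemm:esGboundsevents} and~\ref{lemm:esG-noisy-what-bound} before invoking Lemma~\ref{noisy-mmc-bound1}. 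With that substitution (and the routine bookkeeping showing the theorem's two displayed conditions imply either regime (i) or regime (ii) of Lemma~\ref{lemm:esG-noisy-what-bound}), your plan goes through.
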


The proof of this result is provided in Section~\ref{sec:proof:thm:phasetransitionsimple}. For simplicity, we have not included the precise form of the constant $C$ here. In the proof, a more precise statement, including an explicit form for $C$, is given. In the proof, we also provide a more general (and complex) statement under slightly weaker assumptions. We have opted for giving a simpler statement here for the sake of brevity. 

Theorem~\ref{thm:phasetransitionsimple} allows us to characterize precisely when benign overfitting occurs. It is also sufficiently general to identify cases when benign overfitting provably fails. In the following discussion, assume that $\eta \in (0,\tfrac12)$ is fixed, $g$ has a fixed distribution independent of $n$, that $\lambda_{min}, \lambda_{max}$ stay bounded away from zero and infinity, while $\bs \mu, p$ are interpreted as sequences indexed in $n$. Since the distribution of $g|\xi_1|$ remains fixed, the upper and lower bounds on the term $\P_{(g, \xi,y_\n)}\left(g|\xi_1| > \zeta_{\bs{\hat w}, \bs \mu}\right) - \eta$ converge to zero if and only if $\zeta_{\bs{\hat w}, \bs \mu} \to 0$. Under the assumption $p\geq C^2 C_{\eta,\delta}^2 \big(\tfrac{\lambda_{max}}{\lambda_{min}}\big)^2 n^{2+2/\ell}$ this holds if and only if $p/(n\|\bs\mu\|^4) \to 0$. To summarize, benign overfitting provably occurs if $p \gg n^{2+2/\ell}$ and $\|\bs\mu\| \gg (p/n)^{1/4}$ and provably fails if $p \gg n^{2+2/\ell}$ and $\|\bs\mu\| \lesssim (p/n)^{1/4}$.

\subsection{Proof Sketch of Theorem~\ref{thm:noiseless-main} and Theorem~\ref{detail-noisy-main-1-simple}}\label{sec:sketch}

In this subsection, we provide the proof sketch of Theorem~\ref{thm:noiseless-main} and \ref{detail-noisy-main-1-simple}. Complete proofs are given in Section~\ref{sec:proofgennoiselessthm} and \ref{sec:proof:detail-noisy-main-1-simple} in the supplement.

We start from the following Lemma~which provides a general test error bound not limited to the maximum margin classifier:

\begin{lemm} \label{testerror} 
If $\boldsymbol{w}\in \R^p$ is such that $\langle \boldsymbol{w}, \boldsymbol{\mu}\rangle >0$, then
\[
\mathbb{P}\left( \langle \boldsymbol{w}, y_\n \boldsymbol{x} \rangle  <0 \right) \;\; \leq\;\; \eta + \frac{1-\eta}{2}\|\E [\boldsymbol{z}\boldsymbol{z}^\top]\|\left(\frac{\|\boldsymbol{w} \|}{\langle \boldsymbol{w}, \boldsymbol{\mu} \rangle}\right)^2.
\]
If we further assume that $\bs z$ is a sub-Gaussian random vector, we have
\[
\mathbb{P} \left( \langle \boldsymbol{w}, y_\n \boldsymbol{x} \rangle  <0 \right)  \;\;\leq\;\; \eta + (1-\eta)\exp\left\{- c\left(\frac{\langle \boldsymbol{w}, \boldsymbol{\mu} \rangle}{\|\boldsymbol{w} \|\|\boldsymbol{z}\|_{\psi_2}}\right)^2\right\},
\]
where $\|\boldsymbol{z}\|_{\psi_2}$ is the sub-Gaussian norm\footnote{see Appendix~\ref{sec:recoveringSubGauss} for a formal definition of the sub-Gaussian norm} of $\bs z$ and $c$ is a universal constant.
\end{lemm}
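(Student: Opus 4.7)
The plan is to decompose the error probability according to whether the observed label $y_\n$ equals the true label $y$, reduce the analysis to a one-sided tail bound for a symmetric random variable, and apply Chebyshev's inequality (or a sub-Gaussian tail bound for the second claim). Write $a := \langle \boldsymbol{w}, \boldsymbol{\mu}\rangle > 0$ and $U := y\langle \boldsymbol{w}, \boldsymbol{z}\rangle$. Since $y$ is Rademacher and independent of $\boldsymbol{z}$, the random variable $U$ is symmetric about $0$, regardless of the distribution of $\boldsymbol{z}$.

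First I would condition on the label-flip indicator. Using $y_\n\boldsymbol{x} = y_\n y \boldsymbol{\mu} + y_\n \boldsymbol{z}$ and splitting into the events $\{y_\n = y\}$ and $\{y_\n = -y\}$, one obtains
\[
\P(\langle\boldsymbol{w}, y_\n \boldsymbol{x}\rangle < 0) = (1-\eta)\,\P(a + U < 0) + \eta\,\P(-a + U' < 0),
\]
where $U'$ has the same distribution as $U$ (since replacing $y_\n$ by $-y$ just flips the sign of $y$ in $U$, and $U$ is symmetric). The second term is trivially bounded by $\eta$, and the first term equals $(1-\eta)\P(U < -a) = (1-\eta)\P(U \geq a)$ by symmetry of $U$.

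Next I would bound $\P(U \geq a)$. By symmetry, $\P(U\geq a) = \tfrac{1}{2}\P(|U|\geq a)$, so Chebyshev gives
\[
\P(U\geq a) \;\leq\; \frac{\E[U^2]}{2a^2} \;=\; \frac{\boldsymbol{w}^\top \E[\boldsymbol{z}\boldsymbol{z}^\top]\boldsymbol{w}}{2a^2} \;\leq\; \frac{\|\E[\boldsymbol{z}\boldsymbol{z}^\top]\|\,\|\boldsymbol{w}\|^2}{2a^2},
\]
yielding the first claimed bound after substituting $a = \langle \boldsymbol{w}, \boldsymbol{\mu}\rangle$. For the sub-Gaussian refinement, I would instead invoke that $\langle \boldsymbol{w},\boldsymbol{z}\rangle$ is sub-Gaussian with $\|\langle \boldsymbol{w},\boldsymbol{z}\rangle\|_{\psi_2} \leq \|\boldsymbol{w}\|\,\|\boldsymbol{z}\|_{\psi_2}$ (by the definition of the sub-Gaussian norm of a random vector), so $U$ is sub-Gaussian with the same norm, and the standard one-sided tail bound yields $\P(U\geq a) \leq \exp(-c a^2/(\|\boldsymbol{w}\|^2\|\boldsymbol{z}\|_{\psi_2}^2))$, with the absolute constant $c$ absorbed appropriately so that the prefactor $(1-\eta)$ matches.

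There is no serious obstacle here; the proof is essentially a two-line computation. The only point that requires a bit of care is the symmetrization step: one must recognize that $y\langle\boldsymbol{w},\boldsymbol{z}\rangle$ is symmetric without assuming any symmetry of $\boldsymbol{z}$ itself, which is what makes the Chebyshev application yield the factor $\tfrac{1-\eta}{2}$ (rather than $1-\eta$) in the final bound. One should also be careful to handle the event $\{y_\n = -y\}$ via the trivial bound $\P(\cdot) \leq 1$ rather than trying to extract additional cancellation, since no decay in $\|\boldsymbol{\mu}\|$ is available on that event.
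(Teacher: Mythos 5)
Your proposal is correct and follows essentially the same route as the paper: condition on the label flip to pay $\eta$ for the noisy event, use the symmetry of $y\langle \boldsymbol{w},\boldsymbol{z}\rangle$ (from the independent Rademacher $y$) to obtain the factor $\tfrac12$, and apply Chebyshev to the second moment (resp. the sub-Gaussian tail bound). The only cosmetic difference is that the paper writes the one-sided probability directly as $\tfrac12\P(|\langle\boldsymbol{w},\boldsymbol{z}\rangle|>\langle\boldsymbol{w},\boldsymbol{\mu}\rangle)$ rather than passing through the symmetric variable $U$, which is the same computation.
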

Since the proof is elementary, we provide it here.
\begin{proof}
By a conditioning argument, we have
$\mathbb{P} \left( \langle \boldsymbol{w}, y_\n \boldsymbol{x} \rangle  <0 \right) \leq \eta + (1 - \eta)\mathbb{P}\left( \langle \boldsymbol{w}, y\boldsymbol{x} \rangle  <0 \right)$. Thus,
$$
    \mathbb{P} \left( \langle \boldsymbol{w}, y\boldsymbol{x} \rangle  <0 \right) \;=\;  \mathbb{P} \left( y\langle \boldsymbol{w}, \boldsymbol{z} \rangle  < - \langle \boldsymbol{w}, \boldsymbol{\mu} \rangle \right) \;=\; \frac{1}{2} \mathbb{P} \left( \langle \boldsymbol{w}, \boldsymbol{z} \rangle  < - \langle \boldsymbol{w}, \boldsymbol{\mu} \rangle \right) + \frac{1}{2} \mathbb{P} \left( -\langle \boldsymbol{w}, \boldsymbol{z} \rangle  < - \langle \boldsymbol{w}, \boldsymbol{\mu} \rangle \right).
$$
From this we conclude $\mathbb{P} \left( \langle \boldsymbol{w}, y\boldsymbol{x} \rangle  <0 \right)=\frac{1}{2}\mathbb{P} \left( |\langle \boldsymbol{w}, \boldsymbol{z} \rangle|  >  \langle \boldsymbol{w}, \boldsymbol{\mu} \rangle \right)$. By the Markov inequality, 
\begin{equation}\label{eq:markov-aux}
\mathbb{P} \left( \langle \boldsymbol{w}, y\boldsymbol{x} \rangle  <0 \right)\;\leq\; \frac{1}{2} \frac{\mathbb{E} \langle \boldsymbol{w},\boldsymbol{z} \rangle^2}{\langle \boldsymbol{w}, \boldsymbol{\mu} \rangle^2} \; =\; \frac{1}{2} \frac{\boldsymbol{w}^\top \E [\boldsymbol{z}\boldsymbol{z}^\top] \boldsymbol{w}}{\langle \boldsymbol{w}, \boldsymbol{\mu} \rangle^2} \;\leq\; \frac{1}{2} \frac{\|\boldsymbol{w} \|^2 \|\E [\boldsymbol{z}\boldsymbol{z}^\top]\|}{\langle \boldsymbol{w}, \boldsymbol{\mu} \rangle^2}.    
\end{equation}
If $\boldsymbol{z}$ is sub-Gaussian, instead, we have $\mathbb{P} \left( \langle \boldsymbol{w}, y\boldsymbol{x} \rangle  <0 \right) \;\leq\; \exp\left\{- c\left(\frac{\langle \boldsymbol{w}, \boldsymbol{\mu} \rangle}{\|\boldsymbol{w} \|\|\boldsymbol{z}\|_{\psi_2}}\right)^2\right\}$.
\end{proof}

With Lemma~\ref{testerror}, the proofs of Theorem~\ref{thm:noiseless-main} and Theorem~\ref{detail-noisy-main-1-simple} are reduced to obtaining bounds on $\tfrac{\|\boldsymbol{\hat w} \|}{\langle \boldsymbol{\hat w}, \boldsymbol{\mu} \rangle}$. However, the maximum margin classifier $\bs{\hat w}$ does not have a closed form in general, and hence it is difficult to analyze quantities that involve it. In the setting of Theorem~\ref{thm:noiseless-main}(i) and Theorem~\ref{detail-noisy-main-1-simple} this difficulty can be resolved by establishing the equivalence of the maximum margin classifier $\bs{\hat w}$ and the minimum norm least square estimator $\bs w_{\rm{LS}}$. In fact, we show in Lemma~\ref{lem:whatrepr} and Lemma~\ref{lem:whatreprnoisy} that, under near orthogonality of $\bs z_i$'s to each other and to $\bs \mu$, we have
\[
\bs{\hat w} = \bs w_{\rm{LS}} = X^\top(XX^\top)^{-1}\bs y_\n.
\]

Thus, our argument is further reduced to a careful analysis of the gram matrix $XX^\top$. This is done through the observation that 
\[
XX^\top = ZZ^\top + \bs y (Z\bs \mu)^\top + (Z\bs \mu)\bs y^\top + \|\bs \mu\|^2 \bs y \bs y^\top
\]
and so $XX^\top$ is obtained from $ZZ^\top$ by perturbing it with a low rank matrix. This observation enables us to obtain a closed form expression for $(XX^\top)^{-1}$. With the closed form expression, we can obtain the following useful expansion of $(XX^\top)^{-1}\bs y_\n$ which will be used repeatedly in the analysis of $\tfrac{\|\boldsymbol{\hat w} \|}{\langle \boldsymbol{\hat w}, \boldsymbol{\mu} \rangle}$:

\begin{lemm}\label{quad-decomp} 
If $XX^\top$ is invertible, then
$$
     (XX^\top)^{-1}\boldsymbol{y}_{\n} \;=\; d^{-1}A^{-1}\left[d  \boldsymbol{y}_{\n} -\left\{s_{\n} (\|\boldsymbol{\mu}\|^2 -t) + h_{\n} (1+h)\right\}  \boldsymbol{y}  + (h_{\n} s - s_{\n}(1+h)) \boldsymbol{\nu} \right].
$$
In particular, in the noiseless case, we have
\[ (X X^\top)^{-1}\bs y \;=\; d^{-1}A^{-1} \left[ (1 + h ) \boldsymbol{y} - s \boldsymbol{\nu}\right]. \]
\end{lemm}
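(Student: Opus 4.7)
The plan is to prove Lemma~\ref{quad-decomp} by a direct application of the Sherman--Morrison--Woodbury identity, using the rank-two structure of $XX^\top - ZZ^\top$. Writing $\bs\nu := Z\bs\mu$ and expanding $X = \bs y\bs\mu^\top + Z$ gives
\[
XX^\top \;=\; ZZ^\top + \bs y\bs\nu^\top + \bs\nu\bs y^\top + \|\bs\mu\|^2 \bs y\bs y^\top \;=\; A + UBU^\top,
\]
where $A := ZZ^\top$, $U := [\bs y,\bs\nu]\in \mathbb{R}^{n\times 2}$, and $B := \bigl(\begin{smallmatrix}\|\bs\mu\|^2 & 1\\ 1 & 0\end{smallmatrix}\bigr)$. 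This exhibits $XX^\top$ as an explicit rank-two perturbation of $A$, which is exactly the form needed for Woodbury.

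Next I would apply $(A + UBU^\top)^{-1} = A^{-1} - A^{-1}U\bigl(B^{-1} + U^\top A^{-1}U\bigr)^{-1}U^\top A^{-1}$ and compute each ingredient using the natural scalar notation $s := \bs y^\top A^{-1}\bs y$, $h := \bs y^\top A^{-1}\bs\nu$, $t := \bs\nu^\top A^{-1}\bs\nu$. Since $\det B = -1$, one has $B^{-1} = \bigl(\begin{smallmatrix}0 & 1\\ 1 & -\|\bs\mu\|^2\end{smallmatrix}\bigr)$, whence
\[
B^{-1} + U^\top A^{-1}U \;=\; \begin{pmatrix} s & 1+h \\ 1+h & t-\|\bs\mu\|^2 \end{pmatrix},
\]
a $2\times 2$ matrix whose determinant (up to sign) is the scalar $d := (1+h)^2 + s(\|\bs\mu\|^2 - t)$ that appears in the lemma. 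Inverting this $2\times 2$ matrix is immediate.

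I would then multiply the resulting expression for $(XX^\top)^{-1}$ by $\bs y_\n$ and introduce $s_\n := \bs y^\top A^{-1}\bs y_\n$ and $h_\n := \bs\nu^\top A^{-1}\bs y_\n$, so that $U^\top A^{-1}\bs y_\n = (s_\n,h_\n)^\top$. Performing the $2\times 2$ multiplication and regrouping the $\bs y$ and $\bs\nu$ contributions gives the stated expansion after cancelling signs against the $-1/d$ factor. For the noiseless specialization, set $\bs y_\n = \bs y$, so that $s_\n = s$ and $h_\n = h$; the coefficient of $\bs y$ collapses by
\[
d - s(\|\bs\mu\|^2 - t) - h(1+h) \;=\; (1+h)^2 - h(1+h) \;=\; 1+h,
\]
using the definition of $d$, and the coefficient of $\bs\nu$ collapses to $hs - s(1+h) = -s$, yielding the noiseless formula.

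The only real obstacle is sign bookkeeping: since $\det B = -1$, the orientation of $d$ and of the off-diagonal entries of $(B^{-1} + U^\top A^{-1}U)^{-1}$ must be tracked carefully so that the final expression matches the lemma's sign conventions. Beyond this, the argument is purely algebraic — no concentration or estimation is needed — and the usefulness of the representation for the rest of the paper comes from the fact that $A = ZZ^\top$ is precisely the matrix controlled by event $E_1$, making $s,h,t,s_\n,h_\n,d$ all amenable to the near-orthogonality bounds used elsewhere.
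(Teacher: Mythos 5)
Your proposal is correct and follows essentially the same route as the paper: both exhibit $XX^\top - ZZ^\top$ as a rank-two perturbation and apply the Woodbury identity, with the determinant of the resulting $2\times 2$ matrix producing $d$ (the paper's Lemma~\ref{graminverse} uses the symmetrized factors $\bs u=\bs y$, $\bs v=\bs\nu+\tfrac{\|\bs\mu\|^2}{2}\bs y$ with an identity middle block, whereas you use $U=[\bs y,\bs\nu]$ with $B=\bigl(\begin{smallmatrix}\|\bs\mu\|^2&1\\1&0\end{smallmatrix}\bigr)$, which is only a cosmetic difference in the factorization). Your sign bookkeeping checks out: $\det(B^{-1}+U^\top A^{-1}U)=-d$, and the resulting coefficients of $\bs y$ and $\bs\nu$, as well as the noiseless collapse to $(1+h)\bs y - s\bs\nu$, all match the lemma.
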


Here, $A, d, s, s_\n, t, h, h_\n, \nu$ are defined by 
\begin{equation*}
    \begin{aligned}
    & A=ZZ^\top, \quad \boldsymbol{\nu}=Z\boldsymbol{\mu},\quad s = \boldsymbol{y}^\top A^{-1} \boldsymbol{y},\quad s_{\n} = \boldsymbol{y}_{\n}^\top A^{-1} \bs y, \quad t=\boldsymbol{\nu}^\top A^{-1}\boldsymbol{\nu},  \\
    & h = \boldsymbol{y}^\top A^{-1} \boldsymbol{\nu}, \quad h_{\n} = \boldsymbol{y}_{\n}^\top A^{-1} \boldsymbol{\nu}, \quad d = s(\|\bs \mu\|^2 - t) + (1+h)^2
    \end{aligned}
\end{equation*}
provided $A$ is invertible. An important ingredient of our proofs is the observation that many of the quadratic forms above are invariant under rescaling. For example, $s = \boldsymbol{y}^\top A^{-1} \boldsymbol{y} = \check{\boldsymbol{y}}^\top (\check Z \check Z^\top)^{-1} \check{\boldsymbol{y}}$ with the "check" notation as defined in \eqref{eq:defcheck}. This invariance was not utilized in prior works. It allows us to remove some of the assumptions on the individual norms $\|\bs z_i\|$ that were needed for previous results.

Therefore, the study of $\bs{\hat w}$ involves a careful analysis of all of these terms in order to obtain sharp lower and upper bounds of $\tfrac{\|\boldsymbol{\hat w} \|}{\langle \boldsymbol{\hat w}, \boldsymbol{\mu} \rangle}$. In Lemma~\ref{quad-bounds-s} - \ref{quad-bounds-nuae}, we establish bounds of each of these quantities under events $E_1, \ldots, E_5$. Those bounds are utilized in subsequent proofs in Section~\ref{sec:proof}. 

We also note that the expansion of $(XX^\top)^{-1}\bs y_\n$ (noisy case) is far more involved than $(XX^\top)^{-1}\bs y$ (noiseless case). This explains why the noisy case has been substantially less well studied in the existing literature. Establishing the $\bs{\hat w} = \bs w_{\rm LS}$ equivalence in noisy case requires precise concentration results, including upper and lower bounds, on some of the quantities $A, d, s, s_\n, t, h, h_\n, \nu$. For instance, while it is easy to see $s$ and $s_\n$ are of the same order, having the same order is not enough for a precise analysis in the noisy case. 

Lastly, a careful analysis of $\bs{\hat w}$ allows us to establish the following result under near orthogonality of $\bs z_i$'s to each other and to $\bs \mu$ (see Lemma~\ref{noiseless-mmc-bound1} under the assumptions of Theorem~\ref{thm:noiseless-main}(i)  and Lemma~\ref{noisy-mmc-bound1} under the assumptions of Theorem~\ref{detail-noisy-main-1-simple} for the precise statements and proofs): 

\begin{enumerate}
    \item[(Noiseless case)] 
    \begin{equation}\label{eq:noiseless-mmc-bound-sketch}
    \left(\frac{\|\boldsymbol{\hat w}\|}{\langle \boldsymbol{\hat w}, \boldsymbol{\mu} \rangle}\right)^2 \; \asymp \; \frac{1}{\|\boldsymbol{\mu}\|^2} + \frac{1}{n\rho \|\boldsymbol{\mu}\|^4}.
    \end{equation}
    \item[(Noisy case)]
    \begin{equation}\label{eq:noisy-mmc-bound-sketch}
    \left(\frac{\|\boldsymbol{\hat w}\|}{\langle \boldsymbol{\hat w}, \boldsymbol{\mu} \rangle}\right)^2 \; \asymp \; \frac{1}{(1-2\eta)^2}\left\{ n\rho\eta + \frac{1}{\|\bs\mu\|^2} + \frac{1}{n\rho \|\boldsymbol{\mu}\|^4} \right\}.
    \end{equation}
\end{enumerate}

For the large signal regime in the noiseless case, that is (ii) of Theorem~\ref{thm:noiseless-main}, equality between the maximum margin classifier and the least squares estimator cannot be guaranteed. In fact, by taking $\|\bs \mu\| > C$ and $\bs z \sim \rm{Unif}\{\pm \bs \mu/\|\bs \mu\|\}$, condition (ii) of Theorem~\ref{thm:noiseless-main} is met with probability one while $\bs{\hat w} \neq \bs w_{\rm LS}$ with high probability since $\bs{\hat w} = (1 - 1/\|\bs \mu\|)\bs \mu$ holds with probability $1-2^{-n}$ and $\bs w_{\rm LS} \neq (1 - 1/\|\bs \mu\|)\bs \mu$ unless $y_i\bs x_i = (1 - 1/\|\bs \mu\|)\bs \mu$ for all $i$. We also note that condition (ii) of Theorem~\ref{thm:noiseless-main} does not appear in previous work although the noiseless case has been relatively well studied compared to the noisy case. Moreover, the proof of the expansion~\eqref{eq:noiseless-mmc-bound-sketch} in this regime is based on a purely geometric argument that is different from arguments in the existing literature; see Section~\ref{sec:proofgennoiselessthm(ii)}. 

In summary, from Lemma~\ref{testerror} and the bounds \eqref{eq:noiseless-mmc-bound-sketch}, \eqref{eq:noisy-mmc-bound-sketch}, we obtain the bounds in Theorem~\ref{thm:noiseless-main} and Theorem~\ref{detail-noisy-main-1-simple}. 

We note that the majority of the arguments here are done in a deterministic manner under events $E_1, \ldots, E_5$. The separation of deterministic and probabilistic arguments is what allowed us to identify the key geometric aspects of the data structure, which resulted in significantly relaxing distributional assumption. See Section~\ref{sec:extendnonsG} for a detailed argument ensuring these events hold with high probability under model \ref{model:EM}. In establishing those results, we utilize probabilistic tools such as the Bahr-Esseen inequality (\cite{10.1214/aoms/1177700291}) that work under minimal moment assumptions and were not used in prior works. This allows us to significantly relax existing moment assumptions.

\section{Geometry behind Over-parametrization \& Phase Transition}\label{geometry}

In this section, we complement our results by offering geometric insights into benign overfitting under over-parametrization in a special case. We start with providing a coarse geometric picture which illustrates the differences and similarities between the noisy and noiseless cases on a high level, but fails to explain some of the more nuanced features of the problem which are discussed in later sections.

It is convenient to define $\bar{\bs x}_i := y_{\n,i} \bs x_i$ and $\bar{\bs z}_i := y_{\n, i}\bs z_i$. With this notation, we have $$
\bar{\bs x}_i \;=\; y_{\n, i}y_i \bs \mu + \bar{\bs z}_i \;=\; 
\begin{cases}
\bs \mu + \bar{\bs z}_i & \text{for all clean samples,} \\
-\bs \mu + \bar{\bs z}_i & \text{for all noisy samples,}
\end{cases}
$$
that is, all the clean samples spread around $\bs \mu$ and the noisy samples spread around $-\bs \mu$. Furthermore, the optimization problem \eqref{eq:mmc} becomes
\begin{equation}\label{eq:mmcgeom}
\bs{\hat w} = \argmin \|\bs w\|^2, \quad \text{subject to } \<\bs w, \bar{\bs x}_i\> \geq 1 \text{ for all $i=1,2, \dots, n.$}
\end{equation}

To simplify the arguments, we assume that $\|\bs z_i\|\approx \rho^{-1/2}$ for all $i$.\footnote{ This assumption is stronger than event $E_4$, which requires $\|\bs z_i\|\approx \rho^{-1/2}$ ``on average’’. However, this additional uniform norm assumption is likely to be met with high probability, for instance, in model \ref{model:EM} with $g\equiv 1$ (as shown in Lemma~\ref{lemm1-ex1}).} Consequently, all clean samples $\bar{\bs x}_i$ lie close to the sphere $\bs \mu + \rho^{-1/2}S^{p-1}$, while all noisy samples $\bar{\bs x}_i$ lie close to the sphere $-\bs \mu + \rho^{-1/2}S^{p-1}$. In this situation benign overfitting should occur if ``most'' of the sphere $\bs \mu + \rho^{-1/2}S^{p-1}$ lies in the set $\{\bs u \in \R^{p} : \bs u^\top \bs{\hat w}\geq 0\}$. More formally, ``most'' is in terms of the normalized surface area (if the $\bar{\bs z}_i$ are exactly uniformly distributed on the sphere $\rho^{-1/2}S^{p-1}$, this is an exact statement). For the geometric intuition that follows, it is useful to note that $\langle \hat{\bs w}, \bs\mu \rangle > 0$; this will become evident from the derivations in the sections below. We now consider the noiseless case and the noisy case separately. 

\smallskip

\noindent\textbf{Noiseless case:}

For the noiseless case, all samples lie near the sphere $\bs \mu + \rho^{-1/2}S^{p-1}$. Thus, it should be intuitively clear that benign overfitting occurs if the signal $\bs \mu$ is very large. When the signal is larger than the radius of the sphere, i.e. when $\|\bs \mu\|\gg \rho^{-1/2}$, the sphere $\bs \mu +\rho^{-1/2}S^{p-1}$ is away from the origin, and  hence $\bs \mu + \rho^{-1/2}S^{p-1}$ should lie in the set $\{\bs u \in \R^{p} : \bs u^\top \bs{\hat w}\geq 0\}$; see Figure~\ref{fig:geom}(a).

When the signal is very small compared to the radius of the sphere, i.e. $\|\bs \mu\| \ll \rho^{-1/2}$, the origin is located inside the sphere $\bs \mu + \rho^{-1/2}S^{p-1}$. Moreover, the origin and the center of the sphere are very close compared to the radius of the sphere. Intuition seems to suggest that non-negligible portion of the sphere lies outside of the set $\{\bs u \in \R^{p} : \bs u^\top \bs{\hat w}\geq 0\}$, no matter what the direction of $\bs{\hat w}$ is; see Figure~\ref{fig:geom}(b). It is thus surprising that benign overfitting can occur in this regime. 

\smallskip

\noindent\textbf{Noisy case:}

Next, we consider the noisy case. Recall that, for the noisy case, all clean samples lie near the sphere $\bs \mu + \rho^{-1/2}S^{p-1}$ while all noisy samples lie near the sphere $-\bs \mu + \rho^{-1/2}S^{p-1}$. Thus, for the noisy case, even if the signal $\bs \mu$ is very large, a non-negligible portion of the sphere $\bs \mu + \rho^{-1/2}S^{p-1}$ should lie "outside" of the set $\{\bs u \in \R^{p} : \bs u^\top \bs{\hat w}\geq 0\}$; see Figure~\ref{fig:geom}(d). This is because the maximum margin hyperplane $\{\bs u \in \R^p: \bs u^\top \bs{\hat w} = 1\}$ needs to intersect both spheres to interpolate all the clean and noisy samples, which lie on the sphere $\bs \mu + \rho^{-1/2}S^{p-1}$ and $-\bs \mu + \rho^{-1/2}S^{p-1}$, respectively. Moreover, it is also intuitive that $\bs{\hat w}$ becomes more and more orthogonal to $\bs \mu$ as $\|\bs \mu\|$ increases and the spheres move further apart. In this sense, it is even more surprising that benign overfitting can still occur. While this observation poses a somewhat counterintuitive picture, it does explain why the behavior of $\bs{\hat w}$, and hence the test error bound, is completely different from that of the noiseless case when the signal is strong.

When the signal $\bs \mu$ is very small, the behavior of $\bs{\hat w}$ is similar to the noiseless case since the spread of the spheres dominates the shift in their centers, so the samples are almost from one single sphere centered at the origin; see Figure~\ref{fig:geom}(c). This geometric intuition coincides with our theoretical findings that show similar test error bounds for the noisy and noiseless case in the weak signal regime. 
\begin{figure}[!htp]
    \centering
    \begin{subfigure}[b]{0.3\textwidth}
        \centering
        \begin{tikzpicture}
            \node[anchor=south west, inner sep=0] (image1) at (0,0) {\includegraphics[
            width=\linewidth,
            height=4cm,
            keepaspectratio
            ]{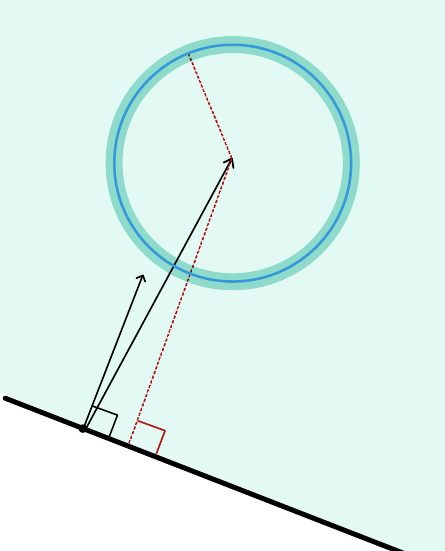}};
            \begin{scope}[overlay,x={(image1.south east)},y={(image1.north west)}]
                \node[anchor=north] at (0.23, 0.55) {\textcolor{black}{\large$\hat{\bs w}$}};
                \node[anchor=north] at (0.53, 0.88) {\textcolor{purple}{{\scriptsize$\rho^{-1/2}$}}};
                \node[anchor=north] at (0.16, 0.22) {\textcolor{black}{\large$\bs{0}$}};
                \node[anchor=north] at (0.6, 0.74) {\textcolor{black}{\large$\bs \mu$}};
                \node[anchor=north] at (0.5, 0.44) {\textcolor{purple}{$\frac{\<\hat{\bs w},\bs\mu\>}{\|\hat{\bs w}\|}$}};
                \node[anchor=north] at (0.75, 0.22) {\textcolor{teal}{$\<\hat{\bs w},\bs u\>\geq0$}};
            \end{scope}
        \end{tikzpicture}
        \caption{Noiseless \& Strong signal}
    \end{subfigure}
    \begin{subfigure}[b]{0.33\textwidth}
        \centering
        \begin{tikzpicture}
            \node[anchor=south west, inner sep=0] (image2) at (0,0) {\includegraphics[
            width=\linewidth,
            height=4cm,
            keepaspectratio
            ]{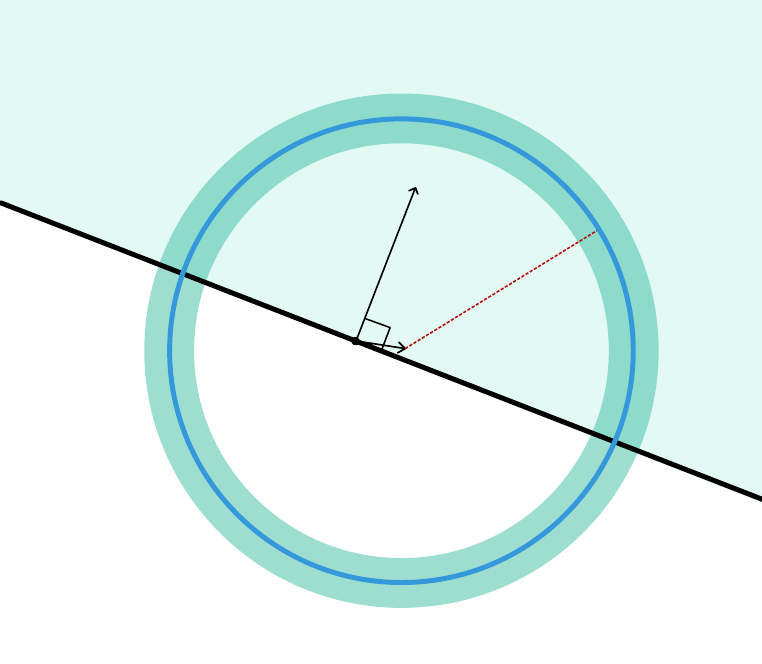}};
            \begin{scope}[overlay,x={(image2.south east)},y={(image2.north west)}]
                \node[anchor=north] at (0.59, 0.78) {\textcolor{black}{$\hat{\bs w}$}};
                \node[anchor=north] at (0.66, 0.68) {\textcolor{purple}{{ $\rho^{-1/2}$}}};
                \node[anchor=north] at (0.44, 0.5) {\textcolor{black}{$\bs{0}$}};
                \node[anchor=north] at (0.57, 0.52) {\textcolor{black}{$\bs \mu$}};
                \node[anchor=north] at (0.78, 0.95) {\textcolor{teal}{$\<\hat{\bs w},\bs u\>\geq0$}};
            \end{scope}
        \end{tikzpicture}
        \caption{Noiseless \& Weak signal}
    \end{subfigure}
    \begin{subfigure}[b]{0.33\textwidth}
        \centering
        \begin{tikzpicture}
            \node[anchor=south west, inner sep=0] (image3) at (0,0) {\includegraphics[
            width=\linewidth,
            height=4cm,
            keepaspectratio
            ]{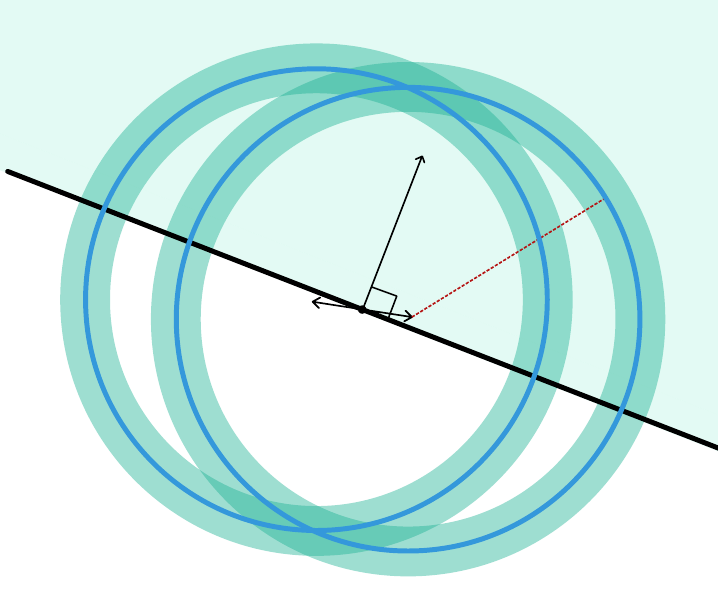}};
            \begin{scope}[overlay,x={(image3.south east)},y={(image3.north west)}]
                \node[anchor=north] at (0.63, 0.8) {\textcolor{black}{$\hat{\bs w}$}};
                \node[anchor=north] at (0.69, 0.69) {\textcolor{purple}{{ $\rho^{-1/2}$}}};
                \node[anchor=north] at (0.49, 0.5) {\textcolor{black}{$\bs{0}$}};
                \node[anchor=north] at (0.62, 0.52) {\textcolor{black}{$\bs \mu$}};
                \node[anchor=north] at (0.39, 0.54) {\textcolor{black}{$-\bs \mu$}};
                \node[anchor=north] at (0.8, 0.97) {\textcolor{teal}{$\<\hat{\bs w},\bs u\>\geq0$}};
            \end{scope}
        \end{tikzpicture}
        \caption{Noisy \& Weak signal}
    \end{subfigure}
    \begin{subfigure}[b]{0.9\textwidth}
        \centering
        \begin{tikzpicture}[scale=0.9]
            \node[anchor=south west, inner sep=0] (image4) at (0,0) {\includegraphics[
            width=\linewidth,
            height=4cm,
            keepaspectratio
            ]{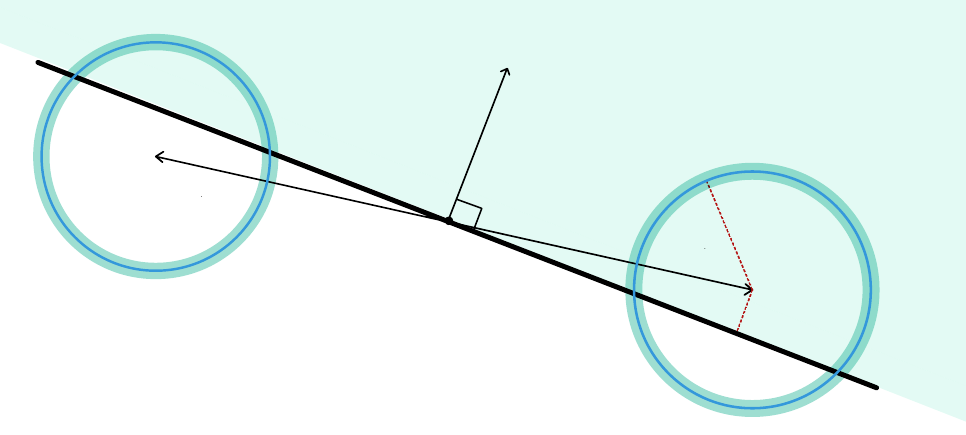}};
            \begin{scope}[overlay,x={(image4.south east)},y={(image4.north west)}]
                \node[anchor=north] at (0.55, 0.9) {\textcolor{black}{$\hat{\bs w}$}};
                \node[anchor=north] at (0.76, 0.57) {\textcolor{purple}{{ $\rho^{-1/2}$}}};
                \node[anchor=north] at (0.46, 0.49) {\textcolor{black}{\large$\bs{0}$}};
                \node[anchor=north] at (0.13, 0.68) {\textcolor{black}{\large$-\bs \mu$}};
                \node[anchor=north] at (0.8, 0.38) {\textcolor{black}{\large$\bs \mu$}};
                \node[anchor=north] at (0.85, .9) {\textcolor{teal}{\large$\<\hat{\bs w},\bs u\>\geq 0$}};
            \end{scope}
        \end{tikzpicture}
        \caption{Noisy \& Strong signal}
    \end{subfigure}
    \caption{The observations $\bar{\bs x}_i = y_i\bs x_i$ are concentrated near the spheres $\pm \bs\mu+\rho^{-1/2}S^{p-1}$. Except for the noiseless \& strong signal case, a non-negligible proportion of the sphere $\bs \mu + \rho^{-1/2}S^{p-1}$ seems to lie outside of the shaded half-space.}
    \label{fig:geom}
\end{figure}

The above naive analysis provides an intuition behind some of the main differences and similarities between the noisy and noiseless model. It seems to suggest that benign overfitting can not occur for smaller signals, but this seeming contradiction is resolved in Section~\ref{sec:blow-up} by arguments from high-dimensional probability. The discussion so far also does not explain why the test error bounds in both cases experience a phase transition at $\|\bs\mu\| \approx \rho^{-1/2}n^{-1/2}$ (see Theorems~\ref{thm:noiseless-main},~\ref{detail-noisy-main-1-simple}) instead of $\rho^{-1/2}$ which would be more intuitive. A more detailed geometric analysis of the reasons behind this phase transition is provided in Section~\ref{sec:phasetransenoiseless} for the noiseless and Section~\ref{sec:phasetransnoisy} for the noisy model.

\subsection{Blow up phenomenon}\label{sec:blow-up}

We begin by addressing the seemingly paradoxical observation made above, by explaining why benign overfitting could occur even when a non-negligible proportion of the sphere $\bs \mu + \rho^{-1/2}S^{p-1}$ seems to lie outside of the set $\{\bs u \in \R^{p} : \bs u^\top \bs{\hat w}\geq 0\}$.

For the following discussion, it is useful to recognize that, if $\<\hat{\bs w},\bs \mu\>\geq 0$, the amount $d_{\bs{\hat w}}:= \tfrac{\<\bs{\hat w}, \bs \mu\>}{\|\bs{\hat w}\|}$ represents the distance of $\boldsymbol{\mu}$ to the hyperplane $\{\bs u:\langle \boldsymbol{\hat w}, \bs u\rangle = 0\}$ (see Figure~\ref{fig:blowup}). It is thus intuitively clear that the test error will be governed by the behavior of $d_{\bs{\hat w}}$. We will repeatedly refer to this intuition in subsequent sections. Note that this intuition can be formalized, compare Lemma~\ref{testerror} where the test error bound depends precisely on $d_{\bs{\hat w}}$.

\begin{figure}[!htp]
\begin{tikzpicture}
  \node[anchor=south west, inner sep=0] (image) at (0,0) {\includegraphics[width=0.6\textwidth]{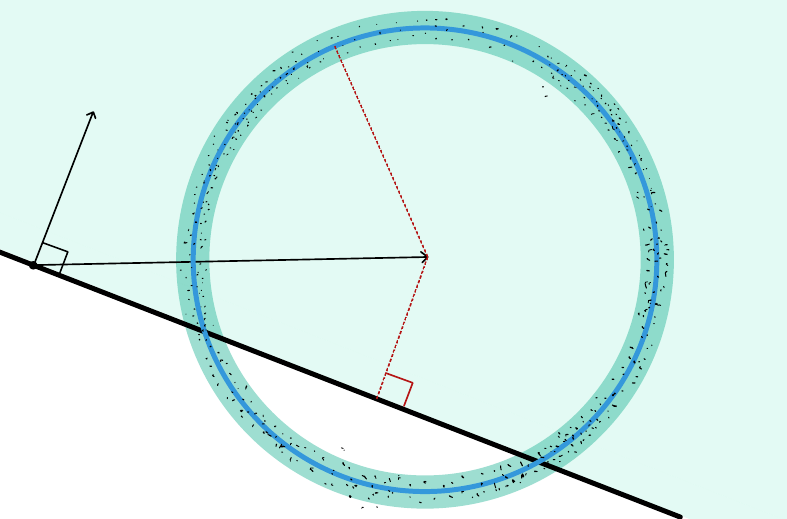}};
  \begin{scope}[x={(image.south east)},y={(image.north west)}]
    \node[anchor=north] at (0.53, 0.79) {\textcolor{purple}{{ \small$\rho^{-1/2}$}}};
    \node[anchor=north] at (0.03, 0.49) {\textcolor{black}{\large$\bs{0}$}};
    \node[anchor=north] at (0.15, 0.83) {\textcolor{black}{\large$\hat{\bs w}$}};
     \node[anchor=north] at (0.57, 0.54) {\textcolor{black}{\large$\bs \mu$}};
      \node[anchor=north] at (0.59, 0.41) {\textcolor{purple}{$d_{\bs{\hat w}}= \frac{\<\hat{\bs w},\bs\mu\>}{\|\hat{\bs w}\|}$}};
       \node[anchor=north] at (0.89, 0.14) {\textcolor{black}{\large$\<\hat{\bs w},\bs u\>=0$}};
  \end{scope}
\end{tikzpicture}
\caption{Blow-up phenomenon. The observations $y_i\bs x_i$ are concentrated around the sphere $\bs\mu+\rho^{-1/2}S^{p-1}$.  If $\tfrac{\<\bs{\hat w}, \bs \mu\>}{\|\bs{\hat w}\|}$ is big enough, a large proportion of the blue sphere lies in the shaded half-space. }
\label{fig:blowup}
\end{figure}

The goal of our analysis is to show that the sphere $\bs\mu+\rho^{-1/2}S^{p-1}$ almost entirely lies in the region defined by the inequality $\<\hat{\bs w},x\>\geq 0$ (the shaded half-space in Figure~\ref{fig:blowup}) even when this hyperplane is "close" to the origin relative to the radius of the sphere. We argue this by means of the "blow-up" phenomenon from high-dimensional probability.

\begin{lemm}[Blow-up, Lemma~5.1.7, \cite{vershynin_2018}]\label{vershynin_2018}
Let $A$ be a subset of the sphere $\sqrt{p}S^{p-1}$, and let $\sigma$ denote the normalized area on that sphere. If $\sigma(A) \geq 1/2$, then, for every $d\geq 0$, 
\[
\sigma(A_d) \geq 1 - 2 \exp (-cd^2), 
\]
where $c$ is a universal constant and the neighborhood $A_d$ is defined by
\[
A_d := \left\{\bs u \in \sqrt{p}S^{p-1}: \exists \bs v\in A \text{ such that } \|\bs u-\bs v\| \leq d \right\}.
\]
\end{lemm}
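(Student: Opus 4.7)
The plan is to reduce the claim to the classical concentration-of-measure inequality on the unit sphere. First, rescale by passing from $\sqrt{p}S^{p-1}$ to $S^{p-1}$ via the bijection $u\mapsto u/\sqrt{p}$. This map preserves the normalized surface measure and transforms a Euclidean $d$-neighborhood on $\sqrt{p}S^{p-1}$ into a $(d/\sqrt{p})$-neighborhood on $S^{p-1}$. Setting $t=d/\sqrt{p}$ and writing $\tilde A=\{v/\sqrt{p}: v\in A\}$, it suffices to prove that whenever $\sigma_{S^{p-1}}(\tilde A)\geq 1/2$ one has
\[
\sigma_{S^{p-1}}(\tilde A_t)\;\geq\; 1-2\exp(-c_0 p t^2)
\]
for every $t\geq 0$, since then $p t^2 = d^2$ recovers the claim with a suitable $c$.

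Next, I invoke L\'evy's isoperimetric inequality on the sphere: among all measurable subsets of $S^{p-1}$ of a prescribed measure, spherical caps minimize the measure of the $t$-neighborhood. Applied to $\tilde A$, this yields $\sigma_{S^{p-1}}(\tilde A_t)\geq \sigma_{S^{p-1}}(H_t)$, where $H=\{v\in S^{p-1}: v_1\geq 0\}$ is a closed hemisphere. A direct computation shows that for $u\in S^{p-1}$ with $u_1<0$ the closest point of $H$ in the Euclidean metric is $(0,u'/\|u'\|)$ with $u'=(u_2,\dots,u_p)$, and $\mathrm{dist}(u,H)^2 = 2(1-\sqrt{1-u_1^2})$. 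Inverting this identity yields, for some absolute constant $c_1>0$, the inclusion $\{u\notin H_t\}\subseteq \{|u_1|\geq c_1 t\}$ (the complement $H_t^c$ being empty for $t\geq\sqrt{2}$). Finally, the standard sub-Gaussian tail bound
\[
\P_{u\sim\sigma_{S^{p-1}}}\bigl(|u_1|\geq s\bigr)\;\leq\; 2\exp(-c_2 p s^2),
\]
obtained from the representation $u=g/\|g\|$ with $g\sim\mathcal{N}(0,I_p)$ together with concentration of $\|g\|$ around $\sqrt{p}$, completes the argument.

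The main obstacle is L\'evy's isoperimetric inequality itself, whose proof proceeds via two-point (Schwarz) symmetrization and is a classical but nontrivial result; since the statement is cited directly from Vershynin's textbook, I would simply invoke it. A fully self-contained alternative bypasses isoperimetry by using Gaussian concentration: writing $u=g/\|g\|$, one restricts to the high-probability event $\{\|g\|\asymp\sqrt{p}\}$ on which the map $g\mapsto \mathrm{dist}_{S^{p-1}}(g/\|g\|,\tilde A)$ is effectively $O(1/\sqrt{p})$-Lipschitz, applies the Gaussian concentration inequality to this function, and absorbs the complement of the good event into the constants.
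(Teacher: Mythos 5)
Your argument is correct, and it is essentially the standard proof of this result: the paper does not prove this lemma at all but cites it directly as Lemma 5.1.7 of Vershynin's textbook, where it is deduced in exactly the way you describe (rescaling to the unit sphere, L\'evy's isoperimetric inequality to reduce to a hemisphere, the computation $\mathrm{dist}(\bs u,H)^2=2(1-\sqrt{1-u_1^2})$, and the sub-Gaussian tail of a single coordinate of a uniform point on $S^{p-1}$). All the individual steps check out, including the constant bookkeeping $pt^2=d^2$ and the observation that $H_t^c=\emptyset$ for $t\geq\sqrt{2}$.
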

We note that the radius of the sphere is $\sqrt{p}$, and thus it is remarkable that the lower bound on $\sigma(A_d)$ does not involve the dimension $p$. As long as $A$ occupies at least half of the sphere and $d$ is large but independent of $p$, the neighborhood $A_d$ occupies the majority of the sphere no matter how large $p$ is.

Now, we will show the following which is an immediate corollary of the blow-up lemma:

\begin{cor}\label{cor:blowup}
Let $\tilde \sigma$ be the normalized area on the sphere $\bs\mu + \rho^{-1/2}S^{p-1}$. Then we have
\[
\tilde \sigma (\{\bs u: \langle \boldsymbol{\hat w}, \bs u \rangle \geq  0\}) \geq 1 - 2\exp(-cp\rho d_{\bs{\hat w}}^2),
\]
and $\tilde \sigma (\{\bs u: \langle \boldsymbol{\hat w}, \bs u \rangle \geq  0\}) \to 1$ as $\sqrt{p\rho}d_{\bs{\hat w}} \to \infty$.
\end{cor}
\begin{proof}
Let $\tilde A$ be the hemisphere on $\bs\mu + \rho^{-1/2}S^{p-1}$ defined by the half-plane $\{\bs u:\langle \bs{\hat w}, \bs u - \bs \mu\rangle \geq 0\}$. We have (cf. Figure~\ref{fig:blowup}) 
\[
H_{\bs{\hat w}} := \{\bs u\in \bs\mu+\rho^{-1/2}S^{p-1}: \langle \bs{\hat w},  \bs u\rangle \geq 0 \} \;\;\supset\;\; \tilde A_{d_{\bs{\hat w}}} :=\{\bs u\in \bs \mu+\rho^{-1/2}S^{p-1}:\;{\rm dist}(\bs u,\tilde A)\leq d_{\bs{\hat w}}\}.
\]
The affine map $\phi:  \bs u \mapsto  \sqrt{p\rho}( \bs u - \bs \mu)$ maps the set $\bs \mu + \rho^{-1/2}S^{p-1}$ to the set $ p^{1/2}S^{p-1}$. Defining $A = \phi(\tilde A)$ and $A_{d_{\bs{\hat w}}} = \phi(\tilde A_{d_{\bs{\hat w}}})$, we get 
\[
A_{d_{\bs{\hat w}}} \;=\;\{\bs u\in \sqrt{p} S^{p-1}:\;{\rm dist}(\bs u,A)\leq \sqrt{p\rho} d_{\bs{\hat w}}\}.
\]
Then, the blow-up Lemma~implies
\[
\tilde \sigma(H_{\bs{\hat w}}) \;\geq\; \tilde \sigma(\tilde A_{d_{\bs{\hat w}}}) \;=\; \sigma(A_{d_{\bs{\hat w}}}) \geq 1 - 2\exp(-cp\rho d_{\bs{\hat w}}^2)\to 1
\]
as $\sqrt{p\rho}d_{\bs{\hat w}} \to \infty$. 
\end{proof}

The condition $\sqrt{p\rho}d_{\bs{\hat w}} \to \infty$ can be equivalently stated as $d_{\bs{\hat w}}\gg \tfrac{1}{\sqrt{p}}\rho^{-1/2}$. This implies that, even when $d_{\bs{\hat w}}$ is much smaller than the radius $\rho^{-1/2}$, the majority of the sphere $\bs \mu + \rho^{-1/2}S^{p-1}$ lies in the set $\{\bs u \in \R^{p} : \bs u^\top \bs{\hat w}\geq 0\}$ as long as $d_{\bs{\hat w}}\gg \tfrac{1}{\sqrt{p}}\rho^{-1/2}$ is satisfied. This analysis illustrates that the seemingly paradoxical geometric observation was due to the failure of naive geometric intuition in high dimensions.

\subsection{Phase Transition: Noiseless Model}\label{sec:phasetransenoiseless}

In this section, we discuss the geometry behind the phase transition observed in Theorem~\ref{thm:noiseless-main} which deals with the noiseless model. Recall that the phase transition occurs at $\|\boldsymbol{\mu}\|\approx (n\rho)^{-1/2}$ and so it concerns only regime (i) in Theorem~\ref{thm:noiseless-main}. In this regime, we argued that $\Delta(y)  (XX^\top)^{-1}\bs y > \bs 0$ and consequently $\hat{\bs w}=\bs w_{\rm LS}=X^\top (XX^\top )^{-1}\bs y$ (cf. the proof of Lemma~\ref{lem:whatrepr}). This equality is the starting point of our geometric analysis in this subsection.

To show  phase transition in the fundamental quantity {$d_{\hat{\bs{w}}} = \tfrac{\<\hat{\bs w},\bs\mu\>}{\|\hat{\bs w}\|}$, we will first observe that, under complete orthogonality of $\bs z_i$'s and $\bs \mu$, the vector $\hat{\bs w}/\|\hat{\bs w}\|^2$ can be decomposed into two orthogonal vectors.  The analysis of this decomposition will explain the phase transition geometrically.

Recall that by definition we have $\bar{\bs x}_i=y_i \bs x_i$ and $\bar{\bs z}_i=y_i\bs z_i$ for $i=1,\ldots,n$ since $y_{\n, i} = y_i$ in the noiseless case. It is also convenient to let matrices $\bar X, \bar Z$ have $\bar{\bs x}_i, \bar{\bs z}_i$ as rows. With this notation $\bar{\bs x}_i=\bs \mu+\bar{\bs z}_i$ and $\hat{\bs w}=\bar X^\top (\bar X\bar X^\top)^{-1}\bs 1$, where $\bs 1\in \R^n$ is the vector of ones. The inequality $\Delta(y)(XX^\top)^{-1}\bs y> \bs 0$ translates to $(\bar X\bar X^\top)^{-1}\bs 1> \bs 0$. Note that  the hyperplane $H=\{\bs u\in \R^p:\;\<\hat{\bs w},\bs u\>=1\}$ contains all the points $\bar{\bs x}_i$  (cf. Figure~\ref{fig:blowup2}) and \begin{equation}\label{eq:wbynorm}
\frac{\hat{\bs w}}{\|\hat{\bs w}\|^2}\;=\;\frac{\bar X^\top (\bar X\bar X^\top)^{-1}\bs 1}{\bs 1^\top (\bar X\bar X^\top)^{-1}\bs 1}\;=\;\sum_{i=1}^n \alpha_i \bar{\bs x}_i,
\end{equation}
where $\alpha=(\alpha_1,\ldots,\alpha_n)=\tfrac{(\bar X\bar X^\top)^{-1}\bs 1}{\bs 1^\top (\bar X\bar X^\top)^{-1}\bs 1} $ is a vector with positive entries that sum to $1$ since $(\bar X\bar X^\top)^{-1}\bs 1> \bs 0$. In other words, the vector $\hat{\bs w}/\|\hat{\bs w}\|^2$ is a convex combination of the data points $\bar {\bs x}_i$.

\begin{figure}[!htp]
\begin{tikzpicture}
  \node[anchor=south west, inner sep=0] (image) at (0,0) {\includegraphics[width=0.6\textwidth]{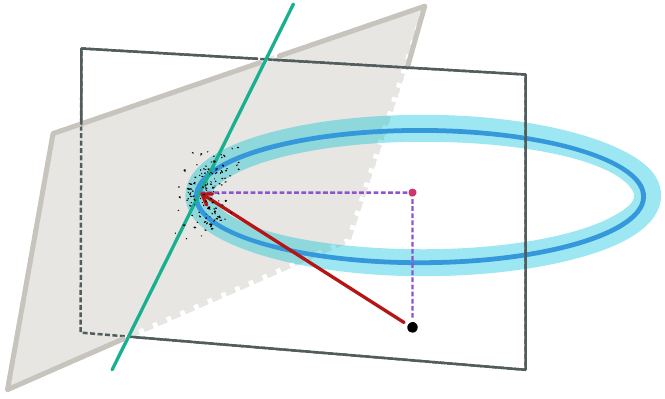}};
  \begin{scope}[x={(image.south east)},y={(image.north west)}]
    \node[anchor=north] at (0.65, 0.23) {\textcolor{black}{\Large $\bs 0$}};
    \node[anchor=north] at (0.23, 0.6) {\textcolor{black}{\Large $\textcolor{purple}{\frac{\hat{\bs w}}{\|\hat{\bs w}\|^2}}$}};
     \node[anchor=north] at (0.65, 0.57) {\textcolor{purple}{\Large $\bs \mu$}};
       \node[anchor=north] at (0.58, 1.07) {\textcolor{gray}{\large $\<\hat{\bs w},\bs u\>=1$}};
              \node[anchor=north] at (0.51, 0.605) {\textcolor{violet}{\Large $\bs z_\perp$}};
  \end{scope}
\end{tikzpicture}
\caption{{The observations $\bar{\bs x}_i=\bs \mu+\bar{\bs z}_i$ are concentrated on average around the sphere $\bs\mu+\rho^{-1/2}S^{p-1}$. Since $\bar{\bs z}_i$'s are all nearly orthogonal to $\bs \mu$, the data points actually concentrate around a smaller area depicted in blue. Since $n<\!\!\!< p$ they fill only a small subarea of this region and they all line on the hyperplane $\<\hat{\bs w},\bs u\>=1$. The decomposition $\frac{\hat{\bs w}}{\|\hat{\bs w}\|^2}=\bs\mu+\bs z_\perp$ depicted here is the fundamental geometric reason behind the phase transition.} }
\label{fig:blowup2}
\end{figure}

Looking at \eqref{eq:wbynorm} we make two observations. First, if $\|\bs \mu\|$ is much larger than the radius $\rho^{-1/2}$, we have $\bar{\bs x}_i=\bs\mu+y_i\bs z_i\approx \bs \mu$ for all $i$. In this case,   $\hat{\bs w}/\|\hat{\bs w}\|^2=\sum_{i=1}^n \alpha_i \bar{\bs x}_i$ is close to $\bs \mu$. Second, when $\|\boldsymbol{\mu}\|$ is negligible compared to the radius $\rho^{-1/2}$, then $\bar {\bs x}_i=\bs\mu+y_i\bs z_i\approx y_i\bs z_i=\bar{\bs z}_i$. In consequence, $\bar X\approx \bar Z$ and the vector $\boldsymbol{\hat w}/\|\boldsymbol{\hat w}\|^2$ should be reasonably approximated by the vector 
$$\bs z_\perp\;:=\;\frac{\bar Z^\top (\bar Z\bar Z^\top)^{-1}\bs 1}{\bs 1^\top (\bar Z\bar Z^\top)^{-1}\bs 1}.$$
Geometrically, $\bs z_\perp$ characterizes the maximum margin vector solving the optimization problem \eqref{eq:mmc} with $\bs x_i$'s replaced by $\bs z_i$'s; see Figure~\ref{fig:z_perp_decomposition} for an illustration.

We will now strengthen this geometric picture by showing that, in fact, $\boldsymbol{\hat w}/\|\boldsymbol{\hat w}\|^2$ is closely approximated by $\bs \mu+\bs z_\perp$, or equivalently, $\hat{\bs w}$ is closely approximated by $\bar{\bs w}:=\tfrac{\bs\mu+\bs z_\perp}{\|\bs\mu+\bs z_\perp\|^2}$. To show this, note that $$
\|\bs z_\perp\|^2\;=\;\frac{1}{\bs 1^\top (\bar Z\bar Z^\top)^{-1}\bs 1}\qquad\mbox{and}\qquad \bar Z\bs z_\perp\;=\;{\|\bs z_\perp\|^2}\bs 1. 
$$
The near orthogonality of $\boldsymbol{z}_i$'s  to $\boldsymbol{\mu}$ implies that $\bs z_\perp$ is also near orthogonal to $\boldsymbol{\mu}$. Consequently, $\|\bs\mu+\bs z_\perp\|^2\approx \|\bs\mu\|^2+\|\bs z_\perp\|^2$ and 
$$
\<\bs \mu+\bs z_\perp,\bar{\bs x}_i\>\;=\;\<\bs \mu+\bs z_\perp,\bs \mu+\bar{\bs z}_i\>\;\approx \;\|\bs\mu\|^2+\<\bs z_\perp,\bar{\bs z}_i\>\;=\;\|\bs\mu\|^2+\|\bs z_\perp\|^2.
$$

These approximations imply that the vector $\bar{\bs w}=\tfrac{\bs\mu+\bs z_\perp}{\|\bs \mu\|^2+\|\bs z_\perp\|^2}$ is close to a feasible point of the problem \eqref{eq:mmc}.

Under complete orthogonality we can actually prove $\bs{\hat w} = \bs{\bar w}$ rigorously via elementary arguments:

\begin{lemm}\label{lemm:whatdecomp}
Suppose that $\bs{\hat w} = X^\top (XX^\top)^{-1}\bs y$ and $\bs z_i \perp \bs z_j$ and $\bs z_i \perp \bs \mu$ for all $i\neq j$. Then, we have the following equalities 
\begin{equation*}
    \bs{\hat w}  = \bs{\bar w}, \quad \text{or equivalently~~~~$\frac{\bs{\hat w}}{\|\bs{\hat w}\|^2} = \bs \mu + \bs z_\perp$, and } %\\
    \left\|\frac{\bs{\hat w}}{\|\bs{\hat w}\|^2}\right\|^2  = \|\bs \mu\|^2 + \|\bs z_\perp\|^2.
\end{equation*}
\end{lemm}
\begin{proof}
Under the orthogonality condition $\bs z_i \perp \bs \mu$, we have $\bs z_\perp \perp \bs \mu$. Further recalling that $\bar Z \bs z_\perp = \|\bs z_\perp\|^2 \bs 1$, we have
$$
\<\bs \mu+\bs z_\perp,\bar{\bs x}_i\>\;=\;\<\bs \mu+\bs z_\perp,\bs \mu+\bar{\bs z}_i\>\;= \;\|\bs\mu\|^2+\<\bs z_\perp,\bar{\bs z}_i\>\;=\;\|\bs\mu\|^2+\|\bs z_\perp\|^2.
$$
This implies that the vector $\bar{\bs w}=\tfrac{\bs\mu+\bs z_\perp}{\|\bs \mu\|^2+\|\bs z_\perp\|^2}$ is a feasible point of the problem \eqref{eq:mmc}; $\<\bar{\bs w},\bar{\bs x}_i\>=1$ for all $i$. 

To show that $\bar{\bs w}$ is the optimal point $\hat{\bs w}$ it remains to confirm that $\|\tfrac{\hat{\bs w}}{\|\hat{\bs w}\|^2}\|^2 = \|\bs \mu\|^2+\|\bs z_\perp\|^2$. 

Denoting $D=\bar Z\bar Z^\top$ (diagonal matrix with $\|\bs z_i\|^2$ on the diagonal), we have $\|\bs z_{\perp}\|^{-2}=\bs 1^\top D^{-1}\bs 1$. Moreover,
$$
\|\tfrac{\hat{\bs w}}{\|\hat{\bs w}\|^2}\|^{-2}\;=\;\bs 1^\top (\bar X\bar X^\top)^{-1}\bs 1\;=\;\frac{\bs 1^\top D^{-1}\bs 1}{1+\|\bs\mu\|^2\bs 1^\top D^{-1}\bs 1},
$$
where to invert $\bar X\bar X^\top =\|\bs \mu\|^2\bs 1\bs 1^\top +D$ we used the Sherman–Morrison formula. This confirms that, under orthogonality, $\bs{\hat w} = \bs{\bar w}$ (the optimum of \eqref{eq:mmc} is uniquely defined). 
\end{proof}

This finishes our slightly informal argument that $\bs{\hat w}/\|\bs{\hat w}\|^2 \approx \bs \mu + \bs z_\perp$ (see Figure~\ref{fig:blowup2} for illustration). 

Now recall the beginning of Section~\ref{sec:blow-up} where we argued that the test error behavior is governed by $d_{\hat{\bs{w}}} = \tfrac{\<\hat{\bs w},\bs\mu\>}{\|\hat{\bs w}\|}$. To explain the phase transition in the test error behavior, we will thus show that a corresponding phase transition occurs in $d_{\hat{\bs{w}}}$. Given the derivations above, we get 
\[
\langle \bs{\hat w}, \bs \mu \rangle \;\approx\;\<\bar{\bs w},\bs\mu\>\;\approx\; \frac{\|\bs \mu\|^2}{\|\bs \mu\|^2 + \|\bs z_\perp\|^2}
\]
and hence
\[
d_{\hat{\bs{w}}}^2 \;=\; \left(\frac{\langle \bs{\hat w}, \bs \mu \rangle}{\|\bs{\hat w}\|}\right)^2 \;\approx\; \frac{\|\bs \mu\|^4}{\|\bs \mu\|^2 + \|\bs z_\perp\|^2}. 
\]
As $\|\bs\mu\|$ grows, this quantity exhibits a phase transition at $\|\bs \mu\| \approx \|\bs z_\perp\|$. Note that $\|\bs{z}_i\| \approx \rho^{-1/2}$ implies $\bs 1^\top D^{-1}\bs 1 \approx \sum_i \|\bs z_i\|^{-2}\approx n\rho$, or equivalently, $\|\bs z_\perp\|^{-2}\approx n\rho$. Thus the phase transition in $d_{\hat{\bs{w}}}$ occurs at $\|\bs z_\perp\| \approx (n\rho)^{-1/2}$, which matches our results in Theorem~\ref{thm:noiseless-main}. Under the orthogonality condition, we can formulate the above argument rigorously as:

\begin{lemm}
Let $\tilde \sigma$ be the normalized area on the sphere $\bs \mu + \rho^{-1/2}S^{p-1}$. Suppose $\bs{\hat w} = X^\top (XX^\top)^{-1}\bs y$ and $\bs z_i \perp \bs z_j$, $\bs z_i \perp \bs \mu$, and $\|\bs z_i\| = \rho^{-1/2}$ for all $i\neq j$. Then, we have 
\[
d_{\hat{\bs{w}}}^2 = \frac{\|\bs \mu\|^4}{\|\bs \mu\|^2 + (n\rho)^{-1}},
\]
and $\tilde \sigma (\{\bs u: \langle \boldsymbol{\hat w}, \bs u \rangle \geq  0\})\to 1$ as $\sqrt{p\rho}d_{\bs{\hat w}} \to \infty$.
\begin{proof}
By Lemma~\ref{lemm:whatdecomp}, we have $\bs{\hat w} = \bs{\bar w}$. Therefore, we have $
\langle \bs{\hat w}, \bs \mu \rangle \;=\; \frac{\|\bs \mu\|^2}{\|\bs \mu\|^2 + \|\bs z_\perp\|^2}$ and hence $d_{\hat{\bs{w}}}^2 = \frac{\|\bs \mu\|^4}{\|\bs \mu\|^2 + \|\bs z_\perp\|^2}.$ By noting $\|\bs z_i\| = \rho^{-1/2}$ and recalling $\|\bs z_\perp\|^2 = \bs 1^\top D^{-1 } \bs 1= \sum_i \|\bs z_i\|^{-2} = n\rho$, we see that the desired equality holds. The last part is due to Corollary~\ref{cor:blowup}.
\end{proof}
\end{lemm}

\begin{figure}[!htp]
\centering
  \begin{tikzpicture}
    \node[anchor=south west, inner sep=0] (image) at (0,0) {\includegraphics[width=0.7\textwidth]{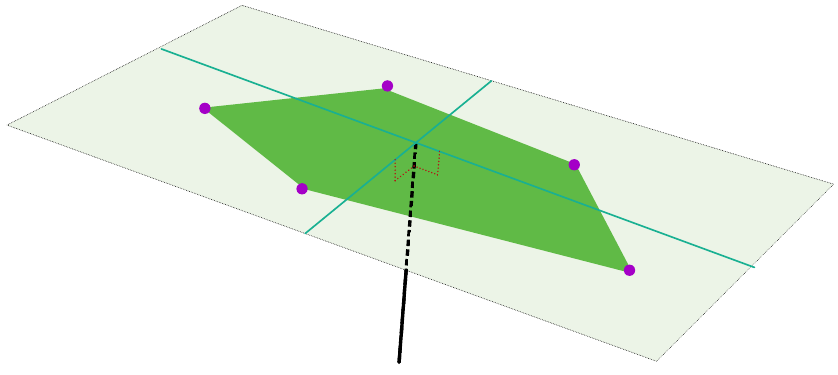}};
    \begin{scope}[x={(image.south east)},y={(image.north west)}]
      \node[anchor=north] at (0.43, 0.05) {\textcolor{black}{\large $\bs 0$}};
      \node[anchor=north] at (0.45, 0.72) {\textcolor{purple}{\large $\bs z_\perp$}};
      \node[anchor=north] at (0.23, 0.71) {\textcolor{black}{$\bar{\bs z}_2$}};
      \node[anchor=north] at (0.35, 0.5) {\textcolor{black}{$\bar{\bs z}_3$}};
      \node[anchor=north] at (0.72, 0.63) {\textcolor{black}{$\bar{\bs z}_4$}};
      \node[anchor=north] at (0.43, 0.88) {\textcolor{black}{$\bar{\bs z}_5$}};
      \node[anchor=north] at (0.78, 0.3) {\textcolor{black}{$\bar{\bs z}_1$}};
      \node[draw=purple, fill=purple, circle, minimum size=5pt, inner sep=0pt] at (0.497, 0.63) {};
      \node[draw=black, fill=black, circle, minimum size=5pt, inner sep=0pt] at (0.476, 0.045) {};
    \end{scope}
  \end{tikzpicture}
\caption{Illustration of $\bs z_\perp$ as a convex combination of $\bar{\bs z}_i$'s or, equivalently, as an orthogonal projection of the origin on the maximum margin hyperplane defined by them. }
\label{fig:z_perp_decomposition}
\end{figure}

\subsection{Phase Transition: Noisy Model}\label{sec:phasetransnoisy}

In this section we provide geometric intuition behind the phase transition on the noisy model observed in Theorem~\ref{detail-noisy-main-1-simple}. Our arguments are very similar to the previous subsection. The main idea will be to derive an approximate representation for $\hat{\bs w}$ and use it to exhibit a phase transition in $d_{\hat{\bs{w}}}$.}

We note that the formula in \eqref{eq:wbynorm} remains unchanged in the noisy case. Define $\nu_{\cs}=\sum_{i:{\rm clean}}\alpha_i$ and $\nu_\n=\sum_{i:{\rm noisy}}\alpha_i$ noting that $\nu_{\cs}+\nu_\n=1$. Then
\begin{eqnarray}
\nonumber\frac{\hat{\bs w}}{\|\hat{\bs w}\|^2}\;&=&\;\sum_{i=1}^n \alpha_i \bar{\bs x}_i\;=\;\sum_{i:{\rm clean}}\alpha_i(\bar{\bs z}_i+\bs \mu)+\sum_{i:{\rm noisy}}\alpha_i(\bar{\bs z}_i-\bs \mu)\\
\nonumber&=&\nu_{\cs}(\sum_{i:{\rm clean}}\tfrac{\alpha_i}{\nu_{\cs}}\bar{\bs z}_i+\bs \mu)+\nu_{\n}(\sum_{i:{\rm noisy}}\tfrac{\alpha_i}{\nu_{\n}}\bar{\bs z}_i-\bs \mu)\\
\label{eq:wovernormdecom}&=&\nu_{\cs}(\bs z_{\perp,\cs}+\bs \mu)+\nu_{\n}(z_{\perp,\n}-\bs \mu),
\end{eqnarray}
where we defined
\begin{equation}\label{eq:zperps}
\bs z_{\perp,\cs}:=\sum_{i:{\rm clean}}\tfrac{\alpha_i}{\nu_{\cs}}\bar{\bs z}_i\qquad \mbox{and}\qquad \bs z_{\perp,\n}:=\sum_{i:{\rm noisy}}\tfrac{\alpha_i}{\nu_{\n}}\bar{\bs z}_i    
\end{equation}
as convex combinations of $\bar{\bs z}_i$'s corresponding to clean samples and to the noisy samples respectively. 

Although this decomposition is universal, the vectors $\bs z_{\perp,\cs}+\bs\mu$ and $\bs z_{\perp,\n}-\bs\mu$ have a very clear geometric interpretation in the full orthogonal setting. To explain this in detail, let $\hat{\bs w}_{\cs}$ and $\hat{\bs w}_{\n}$ denote the maximal margin classifiers on the clean and on the noisy data respectively. We have the following result.

\begin{lemm}\label{lem:noisy_geom}
    Suppose $\bs z_i\perp \bs z_j$, $\bs z_i\perp\bs \mu$ for all $i\neq j$.  Then
$$
\bs z_{\perp,\cs}+\bs\mu\;=\;\frac{\hat{\bs w}_{\cs}}{\|\hat{\bs w}_{\cs}\|^2}\qquad\mbox{and}\qquad \bs z_{\perp,\n}-\bs\mu\;=\;\frac{\hat{\bs w}_{\n}}{\|\hat{\bs w}_{\n}\|^2}.
$$

Moreover, $\tfrac{\hat{\bs w}}{\|\hat{\bs w}\|^2}$ is the orthogonal projection of the origin on the line joining  $\tfrac{\hat{\bs w}_{\cs}}{\|\hat{\bs w}_{\cs}\|^2}$ and $\tfrac{\hat{\bs w}_{\n}}{\|\hat{\bs w}_{\n}\|^2}$ (see Figure~\ref{fig:noisy_decomp} for an illustration).

If we further assume
\begin{equation}\label{eq:normassump}
\sum_{i:\text{clean}}\|\bs z_i\|^{-2} \;=\; (1-\eta)n\rho\qquad \mbox{and}\qquad   \sum_{i:\text{noisy}}\|\bs z_i\|^{-2} \;=\;\eta n\rho,    
\end{equation}
then
\[
\nu_\cs = \frac{1-\eta + 2\eta(1-\eta)n\rho \|\bs \mu\|^2}{1+4\eta(1-\eta)n\rho\|\bs \mu\|^2}, \qquad \nu_\n = \frac{\eta + 2\eta(1-\eta)n\rho \|\bs \mu\|^2}{1+4\eta(1-\eta)n\rho\|\bs \mu\|^2}.
\]
\end{lemm}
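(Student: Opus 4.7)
The plan is to prove the three claims in order, with claims two and three following from claim one via a short projection argument and a one-dimensional optimization.

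For the first claim, restricting the training set to the clean (resp.\ noisy) samples preserves the orthogonality hypothesis, so each restricted problem fits the exact-orthogonal framework of Section~\ref{sec:phasetransenoiseless}. The clean subproblem has features $\bar{\bs x}_i = \bs\mu + \bar{\bs z}_i$ (identical to the noiseless case) and the noisy subproblem has $\bar{\bs x}_i = -\bs\mu + \bar{\bs z}_i$ (noiseless with $-\bs\mu$ in place of $\bs\mu$). The Sherman--Morrison derivation from that section therefore yields $\hat{\bs w}_\cs/\|\hat{\bs w}_\cs\|^2 = \bs\mu + \bs z_\perp^{(\cs)}$ and $\hat{\bs w}_\n/\|\hat{\bs w}_\n\|^2 = -\bs\mu + \bs z_\perp^{(\n)}$, with $\bs z_\perp^{(\cs)}, \bs z_\perp^{(\n)}$ denoting the corresponding subproblem witnesses. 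It remains to match these with the full-problem quantities in \eqref{eq:zperps}. Applying Sherman--Morrison to $\bar X\bar X^\top = \|\bs\mu\|^2 \tau\tau^\top + D$ with $\tau_i = y_{\n,i}y_i$ gives $\alpha_i \propto \|\bs z_i\|^{-2}(1 - c_0\tau_i)$ for a single scalar $c_0 \in (-1,1)$, so the coupling $c_0$ cancels in the renormalized ratios $\alpha_i/\nu_\cs$ on the clean block and $\alpha_i/\nu_\n$ on the noisy block, each of which reduces to $\|\bs z_i\|^{-2}/\sum_{j \text{ in the corresponding class}}\|\bs z_j\|^{-2}$ and hence coincides with the weights defining $\bs z_\perp^{(\cs)}$ and $\bs z_\perp^{(\n)}$.

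For the second claim, the decomposition \eqref{eq:wovernormdecom} already places $\hat{\bs w}/\|\hat{\bs w}\|^2$ on the affine line $L$ joining $\hat{\bs w}_\cs/\|\hat{\bs w}_\cs\|^2$ and $\hat{\bs w}_\n/\|\hat{\bs w}_\n\|^2$. Since $|c_0|<1$ forces all $\alpha_i>0$, every SVM constraint is active, i.e.\ $\<\hat{\bs w}, \bar{\bs x}_i\>=1$ for all $i$. Because $\hat{\bs w}_\cs/\|\hat{\bs w}_\cs\|^2$ is a convex combination of the clean $\bar{\bs x}_i$'s (and similarly for the noisy endpoint), both endpoints of $L$ lie in the hyperplane $H = \{\bs u : \<\hat{\bs w}, \bs u\> = 1\}$, so $L \subset H$. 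The orthogonal projection of the origin onto $H$ equals $\hat{\bs w}/\|\hat{\bs w}\|^2$, and by the Pythagorean identity $p_L(0) = p_L(p_H(0))$ combined with $\hat{\bs w}/\|\hat{\bs w}\|^2 \in L$, this is also the projection onto $L$.

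For the third claim, the formula $\|\bs z_\perp\|^{-2} = \bs 1^\top D^{-1}\bs 1$ from Section~\ref{sec:phasetransenoiseless} specialized to the two subproblems, combined with \eqref{eq:normassump}, yields $\|\bs z_{\perp,\cs}\|^2 = 1/((1-\eta)n\rho)$ and $\|\bs z_{\perp,\n}\|^2 = 1/(\eta n\rho)$. Since $\bs\mu, \bs z_{\perp,\cs}, \bs z_{\perp,\n}$ are mutually orthogonal under our hypothesis, the squared distance from the origin to the point $(1-\lambda)\hat{\bs w}_\cs/\|\hat{\bs w}_\cs\|^2 + \lambda \hat{\bs w}_\n/\|\hat{\bs w}_\n\|^2$ on $L$ equals $(1-2\lambda)^2\|\bs\mu\|^2 + (1-\lambda)^2\|\bs z_{\perp,\cs}\|^2 + \lambda^2\|\bs z_{\perp,\n}\|^2$. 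Setting its derivative to zero identifies $\lambda^* = \nu_\n$ and $1-\lambda^* = \nu_\cs$ via part 2, and substituting the two explicit norms followed by multiplying numerator and denominator by $\eta(1-\eta)n\rho$ yields the stated expressions. The main obstacle is the weight-matching step in part 1; the key observation there is that Sherman--Morrison makes the clean-noisy coupling enter only through the single scalar $c_0$, which cancels in any class-conditional renormalization.
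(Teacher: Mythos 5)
Your proposal is correct and follows essentially the same route as the paper: the Sherman--Morrison inversion of $\bar X\bar X^\top = \|\bs\mu\|^2\tau\tau^\top + D$ giving $\alpha_i\propto\|\bs z_i\|^{-2}(1-c_0\tau_i)$ with the coupling cancelling under class-conditional renormalization, reduction of each subproblem to the noiseless orthogonal analysis, and identification of $\nu_\cs,\nu_\n$ via the one-dimensional norm minimization along the line. The only (harmless) difference is in the ordering: the paper first computes $\nu_\cs$ explicitly from the inner-product identities $\<\hat{\bs w}_\cs,\bar{\bs x}_i\>=1$ and then reads off optimality, whereas you establish the projection claim abstractly (via $L\subset H$, all constraints active since $|c_0|<1$, and the tower property of projections) and then extract the explicit formulas from the minimizer.
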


Lemma~\ref{lem:noisy_geom} is proved in Section~\ref{sec:proof:lem:noisy_geom}. By Lemma~\ref{lem:noisy_geom}, in the full orthogonal case, we conclude that $\bs{\hat w}/\|\bs{\hat w}\|^2$ can be viewed as a weighted average of two vectors constructed from clean points and noisy points, respectively.

\begin{figure}[!htp]
\begin{tikzpicture}
  \node[anchor=south west, inner sep=0] (image) at (0,0) {\includegraphics[width=0.6\textwidth]{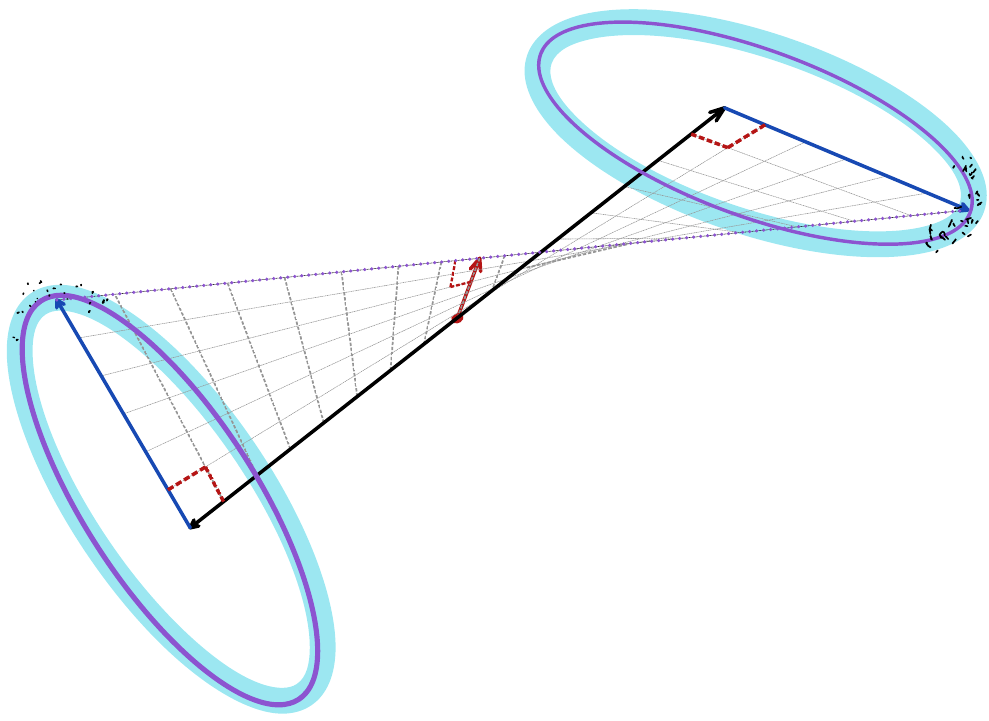}};
  \begin{scope}[x={(image.south east)},y={(image.north west)}]
    \node[anchor=north] at (0.47, 0.55) {\textcolor{black}{\Large $\bs 0$}};
    \node[anchor=north] at (0.48, 0.79) {\textcolor{black}{\Large $\textcolor{purple}{\frac{\hat{\bs w}}{\|\hat{\bs w}\|^2}}$}};
 \node[anchor=north] at (1.05, 0.75) {\textcolor{black}{\Large ${\frac{\hat{\bs w}_{\cs}}{\|\hat{\bs w}_{\cs}\|^2}}$}};
            \node[anchor=north] at (0.02, 0.75) {\textcolor{black}{\Large ${\frac{\hat{\bs w}_{\n}}{\|\hat{\bs w}_{\n}\|^2}}$}};
 \node[anchor=north] at (0.7, 0.9) {\textcolor{purple}{\Large $\bs \mu$}};
          \node[anchor=north] at (0.16, 0.27) {\textcolor{purple}{\Large $-\bs \mu$}};
\node[anchor=north] at (0.83, 0.78) {\textcolor{violet}{\large $\bs z_{\perp,\cs}$}};
\node[anchor=north] at (0.09, 0.44) {\textcolor{violet}{\large $\bs z_{\perp,\n}$}};
  \end{scope}
\end{tikzpicture}
\caption{{The clean observations $\bar{\bs x}_i=\bs \mu+\bar{\bs z}_i$ are concentrated around the sphere $\bs\mu+\rho^{-1/2}S^{p-1}$ and the noisy observations concentrate around $-\bs\mu+\rho^{-1/2}S^{p-1}$. The decomposition $\frac{\hat{\bs w}}{\|\hat{\bs w}\|^2}=\nu_\cs(\bs\mu+\bs z_{\perp,\cs})+\nu_\n(-\bs\mu+\bs z_{\perp,\n})$ depicted here is the fundamental geometric reason behind the phase transition in the noisy model.} }
\label{fig:noisy_decomp}
\end{figure}

Under the assumptions of Lemma~\ref{lem:noisy_geom}, the weights $\nu_\cs$ and $\nu_\n$ exhibit a phase transition at $\|\bs \mu\| \approx (n\rho)^{-1/2}$, and we also have  $\|\bs z_{\perp, \cs}\| = ((1-\eta)n\rho)^{-1/2}$ and $\|\bs z_{\perp, \n}\| = (\eta n\rho)^{-1/2}$.
From this, we can infer an interesting geometric characterization of $\bs{\hat w}/\|\bs{\hat w}\|^2$.

\textbf{Case 1: $\|\bs \mu\|\ll (n\rho)^{-1/2}$}. In this case we have
$
\nu_\cs \approx 1-\eta$ and $\nu_\n \approx \eta, 
$
and hence, 
$$
\frac{\bs{\hat w}}{\|\bs{\hat w}\|^2} \; \approx \;(1-2\eta) \bs \mu + (1-\eta )\bs z_{\perp, \cs} + \eta \bs z_{\perp, \n}
 \;=\; (1-2\eta) \bs \mu + \frac{1}{n\rho}\sum_{i=1}^n \frac{y_{\n, i}\bs z_i}{\|\bs z_i\|^2}  \;=\; (1-2\eta) \bs \mu + \bs z_\perp,
$$
where $\bs z_\perp$ is the orthogonal projection from the origin to the maximum margin separating hyperplane defined by $y_{\n, i}\bs z_i$'s $(i=1,\ldots, n)$. With this decomposition, we can see that the behavior in this small signal regime is essentially same as that of the noiseless model, and hence
\[
\left(\frac{\|\bs{\hat w}\|}{\langle \bs{\hat w}, \bs \mu \rangle}\right)^2 \;\approx\; \frac{1}{n\rho \|\bs \mu\|^4}.
\]

\textbf{Case 2: $\|\bs \mu\|\gg (n\rho)^{-1/2}$}. Now we have
$\nu_\cs \approx \tfrac{1}{2}$ and $\nu_\n \approx \tfrac{1}{2}$ and hence 
\begin{align*}
\frac{\bs{\hat w}}{\|\bs{\hat w}\|^2}  \;\approx\; \frac{1-2\eta}{4\eta(1-\eta)n\rho \|\bs \mu\|^2}\bs \mu + \frac{\bs z_{\perp, \cs} + \bs z_{\perp, \n}}{2}. %\\
\end{align*}
With this decomposition, we get
\[
\|\bs{\hat w}\|^2 \approx 4\eta(1-\eta)n\rho \quad \text{and} \quad \langle \bs{\hat w}, \bs \mu \rangle \approx 1 - 2\eta
\]
and hence
\[
\left(\frac{\|\bs{\hat w}\|}{\langle \bs{\hat w}, \bs \mu \rangle}\right)^2 \;\approx\; \eta n\rho.
\]

The above expansions confirm both the presence of a phase transition in $d_{\hat{\bs w}}$ at $\|\bs \mu\|\approx (n\rho)^{-1/2}$ and the different behavior of $d_{\hat{\bs w}}$ in the strong signal regime compared to the noiseless case.

It is worth noting that in both regimes, the $\bs \mu$ term is smaller than the remaining terms: if $\|\bs \mu\|\ll (n\rho)^{-1/2}$, $\bs{\hat w}/\|\bs{\hat w}\|^2$ is dominated by $\bs z_\perp$ and if $\|\bs \mu\|\gg (n\rho)^{-1/2}$, $\bs{\hat w}/\|\bs{\hat w}\|^2$ is dominated by $\frac{\bs z_{\perp, \cs} + \bs z_{\perp, \n}}{2}$. This observation explains why the behavior of $\bs{\hat w}/\|\bs{\hat w}\|^2$ is similar between the noiseless and noisy case when the signal is weak and very different when the signal is large. It also confirms the geometric intuition from the beginning of Section~\ref{geometry}: $\bs \mu$ indeed becomes more and more orthogonal to $\hat{\bs w}$ as $\|\bs \mu\|$ grows in the noisy case. The fact that benign overfitting can still occur in this setting is because $\hat{\bs w}$ still has a component $\tfrac{1-2\eta}{4\eta(1-\eta)n\rho \|\bs \mu\|^2}\bs \mu$ which is aligned with $\bs\mu$. By Corollary~\ref{cor:blowup}, the normalized surface area satisfies $\tilde \sigma(\{\bs u: \langle \bs{\hat w}, \bs u\rangle \geq 0\}) \to 1$ as $\sqrt{p\rho}d_{\bs{\hat w}}\to \infty$. Thus, together with the blow-up phenomenon in Section~\ref{sec:blow-up}, this enables such a seemingly paradoxical phenomenon.

\section{Discussion and future directions}

In this work, we showed that benign overfitting in binary linear classification occurs in a much wider range of scenarios than was previously known. Still, there are many interesting avenues for future research on benign overfitting in linear classification.

First, our analysis relies on finite $2+\eps$ moments for the predictors. While simulations in the supplement suggest that benign overfitting fails for features with less than two moments, providing a formal proof of this failure or studying the regime with exactly two finite moments remains an interesting open question. Second, our results heavily rely on linear separability of training data which holds with high probability under our assumptions. Extending the analysis to the case when linear separability is not guaranteed with high probability seems to require fundamentally different tools. Finally, our analysis is restricted to binary classification. Benign overfitting in multiclass classification is substantially less well studied than the binary case. A notable exception is \cite{wang2021multiclass}. The latter authors studied multiclass classification for noiseless isotropic Gaussian mixtures under certain near orthogonality conditions. Extending the analysis of the multiclass case beyond that setting or allowing for label noise is another interesting open problem. While some of the tools we developed in the present paper should be useful for this extension, the setting is quite different and a different analysis from ours will be required.

\begin{appendix}

\section{Recovering existing results for sub-Gaussian mixture models}\label{sec:recoveringSubGauss}

\textbf{Additional notation used in this section} 
For a random variable $X$, we use $\|X\|_{\psi_1}$ and $\|X\|_{\psi_2}$ to denote the sub-exponential norm and sub-Gaussian norm of $X$, respectively, which are defined as follows:
\begin{align*}
\|X\|_{\psi_1} & := \inf\{t>0 \,:\, \E \exp(|X|/t) \leq 2 \}, \\
\|X\|_{\psi_2} & := \inf\{t>0 \,:\, \E \exp(X^2/t^2) \leq 2 \}.
\end{align*}
For a random vector $X\in \R^p$, the sub-Gaussian norm of $X$ is denoted by $\|X\|_{\psi_2}$, whose definition is given by 
\[
\|X\|_{\psi_2} := \sup_{v\in S^{p-1}} \|\langle v,X \rangle\|_{\psi_2}.
\]
\medskip

In this subsection, we will show that our general framework (Theorem~\ref{thm:noiseless-main} and Theorem~\ref{detail-noisy-main-1-simple}) applied to the sub-Gaussian mixture model recovers existing results in the literature. Throughout this section, we make the following assumption. This includes essentially the sub-Gaussian mixture model that was considered in most of the existing literature. We note that our assumption is more general compared to the previous literature in terms of the choice of $W$ as noted in Section~\ref{sec:setting}.

\begin{enumerate}[label=(sG), ref=(sG)]
\item Suppose $\boldsymbol{z}= W\boldsymbol{\xi}$, where $\boldsymbol{\xi} \in \R^p$ has independent entries that have mean zero, unit variance, and $\|\xi_{j}\|_{\psi_2} \leq L$ for all $j= 1,\ldots, p$. \label{model:sG}
\end{enumerate}

By Theorem~\ref{detail-noisy-main-1-simple} (more precisely, the exponential bound~\eqref{eq:testerrornoisygenexp} in its proof) and Lemma~\ref{lemm:esG-noisy-what-bound}, we obtain the following result.

\begin{thm}[Noisy \& sub-Gaussian]\label{thm:sGnoisymain}
Let $\tilde C_1 $ be the constant in Lemma~\ref{lemm:esGOmega1}, $\tilde C_2 , \tilde C_3 $ be the constants in Lemma~\ref{lemm:esGOmega2}, and $\tilde C_4 $ be the constant in Lemma~\ref{lemm:esGOmega4} and define $C_{2,\eta} := \max\{\tfrac{22}{\eta},\tfrac{17}{1-2\eta}\}$. Suppose \ref{model:sG} holds in model \ref{model:M}, that
    \begin{align}
    \Tr(\Sigma) & \geq \frac{33\tilde C_1 L^2}{16\eta}   \max\left\{n^{3/2} \|\Sigma\|, n \|\Sigma\|_{\textsc{f}}\right\}, \label{eq:trcond} \\
    \|\boldsymbol{\mu}\|^2 & \geq 2\tilde C_2  C_{2,\eta}L \|\bs{\mu}\|_\Sigma, \nonumber 
    \end{align}
    and that further one of the following conditions holds:
    \begin{enumerate}
        \item[(i)] $\displaystyle \Tr(\Sigma) \geq \max\left\{60\tilde C_2  n, 132\tilde C_3 n\sqrt{\log (\tfrac{n}{\delta})} \right\} L    \|\bs{\mu}\|_\Sigma$, 
        \item [(ii)] $\displaystyle \|\boldsymbol{\mu}\|^2 \geq \tfrac{33}{16}\tilde C_3 C_{2,\eta} L \sqrt{\log (\tfrac{n}{\delta})}\|\bs \mu\|_\Sigma$ and \\       $\displaystyle \Tr(\Sigma) \geq 4\tilde C_2  C_{2,\eta} L^2 \max\left\{\tilde C_2  n, \tfrac{33}{32}\tilde C_3 n \sqrt{\log(\tfrac{n}{\delta})}\right\}\frac{\|\bs{\mu}\|_\Sigma^2}{\|\boldsymbol{\mu}\|^2}$.
    \end{enumerate}
    Then, we have for $n\geq  \log(\tfrac{1}{\delta}) \left( \tfrac{16\tilde C_4 }{\min\{\eta, 1-2\eta\}}\right)^2$, with probability at least $1-5\delta$, 
\[
\mathbb{P} \left( \langle \boldsymbol{w}, y_\n \boldsymbol{x} \rangle  <0 \right)  \;\;\leq\;\; \eta + (1-\eta)\exp\left\{- c \frac{(1-2\eta)^2}{\|\Sigma\|L^2}\left(\frac{ n }{\Tr(\Sigma)} + \frac{\Tr(\Sigma)}{ n\|\boldsymbol{\mu}\|^4} \right)^{-1}\right\},
\]
where $c$ is a universal constant.
\end{thm}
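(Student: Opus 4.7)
The statement is essentially a translation of the general noisy-case result, Theorem~\ref{detail-noisy-main-1-simple} (applied via its exponential sub-Gaussian variant, equation~\eqref{eq:testerrornoisygenexp}), into concrete conditions on $\Sigma,\bs\mu,\eta,\delta$ in the sub-Gaussian mixture model \ref{model:sG}. My plan is therefore entirely verificational: first produce high-probability sub-Gaussian bounds for the parameters $\eps,\alpha_2,\alpha_\infty,M,\beta,\gamma,\rho$ governing the events $E_1,\ldots,E_5$; then show that the hypotheses of the theorem imply the simplified sufficient condition \eqref{eq:condsimplenoisymain} (for case (i)) or condition (ii) of $(N_{C_{2,\eta}})$ (for case (ii)); finally invoke Lemma~\ref{lemm:esG-noisy-what-bound}, which provides the sub-Gaussian bound on $\|\hat{\bs w}\|/\langle\hat{\bs w},\bs\mu\rangle$, and plug it into the exponential tail bound of Lemma~\ref{testerror}.

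For the first step, I will combine the sub-Gaussian concentration lemmas cited in the statement. Lemma~\ref{lemm:esGOmega1} (with constant $\tilde C_1$) yields, via Hanson--Wright applied to $\check Z\check Z^\top - I_n$, that event $E_1$ holds with $\eps\lesssim L^2\sqrt{\tfrac{n\log(1/\delta)}{\Tr(\Sigma)^2}}\|\Sigma\|_{\textsc{f}} + L^2\tfrac{n\log(1/\delta)}{\Tr(\Sigma)}\|\Sigma\|$; the stated lower bound \eqref{eq:trcond} on $\Tr(\Sigma)$ is exactly tuned so that this $\eps$ is at most $\tfrac{\eta}{16}\wedge \tfrac{16}{17 C_{2,\eta}}$. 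Lemma~\ref{lemm:esGOmega2} (with constants $\tilde C_2,\tilde C_3$) controls $E_2$ via Gaussian-type concentration of $\check Z\bs\mu$, giving $\alpha_2\lesssim L\sqrt{\log(1/\delta)}\,\|\Sigma^{1/2}\bs\mu\|/(\sqrt{\Tr(\Sigma)}\|\bs\mu\|)$ and $\alpha_\infty\lesssim L\delta^{-1}\sqrt{\log n}\,\|\Sigma^{1/2}\bs\mu\|/(\sqrt{\Tr(\Sigma)}\|\bs\mu\|)$, while a standard norm concentration argument (contained in the proof of $E_3$) gives $M\asymp \sqrt{\Tr(\Sigma)}$. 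Lemma~\ref{lemm:esGOmega4} (constant $\tilde C_4$) controls the concentration of $\tfrac{1}{n}\sum \|\bs z_i\|^{-2}$ around $\rho\asymp n/\Tr(\Sigma)$, yielding $\beta,\gamma$ of order $L^2/\sqrt{n}$ up to $\log$-factors; the assumption $n\geq \log(1/\delta)(16\tilde C_4/\min\{\eta,1-2\eta\})^2$ then forces $\beta,\gamma\leq \tfrac{\min\{\eta,1-2\eta\}}{16}$ and simultaneously $\leq \tfrac{16}{17 C_{2,\eta}}$.

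For the second step I match the stated hypotheses to \eqref{eq:condsimplenoisymain} or condition (ii) of $(N_{C_{2,\eta}})$. With $\rho\asymp n/\Tr(\Sigma)$ and $M\asymp \sqrt{\Tr(\Sigma)}$ one has $M\sqrt{n\rho}\asymp n\sqrt{\Tr(\Sigma)/\Tr(\Sigma)}\asymp n$, so $[\alpha_2\vee\eps]M\sqrt{n\rho}\leq \tfrac{16}{17 C_{2,\eta}}$ reduces, after substituting the bounds for $\alpha_2$ and $\eps$, precisely to alternative (i) of the theorem. Similarly, $\|\bs\mu\|\geq \tfrac{17}{16}C_{2,\eta}\alpha_2 M$ becomes $\|\bs\mu\|^2\gtrsim C_{2,\eta}L\sqrt{\log(1/\delta)}\|\Sigma^{1/2}\bs\mu\|$, which is exactly the second hypothesis on $\|\bs\mu\|$. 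For case (ii) of the theorem I instead verify the second branch of $(N_{C_{2,\eta}})$: the first inequality there, $\|\bs\mu\|\geq C\alpha_\infty M$, becomes $\|\bs\mu\|^2\gtrsim C_{2,\eta}L\delta^{-1}\sqrt{\log n}\,\|\Sigma^{1/2}\bs\mu\|$, matching the first display of (ii); the condition $\max\{\alpha_2^2,\alpha_2\alpha_\infty M\sqrt{n\rho}\}\leq C^{-1}$ becomes, after substitution, exactly the second display of (ii). The combinatorics of the explicit numerical constants $\tfrac{17}{16},\tfrac{33}{16},\tfrac{33}{32}, 4,60,132$, etc., are chosen to absorb the prefactors from Lemmas~\ref{lemm:esGOmega1}--\ref{lemm:esGOmega4}.

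Finally, once $(N_{C_{2,\eta}})$ and the thresholds on $\eps,\beta,\gamma$ are verified, Lemma~\ref{lemm:esG-noisy-what-bound} (under \ref{model:sG}) yields, on a $1-5\delta$ event,
\[
\Big(\frac{\|\hat{\bs w}\|}{\langle\hat{\bs w},\bs\mu\rangle}\Big)^2 \;\lesssim\; \frac{1}{(1-2\eta)^2}\Big(\eta\,n\rho + \frac{1}{\|\bs\mu\|^2} + \frac{1}{n\rho\,\|\bs\mu\|^4}\Big),
\]
and substituting $\rho\asymp n/\Tr(\Sigma)$ together with $\|\bs z\|_{\psi_2}\leq L\sqrt{\|\Sigma\|}$ into the sub-Gaussian branch of Lemma~\ref{testerror} produces the claimed exponential error bound. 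The main obstacle in executing this plan is purely bookkeeping: tracking the explicit constants coming from each sub-Gaussian concentration inequality so that they line up with the somewhat delicate numerical thresholds $\tfrac{22}{\eta},\tfrac{17}{1-2\eta},\tfrac{16}{17 C_{2,\eta}},\tfrac{\min\{\eta,1-2\eta\}}{8}$, etc.; no genuinely new probabilistic content is introduced beyond what the four cited sub-Gaussian lemmas already provide.
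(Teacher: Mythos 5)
Your overall architecture matches the paper's: compute $\|\bs z\|_{\psi_2}\lesssim L\sqrt{\|\Sigma\|}$, verify the high-probability events via the $\Omega$-lemmas, check condition $(N_{C_{2,\eta}})$ together with the thresholds on $\eps,\beta,\gamma$, and then feed Lemma~\ref{lemm:esG-noisy-what-bound} into the exponential branch of Lemma~\ref{testerror} via~\eqref{eq:testerrornoisygenexp}. The paper's actual proof is essentially just these two citations, since Lemma~\ref{lemm:esG-noisy-what-bound} (in its $g\equiv 1$ specialization) already performs all the condition-matching you describe.

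However, your routing of case (i) through the simplified condition~\eqref{eq:condsimplenoisymain} would fail. Condition~\eqref{eq:condsimplenoisymain} requires $\|\bs\mu\|\geq\tfrac{17}{16}C_{2,\eta}\alpha_2 M$; with $\alpha_2\asymp L\sqrt{n\log(1/\delta)}\,\|\Sigma^{1/2}\bs\mu\|/(\sqrt{\Tr(\Sigma)}\|\bs\mu\|)$ and $M\asymp\sqrt{\Tr(\Sigma)}$ this translates to $\|\bs\mu\|^2\gtrsim C_{2,\eta}L\sqrt{n\log(1/\delta)}\,\|\Sigma^{1/2}\bs\mu\|$, which carries an extra factor $\sqrt{n}$ relative to the theorem's hypothesis $\|\bs\mu\|^2\geq 2\tilde C_2 C_{2,\eta}L\sqrt{\log(1/\delta)}\,\|\Sigma^{1/2}\bs\mu\|$ and is therefore \emph{not} implied by it. You have conflated $\alpha_2 M$ with $\alpha_2/\sqrt{n\rho}$: it is the condition $\|\bs\mu\|\geq C_{2,\eta}\alpha_2/\sqrt{n\rho}$ (present in both branches of $(N_{C_{2,\eta}})$) that matches the stated hypothesis on $\|\bs\mu\|^2$, since $\alpha_2/\sqrt{n\rho}\asymp L\sqrt{\log(1/\delta)}\,\|\Sigma^{1/2}\bs\mu\|/\|\bs\mu\|$. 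The correct route for case (i) is to verify branch (i) of $(N_{C_{2,\eta}})$ directly, namely $\alpha_2\|\bs\mu\|\sqrt{n\rho}\leq\tfrac{1}{30}$ and $\alpha_\infty\|\bs\mu\|Mn\rho\leq\tfrac{1}{64}$, which is exactly what the two branches of the theorem's display (i) encode (this is what Lemma~\ref{lemm:esG-noisy-what-bound} does). Relatedly, your computation $M\sqrt{n\rho}\asymp n$ is off by a square root: $M\sqrt{n\rho}\asymp\sqrt{\Tr(\Sigma)}\cdot\sqrt{n/\Tr(\Sigma)}=\sqrt{n}$; the factor $\sqrt{n}$ is what converts the bound on $\eps$ from Lemma~\ref{lemm:esGOmega1} into the exponents $n^{3/2}$ and $n$ appearing in~\eqref{eq:trcond}, and taking it to be $n$ would demand a strictly stronger trace condition than the one stated.
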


The proof of Theorem~\ref{thm:sGnoisymain} is given in Section~\ref{sec:proof:thm:sGnoisymain} in the supplement.

Theorem~\ref{thm:sGnoisymain} extends Theorem~4 in \cite{JMLR:v22:20-974} and Theorem~7 in \cite{wang2022binary}. 

\begin{sloppypar}
First, both existing results only consider specialized settings. Both of them hold only under $\Tr(\Sigma) \gtrsim n\|\boldsymbol{\mu}\|^2$, which can be regarded as a special case of the regime (i) in Theorem~\ref{thm:sGnoisymain}. 

Furthermore, compared to Theorem~4 in \cite{JMLR:v22:20-974}, our bound is tighter by a factor of $n$ in the exponential and holds under substantially weaker assumptions: they further assume $\Tr(\Sigma)\asymp p, \|\Sigma\|\asymp 1$, and $\Tr(\Sigma)\gtrsim n^2 \log n \|\Sigma\|$. The bound in Theorem~7 of \cite{wang2022binary} matches ours but is limited to isotropic Gaussian mixtures: $\bs z \sim \mathcal{N}(0, I_p)$.
\end{sloppypar}

Theorem~7 in \cite{wang2022binary} can also be regarded as a special case of Theorem~\ref{thm:sGnoisymain} under the regime (i). However, although their result is restricted to the regime $\Tr(\Sigma) \gtrsim n\|\boldsymbol{\mu}\|^2$, their assumption which corresponds to our assumption \eqref{eq:trcond} is weaker. This discrepancy comes from their isotropic Gaussian assumption, namely, they assume $\boldsymbol{z} \sim \mathcal{N}(0, I_p)$. In fact, they were able to establish a sharper result than Lemma~\ref{lemm:esGOmega1} using the special property of the inverse Wishart distribution and further properties of isotropic Gaussian distributions. The argument using properties of the inverse Wishart distribution can be found in \citet[][Lemma~21]{Muthkumar2020class}. 

Lastly, we note that \cite{frei23-COLT} studied benign overfitting in binary linear classification on a different data distribution model. A similar result is further established for shallow neural networks by \cite{pmlr-v178-frei22a} for the weak signal regime under a stronger data distribution assumption than sub-Gaussianity.

Similarly, we can also establish corresponding results for the noiseless sub-Gaussian mixture model by applying Theorem~\ref{thm:noiseless-main} and Lemma~\ref{lemm:esGboundsevents}.

\begin{thm}[Noiseless \& sub-Gaussian]\label{thm:noiseless-main-sG}
Let $C$ be the constant in (i) of Theorem~\ref{thm:noiseless-main}, $C^\prime$ be the constant in (ii) of Theorem~\ref{thm:noiseless-main}, $\tilde C_1 $ be the constant in Lemma~\ref{lemm:esGOmega1} and $\tilde C_2 , \tilde C_3 $ be the constants in Lemma~\ref{lemm:esGOmega2}. Suppose \ref{model:sG} holds in model \ref{model:M} with $\eta=0$ (the noiseless case), $n\geq \log(\tfrac{1}{\delta})$, and one of the following holds:
\begin{enumerate}
    \item[(i)] $\|\bs \mu\|^2 \geq 2\sqrt{2}C \tilde C_2 L\|\bs{\mu}\|_\Sigma$ and 
    \begin{align}
    \Tr(\Sigma) & \geq \tilde C_5 \max\left\{L^2n\|\Sigma\|_F, L^2n^{3/2} \|\Sigma\|, L n\sqrt{\log (\tfrac{n}{\delta})} \|\bs{\mu}\|_\Sigma \right\}, \label{eq:sGtrcond}
    \end{align}
    where $\tilde C_5 = \max\{3\sqrt{6}\tilde C_1, 4\sqrt{6}\tilde C_2, 48 \tilde C_3\}$, 
    \item[(ii)] $\|\bs \mu\| \geq \tfrac{3}{2}C^\prime \sqrt{\Tr(\Sigma)}$ and $\displaystyle \Tr(\Sigma) \geq 2\tilde C_1 L^2\max\left\{ \sqrt{n}\|\Sigma\|_F, n \|\Sigma\|\right\}$.
\end{enumerate}
Then, we have
    \begin{equation*}
        \P_{(\boldsymbol{x}, y)}(\langle \boldsymbol{\hat w}, y\boldsymbol{x}\rangle < 0) \leq \exp\left\{-\frac{c}{L^2 \|\Sigma\|}\left(\dfrac{1}{\|\boldsymbol{\mu}\|^2} + \frac{\Tr(\Sigma)}{n \|\boldsymbol{\mu}\|^4}\right)^{-1}\right\},
    \end{equation*}
where $c$ is a universal constant, with probability at least $1 - 4\delta$ under condition (i) and $1-2\delta$ under condition (ii).
\end{thm}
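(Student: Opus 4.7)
The plan is to derive Theorem~\ref{thm:noiseless-main-sG} as a direct specialization of Theorem~\ref{thm:noiseless-main} to the setting of assumption~\ref{model:sG}, and to upgrade the polynomial test-error bound \eqref{eq:errboundnonoise} to an exponential tail by invoking the sub-Gaussian branch of Lemma~\ref{testerror}. The two cases (i) and (ii) in the statement mirror the two cases of Theorem~\ref{thm:noiseless-main} and are handled separately; in each, the task is to verify the relevant high-level events using the sub-Gaussian concentration lemmas cited in the statement and then to track numerical constants so that the quantitative requirements are met.

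For case (i), I first identify parameters under which the events $E_1(\varepsilon)$, $E_2(\alpha_2,\alpha_\infty)$, $E_3(M)$, and $E_4(\beta,\rho)$ hold in the sub-Gaussian model. Lemma~\ref{lemm:esGOmega1} supplies a valid $\varepsilon$ of order $L^2\sqrt{n\log(1/\delta)}\,\|\Sigma\|_{\textsc{f}}/\Tr(\Sigma) + L^2 n\log(1/\delta)\,\|\Sigma\|/\Tr(\Sigma)$; Lemma~\ref{lemm:esGOmega2} supplies $\alpha_2,\alpha_\infty$ of order $L\sqrt{\log(n/\delta)}\,\|\Sigma^{1/2}\bs\mu\|/\|\bs\mu\|$; and standard Hanson-Wright concentration with a union bound yields $M\asymp \sqrt{\Tr(\Sigma)}$ together with $\rho\asymp 1/\Tr(\Sigma)$ and arbitrarily small $\beta$. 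With these parameter choices, the four quantitative requirements in Theorem~\ref{thm:noiseless-main}(i), namely $\|\bs\mu\|\sqrt{(1-\beta)n\rho}\geq C\alpha_2$, $\alpha_2\|\bs\mu\|\sqrt{(1+\beta)n\rho}\leq 1/4$, $\varepsilon M\sqrt{(1+\beta)n\rho}\leq 1/4$, and $M\alpha_\infty\|\bs\mu\|(1+\beta)n\rho < 3/32$, translate directly into the signal lower bound $\|\bs\mu\|^2 \geq 2\sqrt 2 C\tilde C_2 L\sqrt{\log(1/\delta)}\,\|\Sigma^{1/2}\bs\mu\|$ and the three terms appearing in the trace condition~\eqref{eq:sGtrcond}.

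For case (ii), only event $E_3(M)$ with $\|\bs\mu\|\geq C' M$ is needed. The hypothesis $\Tr(\Sigma)\geq 2\tilde C_1 L^2\max\{\sqrt{n\log(1/\delta)}\,\|\Sigma\|_{\textsc{f}},\, n\log(1/\delta)\,\|\Sigma\|\}$ guarantees via Hanson-Wright plus a union bound that $M$ may be taken of order $\sqrt{\Tr(\Sigma)}$, and the signal lower bound $\|\bs\mu\|\geq \tfrac32 C'\sqrt{\Tr(\Sigma)}$ then immediately yields $\|\bs\mu\|\geq C' M$. Theorem~\ref{thm:noiseless-main}(ii) applies, giving convergence in direction of the gradient descent iterates to $\bs{\hat w}$ together with $(\|\bs{\hat w}\|/\langle\bs{\hat w},\bs\mu\rangle)^2 \lesssim 1/\|\bs\mu\|^2$.

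To conclude in both cases, I upgrade \eqref{eq:errboundnonoise} to an exponential bound by invoking the sub-Gaussian branch of Lemma~\ref{testerror}, which only requires the elementary estimate $\|\bs z\|_{\psi_2}\lesssim L\|\Sigma\|^{1/2}$ valid for linear combinations of independent sub-Gaussian coordinates. Combining this with the general quantitative bound $(\|\bs{\hat w}\|/\langle \bs{\hat w},\bs\mu\rangle)^2 \asymp 1/\|\bs\mu\|^2 + 1/(n\rho\|\bs\mu\|^4)$ produced by Theorem~\ref{thm:noiseless-main} and with the scaling $\rho\asymp 1/\Tr(\Sigma)$ delivers the stated test-error bound. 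I expect the main obstacle to be the careful bookkeeping of numerical constants in case (i): each of the four inequalities in Theorem~\ref{thm:noiseless-main}(i) must be satisfied simultaneously from the stated hypotheses, and the definition of $\tilde C_5$ in \eqref{eq:sGtrcond} is engineered precisely so that the three contributions from Lemmas~\ref{lemm:esGOmega1}-\ref{lemm:esGOmega2} all fit; once this accounting is done, direction convergence and the exponential tail are immediate consequences of the general theorem and Lemma~\ref{testerror}.
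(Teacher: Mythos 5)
Your proposal is correct and follows essentially the same route as the paper: verify the events $E_1$--$E_4$ (resp.\ $E_3$) via the sub-Gaussian concentration results (Lemmas~\ref{lemm:esGOmega1}, \ref{lemm:esGOmega2}, collected in Lemma~\ref{lemm:esGboundsevents} with $g\equiv 1$, $\beta=\eps$, $M=(1+\eps)\sqrt{\Tr(\Sigma)}$, $\rho=1/\Tr(\Sigma)$), translate the quantitative conditions of Theorem~\ref{thm:noiseless-main}(i)/(ii) into the stated hypotheses, and upgrade to the exponential tail via the sub-Gaussian branch of Lemma~\ref{testerror} using $\|\bs z\|_{\psi_2}\lesssim L\|\Sigma\|^{1/2}$. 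The only caveat is that your quoted orders for $\alpha_2,\alpha_\infty$ omit the $\sqrt{\Tr(\Sigma)}$ normalization and the distinct $\sqrt{n}$ versus $\delta^{-1}\sqrt{\log n}$ factors, which must be tracked exactly for the constant bookkeeping you correctly identify as the main task.
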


The proof of Theorem~\ref{thm:noiseless-main-sG} is given in Section~\ref{sec:proof:thm:noiseless-main-sG} of the supplement.

Theorem~\ref{thm:noiseless-main-sG} under condition (i) essentially recovers Theorem~3.1 in \cite{NEURIPS2021_46e0eae7}. The only difference is in the expression of the test error bound. The test error bound in Theorem~3.1 in \cite{NEURIPS2021_46e0eae7} has the following form
\[
\exp\left(-c\frac{n\|\boldsymbol{\mu}\|^4}{n\|\bs{\mu}\|_\Sigma^2 + \|\Sigma\|_{\textsc{f}}^2 + n\|\Sigma\|^2}\right)
\]
while ours has
\[
\exp\left(-c\frac{n\|\boldsymbol{\mu}\|^4}{\|\Sigma\|\left(n\|\boldsymbol{\mu}\|^2 + \Tr(\Sigma)\right)}\right).
\]

While \cite{NEURIPS2021_46e0eae7}'s bound is tighter under condition (i), our bound also holds under condition (ii) which was not studied in prior work. Moreover, our bound matches with theirs up to a constant factor when $\lambda_{max}/\lambda_{min} = O(1)$. The discrepancy can be seen in the inequality \eqref{eq:markov-aux}. Instead of using $\bs{w}^\top \E [\bs{z}\bs{z}^\top]\bs{w} \leq \|\bs w\|^2 \|\E [\bs{z}\bs{z}^\top]\|$, which holds for any $\bs w$, \cite{NEURIPS2021_46e0eae7} bounded $\bs{\hat w}^\top \E [\bs{z}\bs{z}^\top]\bs{\hat w}$ using the explicit form of $\bs {\hat w}$. While we could recover their result by following their argument instead of ~\eqref{eq:markov-aux}, we chose to state our result as presented here since our main motivation is to demonstrate that our theory holds in a much wider range of scenarios.

\end{appendix}

\begin{acks}[Acknowledgments]
We thank Mufan Li for pointing us to references in the very early stage of this work. The authors are further grateful to the Associate Editor and three anonymous Referees whose constructive comments led to an improved manuscript. We would also like to thank Prof. Christos Thrampoulidis for helpful discussions regarding the work \cite{wang2022binary}.
\end{acks}

\begin{funding}
The research of Stanislav Volgushev and Piotr Zwiernik was partially supported by a Discovery Grants from the Natural Sciences and Engineering Research Council of Canada (grants RGPIN-2024-05528 to SV and RGPIN-2023-03481 to PZ).
\end{funding}

\bibliographystyle{imsart-nameyear} % Style BST file (imsart-number.bst or imsart-nameyear.bst)
\bibliography{references}       % Bibliography file (usually '*.bib')

%% or include bibliography directly:
% \begin{thebibliography}{}
% \bibitem[\protect\citeauthoryear{???}{???}]{b1}
% \end{thebibliography}

\newpage

%%%%%%%%%%%%%%%%%%%%%%%%%%%%%%%%%%%%%%%%%
%%%%%%%%%%%%%%%%%%%%%%%%%%%%%%%%%%%%%%%%%
%   SUPPLEMET START HERE
%%%%%%%%%%%%%%%%%%%%%%%%%%%%%%%%%%%%%%%%%
%%%%%%%%%%%%%%%%%%%%%%%%%%%%%%%%%%%%%%%%%

\setcounter{section}{0}
\renewcommand*{\theHsection}{chX.\the\value{section}}
\setcounter{equation}{0}
\setcounter{figure}{0}
\renewcommand*{\theHfigure}{chX.\the\value{figure}}
\setcounter{table}{0}
\renewcommand*{\theHtable}{chX.\the\value{table}}

\renewcommand{\thesection}{S\arabic{section}}
\renewcommand{\theequation}{S.\arabic{equation}}
\renewcommand{\thefigure}{S\arabic{figure}}
\renewcommand{\thetable}{S\arabic{table}}

\section{Overview of supplement}

\addtocontents{toc}{\protect\setcounter{tocdepth}2} 

\tableofcontents

\section{General results and proofs of results in model \ref{model:M}}\label{sec:proof}

In this section, we provide proofs for our general results in model \ref{model:M}. Motivated by Theorem~\ref{th:soudry}, we study the maximum margin classifier and linear separability of data. 

Denote by $\Delta(\boldsymbol{y}_{\n})$ the diagonal matrix with $\boldsymbol{y}_{\n}=(y_{\n,1},\ldots,y_{\n,n})$ on the diagonal. A vector $\boldsymbol{w}\in \R^p$ linearly separates the data $(y_{\n,1},\bs x_1),\ldots,(y_{\n,n},\bs x_n)$ if $\<y_{\n, i} \boldsymbol{x}_i,\boldsymbol{w}\>>0$ for all $i=1,\ldots,n$. Equivalently, $\Delta(\boldsymbol{y}_{\n}) X \boldsymbol{w}$ has strictly positive entries. If $ X X^\top$ is invertible, then one example of a linearly separating vector is the minimum-norm least square estimator $\bs w_{\rm LS}$, which has the following expression:
\begin{equation}\label{eq:LS}
    \boldsymbol{w}_{\rm LS}\;=\; X^\top ( X X^\top)^{-1} \boldsymbol{y}_{\n}.
\end{equation}
Here, linear separation follows simply by noting that 
$\Delta(\boldsymbol{y}_{\n})X \boldsymbol{w}_{\rm LS} = \Delta(\boldsymbol{y}_{\n})\boldsymbol{y}_{\n}=\mathbf 1_n$. This also shows that $\boldsymbol{w}_{\rm LS}$ is a feasible point of \eqref{eq:mmc}. Explicit conditions on $\boldsymbol{w}_{\rm LS}$ to be equal to the optimal point $\boldsymbol{\hat w}$ of \eqref{eq:mmc} are also known.

\begin{lemm}\label{maxmargin=LS} 
If the vector $\Delta(\boldsymbol{y}_{\n})(XX^\top)^{-1}\boldsymbol{y}_{\n}$ has positive entries, then $\boldsymbol{\hat w} = \boldsymbol{w}_{\rm LS}$.
\end{lemm}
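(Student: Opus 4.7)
The plan is to verify the KKT conditions for the quadratic program defining $\boldsymbol{\hat w}$ at the candidate $\boldsymbol{w}_{\rm LS}$, using the hypothesis to provide non-negative dual multipliers. Since the primal objective $\|\boldsymbol{w}\|^2$ is strictly convex and the constraints $\langle \boldsymbol{w},y_{\n,i}\boldsymbol{x}_i\rangle \geq 1$ are affine, KKT is both necessary and sufficient for (unique) optimality, so matching KKT at $\boldsymbol{w}_{\rm LS}$ will force $\boldsymbol{\hat w}=\boldsymbol{w}_{\rm LS}$.

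\medskip

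\emph{Step 1 (primal feasibility and tightness).} I would start by observing that $X\boldsymbol{w}_{\rm LS} = XX^\top(XX^\top)^{-1}\boldsymbol{y}_{\n} = \boldsymbol{y}_{\n}$, so $\langle \boldsymbol{w}_{\rm LS}, y_{\n,i}\boldsymbol{x}_i\rangle = y_{\n,i}(X\boldsymbol{w}_{\rm LS})_i = y_{\n,i}^2 = 1$ for every $i$. Hence $\boldsymbol{w}_{\rm LS}$ is feasible and every constraint is active.

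\emph{Step 2 (construct the dual certificate).} The KKT system attaches a multiplier $\lambda_i\geq 0$ to each constraint and requires the stationarity condition $\boldsymbol{w} = \sum_i \lambda_i y_{\n,i}\boldsymbol{x}_i = X^\top\Delta(\boldsymbol{y}_{\n})\boldsymbol{\lambda}$. I would set
$$\boldsymbol{\lambda} \;:=\; \Delta(\boldsymbol{y}_{\n})(XX^\top)^{-1}\boldsymbol{y}_{\n},$$
which is non-negative entrywise by assumption, so dual feasibility holds. Using $\Delta(\boldsymbol{y}_{\n})^2 = I_n$ (since $y_{\n,i}\in\{-1,1\}$), stationarity follows:
$$X^\top\Delta(\boldsymbol{y}_{\n})\boldsymbol{\lambda} = X^\top\Delta(\boldsymbol{y}_{\n})^2(XX^\top)^{-1}\boldsymbol{y}_{\n} = X^\top(XX^\top)^{-1}\boldsymbol{y}_{\n} = \boldsymbol{w}_{\rm LS}.$$
Complementary slackness $\lambda_i(\langle \boldsymbol{w}_{\rm LS},y_{\n,i}\boldsymbol{x}_i\rangle - 1)=0$ is automatic from Step~1.

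\emph{Step 3 (conclude via convexity).} The program \eqref{eq:mmc} is a convex QP with strictly convex objective and non-empty affine-inequality-constrained feasible set (non-emptiness follows from Step~1), so its minimizer is unique and the KKT conditions are sufficient for optimality. Having exhibited a pair $(\boldsymbol{w}_{\rm LS},\boldsymbol{\lambda})$ satisfying all KKT conditions, we conclude $\boldsymbol{\hat w} = \boldsymbol{w}_{\rm LS}$. I expect no real obstacle here; the only subtlety is remembering that the hypothesis $\Delta(\boldsymbol{y}_{\n})(XX^\top)^{-1}\boldsymbol{y}_{\n}\geq 0$ is precisely what provides the dual feasibility, with the sign-matching between the $y_{\n,i}$'s in $\Delta(\boldsymbol{y}_{\n})$ and those implicit in the constraints making the two sides of the stationarity equation align.
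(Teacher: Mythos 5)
Your proof is correct. The computations check out: $X\boldsymbol{w}_{\rm LS}=\boldsymbol{y}_{\n}$ gives primal feasibility with every constraint tight, the multiplier vector $\boldsymbol{\lambda}=\Delta(\boldsymbol{y}_{\n})(XX^\top)^{-1}\boldsymbol{y}_{\n}$ is non-negative by hypothesis, and $X^\top\Delta(\boldsymbol{y}_{\n})\boldsymbol{\lambda}=X^\top(XX^\top)^{-1}\boldsymbol{y}_{\n}=\boldsymbol{w}_{\rm LS}$ verifies stationarity (up to the harmless factor of $2$ from differentiating $\|\boldsymbol{w}\|^2$, which rescales the multipliers without affecting their sign). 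Sufficiency of KKT for a convex QP with affine constraints, plus strict convexity of the objective, then forces $\boldsymbol{\hat w}=\boldsymbol{w}_{\rm LS}$.

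Your route differs from the paper's in a worthwhile way. The paper does not write a KKT certificate explicitly: it invokes Lemma~1 of \cite{hsu2021proliferation} to conclude from the positivity hypothesis that every sample is a support vector, and then argues that the inequality-constrained problem \eqref{eq:mmc} is equivalent to its equality-constrained version, whose minimum-norm solution is $\boldsymbol{w}_{\rm LS}$. The underlying mathematics is the same — the cited lemma is itself a duality statement — but your argument is self-contained and makes transparent exactly where the hypothesis enters (as dual feasibility), whereas the paper's version is shorter at the cost of an external reference and an implicit equivalence-of-programs step. Either is acceptable; yours would be the preferable exposition if the paper wanted to avoid the citation.
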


\begin{proof}
By Lemma~1, \cite{hsu2021proliferation}, the assumption implies that all $(\bs x_i, y_i), i = 1, \ldots, n$ are support vectors. Under this condition, the optimization problem defining $\bs{\hat w}$ is equivalent to
\begin{equation*}
\boldsymbol{\hat w} \;=\; \argmin\|\boldsymbol{w}\|^2, \qquad \text{ subject to $\langle \boldsymbol{w}, y_{\n, i}\boldsymbol{x}_i \rangle = 1$\;\; for all $i= 1, 2, \dots, n$,}    
\end{equation*}
and is also equivalent to the optimization problem defining $w_{\rm LS}$. Thus, we have $\bs{\hat w} = \bs{w}_{\rm LS}$.
\end{proof}

This Lemma~will play a key role in our proofs since it will allow us to verify that the maximum margin classifier equals the least square estimator: $\hat{\bs w} = \boldsymbol{w}_{\rm LS}$. Our analysis will then proceed by studying $\boldsymbol{w}_{\rm LS}$ through a careful analysis of $(XX^\top)^{-1}$.

\begin{asm}
In this section, we always assume the event $E_1(\eps)$ in \eqref{eq:E1} holds for some $\eps \in [0, \tfrac12]$. It is easy to see that this implies invertibility of $ZZ^\top$ {as $ZZ^\top$ is invertible if and only if $\check Z \check Z^\top$ is and on $E_1(\eps)$ we have $\check Z \check Z^\top = I_n + R$ with $\|R\| \leq \eps$.
} 
\end{asm}

\subsection{Bounding various quantities related to \texorpdfstring{$(XX^\top)^{-1}$}{the gram matrix} on events \texorpdfstring{$E_1,\dots,E_5$}{E1,...,E5}}

In Section~\ref{sec:sketch} we introduced some useful notation to provide in Lemma~\ref{quad-decomp} the formula for the matrix $(XX^\top)^{-1}$. We start by recalling this notation, introducing additional quantities that will be useful later, and proving this Lemma~formally. Let $A=Z Z^\top$ and, 
\begin{equation}\label{eq:notation}
    \begin{aligned}
    & \boldsymbol{\nu}=Z\boldsymbol{\mu},\quad s = \boldsymbol{y}^\top A^{-1} \boldsymbol{y},\quad t=\boldsymbol{\nu}^\top A^{-1}\boldsymbol{\nu}, \quad  h = \boldsymbol{y}^\top A^{-1} \boldsymbol{\nu},  \\
    & s_{\n} = \boldsymbol{y}_{\n}^\top A^{-1} \bs y, \quad s_{\n\n} = \boldsymbol{y}_{\n}^\top A^{-1} \boldsymbol{y}_{\n}, \quad h_{\n} = \boldsymbol{y}_{\n}^\top A^{-1} \boldsymbol{\nu}
\end{aligned}\end{equation}
{provided $A$ is invertible.} 
In the noiseless case $\boldsymbol{y_\n} = \boldsymbol{y}$ and so $s_{\n} = s_{\n\n} = s$ and $h_\n = h$.

\begin{lemm}\label{graminverse} 
Assume that event $E_1(\eps)$ holds for $\eps \in [0, \tfrac12]$. If $d := s(\|\boldsymbol{\mu}\|^2-t)+(1+h)^2\neq 0$, then $X X^\top$ is invertible and
\[( X X^\top)^{-1} = A^{-1} - d^{-1}A^{-1}\left[(1+h)(\bs y \boldsymbol{\nu}^\top +\boldsymbol{\nu} \boldsymbol{y}^\top)-s\boldsymbol{\nu}\boldsymbol{\nu}^\top+(\|\boldsymbol{\mu}\|^2-t)\bs y\boldsymbol{y}^\top \right]A^{-1}. \]
\end{lemm}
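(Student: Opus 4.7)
The plan is to view $XX^\top$ as a low-rank perturbation of $A = ZZ^\top$ and apply the Woodbury matrix identity. Since $X = \bs y\bs\mu^\top + Z$, expanding the product gives
\[
XX^\top \;=\; A + \bs y\bs\nu^\top + \bs\nu\bs y^\top + \|\bs\mu\|^2\bs y\bs y^\top \;=\; A + UCU^\top,
\]
where $U := [\bs y,\bs\nu] \in \R^{n\times 2}$ and $C := \begin{pmatrix}\|\bs\mu\|^2 & 1 \\ 1 & 0\end{pmatrix}$, as can be checked by expanding $UCU^\top$ term by term. Note $\det(C) = -1$, so $C$ is invertible with $C^{-1} = \begin{pmatrix}0 & 1 \\ 1 & -\|\bs\mu\|^2\end{pmatrix}$.

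The event $E_1(\eps)$ with $\eps \in [0,\tfrac12]$ ensures that $A$ is invertible, so Woodbury's formula is applicable provided the $2\times 2$ matrix $C^{-1} + U^\top A^{-1} U$ is invertible. A direct calculation using \eqref{eq:notation} gives $U^\top A^{-1} U = \begin{pmatrix}s & h \\ h & t\end{pmatrix}$, so
\[
C^{-1} + U^\top A^{-1} U \;=\; \begin{pmatrix}s & 1+h \\ 1+h & t-\|\bs\mu\|^2\end{pmatrix},
\]
whose determinant equals $s(t - \|\bs\mu\|^2) - (1+h)^2 = -d$. The assumption $d \neq 0$ therefore guarantees that this $2\times 2$ matrix is invertible; inverting explicitly yields
\[
\bigl(C^{-1} + U^\top A^{-1} U\bigr)^{-1} \;=\; \frac{1}{d}\begin{pmatrix}\|\bs\mu\|^2 - t & 1+h \\ 1+h & -s\end{pmatrix}.
\]

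Woodbury's identity then gives $(XX^\top)^{-1} = A^{-1} - A^{-1}U\bigl(C^{-1}+U^\top A^{-1}U\bigr)^{-1}U^\top A^{-1}$, and substituting the $2\times 2$ inverse above and expanding $U(\cdot)U^\top$ produces precisely the four outer-product terms $(1+h)(\bs y\bs\nu^\top + \bs\nu\bs y^\top)$, $-s\bs\nu\bs\nu^\top$, $(\|\bs\mu\|^2 - t)\bs y\bs y^\top$ with the stated coefficient $d^{-1}$. Invertibility of $XX^\top$ itself follows either from exhibiting the explicit right inverse just constructed, or from the Sylvester determinant identity, which yields $\det(XX^\top) = \det(A)\det(C)\det\bigl(C^{-1}+U^\top A^{-1} U\bigr) = -d\det(A) \neq 0$. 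There is no genuine obstacle here beyond careful bookkeeping; the only real choice is whether to perform one $2\times 2$ Woodbury step or two successive Sherman--Morrison updates, and the former yields a cleaner expression.
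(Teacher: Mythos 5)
Your proof is correct and follows essentially the same route as the paper: both express $XX^\top$ as a rank-two Woodbury update of $A=ZZ^\top$ and invert the resulting $2\times 2$ matrix, the only difference being your choice of the symmetric factorization $UCU^\top$ with $U=[\bs y,\bs\nu]$ versus the paper's $\bs u\bs v^\top+\bs v\bs u^\top$ with $\bs v=\bs\nu+\tfrac{\|\bs\mu\|^2}{2}\bs y$ (your parametrization in fact makes the determinant come out as $-d$ directly, skipping the paper's final check that $\bar d=d$). One trivial slip: $\det(XX^\top)=\det(A)\det(C)\det(C^{-1}+U^\top A^{-1}U)=(-1)(-d)\det(A)=d\det(A)$, not $-d\det(A)$, which of course does not affect the nonvanishing conclusion.
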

\begin{proof} Recall that $X=Z+\boldsymbol{y}\boldsymbol{\mu}^\top $. It follows that
$$   X X^\top\;=\;Z Z^\top + \boldsymbol{y} ( Z\boldsymbol{\mu})^\top + (Z\boldsymbol{\mu}) \boldsymbol{y}^\top +\|\boldsymbol{\mu}\|^2 \boldsymbol{y} \boldsymbol{y}^\top
$$
and so $XX^\top$ is obtained from $ Z Z^\top$ by perturbing it with a low rank matrix. Denoting
$$
\boldsymbol{u}\;:=\;\boldsymbol{y},\qquad \boldsymbol{v}\;:=\; Z\boldsymbol{\mu}+\tfrac{\|\boldsymbol{\mu}\|^2}{2}\boldsymbol{y}
$$
we get 
\begin{equation}\label{eq:XXtop}
X X^\top\;=\; Z Z^\top+\boldsymbol{u} \boldsymbol{v}^\top +\boldsymbol{v} \boldsymbol{u}^\top\;=\; Z Z^\top+\begin{bmatrix}
    \boldsymbol{u}& \boldsymbol{v}
\end{bmatrix}\begin{bmatrix}
    \boldsymbol{v}^\top\\ 
    \boldsymbol{u}^\top
\end{bmatrix}.\end{equation}
Recall that on the event $E_1(\eps)$ with $\eps \in [0, \tfrac12]$ the matrix $A = ZZ^\top$ is invertible. Denote $s_{uu}=\boldsymbol{u}^\top A^{-1} \boldsymbol{u}$, $s_{uv}=\boldsymbol{u}^\top A^{-1} \boldsymbol{v}$, and $s_{vv}=\boldsymbol{v}^\top A^{-1} \boldsymbol{v}$. Using the Woodbury matrix identity on \eqref{eq:XXtop} we get
\begin{eqnarray*}
( X X^\top)^{-1}&=&A^{-1}-A^{-1}\begin{bmatrix}
\boldsymbol{u} & \boldsymbol{v}    
\end{bmatrix}\left(I_2+\begin{bmatrix}
\boldsymbol{v}^\top \\ \boldsymbol{u}^\top    
\end{bmatrix}A^{-1}\begin{bmatrix}
\boldsymbol{u} & \boldsymbol{v}    
\end{bmatrix}\right)^{-1}\begin{bmatrix}
\boldsymbol{v}^\top \\ \boldsymbol{u}^\top    
\end{bmatrix}A^{-1}\\
&=& A^{-1}-A^{-1}\begin{bmatrix}
\boldsymbol{u} & \boldsymbol{v}    
\end{bmatrix}\begin{bmatrix}
1+s_{uv} & s_{vv} \\
s_{uu} & 1+s_{uv}    
\end{bmatrix}^{-1}\begin{bmatrix}
\boldsymbol{v}^\top \\ \boldsymbol{u}^\top    
\end{bmatrix}A^{-1}\\
&=& A^{-1}-\bar d^{-1}A^{-1}\begin{bmatrix}
\boldsymbol{u} & \boldsymbol{v}    
\end{bmatrix}\begin{bmatrix}
1+s_{uv} & -s_{vv} \\
-s_{uu} & 1+s_{uv}    
\end{bmatrix}\begin{bmatrix}
\boldsymbol{v}^\top \\ \boldsymbol{u}^\top    
\end{bmatrix}A^{-1},
\end{eqnarray*}
where $\bar d=(1+s_{uv})^2-s_{uu}s_{vv}$. Thus
\[( X X^\top)^{-1} = A^{-1} - \bar d^{-1}A^{-1}\left[(1+s_{uv})(\bs u \boldsymbol{v}^\top +\boldsymbol{v} \boldsymbol{u}^\top)-s_{uu}\boldsymbol{v} \boldsymbol{v}^\top-s_{vv}\boldsymbol{u} \boldsymbol{u}^\top \right]A^{-1}. \]
Using notation in \eqref{eq:notation}, we get $s_{uu}=s$, $s_{uv}=\tfrac{\|\boldsymbol{\mu}\|^2}{2}s+h$, and $s_{vv}=t+\frac{\|\boldsymbol{\mu}\|^4}{4}s+\|\boldsymbol{\mu}\|^2h$. After a bit of algebra we get $\bar d =d$ and the claimed equality.
\end{proof}

Note that Lemma~\ref{quad-decomp} follows directly from Lemma~\ref{graminverse}. Similar results  were also established in the existing literature (Lemma~SM8.1 in Supplementary Materials, \cite{wang2022binary} and Lemma~A.5 in \cite{NEURIPS2021_46e0eae7}).

{Provided we have $\|\boldsymbol{z}_i\|\neq 0$ for all $i$,} invertibility of $XX^\top$ and $A=ZZ^\top$ is equivalent to invertibility of  $\check X\check X^\top$ and $\check A=\check Z\check Z^\top$ respectively.   This follows since  $(ZZ^\top)^{-1}  = \Delta(z)^{-1}  (\check Z\check Z^\top)^{-1} \Delta(z)^{-1}$ and $(XX^\top)^{-1}  = \Delta(z)^{-1} (\check X \check X^\top)^{-1} \Delta(z)^{-1}$
We also use the following basic observation. For every $\bs a, \bs b$
\begin{align}
\label{eq:checkinv}\boldsymbol{a}^\top A^{-1} \boldsymbol{b} & \;=\; \check{\boldsymbol{a}}^\top \check A^{-1} \check{\boldsymbol{b}}%, \\
\end{align}

\begin{lemm}\label{eq:quad}%\label{invbounds}
Suppose that the event $E_1(\varepsilon)$, as given in \eqref{eq:E1}, holds for some $\varepsilon\in [0,\tfrac12]$. Then, for any $\boldsymbol{a},\boldsymbol{b}\in \R^p$
   \begin{equation}\label{eq:quadaa}
        \frac{\|\boldsymbol{a}\|^2}{1+\varepsilon} \;\leq\; \boldsymbol{a}^\top \check A^{-1} \boldsymbol{a} \;\leq\; \frac{\|\boldsymbol{a}\|^2}{1-\varepsilon}.
    \end{equation}
    and
    \begin{equation*}
\frac{\langle \boldsymbol{a}, \boldsymbol{b} \rangle - \varepsilon\|\boldsymbol{a}\|\|\boldsymbol{b}\|}{1 - \varepsilon^2} \;\leq\; \boldsymbol{a}^\top \check A^{-1} \boldsymbol{b}\;\leq \;\frac{\langle \boldsymbol{a}, \boldsymbol{b} \rangle + \varepsilon\|\boldsymbol{a}\|\|\boldsymbol{b}\|}{1 - \varepsilon^2} \;\leq\; \frac{\|\boldsymbol{a}\|\|\boldsymbol{b}\|}{1 - \varepsilon}.
\end{equation*}
\end{lemm}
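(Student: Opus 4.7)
The starting point is the observation that on $E_1(\varepsilon)$ the symmetric matrix $\check A=\check Z\check Z^\top$ satisfies $\|\check A-I_n\|\leq \varepsilon$, so its eigenvalues lie in $[1-\varepsilon,1+\varepsilon]$ and those of $\check A^{-1}$ lie in $[(1+\varepsilon)^{-1},(1-\varepsilon)^{-1}]$. The first display then follows at once from the Rayleigh quotient characterization of the extreme eigenvalues of the symmetric positive-definite matrix $\check A^{-1}$.

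For the second display the plan is to combine polarization with a rescaling trick. Using the identity
\[
4\,\boldsymbol{a}^\top \check A^{-1}\boldsymbol{b} \;=\; (\boldsymbol{a}+\boldsymbol{b})^\top \check A^{-1}(\boldsymbol{a}+\boldsymbol{b}) - (\boldsymbol{a}-\boldsymbol{b})^\top \check A^{-1}(\boldsymbol{a}-\boldsymbol{b}),
\]
I will apply the upper bound from \eqref{eq:quadaa} to the first quadratic form and the lower bound to the second. Combining the resulting fractions over a common denominator $1-\varepsilon^2$ and invoking the parallelogram identity $\|\boldsymbol{a}+\boldsymbol{b}\|^2+\|\boldsymbol{a}-\boldsymbol{b}\|^2 = 2(\|\boldsymbol{a}\|^2+\|\boldsymbol{b}\|^2)$ and $\|\boldsymbol{a}+\boldsymbol{b}\|^2-\|\boldsymbol{a}-\boldsymbol{b}\|^2 = 4\langle \boldsymbol{a},\boldsymbol{b}\rangle$ yields the intermediate bound
\[
\boldsymbol{a}^\top \check A^{-1}\boldsymbol{b} \;\leq\; \frac{\langle \boldsymbol{a},\boldsymbol{b}\rangle + \tfrac{\varepsilon}{2}\bigl(\|\boldsymbol{a}\|^2+\|\boldsymbol{b}\|^2\bigr)}{1-\varepsilon^2}.
\]

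This is weaker than the claim by AM--GM, but the gap is closed by homogeneity. Since both sides of the inequality are invariant under the rescaling $(\boldsymbol{a},\boldsymbol{b})\mapsto(c\boldsymbol{a},\boldsymbol{b}/c)$ for any $c>0$, the bound can be applied to $(c\boldsymbol{a},\boldsymbol{b}/c)$. The right-hand side then becomes $[\langle \boldsymbol{a},\boldsymbol{b}\rangle + \tfrac{\varepsilon}{2}(c^2\|\boldsymbol{a}\|^2 + c^{-2}\|\boldsymbol{b}\|^2)]/(1-\varepsilon^2)$, and minimizing in $c>0$ via AM--GM (with equality at $c^2 = \|\boldsymbol{b}\|/\|\boldsymbol{a}\|$, assuming both nonzero; the degenerate cases are trivial) produces the upper bound of the second display. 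The matching lower bound follows by applying the upper bound with $\boldsymbol{b}$ replaced by $-\boldsymbol{b}$.

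The final inequality $\frac{\langle \boldsymbol{a},\boldsymbol{b}\rangle + \varepsilon\|\boldsymbol{a}\|\|\boldsymbol{b}\|}{1-\varepsilon^2} \leq \frac{\|\boldsymbol{a}\|\|\boldsymbol{b}\|}{1-\varepsilon}$ is then immediate from Cauchy--Schwarz together with the factorization $1-\varepsilon^2=(1-\varepsilon)(1+\varepsilon)$. There is no real obstacle here; the only non-routine point is recognizing that the naive polarization bound must be sharpened by the homogeneity/rescaling trick to replace $\tfrac12(\|\boldsymbol{a}\|^2+\|\boldsymbol{b}\|^2)$ by $\|\boldsymbol{a}\|\|\boldsymbol{b}\|$ in the numerator.
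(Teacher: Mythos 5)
Your proposal is correct and follows essentially the same route as the paper: both prove \eqref{eq:quadaa} from the eigenvalue bounds $\|\check A^{-1}\|\leq (1-\varepsilon)^{-1}$, $\lambda_{\min}(\check A^{-1})\geq (1+\varepsilon)^{-1}$, and both derive the cross-term bound from the polarization identity combined with \eqref{eq:quadaa}. The only (cosmetic) difference is that the paper exploits homogeneity by normalizing $\boldsymbol{a},\boldsymbol{b}$ to unit vectors \emph{before} polarizing, whereas you polarize first and then optimize over the rescaling $(c\boldsymbol{a},\boldsymbol{b}/c)$ — the paper's normalization is exactly your optimal choice $c^2=\|\boldsymbol{b}\|/\|\boldsymbol{a}\|$, so the two arguments coincide.
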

\begin{proof}We first prove the bound on $\boldsymbol{a}^\top \check A^{-1} \boldsymbol{a}$ in \eqref{eq:quadaa}. Let $R = I_n - \check A $. Then, $\|R\| \leq \varepsilon \leq \tfrac12$ by the definition of $E_1(\varepsilon)$. Thus, $\check A = I - R$ is invertible and
\[
\|\check A^{-1}\| \leq 1 + \varepsilon + \varepsilon^2 + \dots \leq \frac{1}{1 - \varepsilon}.
\]
Now, \eqref{eq:quadaa} follows from $\|\check A\|\leq 1 + \varepsilon$ {(equiv. $\lambda_{\min}(\check A^{-1})\geq 1/(1+\varepsilon)$)} and $\|\check A^{-1}\|\leq \dfrac{1}{1 - \varepsilon}$ {(equiv. $\lambda_{\max}(\check A^{-1})\leq 1/(1-\varepsilon)$)}. To prove the bounds on $\boldsymbol{a}^\top \check A^{-1} \boldsymbol{b}$, by the polarization equality,
{\small \begin{align*}
\boldsymbol{a}^\top \check A^{-1} \boldsymbol{b} 
    & = \|\boldsymbol{a}\|\|\boldsymbol{b}\|\left(\frac{\boldsymbol{a}}{\|\boldsymbol{a}\|}\right)^\top \check A^{-1} \left(\frac{\boldsymbol{b}}{\|\boldsymbol{b}\|}\right) \\
    & = \frac{\|\boldsymbol{a}\|\|\boldsymbol{b}\|}{4}\left\{\left(\frac{\boldsymbol{a}}{\|\boldsymbol{a}\|} + \frac{\boldsymbol{b}}{\|\boldsymbol{b}\|}\right)^\top \check A^{-1} \left(\frac{\boldsymbol{a}}{\|\boldsymbol{a}\|} + \frac{\boldsymbol{b}}{\|\boldsymbol{b}\|}\right) \right. \\
    & \qquad \qquad \left.  - \left(\frac{\boldsymbol{a}}{\|\boldsymbol{a}\|} - \frac{\boldsymbol{b}}{\|\boldsymbol{b}\|}\right)^\top \check A^{-1}\left(\frac{\boldsymbol{a}}{\|\boldsymbol{a}\|} - \frac{\boldsymbol{b}}{\|\boldsymbol{b}\|}\right)\right\}.
\end{align*}}
Using \eqref{eq:quadaa}, we get
\begin{align*}
\boldsymbol{a}^\top \check A^{-1} \boldsymbol{b} 
    & \leq \frac{\|\boldsymbol{a}\|\|\boldsymbol{b}\|}{4}\left(\frac{\left\|\frac{\boldsymbol{a}}{\|\boldsymbol{a}\|} + \frac{\boldsymbol{b}}{\|\boldsymbol{b}\|}\right \|^2}{1 - \varepsilon} - \frac{\left\|\frac{\boldsymbol{a}}{\|\boldsymbol{a}\|} - \frac{\boldsymbol{b}}{\|\boldsymbol{b}\|}\right \|^2}{1 + \varepsilon}\right) \\
    & = \frac{\langle \boldsymbol{a}, \boldsymbol{b} \rangle + \varepsilon\|\boldsymbol{a}\|\|\boldsymbol{b}\|}{1 - \varepsilon^2} \\
    & \leq \frac{\|\boldsymbol{a}\|\|\boldsymbol{b}\|}{1 - \varepsilon} \quad \text{(by Cauchy-Schwartz inequality).}
\end{align*}

Similarly, 
$$
\boldsymbol{a}^\top \check A^{-1} \boldsymbol{b} \; \geq \;\frac{\|\boldsymbol{a}\|\|\boldsymbol{b}\|}{4}\left(\frac{\left\|\frac{\boldsymbol{a}}{\|\boldsymbol{a}\|} + \frac{\boldsymbol{b}}{\|\boldsymbol{b}\|}\right \|^2}{1 + \varepsilon} - \frac{\left\|\frac{\boldsymbol{a}}{\|\boldsymbol{a}\|} - \frac{\boldsymbol{b}}{\|\boldsymbol{b}\|}\right \|^2}{1 - \varepsilon}\right)  \;=\; \frac{\langle \boldsymbol{a}, \boldsymbol{b} \rangle - \varepsilon\|\boldsymbol{a}\|\|\boldsymbol{b}\|}{1 - \varepsilon^2},
$$
which concludes the proof.
\end{proof}

Recall that $s = \boldsymbol{y}^\top A^{-1}\boldsymbol{y}$, $s_{\n} = \boldsymbol{y}_{\n}^\top A^{-1} \boldsymbol{y}$, and $s_{\n\n} = \boldsymbol{y}_{\n}^\top A^{-1}\boldsymbol{y}_{\n}$. We have the following bounds on these quantities.

\begin{lemm}\label{quad-bounds-s}
Suppose events $E_1(\varepsilon)$ and $E_4(\beta,\rho)$ hold with $\eps \in [0, \tfrac12]$, $\beta, \rho > 0$. Then:
\begin{enumerate}[label=(\roman*), ref=(\roman*)]
\item $\displaystyle \frac{1-\beta}{1+\varepsilon}n\rho \;\leq \;\frac{\|\check{\boldsymbol y}\|^2}{1+\varepsilon} \; \leq\; s \;\leq \;\frac{\|\check{\boldsymbol y}\|^2}{1-\varepsilon}\; \leq \;\frac{1+\beta}{1-\varepsilon}n\rho$, 
\item $\displaystyle \frac{1-\beta}{1+\varepsilon}n\rho \;\leq \;\frac{\|\check{\boldsymbol y}\|^2}{1+\varepsilon} \;\leq \;s_{\n\n} \;\leq \;\frac{\|\check{\boldsymbol y}\|^2}{1-\varepsilon} \;\leq\; \frac{1+\beta}{1-\varepsilon}n\rho$. 
\end{enumerate}
If, in addition,  $E_5(\gamma,\rho)$ holds with $\gamma \in [0, \tfrac12)$ then:
\begin{enumerate}[label=(\roman*), ref=(\roman*)]\setcounter{enumi}{2} 
\item  $\displaystyle \frac{(1 - 2\eta) - (\gamma + \varepsilon(1 + \beta))}{1 - \varepsilon^2}n\rho \;\leq\; s_{\n} \; \leq \; \frac{(1 - 2\eta) + (\gamma + \varepsilon(1 + \beta))}{1 - \varepsilon^2}n\rho$.
\end{enumerate}
\end{lemm}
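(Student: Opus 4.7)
The proof plan is a direct application of the identity \eqref{eq:checkinv} together with Lemma \ref{eq:quad} and the concentration estimates built into events $E_4$ and $E_5$. By \eqref{eq:checkinv} one has
\[
s = \check{\boldsymbol y}^\top \check A^{-1}\check{\boldsymbol y},\qquad s_{\n\n} = \check{\boldsymbol y}_\n^\top \check A^{-1}\check{\boldsymbol y}_\n,\qquad s_\n = \check{\boldsymbol y}_\n^\top \check A^{-1}\check{\boldsymbol y},
\]
so the problem is reduced to controlling quadratic forms in $\check A^{-1}$ on $E_1(\varepsilon)$. A key preliminary remark is that the coordinates of $\check{\boldsymbol y}$ and $\check{\boldsymbol y}_\n$ differ only in sign, so $\|\check{\boldsymbol y}\|^2 = \|\check{\boldsymbol y}_\n\|^2$; this is what allows the same norm bound from $E_4$ to govern both $s$ and $s_{\n\n}$.

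For parts (i) and (ii) I would apply the diagonal bound \eqref{eq:quadaa} of Lemma \ref{eq:quad} with $\boldsymbol a = \check{\boldsymbol y}$ (resp.\ $\check{\boldsymbol y}_\n$), which gives
\[
\frac{\|\check{\boldsymbol y}\|^2}{1+\varepsilon}\;\leq\; s,\,s_{\n\n}\;\leq\;\frac{\|\check{\boldsymbol y}\|^2}{1-\varepsilon}.
\]
Then $E_4(\beta,\rho)$ yields $(1-\beta)n\rho \leq \|\check{\boldsymbol y}\|^2 \leq (1+\beta)n\rho$, and combining these two chains of inequalities delivers the stated two-sided bound for $s$ and $s_{\n\n}$.

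For part (iii), I would invoke the off-diagonal estimate from Lemma \ref{eq:quad} with $\boldsymbol a = \check{\boldsymbol y}_\n$ and $\boldsymbol b = \check{\boldsymbol y}$, namely
\[
\frac{\langle \check{\boldsymbol y}_\n,\check{\boldsymbol y}\rangle - \varepsilon\|\check{\boldsymbol y}_\n\|\|\check{\boldsymbol y}\|}{1-\varepsilon^2}\;\leq\; s_\n\;\leq\;\frac{\langle \check{\boldsymbol y}_\n,\check{\boldsymbol y}\rangle + \varepsilon\|\check{\boldsymbol y}_\n\|\|\check{\boldsymbol y}\|}{1-\varepsilon^2}.
\]
Event $E_5(\gamma,\rho)$ yields $|\langle \check{\boldsymbol y}_\n,\check{\boldsymbol y}\rangle - (1-2\eta)n\rho|\leq \gamma n\rho$, while the equality $\|\check{\boldsymbol y}_\n\|=\|\check{\boldsymbol y}\|$ and the upper bound $\|\check{\boldsymbol y}\|^2\leq (1+\beta)n\rho$ from $E_4$ give $\|\check{\boldsymbol y}_\n\|\|\check{\boldsymbol y}\|\leq (1+\beta)n\rho$. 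Substituting these into the displayed inequalities yields the desired bound with numerator $(1-2\eta)\pm(\gamma + \varepsilon(1+\beta))$.

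I expect no real obstacle here: once the invariance \eqref{eq:checkinv} has been applied, everything reduces to plugging the $E_4,E_5$ estimates into Lemma \ref{eq:quad}. The one bookkeeping item to keep in mind is that the cross-term bound in (iii) uses the $E_4$ norm estimate rather than a separate assumption, which is why the parameter $\beta$ (not only $\gamma$ and $\varepsilon$) appears in the final expression.
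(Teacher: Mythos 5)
Your proposal matches the paper's proof essentially step for step: reduce to the rescaled quadratic forms via \eqref{eq:checkinv}, note $\|\check{\boldsymbol y}_\n\|=\|\check{\boldsymbol y}\|$, apply the diagonal bound of Lemma~\ref{eq:quad} with $E_4$ for (i)--(ii), and the off-diagonal bound with $E_5$ plus the $E_4$ norm estimate for (iii). The argument is correct and complete.
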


\begin{proof}
To bound $s=\bs y^\top A^{-1}\bs y$ in (i) we note that it is equal to $\check{\bs y}^\top \check A^{-1}\check{\bs y}$ by \eqref{eq:checkinv}. By Lemma~\ref{eq:quad}, this is bounded below by $\|\check{\bs y}\|^2/(1+\varepsilon)$ and above by $\|\check{\bs y}\|^2/(1-\varepsilon)$. The bounds $(1-\beta)n\rho/(1+\varepsilon)\leq \|\check{\bs y}\|^2/(1+\varepsilon)$ and $ \|\check{\bs y}\|^2/(1-\varepsilon)\leq (1+\beta)n\rho/(1+\varepsilon)$ hold because of the event $E_4$. The bound on $s_{\n\n}=\bs y_\n^\top A^{-1}\bs y_\n$ in (ii) is obtained in exactly the same way by noting that  $\|\check{\bs y}_\n\|=\|\check{\bs y}\|$. To bound $s_\n=\bs y_\n^\top A^{-1}\bs y$ in (iii) we note that it is equal to $\check{\bs y}_\n^\top \check A^{-1}\check{\bs y}$ by \eqref{eq:checkinv}. By Lemma~\ref{eq:quad}, 
$$
\frac{\<\check{\bs y}_\n,\check{\bs y}\>-\varepsilon\|\check{\bs y}\|^2}{1-\varepsilon^2}\;\leq\; \check{\bs y}_\n^\top \check A^{-1}\check{\bs y}\;\leq\;\frac{\<\check{\bs y}_\n,\check{\bs y}\>+\varepsilon\|\check{\bs y}\|^2}{1-\varepsilon^2},
$$
where we again used that $\|\check{\bs y}_\n\|=\|\check{\bs y}\|$. The conclusion follows under $E_4$ and $E_5$.
\end{proof}

Recall from \eqref{eq:notation} that $t = \boldsymbol{\nu}^\top A^{-1} \boldsymbol{\nu}$, $h = \boldsymbol{y}^\top A^{-1} \boldsymbol{\nu}$, and $h_{\n} = \boldsymbol{y}_{\n}^\top A^{-1} \boldsymbol{\nu}$. In the next Lemma~we also bound these quantities.
\begin{lemm}\label{quad-bounds-ht}
Suppose events $E_1(\varepsilon)$ and $E_2(\alpha_2,\alpha_\infty)$ hold with $\eps \in [0, \tfrac12]$ and $\alpha_2\geq \alpha_\infty>0$. Then 
\begin{enumerate}[label=(\roman*), ref=(\roman*)]
    \item $\displaystyle t  \leq \min\left\{\frac{\alpha_2^2}{1 - \varepsilon}, 1 \right\}\|\boldsymbol{\mu}\|^2$. \label{eq:t}
\end{enumerate}
If, in addition, $E_4(\beta,\rho)$ holds with $\beta, \rho > 0$. then: 
\begin{enumerate}[label=(\roman*), ref=(\roman*)]\setcounter{enumi}{1} 
    \item $\displaystyle |h|  \leq \frac{\|\check{\boldsymbol y}\|\alpha_2\|\boldsymbol{\mu}\|}{1 - \varepsilon} \leq \frac{\alpha_2\|\boldsymbol{\mu}\|\sqrt{(1+\beta)n\rho}}{1 - \varepsilon}$, \label{eq:h}
    \item $\displaystyle |h_{\n}|  \leq \frac{\|\check{\boldsymbol y}\|\alpha_2\|\boldsymbol{\mu}\|}{1 - \varepsilon} \leq \frac{\alpha_2\|\boldsymbol{\mu}\|\sqrt{(1+\beta)n\rho}}{1 - \varepsilon}$, \label{eq:hn}
\end{enumerate}
\end{lemm}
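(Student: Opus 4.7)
The plan is to reduce all three bounds to Lemma~\ref{eq:quad} by invoking the rescaling invariance \eqref{eq:checkinv} and then translating events $E_2$ and $E_4$ into Euclidean-norm bounds on the rescaled vectors $\check{\bs\nu}$, $\check{\bs y}$, $\check{\bs y}_\n$. The key preliminary observation is that $\check{\bs\nu} := \Delta(z)^{-1}\bs\nu = \Delta(z)^{-1} Z \bs\mu = \check Z \bs\mu$, so event $E_2$ immediately gives $\|\check{\bs\nu}\| = \|\check Z \bs\mu\| \leq \alpha_2 \|\bs\mu\|$. Likewise, since $y_i, y_{\n,i}\in\{\pm 1\}$, we have $\|\check{\bs y}\| = \|\check{\bs y}_\n\|$, and on event $E_4$ both are bounded above by $\sqrt{(1+\beta) n\rho}$.

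For part (i), I would use \eqref{eq:checkinv} to write $t = \bs\nu^\top A^{-1} \bs\nu = \check{\bs\nu}^\top \check A^{-1} \check{\bs\nu}$ and apply the upper bound in \eqref{eq:quadaa} of Lemma~\ref{eq:quad}, yielding $t \leq \|\check{\bs\nu}\|^2/(1-\varepsilon) \leq \alpha_2^2 \|\bs\mu\|^2/(1-\varepsilon)$. For the alternative bound $t \leq \|\bs\mu\|^2$, I would observe that $Z^\top (ZZ^\top)^{-1} Z$ is the orthogonal projection onto the row space of $Z$ and therefore has spectral norm at most $1$, so that $t = \bs\mu^\top Z^\top(ZZ^\top)^{-1} Z \bs\mu \leq \|\bs\mu\|^2$. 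Taking the minimum of the two bounds gives the stated inequality.

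For parts (ii) and (iii), apply \eqref{eq:checkinv} to write $h = \check{\bs y}^\top \check A^{-1} \check{\bs\nu}$ and $h_\n = \check{\bs y}_\n^\top \check A^{-1} \check{\bs\nu}$. The second inequality in Lemma~\ref{eq:quad} (Cauchy--Schwarz applied to the bounded spectrum of $\check A^{-1}$) yields $|h| \leq \|\check{\bs y}\|\,\|\check{\bs\nu}\|/(1-\varepsilon)$ and analogously for $h_\n$. Substituting $\|\check{\bs\nu}\| \leq \alpha_2 \|\bs\mu\|$ from event $E_2$ gives the first inequality in each case, and then invoking $\|\check{\bs y}\|, \|\check{\bs y}_\n\| \leq \sqrt{(1+\beta)n\rho}$ from event $E_4$ yields the second inequality.

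There is no substantial obstacle: the entire lemma is essentially a bookkeeping exercise that applies Lemma~\ref{eq:quad} to the rescaled quadratic forms and reads off the norm bounds from the events. The only mildly non-routine step is recognizing the projection-matrix argument that produces the trivial bound $t \leq \|\bs\mu\|^2$, which is what makes the $\min$ in part (i) meaningful when $\alpha_2^2/(1-\varepsilon)$ is not small.
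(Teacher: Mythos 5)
Your proposal is correct and follows essentially the same route as the paper: the bound $t \leq \|\boldsymbol{\mu}\|^2$ via the projection $Z^\top (ZZ^\top)^{-1}Z$, and the remaining bounds via the rescaling identity \eqref{eq:checkinv} together with Lemma~\ref{eq:quad}, reading $\|\check{\boldsymbol\nu}\|\leq \alpha_2\|\boldsymbol{\mu}\|$ off $E_2$ and $\|\check{\boldsymbol y}\|=\|\check{\boldsymbol y}_\n\|\leq\sqrt{(1+\beta)n\rho}$ off $E_4$. No gaps.
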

\begin{proof}
The bound $t \leq \|\bs\mu\|^2$ follows by observing that $t=\bs\mu^\top Z^\top(ZZ^\top)^{-1}Z\bs\mu$ is the squared norm of the projection of $\boldsymbol{\mu}$ onto the column space of $Z$. The remaining bounds follow now from Lemma~\ref{eq:quad} in the same way as in the proof of Lemma~\ref{quad-bounds-s}.
\end{proof}

Denote by $\bs e_1,\ldots,\bs e_n$ the canonical unit vectors in $\R^n$.

\begin{lemm}\label{quad-bounds-yay}
Suppose event $E_1(\varepsilon)$ holds with $\eps \in [0, \tfrac12]$. Then 
\begin{enumerate}[label=(\roman*), ref=(\roman*)]
    \item $ \dfrac{1 - \varepsilon \|\boldsymbol{z}_i\| \|\check{\boldsymbol y}\|}{(1 - \varepsilon^2)\|\boldsymbol{z}_i\|^2}  \leq \boldsymbol{y}^\top A^{-1} y_i\bs e_i \leq \dfrac{1 + \varepsilon \|\boldsymbol{z}_i\| \|\check{\boldsymbol y}\|}{(1 - \varepsilon^2)\|\boldsymbol{z}_i\|^2}$\label{eq:yye1}
    \item $ \dfrac{1 - \varepsilon \|\boldsymbol{z}_i\| \|\check{\boldsymbol y}\|}{(1 - \varepsilon^2)\|\boldsymbol{z}_i\|^2} \leq \boldsymbol{y}_{\n}^\top A^{-1} y_{\n,i}\bs e_i \leq \dfrac{1 + \varepsilon \|\boldsymbol{z}_i\| \|\check{\boldsymbol y}\|}{(1 - \varepsilon^2)\|\boldsymbol{z}_i\|^2} $ \label{eq:ynyne1}
    \item $\dfrac{1 - \varepsilon \|\boldsymbol{z}_i\| \|\check{\boldsymbol y}\|}{(1 - \varepsilon^2)\|\boldsymbol{z}_i\|^2}  \leq |\boldsymbol{y}^\top A^{-1} y_{\n,i}\bs e_i|  \leq \dfrac{1 + \varepsilon \|\boldsymbol{z}_i\| \|\check{\boldsymbol y}\|}{(1 - \varepsilon^2)\|\boldsymbol{z}_i\|^2} $ \label{eq:yyne1}
\end{enumerate}
If, in addition, $E_3(M)$ and $E_4(\beta,\rho)$ hold with $M>0$, $\beta >0$, and $\rho>0$.  Then: 
\begin{enumerate}[label=(\roman*), ref=(\roman*)]\setcounter{enumi}{3} 
    \item $\dfrac{1 - \varepsilon M \sqrt{(1+\beta)n\rho}}{(1 - \varepsilon^2)\|\boldsymbol{z}_i\|^2} \leq\boldsymbol{y}^\top A^{-1} y_i\bs e_i  \leq \dfrac{1 + \varepsilon M \sqrt{(1+\beta)n\rho}}{(1 - \varepsilon^2)\|\boldsymbol{z}_i\|^2}$ \label{eq:yye2}
    \item $\dfrac{1 - \varepsilon M \sqrt{(1+\beta)n\rho}}{(1 - \varepsilon^2)\|\boldsymbol{z}_i\|^2} \leq \boldsymbol{y}_{\n}^\top A^{-1} y_{\n,i}\bs e_i  \leq \dfrac{1 + \varepsilon M \sqrt{(1+\beta)n\rho}}{(1 - \varepsilon^2)\|\boldsymbol{z}_i\|^2}$ \label{eq:ynyne2}
    \item $\dfrac{1 - \varepsilon M \sqrt{(1+\beta)n\rho}}{(1 - \varepsilon^2)\|\boldsymbol{z}_i\|^2} \leq |\boldsymbol{y}^\top A^{-1} y_{\n,i}\bs e_i|  \leq  \dfrac{1 + \varepsilon M \sqrt{(1+\beta)n\rho}}{(1 - \varepsilon^2)\|\boldsymbol{z}_i\|^2}$ \label{eq:yyne2}
\end{enumerate}
\end{lemm}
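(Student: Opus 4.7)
The plan is to derive all six bounds as routine applications of Lemma~\ref{eq:quad} combined with the rescaling identity \eqref{eq:checkinv}. The key observation is that every bilinear form $\boldsymbol{a}^\top A^{-1} \boldsymbol{b}$ can be rewritten as $\check{\boldsymbol{a}}^\top \check A^{-1} \check{\boldsymbol{b}}$, and for the particular vectors appearing in the statement the relevant inner products and norms collapse to very simple expressions.

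For parts (i)--(iii), I would first note that $\widecheck{y_i \bs e_i} = \Delta(z)^{-1} y_i \bs e_i = \tfrac{y_i}{\|\bs z_i\|} \bs e_i$, so $\|\widecheck{y_i \bs e_i}\| = 1/\|\bs z_i\|$, and analogously $\widecheck{y_{\n,i}\bs e_i} = \tfrac{y_{\n,i}}{\|\bs z_i\|}\bs e_i$. The relevant inner products then simplify using $y_i^2 = y_{\n,i}^2 = 1$:
\[
\langle \check{\bs y}, \widecheck{y_i \bs e_i}\rangle = \tfrac{y_i \check{y}_i}{\|\bs z_i\|} = \tfrac{1}{\|\bs z_i\|^2}, \qquad \langle \check{\bs y}_\n, \widecheck{y_{\n,i}\bs e_i}\rangle = \tfrac{1}{\|\bs z_i\|^2}, \qquad \langle \check{\bs y}, \widecheck{y_{\n,i}\bs e_i}\rangle = \tfrac{y_i y_{\n,i}}{\|\bs z_i\|^2}.
\]
Applying Lemma~\ref{eq:quad} with $(\boldsymbol{a},\boldsymbol{b}) = (\check{\bs y}, \widecheck{y_i \bs e_i})$ yields (i) directly; the same argument with $(\check{\bs y}_\n, \widecheck{y_{\n,i}\bs e_i})$ yields (ii), using also $\|\check{\bs y}_\n\| = \|\check{\bs y}\|$. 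For (iii), applying Lemma~\ref{eq:quad} with $(\check{\bs y}, \widecheck{y_{\n,i}\bs e_i})$ produces bounds centered at $y_i y_{\n,i}/\|\bs z_i\|^2$; taking absolute values and using $|y_i y_{\n,i}|=1$ gives the claimed two-sided bound on $|\bs y^\top A^{-1} y_{\n,i}\bs e_i|$ (the lower bound is non-trivial only when $\varepsilon \|\bs z_i\|\|\check{\bs y}\| < 1$, otherwise it is vacuous).

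Parts (iv)--(vi) are then immediate consequences of (i)--(iii). Under $E_3(M)$ we have $\|\bs z_i\| \leq M$, and under $E_4(\beta,\rho)$ Lemma~\ref{quad-bounds-s}(i) gives $\|\check{\bs y}\|^2 \leq (1+\beta)n\rho$. Substituting $\|\bs z_i\|\|\check{\bs y}\| \leq M\sqrt{(1+\beta)n\rho}$ into the numerators of the bounds in (i)--(iii) yields the desired estimates; note that the substitution is valid in both directions (upper bound uses $+\varepsilon\|\bs z_i\|\|\check{\bs y}\| \leq +\varepsilon M\sqrt{(1+\beta)n\rho}$, lower bound uses $-\varepsilon\|\bs z_i\|\|\check{\bs y}\| \geq -\varepsilon M\sqrt{(1+\beta)n\rho}$).

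There is no real obstacle here: the proof is purely mechanical once the $\check{}$-invariance \eqref{eq:checkinv} is invoked. The only mildly delicate point is keeping track of which $\check{}$-rescaling each vector undergoes and verifying that the factors of $\|\bs z_i\|$ combine correctly to produce the stated denominator $(1-\varepsilon^2)\|\bs z_i\|^2$.
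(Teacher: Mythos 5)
Your proof is correct and follows essentially the same route as the paper: parts (i)--(ii) are direct applications of Lemma~\ref{eq:quad} via the rescaling identity \eqref{eq:checkinv}, part (iii) handles the sign $y_iy_{\n,i}$ with the same observation that the lower bound is vacuous when $\varepsilon\|\bs z_i\|\|\check{\bs y}\|\geq 1$, and (iv)--(vi) follow by substituting $\|\bs z_i\|\|\check{\bs y}\|\leq M\sqrt{(1+\beta)n\rho}$ from $E_3$ and $E_4$. The only cosmetic difference is that the paper factors out $|y_iy_{\n,i}|=1$ to reduce (iii) to (i), whereas you apply Lemma~\ref{eq:quad} directly to the pair $(\check{\bs y},\widecheck{y_{\n,i}\bs e_i})$; both are equivalent.
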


\begin{proof}
The first two parts follow directly from Lemma~\ref{eq:quad}. For part (iii) (or (vi)), note that $|\boldsymbol{y}^\top A^{-1} y_{\n,i}\bs e_i| = |\boldsymbol{y}^\top A^{-1} y_{i}\bs e_i| |y_{\n,i} y_i| = |\boldsymbol{y}^\top A^{-1} y_{i}\bs e_i|$. Combined with part (i) (or (iv)), this proves the upper bound. The lower bound follows from (i) (or (iv)) if $1-\eps\|\boldsymbol{z}_i\| \|\check{\boldsymbol y}\| \geq 0$ and is clear if the latter expression is negative.
\end{proof}

\begin{lemm}\label{quad-bounds-nuae}
Suppose events $E_1(\varepsilon)$ and $E_2(\alpha_2,\alpha_\infty)$ hold with $\eps \in [0, \tfrac12]$ and $\alpha_2 \geq \alpha_\infty > 0$. Then:
\begin{equation}\label{eq:nue}
|\boldsymbol{\nu}^\top A^{-1} \bs e_i| \;\leq \;\frac{(\alpha_\infty + \varepsilon \alpha_2)\|\boldsymbol{\mu}\|}{(1 - \varepsilon^2)\|\boldsymbol{z}_i\|} \;\leq\; \frac{2 [\alpha_\infty \vee (\varepsilon \alpha_2)]\|\boldsymbol{\mu}\|}{(1 - \varepsilon^2)\|\boldsymbol{z}_i\|}.
\end{equation}
\end{lemm}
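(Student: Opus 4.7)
The plan is to follow the same template used for the preceding lemmas in this subsection (compare the proofs of Lemma~\ref{quad-bounds-ht} and Lemma~\ref{quad-bounds-yay}): rescale to $\check A^{-1}$ via the invariance identity~\eqref{eq:checkinv}, then invoke the bilinear bound from Lemma~\ref{eq:quad} and plug in the two norm controls supplied by the event $E_2$. First I would make the rescaling explicit. Since $\boldsymbol{\nu}=Z\boldsymbol{\mu}$, one has $\check{\boldsymbol{\nu}} = \Delta(z)^{-1} Z\boldsymbol{\mu} = \check Z \boldsymbol{\mu}$, while $\check{\bs e}_i = \Delta(z)^{-1}\bs e_i = \|\bs z_i\|^{-1}\bs e_i$. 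The identity~\eqref{eq:checkinv} therefore gives
\[
\boldsymbol{\nu}^\top A^{-1}\bs e_i \;=\; \check{\boldsymbol{\nu}}^\top \check A^{-1}\check{\bs e}_i \;=\; \frac{1}{\|\bs z_i\|}\,(\check Z\boldsymbol{\mu})^\top \check A^{-1}\bs e_i,
\]
which already accounts for the $\|\bs z_i\|^{-1}$ factor appearing in the conclusion.

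Next, applying Lemma~\ref{eq:quad} with $\boldsymbol{a} = \check Z\boldsymbol{\mu}$ and $\boldsymbol{b}=\bs e_i$ (valid because $E_1(\varepsilon)$ holds with $\varepsilon \leq 1/2$) yields
\[
\bigl|(\check Z\boldsymbol{\mu})^\top \check A^{-1}\bs e_i\bigr| \;\leq\; \frac{|\langle \check Z\boldsymbol{\mu},\bs e_i\rangle| + \varepsilon\, \|\check Z\boldsymbol{\mu}\|\,\|\bs e_i\|}{1-\varepsilon^2}.
\]
Here is where $E_2(\alpha_2,\alpha_\infty)$ does its job: the inner product $\langle \check Z\boldsymbol{\mu},\bs e_i\rangle$ is the $i$-th coordinate of $\check Z\boldsymbol{\mu}$, so it is bounded in absolute value by $\|\check Z\boldsymbol{\mu}\|_\infty \leq \alpha_\infty\|\boldsymbol{\mu}\|$, while $\|\check Z\boldsymbol{\mu}\|\leq \alpha_2\|\boldsymbol{\mu}\|$ and $\|\bs e_i\|=1$. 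Plugging these three bounds into the previous display and dividing by $\|\bs z_i\|$ produces the first inequality claimed in the lemma. The second inequality reduces to the elementary observation $\alpha_\infty + \varepsilon\alpha_2 \leq 2[\alpha_\infty \vee (\varepsilon\alpha_2)]$.

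I do not anticipate any real obstacle here; the lemma is purely algebraic once the two ingredients (the scale-invariance of quadratic forms via~\eqref{eq:checkinv} and the polarization-type inequality in Lemma~\ref{eq:quad}) are available. In particular, the argument does not require invoking $E_3$, $E_4$ or $E_5$: near-orthogonality of the rescaled $\bs z_i$'s to each other (captured by $E_1$) and to $\boldsymbol{\mu}$ (captured by $E_2$) is all that is needed. The two flavors of control in $E_2$, namely the $\ell_2$ and $\ell_\infty$ norms of $\check Z\boldsymbol{\mu}$, appear naturally and separately in the two summands of the polarization bound, which is why the pair $(\alpha_2,\alpha_\infty)$ shows up jointly in the final estimate.
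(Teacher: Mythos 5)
Your proposal is correct and follows essentially the same route as the paper's proof: rescale via \eqref{eq:checkinv}, apply the bilinear bound of Lemma~\ref{eq:quad}, and insert the $\ell_2$ and $\ell_\infty$ controls from $E_2$. The only cosmetic difference is that you obtain the absolute-value bound directly, whereas the paper repeats the upper bound with $\bs e_i$ replaced by $-\bs e_i$; these are equivalent.
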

\begin{proof}
Note that $\bs\nu^\top A^{-1}\bs e_i=\check{\bs\nu}^\top \check A^{-1}\check{\bs e}_i$ and $\check{\bs \nu}=\check Z\bs\mu$. By event $E_2(\alpha_2,\alpha_\infty)$, $\|\check{\bs\nu}\|\leq \alpha_2\|\bs\mu\|$ and $\|\check{\bs\nu}\|_\infty\leq \alpha_\infty\|\bs\mu\|$. By Lemma~\ref{eq:quad}, we have
\begin{align*}
\bs\nu^\top A^{-1} \bs e_i 
    & \leq \frac{\langle \check{\bs\nu}, \check{\bs e_i}\rangle + \varepsilon\|\check{\bs\nu}\|\|\check{\bs e_i}\|}{1 - \varepsilon^2} \\
    & \leq \frac{\|\check{\bs\nu}\|_\infty + \varepsilon \alpha_2 \|\bs\mu\|}{(1 - \varepsilon^2)\|\bs z_i\|} \\
    & \leq \frac{(\alpha_\infty + \varepsilon \alpha_2)\|\bs\mu\|}{(1 - \varepsilon^2)\|\bs z_i\|}.
\end{align*}

By replacing $\bs e_i$ with $-\bs e_i$ and repeating the same argument, we also get
\[
 - \bs\nu^\top A^{-1} \bs e_i \leq \frac{(\alpha_\infty + \varepsilon \alpha_2)\|\bs\mu\|}{(1 - \varepsilon^2)\|\bs z_i\|}.
\]
\end{proof}

In addition to the bounds above, we also use the following bound frequently:

\begin{lemm}
Suppose events $E_3(M)$ and $E_4(\beta,\rho)$ holds with $M>0$, $\beta >0$, and $\rho>0$. Then we have
\begin{equation}\label{eq:M2rholowbound}
M^2\rho \;\geq\; \frac{1}{1+\beta}.
\end{equation}
\end{lemm}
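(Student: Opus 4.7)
The plan is to combine the two events directly and note that the claim is essentially a trivial consistency check between the crude upper bound on $\|\boldsymbol{z}_i\|$ (coming from $E_3$) and the average of $1/\|\boldsymbol{z}_i\|^2$ (controlled by $E_4$).

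First I would use the alternative formulation of $E_4(\beta,\rho)$ from \eqref{eq:E4}, which gives the upper bound
\[
\frac{1}{n}\sum_{i=1}^n \frac{1}{\|\boldsymbol{z}_i\|^2} \;\leq\; (1+\beta)\rho.
\]
Next, on $E_3(M)$ we have $\|\boldsymbol{z}_i\|\leq M$ for every $i$, so $1/\|\boldsymbol{z}_i\|^2\geq 1/M^2$ for every $i$, and averaging over $i=1,\ldots,n$ yields
\[
\frac{1}{M^2}\;\leq\; \frac{1}{n}\sum_{i=1}^n \frac{1}{\|\boldsymbol{z}_i\|^2}.
\]
Chaining these two inequalities gives $1/M^2 \leq (1+\beta)\rho$, which is exactly \eqref{eq:M2rholowbound} after rearrangement.

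There is no real obstacle here: the statement is a one-line consequence of the two events. The only thing worth noting is that the formulation of $E_4$ in \eqref{eq:E4} already gives the needed upper bound on the average $\tfrac{1}{n}\sum_i\|\boldsymbol{z}_i\|^{-2}$, so one does not need to invoke any of the other events $E_1, E_2, E_5$ or any of the bounds in Lemmas~\ref{eq:quad}--\ref{quad-bounds-nuae}.
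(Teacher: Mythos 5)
Your proof is correct and is essentially identical to the paper's own argument: both chain the upper bound $\tfrac1n\sum_i\|\boldsymbol{z}_i\|^{-2}\leq(1+\beta)\rho$ from $E_4$ with the lower bound $\tfrac1n\sum_i\|\boldsymbol{z}_i\|^{-2}\geq 1/M^2$ from $E_3$. Nothing is missing.
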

\begin{proof}
It is easy to see this since the definitions of $E_3$ and $E_4$ imply
\[
n\rho(1+\beta) \;\geq\; \|\check{\boldsymbol y}\|^2\; =\; \sum_{i=1}^n \frac{1}{\|\boldsymbol{z}_i\|^2} \;\geq\; \frac{n}{M^2}. 
\]
\end{proof}

We now provide simple sufficient conditions for invertibility of $XX^\top$.
\begin{lemm}\label{invertibility}
    Suppose $\eps \in [0, \tfrac12]$ and either $E_1\cap E_2 \cap E_4$ holds with $\alpha_2\|\boldsymbol{\mu}\| \sqrt{(1+ \beta)n\rho} \leq \tfrac14$ or $E_1\cap E_2$ holds with $\alpha_2 < 1/\sqrt{2}$. Then
    \[
    d = s(\|\boldsymbol{\mu}\|^2 - t) + (1+h)^2 > 0
    \]
    holds, and hence $XX^\top$ is invertible. 
\end{lemm}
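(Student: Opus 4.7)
The plan is to show $d>0$ directly in each of the two cases, which by Lemma~\ref{graminverse} will imply invertibility of $XX^\top$. The starting observation is that on $E_1(\varepsilon)$ the matrix $A=ZZ^\top$ is positive definite, so $A^{-1}$ is positive definite and $s=\boldsymbol{y}^\top A^{-1}\boldsymbol{y}>0$ since $\boldsymbol{y}\neq 0$. Moreover, by Lemma~\ref{quad-bounds-ht}\ref{eq:t} we always have $t\le \|\boldsymbol{\mu}\|^2$, so the first summand $s(\|\boldsymbol{\mu}\|^2-t)$ is nonnegative. Thus it suffices to show that in each case one of the two summands in $d=s(\|\boldsymbol{\mu}\|^2-t)+(1+h)^2$ is strictly positive.

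For the first case, assume $E_1\cap E_2\cap E_4$ with $\alpha_2\|\boldsymbol{\mu}\|\sqrt{(1+\beta)n\rho}\le \tfrac14$. The bound in Lemma~\ref{quad-bounds-ht}\ref{eq:h} yields
\[
|h|\;\le\;\frac{\alpha_2\|\boldsymbol{\mu}\|\sqrt{(1+\beta)n\rho}}{1-\varepsilon}\;\le\;\frac{1/4}{1-1/2}\;=\;\tfrac{1}{2},
\]
where I used $\varepsilon\le\tfrac12$. Therefore $1+h\ge\tfrac12$ and $(1+h)^2\ge\tfrac14>0$, which combined with $s(\|\boldsymbol{\mu}\|^2-t)\ge 0$ gives $d>0$.

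For the second case, assume $E_1\cap E_2$ with $\alpha_2<1/\sqrt{2}$. I will use the sharper bound from Lemma~\ref{quad-bounds-ht}\ref{eq:t}, namely $t\le \frac{\alpha_2^2}{1-\varepsilon}\|\boldsymbol{\mu}\|^2\le 2\alpha_2^2\|\boldsymbol{\mu}\|^2$ since $\varepsilon\le\tfrac12$. Because $2\alpha_2^2<1$, this implies $\|\boldsymbol{\mu}\|^2-t>0$; together with $s>0$ the first summand $s(\|\boldsymbol{\mu}\|^2-t)$ is strictly positive, and since $(1+h)^2\ge 0$ we conclude $d>0$.

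There is no real obstacle here: the two cases split according to which summand is easier to control, and in each case the hypothesis is tailored to force one of them away from zero. The only small care needed is to invoke the correct bound for $t$ in Case~2 (the $\alpha_2^2/(1-\varepsilon)$ bound rather than the trivial $t\le\|\boldsymbol{\mu}\|^2$), and to remember that $s>0$ comes for free from positive definiteness of $A^{-1}$ on $E_1$.
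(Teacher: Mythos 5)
Your proof is correct and follows essentially the same route as the paper: split into the two cases, use Lemma~\ref{quad-bounds-ht}\ref{eq:h} to get $|h|\le\tfrac12$ in the first case and the bound $t\le\alpha_2^2\|\boldsymbol{\mu}\|^2/(1-\varepsilon)<\|\boldsymbol{\mu}\|^2$ in the second. Your justification of $s>0$ directly from positive definiteness of $A$ on $E_1$ is in fact slightly cleaner than the paper's citation of Lemma~\ref{quad-bounds-s}(i), which formally assumes $E_4$ and hence is not available in the second case.
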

\begin{proof}
By Lemma~\ref{quad-bounds-ht} \ref{eq:t},  $t \leq \|\boldsymbol{\mu}\|^2$ and, by Lemma~\ref{quad-bounds-s}(i),  $s>0$. Thus, to show that $d>0$, it suffices to show that either $t<\|\bs\mu\|^2$ or $|h|<1$. If $E_1\cap E_2\cap E_4$ holds with $\alpha_2\|\boldsymbol{\mu}\| \sqrt{(1+ \beta)n\rho} \leq \tfrac14$, then Lemma~\ref{quad-bounds-ht} \ref{eq:h} implies
\[|h| \;\leq\; \frac{1}{4(1-\varepsilon)} \;\leq\; \frac{1}{2}\;<\;1.\]
Suppose now that $E_1\cap E_2$ holds with $\alpha_2 < 1/\sqrt{2}$. Since $\eps \in [0, \tfrac12]$, we get $\tfrac{\alpha_2^2}{1-\eps}<1$. Thus, by Lemma~\ref{quad-bounds-ht} \ref{eq:t}, $$t\;\leq\; \tfrac{\alpha_2^2}{1-\eps}\|\bs\mu\|^2\;<\; \|\bs\mu\|^2.$$
\end{proof}

\subsection{General Lower and upper bounds on the test error in a special case}

In this section we establish a useful Lemma~which provides lower and upper bounds on the test error in model \ref{model:M} under additional assumptions on the distribution of $\bs z_i$. This result will be used in the proof of Theorem~\ref{thm:noisyphasegeneral}. This can be seen as a complement to Lemma~\ref{testerror} which only provides an upper bound on the test error.

\begin{lemm}\label{test-error-spherical}
Assume that $(\bs x,y_{\n})$ is generated according to model \ref{model:M}. Suppose $\boldsymbol{z} \in \R^p$ in this model is a random vector such that $\bs z/\|\bs z\|, \|\bs z\|$ are independent and $\bs z/\|\bs z\|$ has a density $f$ with respect to the uniform distribution on the sphere $S^{p-1} := \{\|v\|=1:\, v\in \R^p\}$ such that $f_{min} \leq f \leq f_{max}$ for constants $0 < f_{\min} \leq f_{max} < \infty$.  
Suppose also that $\bs w\in \R^p$ is such that $\langle \boldsymbol{w}, \boldsymbol{\mu}\rangle >0$. Then, we have
\[
\frac{1-2\eta}{2} f_{min} \mathbb{P} \left( \|\bs z\| |u_1|   >  \frac{\langle \boldsymbol{w}, \boldsymbol{\mu} \rangle}{\|\boldsymbol{w} \|} \right) \leq \mathbb{P}\left( \langle \boldsymbol{w}, y_\n \boldsymbol{x} \rangle  <0 \right) - \eta \leq  \frac{1-2\eta}{2} f_{max} \mathbb{P} \left( \|\bs z\| |u_1|   >  \frac{\langle \boldsymbol{w}, \boldsymbol{\mu} \rangle}{\|\boldsymbol{w} \|} \right)
\]
where $u_1$ is the first entry of a random vector that has a uniform distribution on the sphere and is independent of $\|\bs z\|$.
\end{lemm}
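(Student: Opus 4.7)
The plan is to reduce the test error to a probability depending only on $\|\bs z\|\,|u_1|$ via three conceptual steps: (a) removing the label noise and $y$ by conditioning and symmetry, (b) decoupling the radial and angular components of $\bs z$, and (c) applying rotational invariance of the uniform measure on the sphere. The assumption $\langle \bs w,\bs\mu\rangle>0$ and the continuity of the distribution of $\bs z$ (guaranteed by the density $f$ of $\bs z/\|\bs z\|$ on the sphere) will be used throughout to turn strict inequalities into non-strict ones at no cost.

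Step (a). I would first condition on $y_\n = y$ (probability $1-\eta$) versus $y_\n = -y$ (probability $\eta$). Under $y_\n=y$, as in the proof of Lemma~\ref{testerror}, $\mathbb{P}(\langle\bs w,y\bs x\rangle<0)=\tfrac{1}{2}\mathbb{P}(|\langle\bs w,\bs z\rangle|>\langle\bs w,\bs\mu\rangle)$, using that $y$ is Rademacher independent of $\bs z$ and that $\langle \bs w,\bs\mu\rangle>0$ makes the two tail events disjoint. Under $y_\n=-y$, an analogous computation gives $\mathbb{P}(\langle\bs w,-y\bs x\rangle<0)=1-\tfrac{1}{2}\mathbb{P}(|\langle\bs w,\bs z\rangle|>\langle\bs w,\bs\mu\rangle)$, since now the two relevant half-lines cover the whole line with overlap $\{|\langle\bs w,\bs z\rangle|<\langle\bs w,\bs\mu\rangle\}$. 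Combining yields the clean identity
\[
\mathbb{P}(\langle\bs w,y_\n\bs x\rangle<0)-\eta \;=\;\tfrac{1-2\eta}{2}\,\mathbb{P}\!\left(|\langle\bs w,\bs z\rangle|>\langle\bs w,\bs\mu\rangle\right).
\]

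Step (b) and (c). Write $\bs z=\|\bs z\|u$ with $u=\bs z/\|\bs z\|$ independent of $\|\bs z\|$, and let $\tilde{\bs w}=\bs w/\|\bs w\|$ and $t=\langle\bs w,\bs\mu\rangle/\|\bs w\|$. By independence and the density bound $f_{\min}\le f\le f_{\max}$,
\[
\mathbb{P}\!\left(|\langle\bs w,\bs z\rangle|>\langle\bs w,\bs\mu\rangle\right)
=\int_{S^{p-1}}\mathbb{P}\!\left(\|\bs z\|\,|\langle\tilde{\bs w},v\rangle|>t\right)f(v)\,d\sigma(v),
\]
which is sandwiched by $f_{\min}$ and $f_{\max}$ times the same integral with $f\equiv 1$, i.e.\ times $\mathbb{P}(\|\bs z\|\,|\langle\tilde{\bs w},u_{\mathrm{unif}}\rangle|>t)$, where $u_{\mathrm{unif}}$ is uniform on $S^{p-1}$ and independent of $\|\bs z\|$. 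Rotational invariance of the uniform measure gives $\langle\tilde{\bs w},u_{\mathrm{unif}}\rangle\stackrel{d}{=}u_1$, so this probability equals $\mathbb{P}(\|\bs z\|\,|u_1|>t)$. Plugging back into the identity from step (a) gives both bounds stated in the lemma.

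The only subtle point—and the one that needs a brief check—is moving between strict and non-strict inequalities when combining the $y_\n=y$ and $y_\n=-y$ contributions; the distribution of $\langle\bs w,\bs z\rangle$ is continuous because $\bs z/\|\bs z\|$ has a bounded density against the uniform measure on $S^{p-1}$ and the hyperplane $\{\bs z:\langle\bs w,\bs z\rangle=\langle\bs w,\bs\mu\rangle\}$ has zero surface measure on each sphere of positive radius, so the $\mathbb{P}(=)$ term vanishes. Beyond this, the argument is essentially an exact (rather than inequality-based) version of the Markov step used in Lemma~\ref{testerror}, and no further estimates are needed.
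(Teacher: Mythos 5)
Your proposal is correct and follows essentially the same route as the paper's proof: conditioning on the label flip to reduce to the identity $\P(\langle \bs w, y_\n\bs x\rangle<0)-\eta = \tfrac{1-2\eta}{2}\P(|\langle \bs w,\bs z\rangle|>\langle\bs w,\bs\mu\rangle)$, then decoupling $\|\bs z\|$ from $\bs z/\|\bs z\|$, sandwiching the angular integral by $f_{\min}$ and $f_{\max}$, and invoking rotational invariance to replace $\langle \bs w/\|\bs w\|,\bs u\rangle$ by $u_1$. Your explicit treatment of the strict-versus-non-strict inequality issue is a minor point the paper handles implicitly, but it does not change the argument.
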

\begin{proof} 
By the proof of Lemma~\ref{testerror},
\begin{align*}
\mathbb{P} \left( \langle \boldsymbol{w}, y_\n \boldsymbol{x} \rangle  <0 \right) &=  \eta \mathbb{P}\left( \langle \boldsymbol{w}, y\boldsymbol{x} \rangle  >0 \right) + (1 - \eta)\mathbb{P}\left( \langle \boldsymbol{w}, y\boldsymbol{x} \rangle  <0 \right)
\\
&= \eta + (1 - 2\eta)\mathbb{P}\left( \langle \boldsymbol{w}, y\boldsymbol{x} \rangle  <0 \right)
\end{align*}
and
\begin{align*} 
\mathbb{P}\left( \langle \boldsymbol{w}, y\boldsymbol{x} \rangle  <0 \right) 
 = \frac{1}{2}\mathbb{P} \left( |\langle \boldsymbol{w}, \boldsymbol{z} \rangle|  >  \langle \boldsymbol{w}, \boldsymbol{\mu} \rangle \right) 
 = \frac{1}{2}\mathbb{P} \left( \|\boldsymbol{z}\| \left|\left\langle \frac{\boldsymbol{w}}{\|\boldsymbol{w} \|}, \frac{\boldsymbol{z}}{\|\boldsymbol{z}\|} \right\rangle \right|  >  \frac{\langle \boldsymbol{w}, \boldsymbol{\mu} \rangle}{\|\boldsymbol{w} \|} \right).
\end{align*}

Next, observe that
\begin{align*}
&\mathbb{P} \left( \|\boldsymbol{z}\| \left|\left\langle \frac{\boldsymbol{w}}{\|\boldsymbol{w} \|}, \frac{\boldsymbol{z}}{\|\boldsymbol{z}\|} \right\rangle \right|  >  \frac{\langle \boldsymbol{w}, \boldsymbol{\mu} \rangle}{\|\boldsymbol{w} \|} \right)
\\
& = \int_{[0,\infty)}\int_{S^{p-1}} I\Big\{r\left|\left\langle \frac{\boldsymbol{w}}{\|\boldsymbol{w} \|}, \bs u \right\rangle \right|  >  \frac{\langle \boldsymbol{w}, \boldsymbol{\mu} \rangle}{\|\boldsymbol{w} \|} \Big\} f(\bs u) \mathrm{d}Q(\bs u)  \mathrm{d}P_{\|\bs z\|}(r)
\\
& \leq f_{max} \int_{[0,\infty)}\int_{S^{p-1}} I\Big\{r\left|\left\langle \frac{\boldsymbol{w}}{\|\boldsymbol{w} \|}, \bs u \right\rangle \right|  >  \frac{\langle \boldsymbol{w}, \boldsymbol{\mu} \rangle}{\|\boldsymbol{w} \|} \Big\} \mathrm{d}Q(\bs u)  \mathrm{d}P_{\|\bs z\|}(r)
\\
& = f_{max} \mathbb{P} \left( \|\bs z\| |u_1|   >  \frac{\langle \boldsymbol{w}, \boldsymbol{\mu} \rangle}{\|\boldsymbol{w} \|} \right)
\end{align*}
where $Q$ is the probability measure corresponding to a uniform distribution on $S^{p-1}$ and $u_1$ is the first entry of a random vector $\bs u$ that has a uniform distribution on the sphere and is independent of $\|\bs z\|$. The last equality holds because, for every rotation matrix $R\in O(p)$, 
\[
\Big\langle \frac{\bs w}{\|\bs w\|},\bs u\Big\rangle\;\overset{d}{=}\;\Big\langle\frac{\bs w}{\|\bs w\|},R\bs u\Big\rangle\;=\;\Big\langle R^\top \frac{\bs w}{\|\bs w\|},\bs u\Big\rangle
\]
and we can always choose $R$ so that $R^\top \tfrac{\bs w}{\|\bs w\|}=e_1$. A lower bound can be obtained similarly.
\end{proof}

\subsection{Proof of Theorem~\ref{thm:noiseless-main}} \label{sec:proofgennoiselessthm}

We provide separate proofs under the assumptions in (i) and (ii) of the theorem. As is evident from the proof, in addition, we obtain the following exponential test error bound assuming that $\bs z$ is sub-Gaussian with sub-Gaussian norm $\|\bs z\|_{\psi_2}$
\begin{equation}\label{eq:errboundnonoise-exp}
        \P_{(\boldsymbol{x}, y)}(\langle \boldsymbol{\hat w}, y\boldsymbol{x}\rangle < 0) \leq \;\exp\left\{- \frac{c}{\|\boldsymbol{z}\|_{\psi_2}^2}\left(\frac{1}{\|\bs \mu\|^2} + \frac{1}{n\rho \|\bs \mu\|^3}\right)^{-1}
\right\}
\end{equation}
in both regimes. Here, $c$ is a universal constant.

\subsubsection{Proof of Theorem~\ref{thm:noiseless-main} under regime (i)} \label{sec:proofgennoiselessthm(i)}
Our strategy for proving Part~(i) of this Theorem~is as follows. First, in Lemma~\ref{lem:whatrepr}, we give conditions under which the maximum margin classifier $\hat{\bs w}$ is equal to the least squares estimator. Then, in Lemma~\ref{noiseless-mmc-bound1}, we bound the quantity $\tfrac{\|\hat{\bs w}\|}{\<\hat{\bs w},\bs \mu\>}$. This gives us bounds on the test error by Lemma~\ref{testerror}. 

\begin{sloppypar}    
\begin{lemm}\label{lem:whatrepr}
In model \ref{model:M} with $\eta = 0$, suppose event $\bigcap_{i=1}^4 E_i$ holds with $(\alpha_2 \|\boldsymbol{\mu}\|\vee \varepsilon M) \sqrt{(1+ \beta)n\rho} \leq \tfrac14$, and $M\alpha_\infty\|\boldsymbol{\mu}\| (1+\beta)n\rho < \tfrac{3}{32}$.
    Then, $\boldsymbol{\hat w} = X^\top(XX^\top)^{-1} \boldsymbol{y}$. 
\end{lemm}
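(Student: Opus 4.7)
The strategy is to invoke Lemma~\ref{maxmargin=LS}, which reduces the claim to verifying that every entry of the vector $\Delta(\boldsymbol{y})(XX^\top)^{-1}\boldsymbol{y}$ is strictly positive (in the noiseless case $\boldsymbol{y}_\n = \boldsymbol{y}$). Using the closed form from Lemma~\ref{quad-decomp},
\[
(XX^\top)^{-1}\boldsymbol{y} \;=\; d^{-1}A^{-1}\bigl[(1+h)\boldsymbol{y} - s\boldsymbol{\nu}\bigr],
\]
together with $d>0$ supplied by Lemma~\ref{invertibility}, the task becomes showing, for each $i=1,\ldots,n$,
\[
(1+h)\,\boldsymbol{y}^\top A^{-1}(y_i\boldsymbol{e}_i) \;>\; s\, y_i\,\boldsymbol{\nu}^\top A^{-1}\boldsymbol{e}_i.
\]
Since $|y_i|=1$, it is enough to prove the stronger inequality with the right-hand side replaced by $s\,|\boldsymbol{\nu}^\top A^{-1}\boldsymbol{e}_i|$.

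The estimates then come directly from the quadratic-form bounds already assembled. For the positive contribution, Lemma~\ref{quad-bounds-ht}\ref{eq:h} combined with $\alpha_2\|\boldsymbol{\mu}\|\sqrt{(1+\beta)n\rho}\le 1/4$ gives $|h|\le 1/2$, so $1+h\ge 1/2$, and Lemma~\ref{quad-bounds-yay}\ref{eq:yye2} with $\varepsilon M\sqrt{(1+\beta)n\rho}\le 1/4$ yields
\[
\boldsymbol{y}^\top A^{-1}(y_i\boldsymbol{e}_i) \;\ge\; \tfrac{3/4}{(1-\varepsilon^2)\|\boldsymbol{z}_i\|^2}.
\]
For the negative contribution, Lemma~\ref{quad-bounds-s}(i) gives $s\le (1+\beta)n\rho/(1-\varepsilon)$ and Lemma~\ref{quad-bounds-nuae} together with $\|\boldsymbol{z}_i\|\le M$ (from $E_3$) gives
\[
|\boldsymbol{\nu}^\top A^{-1}\boldsymbol{e}_i| \;\le\; \tfrac{(\alpha_\infty+\varepsilon\alpha_2)\|\boldsymbol{\mu}\|}{(1-\varepsilon^2)\|\boldsymbol{z}_i\|}.
\]
Plugging these bounds into the required inequality, multiplying through by $(1-\varepsilon^2)\|\boldsymbol{z}_i\|^2$, and using $\|\boldsymbol{z}_i\|\le M$ once more reduces matters to an arithmetic check: the $\alpha_\infty$-piece is controlled by the hypothesis $M\alpha_\infty\|\boldsymbol{\mu}\|(1+\beta)n\rho< 3/32$, while the $\varepsilon\alpha_2$-piece factors as $\bigl(\varepsilon M\sqrt{(1+\beta)n\rho}\bigr)\bigl(\alpha_2\|\boldsymbol{\mu}\|\sqrt{(1+\beta)n\rho}\bigr)\le 1/16$. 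The sum of these two upper bounds comes out strictly less than the positive contribution $\tfrac12\cdot\tfrac34 = 3/8$, which closes the argument.

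The main obstacle is purely bookkeeping. The four quantities $h$, $s$, $\boldsymbol{y}^\top A^{-1}(y_i\boldsymbol{e}_i)$ and $\boldsymbol{\nu}^\top A^{-1}\boldsymbol{e}_i$ are each controlled by a different mixture of the parameters $\varepsilon,\alpha_2,\alpha_\infty,\beta,M,\rho$, and the numerical thresholds $1/4$ and $3/32$ in the hypotheses are chosen tightly enough that one has to track the exact $1/(1-\varepsilon)$ and $1/(1-\varepsilon^2)$ denominators in order to conclude. Beyond this, no new probabilistic input is required; everything is a deterministic consequence of the bounds already collected in Lemmas~\ref{quad-bounds-s}--\ref{quad-bounds-nuae}, so the invocation of Lemma~\ref{maxmargin=LS} at the end is immediate.
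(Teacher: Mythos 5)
Your proposal is correct and follows essentially the same route as the paper's proof: reduce to positivity of $\Delta(\boldsymbol{y})(XX^\top)^{-1}\boldsymbol{y}$ via Lemma~\ref{maxmargin=LS}, expand using Lemma~\ref{quad-decomp} with $d>0$ from Lemma~\ref{invertibility}, and close by the same quadratic-form bounds (Lemmas~\ref{quad-bounds-s}--\ref{quad-bounds-nuae}), with the $\varepsilon\alpha_2$-piece controlled by exactly the same factorization $(\varepsilon M\sqrt{(1+\beta)n\rho})(\alpha_2\|\boldsymbol{\mu}\|\sqrt{(1+\beta)n\rho})\le \tfrac{1}{16}$. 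The only cosmetic difference is that you keep the sum $\alpha_\infty+\varepsilon\alpha_2$ where the paper uses $2[\alpha_\infty\vee(\varepsilon\alpha_2)]$, and your final arithmetic ($5/16<3/8$) checks out.
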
 
\end{sloppypar}
\begin{proof}
Note that $\eps \in [0, \tfrac12]$ follows from the assumption $\varepsilon M\sqrt{(1+\beta)n\rho} \leq \tfrac14$ and \eqref{eq:M2rholowbound}. By Lemma~\ref{invertibility}, $XX^\top$ is invertible. By Lemma~\ref{maxmargin=LS}, it suffices to show that $\Delta(y)(XX^\top)^{-1}\boldsymbol{y}$ has positive entries. Using the fact that $d>0$, by Lemma~\ref{invertibility}, and the expression for $(XX^\top)^{-1}\boldsymbol{y}$ in Lemma~\ref{quad-decomp}, equivalently it suffices to show $(1+h)\boldsymbol{y}^\top A^{-1}y_i\bs e_i - s\boldsymbol{\nu}^\top A^{-1}y_i\bs e_i >0$ for each $i$.  Recall from the proof of Lemma~\ref{invertibility} that $\alpha_2\|\boldsymbol{\mu}\|\sqrt{(1+ \beta)n\rho} \leq \tfrac14$ implies $|h| \leq \tfrac12$. We also note that the assumptions $\alpha_2\|\boldsymbol{\mu}\| \sqrt{(1+ \beta)n\rho} \leq \tfrac14$ and $\varepsilon M\sqrt{(1+\beta)n\rho} \leq \tfrac14$ imply $M\eps\alpha_2\|\boldsymbol{\mu}\|(1+\beta)n\rho \leq \tfrac{1}{16}$.

By Lemma~\ref{quad-bounds-s}, Lemma~\ref{quad-bounds-ht}, Lemma~\ref{quad-bounds-yay}, and Lemma~\ref{quad-bounds-nuae}, we have
\begin{align*}
    & (1+h)\boldsymbol{y}^\top A^{-1}y_i\bs e_i - s\boldsymbol{\nu}^\top A^{-1}y_i\bs e_i \\
    & \qquad \qquad \geq \frac{1}{2}\frac{1 - \varepsilon \|\boldsymbol{z}_i\| \|\check{\boldsymbol y}\|}{(1 - \varepsilon^2)\|\boldsymbol{z}_i\|^2} - \frac{\|\check{\boldsymbol y}\|^2}{1-\varepsilon}\frac{2[\alpha_\infty \vee (\varepsilon\alpha_2)]\|\boldsymbol{\mu}\|}{(1 - \varepsilon^2)\|\boldsymbol{z}_i\|} \\
    & \qquad \qquad = \frac{1}{2(1 - \varepsilon^2)\|\boldsymbol{z}_i\|^2}\left\{1 - \varepsilon \|\boldsymbol{z}_i\|\|\check{\boldsymbol y}\| - \frac{4}{1 - \varepsilon} 
 \|\boldsymbol{z}_i\|\|\check{\boldsymbol y}\|^2[\alpha_\infty \vee (\varepsilon\alpha_2)]\|\boldsymbol{\mu}\| \right\} \\
    & \qquad \qquad \geq \frac{1}{2(1 - \varepsilon^2)\|\boldsymbol{z}_i\|^2} \left\{1 - \varepsilon M\sqrt{(1+\beta)n\rho} - \frac{4}{1 - \varepsilon}M[\alpha_\infty \vee (\varepsilon\alpha_2)]\|\boldsymbol{\mu}\| (1+\beta)n\rho \right\} \\
    & \qquad \qquad > 0, 
\end{align*}
where the last inequality is due to $M\eps\alpha_2\|\boldsymbol{\mu}\|(1+\beta)n\rho < \tfrac{3}{32}$ and the assumptions $\varepsilon M\sqrt{(1+\beta)n\rho} \leq \tfrac14$ and $M\alpha_\infty \|\boldsymbol{\mu}\| (1+\beta)n\rho < \tfrac{3}{32}$. 
\end{proof}

\noindent The next results gives us bounds on the fundamental quantity $\tfrac{\|\hat{\bs w}\|}{\<\hat{\bs w},\bs\mu\>}$.

\begin{lemm}\label{noiseless-mmc-bound1} 
In model \ref{model:M} with $\eta = 0$, suppose that the assumptions of Lemma~\ref{lem:whatrepr} hold and in addition suppose that $\beta\in [0, \tfrac12)$ and $\|\boldsymbol{\mu}\| \geq C \tfrac{\alpha_2}{\sqrt{(1-\beta)n\rho}}$ hold for sufficiently large constant $C$.  Then, we have 
\[
\left(\frac{\|\boldsymbol{\hat w}\|}{\langle \boldsymbol{\hat w}, \boldsymbol{\mu} \rangle}\right)^2 \; \asymp \; \frac{1}{\|\boldsymbol{\mu}\|^2} + \frac{1}{n\rho \|\boldsymbol{\mu}\|^4}.
\]
\end{lemm}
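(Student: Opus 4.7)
\textbf{Proof proposal for Lemma~\ref{noiseless-mmc-bound1}.} The plan is to use the closed-form representation of $\hat{\bs w}$ from Lemma~\ref{lem:whatrepr} to rewrite $\|\hat{\bs w}\|^2/\langle \hat{\bs w},\bs\mu\rangle^2$ as an explicit rational expression in the quadratic forms $s,h,t,d$ introduced in Section~\ref{sec:sketch}, and then control each of these quantities via Lemmas~\ref{quad-bounds-s}--\ref{quad-bounds-ht} under the given near-orthogonality and signal conditions. Specifically, starting from $\hat{\bs w} = X^\top(XX^\top)^{-1}\bs y$, I would first compute
\[
\|\hat{\bs w}\|^2 \;=\; \bs y^\top (XX^\top)^{-1}\bs y, \qquad \langle \hat{\bs w},\bs\mu\rangle \;=\; (\bs\nu+\|\bs\mu\|^2\bs y)^\top (XX^\top)^{-1}\bs y,
\]
and then apply the identity $(XX^\top)^{-1}\bs y = d^{-1}A^{-1}[(1+h)\bs y - s\bs\nu]$ from Lemma~\ref{quad-decomp} together with $\bs y^\top A^{-1}\bs y = s$, $\bs y^\top A^{-1}\bs\nu = h$, and $\bs\nu^\top A^{-1}\bs\nu = t$ to obtain the clean expressions $\|\hat{\bs w}\|^2 = s/d$ and $\langle \hat{\bs w},\bs\mu\rangle = 1-(1+h)/d = (d-(1+h))/d$, using crucially that $d = s(\|\bs\mu\|^2-t)+(1+h)^2$. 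This yields the compact formula
\[
\frac{\|\hat{\bs w}\|^2}{\langle \hat{\bs w},\bs\mu\rangle^2} \;=\; \frac{sd}{(d-(1+h))^2}.
\]

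The bulk of the work is then to show that $d \asymp n\rho\|\bs\mu\|^2 + 1$ and $d-(1+h) \asymp n\rho\|\bs\mu\|^2$. Under the events $E_1,\dots,E_4$ with $\beta,\varepsilon \leq 1/2$, Lemma~\ref{quad-bounds-s}(i) gives $s \asymp n\rho$. To control $t$, I would combine $\alpha_2\|\bs\mu\|\sqrt{(1+\beta)n\rho}\leq 1/4$ with $\|\bs\mu\|\geq C\alpha_2/\sqrt{(1-\beta)n\rho}$ to get $\alpha_2^2 \lesssim 1/C$, so Lemma~\ref{quad-bounds-ht}\ref{eq:t} yields $t \leq \alpha_2^2\|\bs\mu\|^2/(1-\varepsilon) \ll \|\bs\mu\|^2$ for $C$ large; hence $s(\|\bs\mu\|^2-t) \asymp n\rho\|\bs\mu\|^2$. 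Combined with $|h|\leq 1/2$ (which follows from the hypothesis, as in the proof of Lemma~\ref{invertibility}), this gives $d \asymp n\rho\|\bs\mu\|^2 + 1$.

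The main obstacle, and the step requiring the lower bound on $\|\bs\mu\|$, is showing $d-(1+h) \asymp s(\|\bs\mu\|^2-t)$. Writing $d-(1+h) = s(\|\bs\mu\|^2-t) + h + h^2$, I would use Lemma~\ref{quad-bounds-ht}\ref{eq:h}, namely $|h|\leq \alpha_2\|\bs\mu\|\sqrt{(1+\beta)n\rho}/(1-\varepsilon)$, together with the assumed lower bound $\alpha_2 \leq \|\bs\mu\|\sqrt{(1-\beta)n\rho}/C$, to deduce
\[
|h| \;\leq\; \frac{\sqrt{1-\beta^2}}{C(1-\varepsilon)}\,\|\bs\mu\|^2 n\rho \;\leq\; \frac{1}{C'}\,s(\|\bs\mu\|^2-t)
\]
for an arbitrarily large constant $C'$ when $C$ is large, so that (using $h^2\leq|h|$) the term $s(\|\bs\mu\|^2-t)$ dominates and $d-(1+h)\asymp n\rho\|\bs\mu\|^2$. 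Plugging everything into the compact formula gives
\[
\frac{\|\hat{\bs w}\|^2}{\langle \hat{\bs w},\bs\mu\rangle^2} \;\asymp\; \frac{n\rho\,(n\rho\|\bs\mu\|^2+1)}{(n\rho\|\bs\mu\|^2)^2} \;=\; \frac{1}{\|\bs\mu\|^2} + \frac{1}{n\rho\|\bs\mu\|^4},
\]
which is the claim. The subtle point is that simply bounding $d-(1+h)$ by $d$ would lose the correct order in the small-signal regime $n\rho\|\bs\mu\|^2\lesssim 1$; the lower bound on $\|\bs\mu\|$ relative to $\alpha_2$ is precisely what ensures that the perturbation $(1+h)$ is negligible compared to $s(\|\bs\mu\|^2-t)$ in both regimes.
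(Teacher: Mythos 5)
Your proposal is correct and follows essentially the same route as the paper: both use $\hat{\bs w}=X^\top(XX^\top)^{-1}\bs y$ together with Lemma~\ref{quad-decomp} to reduce everything to the quadratic forms $s,h,t,d$, establish $s\asymp n\rho$, $t\ll\|\bs\mu\|^2$, $|h|\lesssim C^{-1}n\rho\|\bs\mu\|^2$, and hence $d\asymp n\rho\|\bs\mu\|^2+1$ and $\langle\hat{\bs w},\bs\mu\rangle\asymp d^{-1}n\rho\|\bs\mu\|^2$. Your compact identity $\langle\hat{\bs w},\bs\mu\rangle=(d-(1+h))/d$ is just the paper's expression $d^{-1}\{s(\|\bs\mu\|^2-t)+h(1+h)\}$ rewritten, so the difference is purely cosmetic.
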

\begin{proof}
Since the assumptions of Lemma~\ref{lem:whatrepr} are satisfied, we have $\boldsymbol{\hat w} = X^\top(XX^\top)^{-1}\boldsymbol{y}$.

Note that the assumptions $\|\boldsymbol{\mu}\| \geq C \dfrac{\alpha_2}{\sqrt{(1-\beta)n\rho}}$ and $\alpha_2\|\boldsymbol{\mu}\|\sqrt{(1+ \beta)n\rho} \leq \tfrac14$ and Lemma~\ref{quad-bounds-ht} \ref{eq:t} imply that $\|\boldsymbol{\mu}\|^2 - t \asymp \|\boldsymbol{\mu}\|^2$ when $C$ is sufficiently large since
\[
t \;\leq\; \frac{\alpha_2^2\|\boldsymbol{\mu}\|^2}{(1 - \varepsilon)} \;\leq\; \frac{\alpha_2\|\boldsymbol{\mu}\|}{4(1 - \varepsilon)\sqrt{(1+\beta)n\rho}}\; \leq\; \frac{\sqrt{1 - \beta}}{4C(1 - \varepsilon)\sqrt{1+\beta}}\|\boldsymbol{\mu}\|^2.
\]

Furthermore, $s \asymp n\rho$ and $|h| \leq \tfrac12$ holds. Therefore,

$$
d   \; =\; s(\|\boldsymbol{\mu}\|^2 - t) + (1+h)^2 \; \asymp\; n\rho \|\boldsymbol{\mu}\|^2 + 1.
$$

Similarly, 

\begin{align*}
\|\boldsymbol{\hat w}\|^2 & = \boldsymbol{y}^\top (XX^\top)^{-1}XX^\top (XX^\top)^{-1}\boldsymbol{y} \\
    & = \boldsymbol{y}^\top (XX^\top)^{-1} \boldsymbol{y} \\
    & = d^{-1}\left\{(1+h)\boldsymbol{y}^\top A^{-1} - s \boldsymbol{\nu}^\top A^{-1}\right\}\boldsymbol{y} \quad \text{(by Lemma~\ref{quad-decomp})} \\
    & = d^{-1}\left\{(1+h)s - sh\right\} \\
    & = d^{-1}s \\
    & \asymp d^{-1}n\rho.
\end{align*}

Next, we obtain a bound for $\langle \boldsymbol{\hat w}, \boldsymbol{\mu} \rangle$. To do so, note that Lemma~\ref{quad-bounds-ht} \ref{eq:h} and the assumption $\|\boldsymbol{\mu}\| \geq C \dfrac{\alpha_2}{\sqrt{(1-\beta)n\rho}}$ imply
\begin{align*}
    |h| \;\leq\; \frac{\alpha_2\|\boldsymbol{\mu}\|\sqrt{(1+\beta)n\rho}}{1 - \varepsilon}\; \leq\; \frac{\sqrt{1 - \beta^2}}{(1 - \varepsilon)C}n\rho \|\boldsymbol{\mu}\|^2.
\end{align*}

Therefore, by taking $C$ sufficiently large, we have 
\begin{align*}
    \langle \boldsymbol{\hat w}, \boldsymbol{\mu} \rangle
        & = \boldsymbol{y}^\top(XX^\top)^{-1}X\boldsymbol{\mu} \\
        & = \|\boldsymbol{\mu}\|^2 \boldsymbol{y}^\top(XX^\top)^{-1}\boldsymbol{y} + \boldsymbol{y}^\top(XX^\top)^{-1} \boldsymbol{\nu} \\
        & = \|\boldsymbol{\mu}\|^2 d^{-1} s + d^{-1}\{(1+h)\boldsymbol{y}^\top A^{-1} - s \boldsymbol{\nu}^\top A^{-1}\} \boldsymbol{\nu} \quad \text{(by Lemma~\ref{quad-decomp})} \\
        & = d^{-1}\left\{s(\|\boldsymbol{\mu}\|^2 - t) + h(1+h)\right\} \\
        & \asymp d^{-1}n\rho \|\boldsymbol{\mu}\|^2.
\end{align*}

The proof of the expansion for $\tfrac{\|\boldsymbol{\hat w}\|}{\langle \boldsymbol{\hat w}, \boldsymbol{\mu} \rangle}$ follows from combining all the bounds derived above.
\end{proof}

By combining Lemma~\ref{testerror} and Lemma~\ref{noiseless-mmc-bound1} we obtain the statement of Theorem~\ref{thm:noiseless-main} in regime (i).

\subsubsection{Proof of Theorem~\ref{thm:noiseless-main} under regime (ii)} \label{sec:proofgennoiselessthm(ii)}
Next, we consider the regime $\|\boldsymbol{\mu}\| \gtrsim M$ to show (ii) of Theorem~\ref{thm:noiseless-main}. The main difference with the previous case is that here we cannot assure in general that $\boldsymbol{\hat w}$ is equal to the least squares estimator. We establish bounds on $\boldsymbol{\hat w}$ with an alternative geometric approach. An upper bound on $\|\boldsymbol{\hat w}\|$ will be obtained from the fact that every feasible point $\bs w_*$ leads to an upper bound on $\|\boldsymbol{\hat w}\|$. To obtain a lower bound, we show that the solution $\boldsymbol{\hat w}$ of \eqref{eq:mmc} must lie in a specific region $W\subset \R^p$ and establish a lower bound on $\|\boldsymbol{w} \|$ for every point $\boldsymbol{w} \in W$.

\begin{lemm}\label{noiseless-mmc-bound2} 
In model \ref{model:M} with $\eta = 0$, suppose event $E_3(M)$ holds with $\|\boldsymbol{\mu}\| \geq C M$ for some constant $C>2$. Then, the data are linearly separable and
\begin{enumerate}
    \item [(i)] $\displaystyle \frac{1}{\|\boldsymbol{\mu}\| + M}\; \leq \;\|\boldsymbol{\hat w}\| \;\leq \;\frac{1}{\|\boldsymbol{\mu}\| - M}$, 
    \item [(ii)] $\displaystyle \frac{\|\boldsymbol{\mu}\| - 2M}{\|\boldsymbol{\mu}\| - M} \;\leq \;\langle \boldsymbol{\hat w}, \boldsymbol{\mu} \rangle\; \leq \;\frac{\|\boldsymbol{\mu}\|}{\|\boldsymbol{\mu}\| - M}$,
    \item [(iii)] $\displaystyle \frac{\|\boldsymbol{\hat w}\|}{\langle \boldsymbol{\hat w}, \boldsymbol{\mu} \rangle} \; \asymp \; \frac{1}{\|\boldsymbol{\mu}\|}$.
\end{enumerate}
\end{lemm}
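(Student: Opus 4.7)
\textbf{Proof plan for Lemma~\ref{noiseless-mmc-bound2}.}

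In the noiseless case $y_{\n,i}=y_i$, so $\bar{\bs x}_i := y_i\bs x_i = \bs\mu + y_i\bs z_i$ and the optimization problem \eqref{eq:mmc} can be written as in \eqref{eq:mmcgeom}. The event $E_3(M)$ gives $\|y_i\bs z_i\|\le M$ for every $i$, and by Cauchy--Schwarz $|\langle \bs\mu, y_i\bs z_i\rangle|\le M\|\bs\mu\|$.

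\textbf{Step 1: Linear separability and upper bound on $\|\bs{\hat w}\|$.} The plan is to exhibit an explicit feasible point. Set $\bs w_* := \dfrac{\bs\mu}{\|\bs\mu\|(\|\bs\mu\|-M)}$ (note $\|\bs\mu\|>M$ since $C>2>1$). Then
\[
\langle \bs w_*,\bar{\bs x}_i\rangle \;=\; \frac{\|\bs\mu\|^2 + \langle\bs\mu, y_i\bs z_i\rangle}{\|\bs\mu\|(\|\bs\mu\|-M)} \;\geq\; \frac{\|\bs\mu\|^2 - M\|\bs\mu\|}{\|\bs\mu\|(\|\bs\mu\|-M)} \;=\; 1,
\]
so the data are linearly separable and $\bs w_*$ is feasible for \eqref{eq:mmc}. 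Optimality of $\bs{\hat w}$ then yields $\|\bs{\hat w}\|\le \|\bs w_*\| = 1/(\|\bs\mu\|-M)$, which is the upper bound in (i).

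\textbf{Step 2: Lower bound on $\|\bs{\hat w}\|$ and bounds on $\langle \bs{\hat w},\bs\mu\rangle$.} Pick any index $i$. Feasibility of $\bs{\hat w}$ gives
\[
1\;\leq\; \langle \bs{\hat w},\bar{\bs x}_i\rangle \;=\; \langle\bs{\hat w},\bs\mu\rangle + \langle \bs{\hat w}, y_i\bs z_i\rangle \;\leq\; \|\bs{\hat w}\|(\|\bs\mu\|+M),
\]
which yields the lower bound in (i), $\|\bs{\hat w}\|\ge 1/(\|\bs\mu\|+M)$. For (ii), Cauchy--Schwarz combined with (i) gives $\langle \bs{\hat w},\bs\mu\rangle \le \|\bs{\hat w}\|\|\bs\mu\|\le \|\bs\mu\|/(\|\bs\mu\|-M)$. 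For the lower bound in (ii), I use the feasibility inequality again together with the upper bound from Step 1:
\[
1 \;\leq\; \langle\bs{\hat w},\bs\mu\rangle + \|\bs{\hat w}\|\,M \;\leq\; \langle\bs{\hat w},\bs\mu\rangle + \frac{M}{\|\bs\mu\|-M},
\]
so $\langle\bs{\hat w},\bs\mu\rangle \ge 1 - M/(\|\bs\mu\|-M) = (\|\bs\mu\|-2M)/(\|\bs\mu\|-M)$. (Here $\|\bs\mu\|-2M>0$ because $C>2$.)

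\textbf{Step 3: The ratio.} Combining the bounds in (i) and (ii) gives
\[
\frac{\|\bs\mu\|-M}{\|\bs\mu\|(\|\bs\mu\|+M)} \;\leq\; \frac{\|\bs{\hat w}\|}{\langle \bs{\hat w},\bs\mu\rangle} \;\leq\; \frac{1}{\|\bs\mu\|-2M}.
\]
Under $\|\bs\mu\|\ge CM$ with $C>2$, each of $\|\bs\mu\|-M$, $\|\bs\mu\|-2M$, $\|\bs\mu\|+M$ is comparable to $\|\bs\mu\|$ up to constants depending only on $C$, giving the conclusion (iii).

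The whole argument is elementary and geometric; no inversion of $XX^\top$ or identification with the least-squares estimator is needed, which is the main point—in this large-signal regime $\hat{\bs w}=\bs w_{\rm LS}$ need not hold, but the two-sided bounds above still follow directly from feasibility and Cauchy--Schwarz, with the only non-trivial ingredient being the construction of the explicit feasible point $\bs w_*$ in Step~1. The constraint $C>2$ enters precisely to keep the lower bound on $\langle\bs{\hat w},\bs\mu\rangle$ positive.
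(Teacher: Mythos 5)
Your proof is correct and follows essentially the same route as the paper's: the identical feasible point $\bs w_*=(\|\bs\mu\|-M)^{-1}\bs\mu/\|\bs\mu\|$ gives separability and the upper bound in (i), and feasibility plus Cauchy--Schwarz give the remaining bounds (the paper phrases the bounds in (ii) via a support vector with $\langle\bs{\hat w},y_i\bs x_i\rangle=1$, but this is a cosmetic difference). No gaps.
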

\begin{proof}
Let $\boldsymbol{w}_\ast = \left(\|\boldsymbol{\mu}\| - M\right)^{-1}\dfrac{\boldsymbol{\mu}}{\|\boldsymbol{\mu}\|}$. By the Cauchy-Schwarz inequality
\begin{align*}
\langle \boldsymbol{w}_\ast, y_i\boldsymbol{x}_i \rangle\; =\; \langle \boldsymbol{w}_\ast, \boldsymbol{\mu} \rangle + y_i\langle \boldsymbol{w}_\ast, \boldsymbol{z}_i \rangle \; \geq\; \frac{\|\boldsymbol{\mu}\|}{\|\boldsymbol{\mu}\| - M} - \frac{\|\boldsymbol{z}_i\|}{\|\boldsymbol{\mu}\| - M} \; \geq\; \frac{\|\boldsymbol{\mu}\| - M}{\|\boldsymbol{\mu}\| - M} \;=\; 1,
\end{align*}
which shows that $\bs w_*$ is a feasible point of the problem \eqref{eq:mmc}. This also shows that the data are linearly separable. By the definition of $\boldsymbol{\hat w}$, we have $\|\boldsymbol{\hat w}\|\; \leq \;\|\boldsymbol{w}_\ast\| \;=\; \tfrac{1}{\|\boldsymbol{\mu}\| - M}$.
To show the lower bound in Claim~(i), we note that $\|\boldsymbol{\hat w}\|^{-1}$ is the distance from the origin to the hyperplane $\langle \bs{\hat w}, y\bs x \rangle -1=0$ (this is due to the fact that the distance from a point $\bs v$ to a hyperplane $\bs w^\top \bs x + b =0$ is given by $|\bs w^\top \bs v + b|/\|\bs w\|$). Since all $y_i\bs x_i$'s are on the supporting hyperplane, we have 
\[\|\boldsymbol{\hat w}\|^{-1} \;\leq\;\min_i \|\boldsymbol{x}_i\| \;\leq \;\|\boldsymbol{\mu}\| + M. \]
To show Claim~(ii), let $\boldsymbol{x}_i$ be one of the support vectors which the maximum margin separating hyperplane passes through, that is, $\langle \boldsymbol{\hat w}, y_i\boldsymbol{x}_i \rangle = 1$ (It is clear that there exists at least one such point based on the optimization problem \eqref{eq:mmc}.). Then, we have
\[
\langle \boldsymbol{\hat w}, y_i\boldsymbol{x}_i \rangle = 1\quad \iff\quad \langle \boldsymbol{\hat w}, \boldsymbol{\mu} \rangle = 1 - y_i \langle \boldsymbol{\hat w}, \boldsymbol{z}_i \rangle.
\]
By Claim~(i) and the fact that $\|\boldsymbol{z}_i\| \leq M$, we have
\[
|\langle \boldsymbol{\hat w}, \boldsymbol{z}_i \rangle| \;\leq\; \|\boldsymbol{\hat w}\|\|\boldsymbol{z}_i\| \;\leq\; \frac{M}{\|\boldsymbol{\mu}\| - M}.
\]
This concludes the proof of Claim~(ii). Since $\|\boldsymbol{\mu}\| \geq CM$ for some constant $C>2$, the first and second results imply $\|\boldsymbol{\hat w}\| \asymp \|\boldsymbol{\mu}\|^{-1}$ and $\langle \boldsymbol{\hat w}, \boldsymbol{\mu} \rangle \asymp  1$, which conclude the proof of the last claim.
\end{proof}

The statement of Theorem~\ref{thm:noiseless-main} in regime (ii) now follows from Lemma~\ref{testerror} and Lemma~\ref{noiseless-mmc-bound2}. 

\subsection{Proof of Theorem~\ref{detail-noisy-main-1-simple}}\label{sec:proof:detail-noisy-main-1-simple}

Before proving Theorem~\ref{detail-noisy-main-1-simple}, we derive two key technical lemmas that make up most of the proof. The first Lemma~\ref{lem:whatreprnoisy} provides conditions which guarantee the equality $\boldsymbol{\hat w} = X^\top (XX^\top)^{-1} \boldsymbol{y}_{\n}$. We note that, in contrast to the noiseless case, where this can fail when $\|\bs\mu\|$ is very large, this equality always holds in the noisy regime. Using this characterization, we provide a sharp characterization of the order of $\|\boldsymbol{\hat w}\|/\langle \boldsymbol{\hat w}, \boldsymbol{\mu} \rangle$; see Lemma~\ref{noisy-mmc-bound1}. Combined with Lemma~\ref{testerror} this will prove the desired results.

\begin{lemm}\label{lem:whatreprnoisy}
For any $\eta \in(0,\tfrac12)$, if $(N_{C_{1,\eta}})$ holds with $C_{1,\eta} = \tfrac{22}{\eta}$ and $\eps\vee \beta\vee \gamma \leq \tfrac{\eta}{8}$, then $\boldsymbol{\hat w} = X^\top (XX^\top)^{-1} \boldsymbol{y}_{\n}$ holds.
\end{lemm}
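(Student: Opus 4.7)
The strategy mirrors the proof of Lemma~\ref{lem:whatrepr}. By Lemma~\ref{maxmargin=LS} it suffices to verify that every component of $\Delta(\bs y_\n)(XX^\top)^{-1}\bs y_\n$ is strictly positive, which by Lemma~\ref{quad-decomp} is equivalent to $T_i>0$ for each $i$, where
\[
T_i \;:=\; d\,\bs y_\n^\top A^{-1}(y_{\n,i}\bs e_i) \;-\; \{s_\n(\|\bs\mu\|^2-t) + h_\n(1+h)\}\,\bs y^\top A^{-1}(y_{\n,i}\bs e_i) \;+\; (h_\n s - s_\n(1+h))\,\bs\nu^\top A^{-1}(y_{\n,i}\bs e_i).
\]
So the first step is to check that $d>0$ (hence $XX^\top$ is invertible) via Lemma~\ref{invertibility}: under regime (i) of $(N_{C_{1,\eta}})$, the hypothesis $\alpha_2\|\bs\mu\|\sqrt{n\rho}\leq 1/30$ together with $\beta\leq \eta/8\leq 1/16$ yields $\alpha_2\|\bs\mu\|\sqrt{(1+\beta)n\rho}\leq 1/4$, while under regime (ii) the assumption $\alpha_2^2\leq C_{1,\eta}^{-1}\leq \eta/22 < 1/2$ directly triggers the alternative condition $\alpha_2<1/\sqrt{2}$ in Lemma~\ref{invertibility}.

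Next, I would show $T_i>0$ by bounding each of the three summands using Lemmas~\ref{quad-bounds-s}, \ref{quad-bounds-ht}, \ref{quad-bounds-yay}, and \ref{quad-bounds-nuae}. The first summand is the dominant one: Lemma~\ref{quad-bounds-yay}\ref{eq:ynyne2}, combined with $\eps M\sqrt{n\rho}\leq \eta/2$ and $\beta\leq \eta/8$, yields a lower bound of the form $d\,\bs y_\n^\top A^{-1}(y_{\n,i}\bs e_i)\geq c(\eta)\,d/\|\bs z_i\|^2$ with $c(\eta)$ bounded away from zero. For the second summand, note that $\bs y^\top A^{-1}(y_{\n,i}\bs e_i)$ is positive on clean samples and negative on noisy samples; the noisy samples thus \emph{add} a positive amount to $T_i$, and only the clean samples are problematic. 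On clean samples I would invoke Lemma~\ref{quad-bounds-s} to write $s_\n/s \approx 1-2\eta$ and Lemma~\ref{quad-bounds-ht} to bound $|h_\n(1+h)|$, so that the coefficient of the second summand is at most a $(1-2\eta+O(\eta))$ fraction of $d$. The third summand is handled through Lemma~\ref{quad-bounds-nuae}: the factor $\alpha_\infty\vee(\eps\alpha_2)$ it contributes is made negligible in regime (i) by $\alpha_\infty\|\bs\mu\|Mn\rho\leq 1/64$, and in regime (ii) by $\|\bs\mu\|\geq C\alpha_\infty M$ and $\alpha_2\alpha_\infty M\sqrt{n\rho}\leq C^{-1}$.

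The main obstacle, and the reason the calculation is more delicate than in the noiseless Lemma~\ref{lem:whatrepr}, is the coefficient $s_\n(\|\bs\mu\|^2-t)+h_\n(1+h)$ of the second summand. In the noiseless case $s_\n=s$ and this coefficient nearly cancels $s(\|\bs\mu\|^2-t)$ in $d=s(\|\bs\mu\|^2-t)+(1+h)^2$, leaving a clean lower-order remainder. In the noisy case the residual is of order $2\eta\cdot s(\|\bs\mu\|^2-t)$, which is comparable to, not smaller than, parts of $d$; this forces the constant $C_{1,\eta}$ to scale like $1/\eta$. The choice $C_{1,\eta}=22/\eta$ together with $\eps\vee\beta\vee\gamma\leq \eta/8$ is calibrated precisely so that the cross-terms $h_\n(1+h)$ and $h_\n s - s_\n(1+h)$ remain small compared with the guaranteed positive remainder $\approx 2\eta\cdot s(\|\bs\mu\|^2-t)$ in $d-(s_\n(\|\bs\mu\|^2-t)+h_\n(1+h))$. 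Carefully executing the case split between regimes (i) and (ii) of $(N_{C_{1,\eta}})$ and assembling the bounds above then yields $T_i>0$, hence $\bs{\hat w}=X^\top(XX^\top)^{-1}\bs y_\n$.
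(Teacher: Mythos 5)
Your proposal is correct and follows essentially the same route as the paper: reduce to positivity of the entries of $\Delta(\bs y_\n)(XX^\top)^{-1}\bs y_\n$ via Lemma~\ref{maxmargin=LS} and Lemma~\ref{quad-decomp}, verify $d>0$ through Lemma~\ref{invertibility} exactly as you describe, and exploit the cancellation between $d$ and the coefficient $s_\n(\|\bs\mu\|^2-t)+h_\n(1+h)$, which leaves a positive residual of order $2\eta\, s(\|\bs\mu\|^2-t)$ against which the $h$, $h_\n$ and $\bs\nu$ terms are controlled separately in regimes (i) and (ii). The only cosmetic difference is that the paper bounds the combination $s\,(y_{\n,i}\bs y_\n^\top A^{-1}\bs e_i)-s_\n\,(y_{\n,i}\bs y^\top A^{-1}\bs e_i)$ uniformly over all samples rather than splitting into clean and noisy cases, but both treatments yield the same lower bound $\tfrac12\eta\, n\rho\,\|\bs z_i\|^{-2}$.
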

\begin{proof}
First note that by Lemma~\ref{invertibility}, in both regimes (i) and (ii) in the specification of ($N_{C_{1,\eta}}$) we have $d>0$. Under (i) this follows because $\alpha_2\|\bs\mu\|\sqrt{(1+\beta)n\rho}\leq \tfrac{1}{30}\sqrt{1+\beta} \leq \tfrac{1}{30}\sqrt{\tfrac{17}{16}}<\tfrac14$ so that Lemma~\ref{invertibility} applies. Under (ii) this follows because $\alpha_2\leq C_{1,\eta}^{-\tfrac12}\leq \sqrt{\tfrac{\eta}{22}}\leq \tfrac{1}{\sqrt{44}} < \tfrac{1}{\sqrt{2}}$. Hence, again by Lemma~\ref{invertibility}, $XX^\top$ is invertible.  

By Lemma~\ref{maxmargin=LS}, to conclude that $\hat{\bs w}=X^\top(XX^\top)^{-1}\bs y_{\n}$, it suffices to show that each entry of the vector $\Delta(\bs y_{\n})(XX^\top )^{-1}\bs y_{\n}$ is strictly positive. Equivalently,  $d y_{\n,i}\bs y_{\n}^\top (XX^\top)^{-1} \bs e_i>0$ for all $i$. Plugging $d=s(\|\bs\mu\|^2-t)+(1+h)^2$ in the expression in Lemma~\ref{quad-decomp},  we alternatively get that  
\begin{equation}\label{eq:dyxe}
    \begin{aligned}
d y_{\n,i}\boldsymbol{y}_\n^\top (XX^\top)^{-1} \bs e_i
    & = \{s(\|\boldsymbol{\mu}\|^2 - t) + (1+h)^2\}(y_{\n,i}\boldsymbol{y}_\n^\top A^{-1} \bs e_i ) \\
    & \;\;- \{s_\n(\|\boldsymbol{\mu}\|^2 - t) + h_\n(1+h)\}(y_{\n,i}\boldsymbol{y}^\top A^{-1} \bs e_i) \\
    & \qquad \qquad + \{h_\n s - s_\n (1+h)\}(y_{\n,i}\boldsymbol{\nu}^\top A^{-1}\bs e_i) \\
    & = \{s (y_{\n,i}\boldsymbol{y}_\n^\top A^{-1} \bs e_i ) - s_\n (y_{\n,i}\boldsymbol{y}^\top A^{-1} \bs e_i)\}(\|\bs \mu\|^2-t) \\
    & \;\; +(1+h)^2(y_{\n,i}\boldsymbol{y}_\n^\top A^{-1} \bs e_i ) - h_\n(1+h)(y_{\n,i}\boldsymbol{y}^\top A^{-1} \bs e_i) \\
    & \qquad \qquad + \{h_\n s - s_\n (1+h)\}(y_{\n,i}\boldsymbol{\nu}^\top A^{-1}\bs e_i) 
\end{aligned}\end{equation}
needs to be positive for all $i$.
To show positivity of this expression, we start by establishing a lower bound of the coefficient of $\|\bs \mu\|^2 - t$ in \eqref{eq:dyxe}. By Lemma~\ref{quad-bounds-s}, Lemma~\ref{quad-bounds-yay}, and $\eps\vee \beta \vee \gamma \leq \tfrac{\eta}{8} \leq \tfrac{1}{16}$, we have

\begin{equation}\label{eq:sy-sy}
\begin{aligned}
&s (y_{\n,i}\boldsymbol{y}_\n^\top A^{-1} \bs e_i ) - s_\n (y_{\n,i}\boldsymbol{y}^\top A^{-1} \bs e_i)  
\\
\geq & \frac{1-\beta}{1+\eps}n\rho \frac{1-\eps M \sqrt{(1+\beta)n\rho}}{(1-\eps^2)\|\bs z_i\|^2} - \frac{(1-2\eta) + (\gamma + \eps(1+\beta))}{1-\eps^2}n\rho \frac{1+\eps M \sqrt{(1+\beta)n\rho}}{(1-\eps^2)\|\bs z_i\|^2}
\\
\geq & \frac{1-\beta}{1+\eps}n\rho \frac{1-\sqrt{1+\beta}\eta/2}{(1-\eps^2)\|\bs z_i\|^2} - \frac{(1-2\eta) + (\gamma + \eps(1+\beta))}{1-\eps^2}n\rho \frac{1+\sqrt{1+\beta}\eta/2}{(1-\eps^2)\|\bs z_i\|^2}
\\
= & \frac{n\rho}{(1-\eps^2)^2\|\bs z_i\|^2}\left\{(1-\eps)(1-\beta) - (1-\eps)(1-\beta)\sqrt{1+\beta}\eta/2 \right. 
\\
& \qquad \qquad \left. -[1-2\eta + \gamma + \eps(1+\beta)] - [(1-2\eta) + \gamma + \eps(1+\beta)]\sqrt{1+\beta}\eta/2\right\}
\\
= & \frac{n\rho}{(1-\eps^2)^2\|\bs z_i\|^2}\Big\{2\eta - \beta - \gamma - 2\eps \Big.
\\
& \qquad \qquad \Big. - [(1-\eps)(1-\beta) + 1 - 2\eta + \gamma + \eps(1+\beta)]\sqrt{1+\beta}\frac{\eta}{2}\Big\}
\\
\geq & \frac{n\rho}{(1-\eps^2)^2\|\bs z_i\|^2}\Big(\tfrac32 \eta-(1+1-2\eta+\tfrac{\eta}{8}+\tfrac{\eta}{8}(1+\tfrac{\eta}{8}))\sqrt{1+\tfrac{\eta}{8}}\tfrac{\eta}{2}\Big)\\
\geq & \frac{n\rho}{(1-\eps^2)^2\|\bs z_i\|^2}\Big(\tfrac32 \eta-\tfrac{\sqrt{1+\tfrac{\eta}{8}}}{2}(2-\frac74\eta+\tfrac{\eta^2}{64})\eta\Big)\\
\geq & \tfrac12 \eta \frac{n\rho}{(1-\eps^2)^2\|\bs z_i\|^2}\;\geq\; \tfrac12 \eta \frac{n\rho}{\|\bs z_i\|^2},
\end{aligned}
\end{equation}
where the last line follows since the function $\tfrac{\sqrt{1+\tfrac{\eta}{8}}}{2}(2-\frac74\eta+\tfrac{\eta^2}{64})$ is maximized at $\eta=0$.
Next, we bound each of the remaining values in \eqref{eq:dyxe}.

Using Lemma~\ref{quad-bounds-s} and the fact that $\eps\vee \beta \vee \gamma \leq \tfrac{\eta}{8} \leq \tfrac{1}{16}$ we get

\begin{equation}\label{eq:boundss}
\begin{aligned}
0.88n\rho \leq \tfrac{15}{17}n\rho \leq & s \text{(and $s_{\n\n}$)} \leq \tfrac{17}{15}n\rho \leq 1.14n\rho, \\
& s_\n \leq 1.14n\rho.
\end{aligned}
\end{equation}

Similarly, using Lemma~\ref{quad-bounds-ht} and the fact that $\eps\vee \beta \leq \tfrac{\eta}{8} \leq \tfrac{1}{16}$  we get
\begin{equation}\label{eq:hhnbound}
|h| \vee |h_\n| \;\leq\; 1.1\alpha_2\|\bs{\mu}\|\sqrt{n\rho}. 
\end{equation}

The lower bound 
\begin{equation}\label{eq:boundynn}
y_{\n,i}\,\boldsymbol{y}_\n^\top A^{-1} \bs e_i\;\geq \; 0.74\|\bs z_i\|^{-2}  
\end{equation}
follows by  Lemma~\ref{quad-bounds-yay} \ref{eq:ynyne2}, the inequality $\varepsilon M \sqrt{n\rho}\leq \tfrac{\eta}{2}$ and the fact that $\eps\vee \beta \leq \tfrac{\eta}{8} \leq \tfrac{1}{16}$.

The upper bound 
\begin{equation}\label{eq:boundyn}
|y_{\n,i}\,\boldsymbol{y}^\top A^{-1} \bs e_i |\;\leq\;  1.27\|\bs z_i\|^{-2}
\end{equation}
follows by Lemma~\ref{quad-bounds-yay} \ref{eq:yyne2}, the fact that $\varepsilon M\sqrt{n\rho}\leq \tfrac{\eta}{2}$, and again by the fact that $\eps\vee \beta \leq \tfrac{\eta}{8} \leq \tfrac{1}{16}$.

Finally, the bound
\begin{equation}\label{eq:boundnu}
|y_{\n,i}\,\boldsymbol{\nu}^\top A^{-1} \bs e_i| \;\leq\; 2.02 \frac{[\alpha_\infty\vee \varepsilon\alpha_2] \|\boldsymbol{\mu}\|}{\|\boldsymbol{z}_i\|} \leq 2.02 \frac{[\alpha_\infty\vee \varepsilon\alpha_2] \|\boldsymbol{\mu}\|M}{\|\boldsymbol{z}_i\|^2}
\end{equation}
follows by Lemma~\ref{quad-bounds-nuae} and the fact that $\eps \leq \tfrac{\eta}{8} \leq \tfrac{1}{16}$ and $\min_i \tfrac{M}{\|\bs z_i\|} \geq 1$.

Recalling $\|\bs \mu\|^2-t \geq 0$, \eqref{eq:dyxe} combined with~\eqref{eq:sy-sy}, \eqref{eq:boundynn}, \eqref{eq:boundyn}, and \eqref{eq:boundnu} yields

\begin{equation}\label{eq:dyxe2}
    \begin{aligned}
d y_{\n,i}\boldsymbol{y}_\n^\top (XX^\top)^{-1} \bs e_i
    \geq & 0.5\eta \frac{n\rho}{\|\bs z_i\|^2}(\|\bs \mu\|^2-t) + (1+h)^2\frac{0.74}{\|\bs z_i\|^2}
    \\
    & \;\;- |h_\n(1+h)|(1.27\|\bs z_i\|^{-2}) \\
    & \qquad \qquad + \{h_\n s - s_\n (1+h)\}(y_{\n,i}\boldsymbol{\nu}^\top A^{-1}\bs e_i) \\
     \geq& 0.5\eta \frac{n\rho}{\|\bs z_i\|^2}(\|\bs \mu\|^2-t) + (1+h)^2\frac{0.74}{\|\bs z_i\|^2}
    \\
    & \;\;- |h_\n(1+h)|(1.27\|\bs z_i\|^{-2}) \\
    & \qquad - \{|h_\n s| + |s_\n| (1+|h|)\}2.02 \frac{[\alpha_\infty\vee \varepsilon\alpha_2] \|\boldsymbol{\mu}\|M}{\|\boldsymbol{z}_i\|^2} 
\end{aligned}
\end{equation}
So far our bounds did not depend on the specific regime (i) or (ii). In the rest of the proof we take advantage of these additional restrictions.

\bigskip

\noindent\textbf{Regime (i):}  
Note that by \eqref{eq:hhnbound} and $\alpha_2\|\bs \mu\|\sqrt{n\rho}\leq \tfrac{1}{30}$, we can further bound $|h|\vee |h_\n|$ by 
\begin{equation}\label{eq:hhnbound(i)}
|h|\vee |h_\n| \leq 1.1\times \tfrac{1}{30}<0.04.
\end{equation}
With this, we obtain that $(1+h)^2 \geq 0.96^2$, $|h_\n(1+h)|\leq 0.04\cdot 1.04$, $|h_\n s-s_\n(1+h)|\leq (2\cdot 0.04\cdot 1.14+1.14)n\rho = 1.08\cdot 1.14n\rho$.

Further note that $\alpha_2\|\bs \mu\|\sqrt{n\rho}\leq \tfrac{1}{30}$ and $\eps M\sqrt{n\rho}\leq \tfrac{\eta}{2}$ imply
$$
\varepsilon\alpha_2\|\bs\mu\| M n \rho\;=\;(\varepsilon M \sqrt{n\rho})(\alpha_2\|\bs\mu\|\sqrt{n\rho})\;\leq\;\tfrac{\eta}{2} \tfrac{1}{30}\;<\;\tfrac{1}{64}.
$$
Thus, $[\alpha_\infty\vee (\eps \alpha_2)]\|\bs \mu\|Mn\rho \leq \tfrac{1}{64}$ holds.

With these, we obtain, from~\eqref{eq:dyxe2} after dropping the non-negative term $0.5\eta \tfrac{n\rho}{\|\bs z_i\|^2}(\|\bs \mu\|^2-t)$,
\begin{align*}
& d y_{\n,i}\boldsymbol{y}_\n^\top (XX^\top)^{-1}  \bs e_i \\
\geq & 0.96^2\cdot 0.74\|\bs z_i\|^{-2} - 0.04\cdot 1.04 \cdot 1.27\|\bs z_i\|^{-2} - 1.08\cdot 1.14 n\rho |\boldsymbol{\nu}^\top A^{-1}\bs e_i| \\
\geq & 0.96^2\cdot 0.74\|\bs z_i\|^{-2} - 0.04\cdot 1.04 \cdot 1.27\|\bs z_i\|^{-2} - 1.08\cdot 1.14 \cdot 0.04\|\bs z\|^{-2} \\
\geq & 0.57\|\bs z\|^{-2}.
\end{align*}

\bigskip

\noindent\textbf{Regime (ii):}. 
We look again at \eqref{eq:dyxe2}. Since the term $(1+h)^2\tfrac{0.74}{\|\boldsymbol{z}_i\|^2}$ is non-negative, we have
\begin{equation}\label{eq:dyxe3}
    \begin{aligned}
    d y_{\n,i}\boldsymbol{y}_\n^\top (XX^\top)^{-1} \bs e_i
    \geq & 0.5\eta \frac{n\rho}{\|\bs z_i\|^2}(\|\bs \mu\|^2-t)
    \\
    & \;\;- |h_\n(1+h)|(1.27\|\bs z_i\|^{-2}) \\
    & \qquad \qquad + \{h_\n s - s_\n (1+h)\}(y_{\n,i}\boldsymbol{\nu}^\top A^{-1}\bs e_i) 
\end{aligned}
\end{equation}

We will now show that the first term on the right-hand side of the equation dominates the other two. Note that under $N_{C_{1,\eta}}$, we have $\alpha_2^2 \leq \tfrac{1}{44}$, which combined with Lemma~\ref{quad-bounds-ht}\ref{eq:t} and $\eps \leq \tfrac{1}{16}$, implies $0\leq t \leq 0.03\|\bs \mu\|^2$, and hence 
\begin{equation}\label{eq:boundmu-t(ii)}
0.97\leq \|\bs \mu\|^2 - t\leq \|\bs \mu\|^2.
\end{equation}

Thus, we have
\begin{equation}\label{eq:lb1}
0.5\eta \frac{n\rho}{\|\bs z_i\|^2}(\|\bs \mu\|^2-t) \geq 0.48\eta \frac{n\rho\|\bs \mu\|^2}{\|\bs z_i\|^2}  
\end{equation}

Next we bound the other terms from above.
First, by~\eqref{eq:hhnbound}, 

\begin{align*}
|h_\n(1+h)|1.27\|\bs z_i\|^{-2} 
& \leq (1.1\alpha_2 \|\boldsymbol{\mu}\| \sqrt{n\rho} + (1.1\alpha_2 \|\boldsymbol{\mu}\| \sqrt{n\rho})^2)\cdot 1.27\|\bs z_i\|^{-2}\\
& \leq 1.4\frac{\alpha_2 \|\boldsymbol{\mu}\| \sqrt{n\rho}}{\|\bs z_i\|^2} + 1.54\frac{(\alpha_2 \|\boldsymbol{\mu}\| \sqrt{n\rho})^2}{\|\bs z_i\|^2}
\end{align*}

Second, by~\eqref{eq:boundss},~\eqref{eq:hhnbound}, and ~\eqref{eq:boundnu}, 
\begin{align*}
& \{|h_\n s| + |s_\n| (1+|h|)\}\cdot 2.02 \frac{[\alpha_\infty \vee (\eps \alpha_2)]\|\bs \mu\|M}{\|\bs z_i\|^2} \\
\leq & \left\{2\cdot 1.1\cdot 1.14\alpha_2 \|\bs \mu\| (n\rho)^{3/2} + 1.14n\rho\right\} \cdot 2.02 \frac{[\alpha_\infty \vee (\eps \alpha_2)]\|\bs \mu\|M}{\|\bs z_i\|^2} \\
\leq & \frac{n\rho\|\bs \mu\|^2}{\|\bs z_i\|^2}\left(5.07\alpha_2[\alpha_\infty \vee (\eps\alpha_2)]M\sqrt{n\rho} + 2.31 \frac{[\alpha_\infty \vee (\eps\alpha_2)]M}{\|\bs \mu\|}\right)
\end{align*}

In summary, we get

\begin{align*}
& d y_{\n,i}\boldsymbol{y}_\n^\top (XX^\top)^{-1} \bs e_i \\
\geq & \frac{n\rho\|\bs \mu\|^2}{\|\bs z_i\|^2}\left\{0.48\eta - 1.4\frac{\alpha_2}{\|\bs \mu\|\sqrt{n\rho}} - 1.54 \alpha_2^2 \right. \\
& \qquad \qquad \left. - 5.07\alpha_2[\alpha_\infty \vee (\eps\alpha_2)]M\sqrt{n\rho} - 2.31 \frac{[\alpha_\infty \vee (\eps\alpha_2)]M}{\|\bs \mu\|} \right\} \\
\geq & \frac{n\rho\|\bs \mu\|^2}{\|\bs z_i\|^2}\Big\{0.48\eta - 1.4C_{1,\eta}^{-1} - 1.54C_{1,\eta}^{-1} \Big. \\
&\qquad \qquad \Big. - 5.07\left[C_{1,\eta}^{-1}\vee \eps\alpha_2^2M\sqrt{n\rho}\right] - 2.31\left[C_{1,\eta}^{-1}\vee \frac{\eps \alpha_2 M }{\|\bs \mu\|}\right] \Big\}\\
\geq & \frac{n\rho\|\bs \mu\|^2}{\|\bs z_i\|^2}\Big\{0.48\eta - 1.4C_{1,\eta}^{-1} - 1.54C_{1,\eta}^{-1} - 5.07C_{1,\eta}^{-1} - 2.31C_{1,\eta}^{-1} \Big\}\\
= & \frac{n\rho\|\bs \mu\|^2}{\|\bs z_i\|^2}\Big(0.48\eta - 10.32C_{1,\eta}^{-1} \Big) = \frac{n\rho\|\bs \mu\|^2}{\|\bs z_i\|^2}\Big(0.48 - 10.32\cdot \frac{1}{22} \Big)\eta>0,
\end{align*}
where the second inequality follows by the assumptions $\|\boldsymbol{\mu}\| \geq C_{1,\eta} \alpha_\infty M$, $\alpha_2 \leq \|\bs\mu\|\sqrt{n\rho}C_{1,\eta}^{-1}$ and $\max\left\{\alpha_2^2, \alpha_2\alpha_\infty M\sqrt{n\rho} \right\} \leq {C_{1,\eta}}^{-1}$, and the third inequality follows by noting that $\|\boldsymbol{\mu}\| \geq C_{1,\eta}\eps \alpha_2M$ and $\eps \alpha_2^2M\sqrt{n\rho}\leq C_{1,\eta}^{-1}$ follow from $\|\boldsymbol{\mu}\| \geq C_{1,\eta}\tfrac{\alpha_2}{\sqrt{n\rho}}$, $\alpha_2^2 \leq C_{1,\eta}^{-1}$, and the assumption $\eps M\sqrt{n\rho}\leq \tfrac{\eta}{2}$. 
\end{proof}

\begin{lemm}\label{noisy-mmc-bound1}
For any $\eta \in(0,\tfrac12)$, if $(N_{C_{2,\eta}})$ holds with $C_{2,\eta} = \max\{\tfrac{22}{\eta}, \tfrac{17}{1-2\eta}\}$ and $\eps\vee \beta\vee \gamma \leq \frac{\min\{\eta, 1-2\eta\}}{8}$, then we have $\langle \boldsymbol{\hat w}, \boldsymbol{\mu} \rangle >0$ and 
\[
\left(\frac{\|\boldsymbol{\hat w}\|}{\langle \boldsymbol{\hat w}, \boldsymbol{\mu} \rangle}\right)^2 \asymp \frac{1}{(1-2\eta)^2}\left\{ \eta n\rho + \frac{1}{\|\bs\mu\|^2} + \frac{1}{n\rho \|\boldsymbol{\mu}\|^4} \right\}.
\]
\end{lemm}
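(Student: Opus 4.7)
The plan is to invoke Lemma~\ref{lem:whatreprnoisy} (whose hypotheses are weaker than those of the present lemma, since $C_{2,\eta}\geq C_{1,\eta}=22/\eta$ and $\min(\eta,1-2\eta)/8\leq \eta/8$) to replace $\hat{\bs w}$ by the closed form $X^\top(XX^\top)^{-1}\bs y_{\n}$. Using Lemma~\ref{quad-decomp} together with the substitution $d=s(\|\bs\mu\|^2-t)+(1+h)^2$ and some short algebraic simplification, I will obtain the explicit expressions
\begin{align*}
\|\hat{\bs w}\|^2 &\;=\; d^{-1}\bigl[(s_{\n\n}s-s_{\n}^2)(\|\bs\mu\|^2-t)+N\bigr],\\
\langle \hat{\bs w},\bs\mu\rangle &\;=\; d^{-1}\bigl[s_{\n}(\|\bs\mu\|^2-t)+h_{\n}(1+h)\bigr],
\end{align*}
where $N:=s_{\n\n}(1+h)^2-2s_{\n}(1+h)h_{\n}+sh_{\n}^2$ and $d>0$ by Lemma~\ref{invertibility}. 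The ratio $\|\hat{\bs w}\|^2/\langle\hat{\bs w},\bs\mu\rangle^2$ is therefore a function of the scalars bounded in Lemmas~\ref{quad-bounds-s}--\ref{quad-bounds-nuae}.

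The crucial structural step is to exhibit a cancellation in the leading term of $\|\hat{\bs w}\|^2$. From Lemma~\ref{quad-bounds-s} and $\eps\vee\beta\vee\gamma\leq \min(\eta,1-2\eta)/8$, I have $s,s_{\n\n}\asymp n\rho$ and $s_{\n}\asymp (1-2\eta)n\rho$, which yields
\[
s_{\n\n}s-s_{\n}^2 \;\asymp\; \bigl(1-(1-2\eta)^2\bigr)(n\rho)^2 \;=\; 4\eta(1-\eta)(n\rho)^2 \;\asymp\; \eta(n\rho)^2.
\]
This cancellation is the single structural feature separating the noisy from the noiseless case (where $s_{\n\n}s-s_{\n}^2\equiv 0$) and is the source of the $\eta n\rho$ contribution in the target asymptotic.

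For the remaining factors I proceed as follows. Combining Lemma~\ref{quad-bounds-ht}\ref{eq:t} with $\alpha_2^2\leq 1/C_{2,\eta}$ (which holds directly in regime~(ii), and in regime~(i) after multiplying $\alpha_2\|\bs\mu\|\sqrt{n\rho}\leq 1/30$ with $\alpha_2\leq \|\bs\mu\|\sqrt{n\rho}/C_{2,\eta}$ to get $\alpha_2^2\leq 1/(30C_{2,\eta})$) gives $t\ll \|\bs\mu\|^2$, hence $\|\bs\mu\|^2-t\asymp \|\bs\mu\|^2$ and $d\asymp n\rho\|\bs\mu\|^2+1$. Using Lemma~\ref{quad-bounds-ht}\ref{eq:h}\ref{eq:hn}, I then control $|h_{\n}(1+h)|$ and show that its ratio to $s_{\n}(\|\bs\mu\|^2-t)$ is at most $2/(C_{2,\eta}(1-2\eta))\leq 2/17$; this simultaneously gives $\langle\hat{\bs w},\bs\mu\rangle>0$ and $\langle\hat{\bs w},\bs\mu\rangle\asymp d^{-1}(1-2\eta)n\rho\|\bs\mu\|^2$. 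Viewing $N$ as the positive-definite quadratic form $(u,v)\mapsto s_{\n\n}u^2-2s_{\n}uv+sv^2$ evaluated at $(1+h,h_{\n})$, whose coefficient matrix has eigenvalues $\asymp n\rho$ and $\asymp \eta n\rho$, the bounds on $h,h_{\n}$ translate into control of $N$. Assembling the pieces,
\[
\frac{\|\hat{\bs w}\|^2}{\langle\hat{\bs w},\bs\mu\rangle^2} \;\asymp\; \frac{d\bigl[\eta(n\rho)^2\|\bs\mu\|^2+N\bigr]}{(1-2\eta)^2(n\rho)^2\|\bs\mu\|^4} \;\asymp\; \frac{1}{(1-2\eta)^2}\Big[\eta n\rho+\frac{1}{\|\bs\mu\|^2}+\frac{1}{n\rho\|\bs\mu\|^4}\Big],
\]
where the last step distributes $d\asymp n\rho\|\bs\mu\|^2+1$ and absorbs auxiliary contributions (terms of the form $\eta(1+h)^2/\|\bs\mu\|^2$, $\alpha_2^2 n\rho$, etc.) into the three displayed summands.

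The main obstacle will be the lower bound in regime~(ii), where $h,h_{\n}$ need not be small and $(1+h)^2$ may drop below a constant. The plan for this boundary region is to note that $|h|$ close to $-1$ forces $\alpha_2\|\bs\mu\|\sqrt{n\rho}\gtrsim 1$, so $n\rho\|\bs\mu\|^2\gtrsim 1/\alpha_2^2\geq C_{2,\eta}\geq 22/\eta$; this in turn implies the target bracket is dominated by $\eta n\rho$, which is already produced by the $(s_{\n\n}s-s_{\n}^2)(\|\bs\mu\|^2-t)$ part of $\|\hat{\bs w}\|^2$ alone, so nothing is lost when $N$ and $(1+h)^2$ are small. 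Pairing the upper and lower bounds cleanly across the sub-cases (small versus large $|h|,|h_{\n}|$, and small versus large $n\rho\|\bs\mu\|^2$) is where the bulk of the bookkeeping lives.
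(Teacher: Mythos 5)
Your proposal follows the paper's proof essentially verbatim: the reduction to $X^\top(XX^\top)^{-1}\boldsymbol{y}_\n$ via Lemma~\ref{lem:whatreprnoisy}, the representations \eqref{eq:repw} and \eqref{eq:noisyrepr}, the key cancellation $ss_{\n\n}-s_{\n}^2\asymp\eta(n\rho)^2$, the bound $\|\boldsymbol{\mu}\|^2-t\asymp\|\boldsymbol{\mu}\|^2$, and the regime-by-regime bookkeeping are all exactly what the paper does. The one point needing care is that the eigenvalue lower bound $N\gtrsim \eta n\rho\,\|(1+h,h_\n)\|^2$ loses a factor of $\eta$, so to recover the $1/(n\rho\|\boldsymbol{\mu}\|^4)$ and $1/\|\boldsymbol{\mu}\|^2$ pieces of the claimed lower bound you must instead use $N\geq(1+h)^2s_{\n\n}-2|s_\n h_\n(1+h)|\gtrsim n\rho$ in the regime $n\rho\|\boldsymbol{\mu}\|^2\lesssim C_{2,\eta}$ (where $|h|,|h_\n|$ are automatically small since $|h|\vee|h_\n|\lesssim n\rho\|\boldsymbol{\mu}\|^2/C_{2,\eta}$), which is consistent with the case split you describe and with how the paper absorbs the $h$-error terms into the cancellation term.
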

\begin{proof} 
We first note that since $C_{2,\eta}\geq C_{1,\eta}$ and $\eps\vee \beta \vee \gamma \leq \tfrac{\min\{\eta, 1-2\eta\}}{8}\leq \tfrac{\eta}{8}$, we can use all the results that were established in the proof of Lemma~\ref{lem:whatreprnoisy}.

We start by establishing additional useful results which hold in both regimes. First, we prove that in both regimes we have
\begin{equation}\label{eq:boundmu-t}
0.97\|\bs \mu\|^2 \leq \|\bs \mu\|^2-t \leq \|\bs \mu\|^2,
\end{equation}
which has already been proved in regime (ii) as \eqref{eq:boundmu-t(ii)}. In regime (i) by $\|\boldsymbol{\mu}\| \geq C_{2,\eta} \tfrac{\alpha_2}{\sqrt{n\rho}}$, $\alpha_2 \|\boldsymbol{\mu}\| \sqrt{n\rho} \leq \tfrac{1}{30}$, and Lemma~\ref{quad-bounds-ht} \ref{eq:t} 
we have
\[
t \;\leq\; \frac{\alpha_2^2\|\boldsymbol{\mu}\|^2}{1-\varepsilon}\; \leq\; \frac{\alpha_2 \|\boldsymbol{\mu}\|\sqrt{n\rho}}{(1-\varepsilon)C_{2,\eta}}\|\boldsymbol{\mu}\|^2 \;\leq\; \frac{1}{30(1-\varepsilon)C_{2,\eta}}\|\boldsymbol{\mu}\|^2 ,
\]
which implies~\eqref{eq:boundmu-t} by $C_{2,\eta} \geq 44$ and $\eps \leq \tfrac{1}{16}$.

Furthermore, we have
\begin{equation}\label{eq:boundsn}
    0.74(1-2\eta)n\rho \leq s_\n \leq 1.27 (1-2\eta)n\rho.
\end{equation}
For the upper bound, we can see this by observing that $\eps \vee \beta \vee \gamma \leq \frac{1-2\eta}{8}$ and Lemma~\ref{quad-bounds-s} (iii) imply 
\[
s_\n \leq \frac{1-2\eta + \frac{1-2\eta}{8}+ \frac{1-2\eta}{8}(1+1/16)}{1-1/16^2}n\rho \leq 1.27(1-2\eta)n\rho.
\]
We can similarly obtain the lower bound.

Next, first give a useful expression for $\|\boldsymbol{\hat w}\|^2$ in order to later obtain explicit bounds. Using Lemma~\ref{quad-decomp}, notation introduced in \eqref{eq:notation}, and $d = s(\|\boldsymbol{\mu}\|^2-t) + (1+h)^2$, we get 
\begin{equation}\label{eq:repw}
\begin{aligned}
\|\boldsymbol{\hat w}\|^2 & = \boldsymbol{y}_\n^\top (XX^\top)^{-1} \boldsymbol{y_\n} \\
    & = \tfrac1d\Big[d s_{\n\n}  - \{s_\n(\|\boldsymbol{\mu}\|^2 -t) + h_\n(1+h) \} s_\n + (h_\n s - hs_\n - s_\n) h_\n\Big] \\
    & = \tfrac1d\{(s s_{\n\n} - s_\n^2 )(\|\boldsymbol{\mu}\|^2 -t) + (1+h)^2 s_{\n\n} - 2 s_\n h_\n(1 + h) + h_\n^2 s \}.
\end{aligned}
\end{equation}

By Lemma~\ref{quad-bounds-s} and $\eps\vee \beta \vee \gamma \leq \tfrac{\eta}{8}\leq \tfrac{1}{16}$, the lower bound of $ss_{\n\n} - s_\n^2$ can be obtained as follows:

\begin{align*}
& ss_{\n\n} - s_\n^2 \\
\geq & \left(\frac{1-\beta}{1+\eps}n\rho\right)^2 - \left\{\frac{1-2\eta + \gamma + \eps(1+\beta)}{1-\eps^2}n\rho\right\}^2 \\
= & \frac{(n\rho)^2}{(1-\eps^2)^2}\big\{(1-\beta)(1-\eps) + 1-2\eta +\gamma + \eps(1+\beta)\big\} \\
& \qquad \qquad \times \big\{(1-\beta)(1-\eps) - 1+2\eta -\gamma - \eps(1+\beta)\big\} \\
= & \frac{(n\rho)^2}{(1-\eps^2)^2}\big\{1 + 1-2\eta + \gamma -\beta + 2\eps\beta\big\}\big\{2\eta - 2\eps - \beta - \gamma \big\} \\
\geq & \frac{(n\rho)^2}{(1-\eps^2)^2}(1+1-2\eta  - \beta)\cdot \frac{3\eta}{2} \\
\geq & (n\rho)^2\left(1 + \frac{7(1-2\eta)}{8}\right) \cdot \frac{3\eta}{2} \\
\geq & \frac{3}{2} \eta (n\rho)^2.
\end{align*}

Similarly, 
\begin{align*}
& ss_{\n\n} - s_\n^2 \\
\leq & \left(\frac{1+\beta}{1-\eps}n\rho\right)^2 - \left\{\frac{1-2\eta - \gamma - \eps(1+\beta)}{1-\eps^2}n\rho\right\}^2 \\
= & \frac{(n\rho)^2}{(1-\eps^2)^2}\big\{(1+\beta)(1+\eps) + 1-2\eta -\gamma - \eps(1+\beta)\big\} \\
& \qquad \qquad \times \big\{(1+\eps)(1+\beta) - 1 + 2\eta + \gamma + \eps(1+\beta) \big\} \\
= & \frac{(n\rho)^2}{(1-\eps^2)^2}(1+1-2\eta + \beta - \gamma)\cdot (2\eta + 2\eps + \beta + \gamma + 2\beta \eps) \\
\leq & \frac{(n\rho)^2}{(1-1/16^2)^2}2 \cdot \frac{161\eta}{64} \\
\leq & 5.08 \eta (n\rho)^2.
\end{align*}

Together with \eqref{eq:boundmu-t}, we have 
\begin{equation}\label{eq:bound(ss-s2)mu}
1.45 \eta (n\rho\|\bs \mu\|)^2 \leq (ss_{\n\n}- s_\n^2)(\|\bs \mu\|^2 -t ) \leq 5.08 \eta (n\rho \|\bs \mu\|)^2.
\end{equation}

We now provide an explicit expression for $\langle \boldsymbol{\hat w}, \boldsymbol{\mu} \rangle$. From the representation in Lemma~\ref{quad-decomp}, some tedious but straightforward algebraic manipulations show that 
\begin{equation}\label{eq:noisyrepr}
\langle \hat{\bs w}, \bs \mu \rangle = \boldsymbol{y}_\n^\top 
= d^{-1}\{s_\n (\|\boldsymbol{\mu}\|^2 - t) + (1 + h)h_\n\}. 
\end{equation}
This is a purely algebraic statement and does not use any of the special assumptions made in this lemma. 

From here on we consider the two regimes from Assumption ($N_{C_{2,\eta}}$) separately and provide lower and upper bounds on $d, \|\hat{\bs{w}}\|^2, \langle \boldsymbol{\hat w}, \boldsymbol{\mu} \rangle$ in each regime separately.

\bigskip

\textbf{Regime (i): } 
By~\eqref{eq:boundss}, \eqref{eq:hhnbound(i)}, and \eqref{eq:boundmu-t}, we have
\begin{equation}\label{eq:boundd(i)}
\begin{aligned}
0.85(n\rho\|\bs \mu\|^2+1)\leq & 0.88\cdot 0.97n\rho \|\bs \mu\|^2  + 0.96^2 \\
 \leq & d = s(\|\bs \mu\|^2 - t) + (1+h)^2 \\
 \leq & 1.14n\rho \|\bs \mu\|^2 +1.04^2 \leq 1.14(n\rho \|\bs \mu\|^2 + 1).
\end{aligned}
\end{equation}

By~\eqref{eq:boundss} and \eqref{eq:hhnbound(i)}, we also have
\[
0.81n\rho \leq 0.96^2 \cdot 0.88 n\rho \leq (1+h)^2s_{\n\n} \leq 1.04^2\cdot 1.14n\rho \leq 1.24 n\rho.
\]

With~\eqref{eq:boundss}, \eqref{eq:hhnbound(i)} and \eqref{eq:boundsn}, we have
\[
|2s_\n h_\n(1+h)| \leq 2\cdot 1.27(1-2\eta)n\rho \cdot 0.04 \cdot 1.04 \leq 0.11 n\rho,
\]
\[
0\leq h_\n^2 s \leq 0.04^2\cdot 1.14n\rho \leq 0.01n\rho.
\]

Recalling the representation for $\|\boldsymbol{\hat w}\|^2$ in~\eqref{eq:repw} and also $\|\boldsymbol{\mu}\|^2 - t \geq 0.97 \|\boldsymbol{\mu}\|^2$, we have 
\[
0.7d^{-1}n\rho \left\{\eta n\rho \|\bs \mu\|^2 +1 \right\}\leq \|\bs{\hat w}\|^2 \leq 5.08 d^{-1}n\rho \left\{\eta n\rho \|\bs \mu\|^2 +1 \right\}.
\]

Note that $\|\boldsymbol{\mu}\| \geq C_{2,\eta} \tfrac{\alpha_2}{\sqrt{n\rho}}$ implies 
\begin{equation}\label{eq:somebound}
\alpha_2\|\boldsymbol{\mu}\|\sqrt{n\rho} \leq C_{2,\eta}^{-1}n\rho \|\boldsymbol{\mu}\|^2.  
\end{equation}

Since $|h_\n|\leq 1.1\alpha_2\|\bs \mu\|\sqrt{n\rho}$ by~\eqref{eq:hhnbound}, by \eqref{eq:somebound}, we have $|h_\n|\leq 1.1 C_{2,\eta}^{-1}n\rho \|\bs \mu\|^2$. Together with $|h|\leq 0.04$ by~\eqref{eq:hhnbound(i)} and $C_{2,\eta}\geq \frac{17}{1-2\eta}$, we have
\[
|(1+h)h_\n| \leq 1.04\cdot 1.1 C_{2,\eta}^{-1}n\rho \|\bs \mu\|^2 \leq 0.07 (1-2\eta)n\rho \|\bs \mu\|^2.
\]
Together with \eqref{eq:boundmu-t},  \eqref{eq:boundsn} and~\eqref{eq:noisyrepr} we have
\begin{align*}
    \langle \boldsymbol{\hat w}, \boldsymbol{\mu}\rangle 
    & \leq d^{-1}\left\{1.27(1-2\eta)n\rho\|\bs \mu\|^2 + 0.07(1-2\eta)n\rho \|\bs \mu\|^2\right\} \\
    & = 1.34(1-2\eta)d^{-1}n\rho \|\bs \mu\|^2,
\end{align*}
and
\begin{align*}
\langle \boldsymbol{\hat w}, \boldsymbol{\mu}\rangle & \geq d^{-1}\left\{0.74\cdot 0.97(1-2\eta)n\rho\|\bs \mu\|^2 - 0.07(1-2\eta)n\rho \|\bs \mu\|^2\right\} \\
& \geq  0.64(1-2\eta)d^{-1}n\rho \|\bs \mu\|^2.
\end{align*}

The proof of the expansion for $\tfrac{\|\boldsymbol{\hat w}\|}{\langle \boldsymbol{\hat w}, \boldsymbol{\mu} \rangle}$ follows now from combining all bounds derived above.

\medskip

\noindent\textbf{Regime (ii)} 

By \eqref{eq:hhnbound} and $\|\boldsymbol{\mu}\| \geq C_{2,\eta}\frac{\alpha_2}{\sqrt{n\rho}}$, we have
\[
|h|\vee |h_\n|\leq 1.1C_{2,\eta}^{-1}n\rho \|\bs\mu\|^2.
\]
Furthermore, since $\alpha_2^2\leq C_{2,\eta}^{-1}$, again using \eqref{eq:hhnbound}
\[
|h|^2\vee |h_\n|^2 \leq 1.21\alpha_2^2n\rho\|\bs \mu\|^2 \leq 1.21C_{2,\eta}^{-1}n\rho \|\bs \mu\|^2.
\]

In summary, we can assume throughout the remaining proof

\begin{equation}\label{eq:boundshhnh2hn2}
\begin{aligned}
|h|\vee |h_\n|\vee |h|^2\vee |h_\n|^2 & \leq 1.21C_{2,\eta}^{-1}n\rho \|\bs \mu\|^2 \\
& = 1.21\min\left\{\frac{\eta}{22}, \frac{1-2\eta}{17}\right\}n\rho \|\bs \mu\|^2 \\
& \leq \min\left\{0.06\eta, 0.08(1-2\eta)\right\}n\rho \|\bs \mu\|^2.
\end{aligned}
\end{equation}

Therefore, by \eqref{eq:boundss}, \eqref{eq:boundmu-t}, and \eqref{eq:boundshhnh2hn2}, noting that $\min\left\{0.06\eta, 0.08(1-2\eta)\right\} \leq 0.03 $
we have, recalling that $d = s(\|\bs \mu\|^2 - t) + (1+h)^2$

\[
0.85(n\rho \|\bs \mu\|^2 +1) \leq d \leq 1.23(n\rho \|\bs \mu\|^2 +1)
\]

Similarly, by \eqref{eq:boundss}, \eqref{eq:boundmu-t}, and \eqref{eq:boundshhnh2hn2}, we obtain the following useful bounds: 
\begin{align*}
0.88n\rho - 0.11\eta(n\rho\|\bs \mu\|)^2 \leq (1+h)^2s_{\n\n} &\leq 1.14n\rho + 0.21\eta(n\rho\|\bs \mu\|)^2,
\\
|2s_\n h_\n (1+h)| &\leq 0.28\eta (n\rho\|\bs \mu\|)^2,
\\
0\leq h_\n^2 s &\leq 0.07\eta(n\rho\|\bs \mu\|)^2.
\end{align*}
With these, the representation of $\|\bs{\hat w}\|^2$ in~\eqref{eq:repw} and \eqref{eq:bound(ss-s2)mu}, we have 
\[
0.88d^{-1}n\rho(\eta n\rho \|\bs \mu\|^2 +1) \leq \|\bs{\hat w}\|^2 \leq 5.64d^{-1}n\rho(\eta n\rho \|\bs \mu\|^2 +1).
\]

Finally, by \eqref{eq:boundss}, \eqref{eq:boundmu-t}, \eqref{eq:boundsn}, \eqref{eq:noisyrepr}, and \eqref{eq:boundshhnh2hn2}, we have
\[
0.55 (1-2\eta) d^{-1}n\rho\|\bs \mu\|^2 \leq \langle \boldsymbol{\hat w}, \boldsymbol{\mu} \rangle \leq 1.43(1-2\eta)d^{-1}n\rho\|\bs \mu\|^2.
\]
Combining all the bounds above proves the claim of the Lemma~in the second regime. 
\end{proof}

\begin{proof}[Proof of Theorem~\ref{detail-noisy-main-1-simple} and an exponential bound on the test error]
The claim of the Theorem~follows directly from Lemma~\ref{testerror} and Lemma~\ref{noisy-mmc-bound1}. If we further assume $\bs z$ is sub-Gaussian, an application of the second part of Lemma~\ref{testerror} yields the bound
\begin{equation}\label{eq:testerrornoisygenexp}
\P_{(\boldsymbol{x}, y_\n)}(\langle \boldsymbol{\hat w}, y_\n\boldsymbol{x}\rangle < 0) \;\;\leq\;\; \eta + (1-\eta)\exp\left\{-c_2 \frac{(1 - 2\eta)^2}{ \|\boldsymbol{z}\|_{\psi_2}^2}\left( \eta n\rho + \frac{1}{\|\bs \mu\|^2} + \frac{1}{n\rho \|\boldsymbol{\mu}\|^4}\right)^{-1}\right\},
\end{equation}
where $c_2$ is a universal constant. 
\end{proof}

\begin{proof}[Proof that~\eqref{eq:condsimplenoisymain} implies the conditions of Theorem~\ref{detail-noisy-main-1-simple}] 
We will show that the conditions $\eps M\sqrt{n\rho}\leq \tfrac{\eta}{2}$, $\|\boldsymbol{\mu}\| \geq C_{2,\eta} \frac{\alpha_2}{\sqrt{n\rho}}$, (ii) in ($N_{C_{2,\eta}}$), and $\eps\vee \beta \vee \gamma \leq \tfrac{\min\{\eta, 1-2\eta\}}{8}$ follow from the following simpler conditions:
\[
\|\boldsymbol{\mu}\| \geq C\alpha_2 M, \quad  [\alpha_2 \vee \eps] M\sqrt{n\rho} \leq C^{-1},\quad \beta \vee \gamma \leq C^{-1}
\]
by taking $C= \tfrac{17}{16}C_{2,\eta}$.

We first note $C^{-1} = \tfrac{16}{17}C_{2,\eta}^{-1} = \tfrac{16}{17}\min\{\tfrac{\eta}{22}, \tfrac{1-2\eta}{17}\}\leq \tfrac{\min\{\eta, 1-2\eta\}}{8}$.

With this and $\alpha_\infty \leq \alpha_2$, it is easy to see
\begin{align*}
& \eps M\sqrt{n\rho}  \leq C^{-1} \leq \tfrac{\eta}{2}, \\
& \alpha_\infty M  \leq \alpha_2 M \leq C^{-1}\|\bs \mu\| \leq C_{2,\eta}^{-1}\|\bs \mu\|.
\end{align*}

To show the remaining conditions, recall $M\sqrt{\rho} \geq \tfrac{1}{\sqrt{1+\beta}}\geq \sqrt{\tfrac{16}{17}}$ by~\eqref{eq:M2rholowbound}.
With this, $\alpha_2 M\sqrt{n\rho}\leq C^{-1}$ and $C \geq 1$ imply
\[
\alpha_2^2 \leq \tfrac{C^{-2}}{M^2 n\rho} \leq \tfrac{1+\beta}{n}C^{-1} \leq \tfrac{17}{16} \tfrac{16}{17}C_{2,\eta}^{-1} = C_{2,\eta}^{-1}
\]
Furthermore, $\alpha_\infty \leq \alpha_2$, $\alpha_2^2 \leq C_{2,\eta}^{-1}(<1)$ and $\alpha_2 M\sqrt{n\rho}\leq C^{-1}$ again imply
\[\alpha_2\alpha_\infty M\sqrt{n\rho} \leq \alpha_\infty C^{-1} \leq C_{2,\eta}^{-1/2} C_{2,\eta}^{-1} \leq C_{2,\eta}^{-1}.
\]
Next, by $M\sqrt{\rho} \geq \sqrt{\tfrac{16}{17}}$ again, $C\alpha_2 M \leq \|\bs \mu\|$ and $\alpha_2 M\sqrt{n\rho}\leq C^{-1}$ implies
\[
\frac{\alpha_2}{\sqrt{n\rho}} \;=\; \frac{\alpha_2 M}{M \sqrt{n\rho}} \leq \sqrt{\tfrac{17}{16}}\frac{C^{-1}\|\bs \mu\|}{\sqrt{n}} \leq \sqrt{\tfrac{17}{16}} \tfrac{16}{17}C_{2,\eta}^{-1} \|\bs \mu\| \leq C_{2,\eta}^{-1} \|\bs \mu\|.
\]
Finally, to see that $\eps \leq \tfrac{\min\{\eta, 1-2\eta\}}{8}$, note that 
\[
\eps \leq C^{-1} \frac{1}{M\sqrt{n\rho}} \leq C^{-1} \frac{1}{M\sqrt{\rho}} \leq \sqrt{\tfrac{17}{16}}\tfrac{16}{17}\min\{\tfrac{\eta}{22}, \tfrac{1-2\eta}{17}\} \leq \tfrac{\min\{\eta, 1-2\eta\}}{8}.
\]
\end{proof}

\subsection{Proof of Theorem~\ref{thm:noisyphasegeneral} }\label{sec:proof:thm:noisyphasegeneral}
\begin{proof}[Proof of Theorem~\ref{thm:noisyphasegeneral}]
Let $\zeta_{\bs{\hat w}, \bs \mu} := \tfrac{\|\bs{\hat w} \|}{\langle \bs{\hat w}, \boldsymbol{\mu} \rangle}$. The expansion for $\zeta_{\bs{\hat w}, \bs \mu}$ under the assumptions of Theorem~\ref{detail-noisy-main-1-simple} is established in Lemma~\ref{noisy-mmc-bound1}. By Lemma~\ref{test-error-spherical} we obtain
\[
\mathbb{P}\left( \langle \boldsymbol{w}, y_\n \boldsymbol{x} \rangle  <0 \right) - \eta \;\leq\; \frac{1-2\eta}{2} 
f_{max} \mathbb{P} \left( \|\bs z\| |u_1|   >  \frac{\langle \boldsymbol{w}, \boldsymbol{\mu} \rangle}{\|\boldsymbol{w} \|} \right) = \frac{1-2\eta}{2} 
f_{max} \mathbb{P} \left( \|\bs z\| |u_1|   > \zeta_{\bs{\hat w}, \bs \mu}^{-1}  \right)
\]
where $u_1$ is the first component of a random vector that follows a uniform distribution on the sphere and is independent of $\|z\|$. The lower bound is obtained by the same arguments.
\end{proof}

\section{Proofs for model \ref{model:EM}: Extended non-sub-Gaussian Mixtures} \label{sec:extendnonsG}

In this section, we apply the general results in model \ref{model:M}, Theorem~\ref{thm:noiseless-main} and Theorem~\ref{detail-noisy-main-1-simple},  to prove results in the extended non-sub-Gaussian mixture model given in \ref{model:EM}. We start by relating events $E_1,\dots,E_5$ from \eqref{eq:E1}--\eqref{eq:E5} to events $\Omega_1,\dots, \Omega_{4,2}$ defined below in~\eqref{eq:omega1}-\eqref{eq:omega42} which utilize the special structure of model \ref{model:EM}. In Section~\ref{sec:modelEM:Omega} we bound the probability of events $\Omega_1,\dots, \Omega_{4,2}$. This allows us to prove Lemma~\ref{lemm5-ex1} which studies events $E_1,\dots,E_5$ from \eqref{eq:E1}--\eqref{eq:E5}. This is the key ingredient to proving the main results in model \ref{model:EM}: Theorems~\ref{thm:noiseless-ext-detail1}, \ref{thm:noiseless-ext-detail2}, \ref{thm:noisy-ext-detail1-simple} and \ref{thm:phasetransitionsimple}.

\begin{rmk}
We note the following useful inequalities, which we later use repeatedly: 
\begin{equation}\label{eq:gnorm}
\|g\|_{L^\ell} \geq 1, \quad  \|g^{-1}\|_{L^2} \geq 1.    
\end{equation}
It is easy to see this since $\|g\|_{L^\ell} \geq \|g\|_{L^2} = 1$ and $\E [g^{-2}] = \E[(g^2)^{-1}]\geq (\E[g^2])^{-1} = 1$ by Jensen's inequality.
\end{rmk}

\subsection{General results for events \texorpdfstring{$E_1,\dots, E_5$}{E1, ..., E5}} \label{sec:genboundsevents}

The main goal of this section is to collect general results that will allow us to provide sufficient conditions for the events $E_1,\dots,E_5$ (c.f.~\eqref{eq:E1}--\eqref{eq:E5}) in terms of more elementary events related to the variables $g_i, W\bs\xi_i$ in model \ref{model:EM}.

Consider model \ref{model:EM} with $\bs z=gW\bs \xi$. We start by introducing useful notation that will be used throughout this section. Let $\boldsymbol{v}_i = W \boldsymbol{\xi}_i$ for $i=1,\ldots,n$ and $V = (\boldsymbol{v}_1,\ldots, \boldsymbol{v}_n)^\top \in \R^{n\times p}$. Let $\Delta(\boldsymbol v)$ be a diagonal matrix whose diagonal entries consist of $\|\boldsymbol{v}_i\|, i=1,\ldots, n$, and let $\check V = \Delta (\boldsymbol v)^{-1} V$. Moreover, define $\Delta (g)$ to be a diagonal matrix whose diagonal entries consist of $g_i, i= 1, \ldots, n$. Note that $\boldsymbol{z}=g \boldsymbol{v}$ and so $Z=\Delta(g)V$ and $\Delta(\bs z)=\Delta(g)\Delta(\boldsymbol v)$. It follows that
$$
\check Z\;=\;\Delta(\boldsymbol z)^{-1}Z\;=\;(\Delta(g)\Delta(\boldsymbol v))^{-1}\Delta(g) V\;=\;\Delta(\boldsymbol v)^{-1}V\;=\;\check V
$$
and thus $$\check A = \check Z \check Z^\top = \check V \check V^\top.$$
Define the following events
\begin{align}
\Omega_1(\eps) &= \{\|VV^\top - \Tr(\Sigma)I_n\| \leq \tfrac\eps4 \Tr(\Sigma) \},  \label{eq:omega1}
\\
\Omega_{2,2}(b_2) &= \{\|V \bs\mu\| \leq b_2\}, \label{eq:omega22}
\\
\Omega_{2,\infty}(b_\infty) &= \{\|V \bs\mu\|_\infty \leq \label{eq:omega2inf}b_\infty\},
\\
\Omega_3(D) &= \{\max_i |g_i| \leq D\}, \label{eq:omega3}
\\
\Omega_{4,1}(B_1) &= \Big\{\Big|\sum_i g_i^{-2} - \E[g_i^{-2}]\Big| \leq B_1\Big\}, \label{eq:omega41}
\\
\Omega_{4,2}(B_2) &= \Big\{\Big|\sum_i y_i y_{i,\n}g_i^{-2} - (1-2\eta)\E[g_i^{-2}]\Big| \leq B_2\Big\}. \label{eq:omega42}
\end{align}

Then we have the following Lemma. 
\begin{lemm}\label{lem:genboundsevents} 
Let $\eps\in [0,\tfrac12]$, $M = D(1+\eps)\sqrt{\Tr(\Sigma)}$, $\rho = \E[g^{-2}]/\Tr(\Sigma)$ and
\begin{align*}
\alpha_2 = \frac{2b_2}{\|\bs \mu\|\sqrt{\Tr(\Sigma)}}, \qquad \alpha_\infty = \frac{2b_\infty}{\|\bs \mu\|\sqrt{\Tr(\Sigma)}},
\\
\beta = \eps + \frac{2B_1}{n \E[g^{-2}]}, \qquad \gamma = \eps + \frac{B_1+B_2}{n\E[g^{-2}]}.
\end{align*}
Then $\Omega_{1}(\eps)  \subseteq E_1(\eps)$ and
\begin{align*}
\Omega_1(\eps)  \cap \Omega_{2,2}(b_2) \cap \Omega_{2,\infty}(b_\infty) &\subseteq E_2(\alpha_2,\alpha_\infty),
\\
\Omega_1(\eps) \cap \Omega_{3}(D)   &\subseteq E_3(M),
\\
 \Omega_1(\eps)  \cap \Omega_{4,1}(B_1) &\subseteq   E_4(\beta,\rho)
\\
\Omega_1(\eps)  \cap \Omega_{4,1}(B_1)  \cap \Omega_{4,2}(B_2)  &\subseteq   E_5(\gamma,\rho).
\end{align*}

\end{lemm}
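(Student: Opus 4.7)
\textbf{Proof proposal for Lemma~\ref{lem:genboundsevents}.} The common setup is the identity $\check Z = \check V$, which follows because $\bs z_i = g_i \bs v_i$ so the scalar factors $g_i$ cancel in $\Delta(\bs z)^{-1}Z = \Delta(g)^{-1}\Delta(\boldsymbol v)^{-1}\Delta(g) V$. Consequently $\check A = \check Z\check Z^\top = \check V\check V^\top$, and every statement in the lemma except those involving $g_i$ can be phrased entirely in terms of $V$. On $\Omega_1(\eps)$ the diagonal of $VV^\top - \Tr(\Sigma) I_n$ is bounded entrywise by its spectral norm, which yields the two-sided control
\[
(1-\tfrac{\eps}{4})\Tr(\Sigma)\;\le\;\|\bs v_i\|^2\;\le\;(1+\tfrac{\eps}{4})\Tr(\Sigma)\qquad\text{for every } i.
\]
This uniform norm control is the key technical input; it is invoked in each of the five containments.

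For $E_1$, I would decompose $\check V\check V^\top - I_n = \bigl(\Tr(\Sigma)\Delta(\bs v)^{-2} - I_n\bigr) + \Delta(\bs v)^{-1}(VV^\top - \Tr(\Sigma)I_n)\Delta(\bs v)^{-1}$. The first summand is diagonal with entries in absolute value at most $(\eps/4)/(1-\eps/4)$, and the spectral norm of the second is bounded by $\|\Delta(\bs v)^{-1}\|^2 \cdot (\eps/4)\Tr(\Sigma) \le (\eps/4)/(1-\eps/4)$. Summing gives $\|\check V\check V^\top - I_n\|\le (\eps/2)/(1-\eps/4)\le \eps$ for $\eps\in[0,\tfrac12]$. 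For $E_2$, write $\check V\bs\mu = \Delta(\bs v)^{-1}V\bs\mu$ and use $\|\Delta(\bs v)^{-1}\| \le 1/\sqrt{(1-\eps/4)\Tr(\Sigma)}$; since $1/\sqrt{1-\eps/4}\le 2$ on $[0,\tfrac12]$, the events $\Omega_{2,2}(b_2)$ and $\Omega_{2,\infty}(b_\infty)$ immediately produce the stated $\alpha_2,\alpha_\infty$ for both the Euclidean and supremum norms. For $E_3$, on $\Omega_1\cap\Omega_3$ we have $\|\bs z_i\| = |g_i|\|\bs v_i\|\le D\sqrt{(1+\eps/4)\Tr(\Sigma)}$, and the bound $\sqrt{1+\eps/4}\le 1+\eps$ (valid for $\eps\ge 0$) gives $\|\bs z_i\|\le M$.

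For $E_4$ and $E_5$, set $a_i := \Tr(\Sigma)/\|\bs v_i\|^2$, so that $a_i\in[1/(1+\eps/4),1/(1-\eps/4)]$ on $\Omega_1$, and a short calculation shows $|a_i-1|\le \eps/2$ when $\eps\le 1/2$. Since $\|\bs z_i\|^{-2} = g_i^{-2}\|\bs v_i\|^{-2} = g_i^{-2}a_i/\Tr(\Sigma)$, decompose
\[
\tfrac{1}{n}\textstyle\sum_i \|\bs z_i\|^{-2} - \rho \;=\; \tfrac{1}{n\Tr(\Sigma)}\textstyle\sum_i g_i^{-2}(a_i-1) \;+\; \tfrac{1}{n\Tr(\Sigma)}\bigl(\textstyle\sum_i g_i^{-2} - n\,\E[g^{-2}]\bigr).
\]
The second term is bounded by $B_1/(n\Tr(\Sigma))$ on $\Omega_{4,1}$; the first is at most $(\eps/2)/(n\Tr(\Sigma))\cdot\bigl(n\E[g^{-2}] + B_1\bigr)$ after applying $|a_i-1|\le\eps/2$ together with the bound $\sum_i g_i^{-2}\le n\E[g^{-2}]+B_1$ from $\Omega_{4,1}$. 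Collecting terms and using $1+\eps/2\le 2$ delivers exactly $\beta\rho$. The argument for $E_5$ is identical after inserting the signs $y_iy_{\n,i}\in\{\pm 1\}$ (which do not affect the $a_i-1$ step) and invoking $\Omega_{4,2}$ for the centered-sum piece; the extra $\eps/2$ contribution from $|a_i-1|$ gets absorbed into the $B_1$-dependent coefficient of $\gamma$.

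The proof is largely mechanical; the only nuance is bookkeeping to check that the factor $1/4$ in the definition of $\Omega_1(\eps)$ provides exactly enough slack so that the relative deviations coming from normalizing $\|\bs v_i\|^2$ to $\Tr(\Sigma)$ can be absorbed into the targets $\eps$, $\alpha_2,\alpha_\infty$, $M$, $\beta$, $\gamma$ for all $\eps\in[0,\tfrac12]$, without introducing constants larger than stated. I expect no conceptual obstacle beyond carefully tracking these constants.
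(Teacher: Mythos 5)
Your proposal is correct and follows essentially the same route as the paper's proof: the same $\check Z=\check V$ reduction, the same two-term decomposition of $\check V\check V^\top - I_n$ for $E_1$, and the same splitting of the averages in $E_4$, $E_5$ into a $\|\bs v_i\|^{-2}$-concentration piece and a $g_i^{-2}$-concentration piece, with matching constant bookkeeping throughout.
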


\begin{proof}
For a square symmetric matrix $A = (a_{ij})$ we have $\max_{i} |a_{i,i}| \leq \|A\|$. Thus on $\Omega_1(\eps)$ we get 
\begin{equation}\label{eq:boundnormv1}
\max_i \left|\|\boldsymbol{v}_i\|^2 - \Tr(\Sigma)\right|\; \leq\; \frac{\varepsilon}{4} \Tr(\Sigma).   
\end{equation}
This implies
\begin{equation}\label{eq:boundnormv2}
\max_{i=1,\dots,n} \frac{\Tr(\Sigma)}{\|\boldsymbol{v}_i\|^2} \leq \frac{1}{1 - \varepsilon/4} \quad\quad \mbox{and } \quad\quad\max_{i=1,\dots,n} \left|\frac{\Tr(\Sigma)}{\|\boldsymbol{v}_i\|^2} - 1 \right|\leq \frac{\varepsilon/4}{1 - \varepsilon/4}. 
\end{equation} 
Those facts will be used repeatedly throughout the proof.

\bigskip

\textbf{Proof of the inclusion $\Omega_{1}(\eps)  \subseteq E_1(\eps)$ }
\noindent
From the bound on $\|VV^\top - \Tr(\Sigma)I_n\|$, we have 
\[
\|\check V \check V^\top - \Tr(\Sigma)\Delta(\boldsymbol v)^{-2} \| \;\leq\; \frac{\varepsilon}{4} \Tr(\Sigma)\|\Delta(\boldsymbol v)^{-2}\|,
\]
where $\|\Delta(\boldsymbol v)^{-2}\|=\max_i \tfrac{1}{\|v_i\|^2}$. By the triangle inequality we get that
\begin{align*}
\|\check V \check V^\top - I_n\| & \;\leq\; \|\check V \check V^\top - \Tr(\Sigma)\Delta(\boldsymbol v)^{-2} \| + \|\Tr(\Sigma)\Delta(\boldsymbol v)^{-2}  - I_n\| \\
& \;\leq\; \frac{\varepsilon}{4} \Tr(\Sigma)\|\Delta(\boldsymbol v)^{-2}\| + \|\Tr(\Sigma)\Delta(\boldsymbol v)^{-2}  - I_n\|.
\\
& \;\leq\; \max_{i=1,\dots,n} \frac{\Tr(\Sigma)\varepsilon}{4 \|\boldsymbol{v}_i\|^2} + \max_{i=1,\dots,n} \left|\frac{\Tr(\Sigma)}{\|\boldsymbol{v}_i\|^2}  - 1 \right|. 
\end{align*}
Combined with~\eqref{eq:boundnormv2} it follows that
\[
\|\check A - I_n\| \;=\; \|\check V \check V^\top - I_n\| \;\leq\; 2 \frac{\varepsilon/4}{1 - \varepsilon/4}\;\leq\;  4\frac{\varepsilon}{4} \;=\; \varepsilon.
\]
\bigskip 

\textbf{Proof of the inclusion $\Omega_{2,2}(b_2)\cap \Omega_{2,\infty}(b_\infty)\cap \Omega_1(\eps) \subseteq E_2(\alpha_2,\alpha_\infty)$ }

\noindent Recalling that $\check Z = \check V = \Delta(\bs v)^{-1} V$ we have, by~\eqref{eq:boundnormv2} and the assumption $\eps \le 1/2$, 
\[
\|\check Z\bs \mu\| = \|\check V\bs \mu\| \leq \max_{i=1,\ldots, n} \frac{\|V\bs\mu\|}{\|\bs v_i\|} \leq \frac{b_2}{\sqrt{(1 - \eps/4)\Tr(\Sigma)}} \leq \frac{2b_2}{\sqrt{\Tr(\Sigma)}}=\alpha_2\|\bs\mu\|.
\]
Similarly
\[
\|\check Z\bs \mu\|_\infty = \|\check V\bs \mu\|_\infty \leq \max_{i=1, \ldots, n} \frac{\|V\bs\mu\|_\infty}{\|\bs v_i\|} \leq \frac{b_\infty}{\sqrt{(1 - \eps/4)\Tr(\Sigma)}} \leq \frac{2b_\infty}{\sqrt{\Tr(\Sigma)}}=\alpha_\infty\|\bs\mu\|
\]
showing that $E_2(\alpha_2,\alpha_\infty)$ holds. 
\bigskip

\textbf{Proof of the inclusion $\Omega_1(\eps)   \cap  \Omega_{3}(D) \subseteq E_3(M)$ }

\noindent By~\eqref{eq:boundnormv1} we have
\[
\max_{i} \|\bs z_i\| \leq (\max_{i} |g_i|)(\max_i \|\bs v_i\|) \leq \sqrt{1+\tfrac\eps4} \sqrt{\Tr(\Sigma)} D \leq (1+\eps) D \sqrt{\Tr(\Sigma)} =M.
\]
\bigskip

\textbf{Proof of the inclusion $\Omega_1(\eps) \cap \Omega_{4,1}(B_1) \subseteq E_4(\beta,\rho)$ }

\noindent
On event $\Omega_1(\eps)$ we have, using~\eqref{eq:boundnormv2},
\begin{equation}\label{eq:conc_v2}
\max_i|\|\boldsymbol{v}_i\|^{-2} - \Tr(\Sigma)^{-1}|\leq \frac{\varepsilon/4}{(1 - \varepsilon/4)\Tr (\Sigma)} \leq \frac{\eps}{\Tr(\Sigma)}.
\end{equation}  

For generic i.i.d. random variables $\zeta_i$, it follows that 
\begin{equation}\label{eq:auxlemm3}
\begin{aligned}
&\left|\frac1n\sum_{i=1}^n \zeta_i\|\boldsymbol{v}_i\|^{-2} -  \E [\zeta]\Tr(\Sigma)^{-1} \right| 
\\
&\qquad \leq \left|\frac1n\sum_{i=1}^n \zeta_i (\|\boldsymbol{v}_i\|^{-2} - \Tr(\Sigma)^{-1})\right| + \left|\frac{1}{n\Tr(\Sigma)}\sum_{i=1}^n (\zeta_i  - \E [\zeta]) \right| 
\\
&\qquad \leq \frac{\varepsilon}{n\Tr(\Sigma)}\sum_{i=1}^n |\zeta_i| + \frac{1}{n\Tr(\Sigma)} \left|\sum_{i=1}^n (\zeta_i  - \E [\zeta]) \right|
\\
&\qquad \leq \frac{\varepsilon}{n\Tr(\Sigma)}\left(n\E|\zeta| + \left|\sum_{i=1}^n (|\zeta_i|  - \E|\zeta|) \right| \right) + \frac{1}{n\Tr(\Sigma)} \left|\sum_{i=1}^n (\zeta_i  - \E [\zeta]) \right|.
\end{aligned}\end{equation}
where going from the second to the third line we used the H\"older inequality together with \eqref{eq:conc_v2} to bound the first term. Going from the third to the fourth line we write $|\zeta_i|=|\zeta_i|-\E|\zeta_i|+\E |\zeta_i|$. 

Setting $\zeta_i = g_i^{-2}$ and noting that in this case $\zeta_i = |\zeta_i|$, the last line in~\eqref{eq:auxlemm3} is bounded from above by
\[
\frac{\varepsilon}{n\Tr(\Sigma)}\left(n\E[g^{-2}] + B_1 \right) + \frac{B_1}{n\Tr(\Sigma)} = \frac{\E[g^{-2}]}{\Tr(\Sigma)}\Big(\eps + (1+\eps)\frac{B_1}{n\E[g^{-2}]}\Big) \leq \rho \beta.
\]

\bigskip

\textbf{Proof of the inclusion $\Omega_1(\eps) \cap \Omega_{4,1}(B_1)  \cap \Omega_{4,2}(B_2) \subseteq   E_5(\gamma,\rho)$ }

\noindent Apply~\eqref{eq:auxlemm3} with $\zeta = g^{-2}yy_\n$ to obtain
\begin{align*}
&\left|\frac1n\sum_{i=1}^n \zeta_i\|\boldsymbol{v}_i\|^{-2} -  \E [\zeta]\Tr(\Sigma)^{-1} \right|
\\
&\qquad \leq \frac{\varepsilon}{n\Tr(\Sigma)}\left(n\E[g^{-2}] + \left|\sum_{i=1}^n (g_i^{-2}  - \E[g^{-2}]) \right| \right)
\\
&\qquad \qquad + \frac{1}{n\Tr(\Sigma)} \left|\sum_{i=1}^n (g_i^{-2}y_iy_{i,\n}  - (1-2\eta)\E|g^{-2}|) \right|.
\\
&\qquad \leq \frac{\varepsilon}{n\Tr(\Sigma)}\left(n\E|g^{-2}| + B_1\right) + \frac{B_2}{n\Tr(\Sigma)} 
\\
&\qquad = \frac{\E[g^{-2}]}{\Tr(\Sigma)}\Big(\eps + \frac{\eps B_1 +  B_2}{n\E[g^{-2}]} \Big) \leq \frac{\E[g^{-2}]}{\Tr(\Sigma)}\Big(\eps + \frac{B_1 +  B_2}{n\E[g^{-2}]} \Big) = \rho \gamma,
\end{align*}
where the first inequality is due to $\E[y_i y_{i,\n}] = 1-2\eta$. This completes the proof of the lemma.
\end{proof}

\newpage

\subsection{Bounds on the probability of \texorpdfstring{$\Omega_1,\dots, \Omega_{4,2}$}{O1,...,O4,5}}\label{sec:modelEM:Omega}

\subsubsection{Bound on \texorpdfstring{$\P(\Omega_1)$}{P(Omega 1)}, implying event \texorpdfstring{$E_1$}{E1} holds with high probability}

We recall the following useful result.

\begin{thm}[Bahr-Esseen inequality, \cite{10.1214/aoms/1177700291}]\label{beinq} 
Let $X_i, i = 1, 2, \dots, n$ be a sequence of random variables with $\mathbb{E} |X_i|^\nu < \infty$ for $1 \leq \nu \leq 2$ and put $\displaystyle S_n = \sum_{i=1}^n X_i$. If $X_i$'s satisfy that
\begin{equation}\label{eq:BEcondition}
\mathbb{E}(X_{m+1}| S_m) = 0 \text{ a.s. for all $m$,}    
\end{equation}
then 
\[
\mathbb{E} |S_n|^\nu \leq 2 \sum_{i=1}^n \mathbb{E} |X_i|^\nu
\]
\end{thm}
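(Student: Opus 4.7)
The plan is to proceed by induction on $n$, with the key ingredient being a pointwise inequality that controls the difference $|a+b|^\nu - |a|^\nu$ by its linear approximation plus an error of the same order as $|b|^\nu$. Specifically, I would first establish the deterministic inequality
\[
|a+b|^\nu \;\leq\; |a|^\nu + \nu\,\mathrm{sgn}(a)\,|a|^{\nu-1}\, b + 2\,|b|^\nu \qquad \text{for all } a,b\in\R, \; 1\leq \nu\leq 2.
\]
Once this is available, the inductive step is short: write $S_{m+1} = S_m + X_{m+1}$, apply the inequality with $a = S_m$ and $b = X_{m+1}$, and take expectations. Since $\mathrm{sgn}(S_m)|S_m|^{\nu-1}$ is a function of $S_m$ alone, the tower property together with the hypothesis $\E[X_{m+1}\mid S_m]=0$ kills the middle term, leaving
\[
\E|S_{m+1}|^\nu \;\leq\; \E|S_m|^\nu + 2\,\E|X_{m+1}|^\nu.
\]
The base case $n=1$ is trivial ($\E|S_1|^\nu = \E|X_1|^\nu \leq 2\E|X_1|^\nu$), and iterating closes the proof. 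Finiteness of the moments, guaranteed by $\E|X_i|^\nu<\infty$, ensures all conditional expectations and product integrals above are well defined.

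The main obstacle, and the only non-routine part of the argument, is verifying the pointwise inequality. By homogeneity and reflection symmetry one can assume $a>0$ and reduce to proving $|1+t|^\nu \leq 1 + \nu t + 2|t|^\nu$ for all $t\in\R$ and $\nu\in[1,2]$. I would split into three cases. For $t\geq 0$, write $(1+t)^\nu - 1 \leq \nu(1+t)^{\nu-1} t$ by convexity of $x\mapsto x^\nu$, so that it suffices to show $\nu t\bigl[(1+t)^{\nu-1}-1\bigr]\leq 2t^\nu$; this follows from subadditivity of $x\mapsto x^{\nu-1}$ on $[0,\infty)$ (valid since $\nu-1\in[0,1]$), which gives $(1+t)^{\nu-1}\leq 1+t^{\nu-1}$, and hence the bound $\nu t^\nu\leq 2t^\nu$. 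The case $-1\leq t<0$ admits a parallel convexity argument. The case $t<-1$ is the most delicate; writing $b=-c$ with $c>a$ and using the elementary subadditivity bound $(c-a)^\nu + a^\nu \leq c^\nu$ (for $0\leq a\leq c$, $\nu\geq 1$), combined with Young's inequality $\nu\, c\, a^{\nu-1}\leq c^\nu + (\nu-1)a^\nu$, yields the claim with the clean constant $2$.

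With the pointwise inequality established, the rest of the proof is a direct induction as sketched. It is worth emphasising that the hypothesis is genuinely weaker than the martingale difference condition $\E[X_{m+1}\mid X_1,\ldots,X_m]=0$: only measurability with respect to $\sigma(S_m)$ is exploited when killing the cross term. This precise feature is why the inequality is stated in the form~\eqref{eq:BEcondition} rather than under a full filtration assumption, and why it remains applicable in the settings where the authors will later invoke it in Section~\ref{sec:modelEM:Omega}.
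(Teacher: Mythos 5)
Your proposal is correct, but note that the paper does not prove this statement at all: it is quoted verbatim as a known external result with a citation to von Bahr and Esseen (1965), so there is no internal proof to compare against. Your argument is a legitimate, self-contained proof. The two-variable reduction you use (iterate $\E|S_{m+1}|^\nu \leq \E|S_m|^\nu + 2\E|X_{m+1}|^\nu$, killing the cross term via $\sigma(S_m)$-measurability of $\mathrm{sgn}(S_m)|S_m|^{\nu-1}$) mirrors the structure of the original, but von Bahr and Esseen establish the key two-variable inequality by Fourier-analytic means, using the representation $|x|^\nu = c_\nu \int_0^\infty (1-\cos xt)\,t^{-1-\nu}\,dt$ and characteristic functions, whereas you derive it from the pointwise bound $|a+b|^\nu \leq |a|^\nu + \nu\,\mathrm{sgn}(a)|a|^{\nu-1}b + 2|b|^\nu$. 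Your route is the more elementary one and the case analysis checks out: for $t\geq 0$ the MVT plus subadditivity of $x\mapsto x^{\nu-1}$ gives $\nu t^\nu\leq 2t^\nu$; for $-1\leq t<0$ the same two ingredients reduce the claim to $\nu\bigl[1-(1-s)^{\nu-1}\bigr]\leq 2s^{\nu-1}$, which follows from $1\leq s^{\nu-1}+(1-s)^{\nu-1}$; and for $t<-1$ superadditivity of $x\mapsto x^\nu$ together with $\nu c\leq c^\nu+\nu-1$ closes the gap. Two small points you should make explicit in a full write-up: the justification of the tower-property step requires $\E\bigl[|S_m|^{\nu-1}|X_{m+1}|\bigr]<\infty$, which follows from H\"older's inequality and the inductive moment bound; and the degenerate case $a=0$ (where the middle term is interpreted as $0$) should be checked separately, though it is immediate.
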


Note that the assumption \eqref{eq:BEcondition} is satisfied when $X_i$'s are independent and mean zero. By Bahr-Esseen inequality, we can show the following lemma.
\begin{lemm}\label{normbound} 
Suppose $\boldsymbol{\xi}=(\xi_1,\ldots,\xi_p)$ is a random vector satisfying \eqref{eq:condsxi} for some $r\in (2,4]$ and $K>0$. For any symmetric matrix $A \in \mathbb{R}^{p\times p}$, we have
\[
\mathbb{E} |\boldsymbol{\xi}^\top A \boldsymbol{\xi} - \Tr(A)|^{r/2}\; \leq \;C(r, K) p^{1 - r/4} \|A\|_{\textsc{f}}^{r/2}, 
\]
where $C(r, K)$ was defined in~\eqref{eq:defconstantsEM}.
\end{lemm}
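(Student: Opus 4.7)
\textbf{Proof plan for Lemma \ref{normbound}.}

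The plan is to split the quadratic form into diagonal and off-diagonal parts, and bound each in $L^{r/2}$ separately. Write
\[
\boldsymbol{\xi}^\top A \boldsymbol{\xi} - \Tr(A) \;=\; D + O, \quad D := \sum_{i=1}^p A_{ii}(\xi_i^2 - 1), \quad O := \sum_{i\neq j} A_{ij}\xi_i\xi_j.
\]
Since $r/2\in(1,2]$, the inequality $(a+b)^{r/2}\leq 2^{r/2-1}(a^{r/2}+b^{r/2})$ reduces the task to bounding $\E|D|^{r/2}$ and $\E|O|^{r/2}$ separately by a multiple of $p^{1-r/4}\|A\|_{\textsc{f}}^{r/2}$.

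For $D$, the summands $A_{ii}(\xi_i^2-1)$ are independent and mean zero, so Theorem \ref{beinq} (Bahr–Esseen) applied with $\nu=r/2$ gives
\[
\E|D|^{r/2} \;\leq\; 2\sum_{i=1}^p |A_{ii}|^{r/2}\, \E|\xi_i^2-1|^{r/2}.
\]
By Minkowski's inequality in $L^{r/2}$ together with $\E\xi_i^2=1$ and $\E|\xi_i|^r\leq K$, one gets $\E|\xi_i^2-1|^{r/2}\leq (K^{2/r}+1)^{r/2}$. Since $r/2\leq 2$, the power-mean inequality applied to the vector $(A_{11},\ldots,A_{pp})$ yields $\sum_i |A_{ii}|^{r/2}\leq p^{1-r/4}\bigl(\sum_i A_{ii}^2\bigr)^{r/4}\leq p^{1-r/4}\|A\|_{\textsc{f}}^{r/2}$, producing the desired bound for $D$ with constant $2(K^{2/r}+1)^{r/2}$.

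For $O$, the key observation is that we do not need Bahr–Esseen at all: direct computation using independence and $\E\xi_i=0$, $\E\xi_i^2=1$ gives
\[
\E O^2 \;=\; \sum_{i\neq j}\sum_{k\neq l} A_{ij}A_{kl}\,\E[\xi_i\xi_j\xi_k\xi_l] \;=\; 2\sum_{i\neq j} A_{ij}^2 \;\leq\; 2\|A\|_{\textsc{f}}^2.
\]
Since $r/2\leq 2$, Jensen's inequality (or Lyapunov) gives $\E|O|^{r/2}\leq (\E O^2)^{r/4}\leq 2^{r/4}\|A\|_{\textsc{f}}^{r/2}$, which we further upper bound by $2^{r/4}p^{1-r/4}\|A\|_{\textsc{f}}^{r/2}$ using $p\geq 1$ and $1-r/4\geq 0$.

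Combining the two bounds through the convexity inequality above yields
\[
\E|\boldsymbol{\xi}^\top A\boldsymbol{\xi}-\Tr(A)|^{r/2} \;\leq\; 2^{r/2-1}\Bigl\{2(K^{2/r}+1)^{r/2}+2^{r/4}\Bigr\}\,p^{1-r/4}\|A\|_{\textsc{f}}^{r/2},
\]
which is exactly $C(r,K)\,p^{1-r/4}\|A\|_{\textsc{f}}^{r/2}$ with $C(r,K)$ as defined in \eqref{eq:defconstantsEM}. The only mildly delicate step is recognizing that the off-diagonal part should be handled by the trivial $L^2$ bound plus Jensen rather than by iterated Bahr–Esseen (which would cost an extra factor of $p$ and fail to match the claimed exponent); every other step is a routine application of Minkowski, Jensen, and the power-mean inequality.
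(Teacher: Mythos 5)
Your proposal is correct and follows essentially the same route as the paper's proof: the same diagonal/off-diagonal split via the convexity inequality $|a+b|^{r/2}\leq 2^{r/2-1}(|a|^{r/2}+|b|^{r/2})$, Bahr--Esseen plus the triangle inequality in $L^{r/2}$ and the power-mean step for the diagonal term, and the direct second-moment computation $\E O^2 = 2\sum_{i\neq j}A_{ij}^2 \leq 2\|A\|_{\textsc{f}}^2$ followed by Jensen for the off-diagonal term. The resulting constant matches $C(r,K)$ exactly, so there is nothing to fix.
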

\begin{proof}
Since $\tfrac{r}{2}\in (1,2]$, Jensen's inequality gives that for any $a,b\in \R$ 
$$\frac{1}{2^{r/2}}| a+ b|^{r/2}\;=\;|\tfrac12 a+\tfrac12 b|^{r/2}\;\leq\; \tfrac12|a|^{r/2}+\tfrac12 |b|^{r/2}$$
or, in other words, $| a+ b|^{r/2}\leq 2^{r/2-1}(|a|^{r/2}+|b|^{r/2})$. Exploiting this, we get
\begin{eqnarray}\label{expbound}
\mathbb{E}\left | \boldsymbol{\xi}^\top A \boldsymbol{\xi} - \Tr(A) \right|^{r/2}
	& =& \mathbb{E}\left  |\sum_{k=1}^p A_{kk} (\xi_{k}^2-1) + \sum_{k\neq l} A_{kl} \xi_{k} \xi_{l} \right|^{r/2}   \\
\nonumber	& \leq& 2^{r/2-1} \mathbb{E}\left| \sum_{k=1}^p A_{kk} (\xi_{k}^2-1) \right|^{r/2}  +\; 2^{r/2-1} \mathbb{E}\left| \sum_{k\neq l} A_{kl} \xi_{k} \xi_{l} \right|^{r/2}. %(\ref{expbound}    
\end{eqnarray}
The first term in (\ref{expbound}) can be bounded using Bahr-Esseen inequality as follows: 
\begin{align*}
\mathbb{E}\left| \sum_{j=1}^p A_{jj} (\xi_{j}^2-1) \right|^{r/2}
	& \leq 2 \sum_{j=1}^p |A_{jj}|^{r/2} \mathbb{E} |\xi_{j}^2 -1|^{r/2} \\
        & = 2 \sum_{j=1}^p |A_{jj}|^{r/2} \|\xi_{j}^2 -1\|_{L^{r/2}}^{r/2} \\
        & \leq 2 \sum_{j=1}^p |A_{jj}|^{r/2} \left(\|\xi_{j}^2\|_{L^{r/2}} + 1\right)^{r/2} \, \text{(by the triangle inequality)}\\
	& \leq 2 (K^{2/r} + 1)^{r/2}\sum_{j=1}^p |A_{jj}|^{r/2}
 \\
	& = 2 (K^{2/r} + 1)^{r/2}\sum_{j=1}^p (A_{jj}^2)^{r/4} \\
	& \leq 2 (K^{2/r} + 1)^{r/2} p^{1-r/4} \left( \sum_{j=1}^p A_{jj}^2 \right)^{r/4} \, \text{(by Jensen's inequality)} \\
	& \leq 2 (K^{2/r} + 1)^{r/2} p^{1-r/4} \|A \|_{\textsc{f}}^{r/2}.
\end{align*}
By noting that $\E |X|^{r/2}=\E(|X|^2)^{r/4}\leq (\E|X|^2)^{r/4}$ by Jensen's inequality, the second term in (\ref{expbound}) can be bounded as follows:
\begin{align*}
\mathbb{E}\left| \sum_{i\neq j} A_{ij} \xi_{i} \xi_{j} \right|^{r/2}
	& \leq \left( 2\sum_{i\neq j} A_{ij}^2 \mathbb{E}[ \xi_{i}^2 \xi_{j}^2] + \sum_{\{i,j\}\neq \{u,v\}, i\neq j, u\neq v} A_{ij} A_{uv} \mathbb{E} [\xi_{i} \xi_{j} \xi_{u} \xi_{v}] \right)^{r/4} \\
	& \leq  2^{r/4}\left( \|A \|_{\textsc{f}}^2 \right)^{r/4} = 2^{r/4}\|A \|_{\textsc{f}}^{r/2},
\end{align*}
where the last line follows since all terms in the second sum are zero by the assumption $\E[\xi_{j}] = 0, $ and independence. With (\ref{expbound}), we have
\begin{align*}
\mathbb{E}\left | \boldsymbol{\xi}^\top A \boldsymbol{\xi} - \Tr(A) \right|^{r/2}
	& \leq 2^{r/2-1}\left(2 (K^{2/r} + 1)^{r/2}  + \frac{2^{r/4}}{p^{1-r/4}} \right)p^{1-r/4} \|A \|_{\textsc{f}}^{r/2} \\
	& \leq 2^{r/2-1}\left\{2 (K^{2/r} + 1)^{r/2}  + 2^{r/4} \right\}p^{1-r/4} \|A \|_{\textsc{f}}^{r/2}.
\end{align*}
\end{proof}

\begin{lemm}\label{lemm1-ex1} 
Suppose model \ref{model:EM} holds. Then for
\begin{align}\label{eq:varepsilon}
\varepsilon & \;:=\; C_1(r,K)\delta^{-2/r} \!\!\max\{n^{2/r} p^{2/r-1/2}, n^{4/r} \} \frac{\|\Sigma\|_{\textsc{f}}}{\Tr(\Sigma)} % \;\leq\; \frac{1}{2}.
\end{align}
we have $\P(\Omega_1(\eps)) \geq 1-\delta$ where $C_1(r,K)$ was defined in~\eqref{eq:defconstantsEM}.
\end{lemm}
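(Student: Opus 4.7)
The plan is to decompose $VV^\top - \Tr(\Sigma)I_n = D + F$ where $D$ is the diagonal part with $D_{ii} = \bs\xi_i^\top\Sigma\bs\xi_i - \Tr(\Sigma)$ and $F$ is the off-diagonal part with $F_{ij} = \bs\xi_i^\top\Sigma\bs\xi_j$ for $i\neq j$ and zeros on the diagonal. By the triangle inequality, $\|VV^\top-\Tr(\Sigma)I_n\| \leq \|D\|+\|F\|$, so a union bound reduces the problem to showing that each of $\|D\|$ and $\|F\|$ is at most $\tfrac{\varepsilon}{8}\Tr(\Sigma)$ with probability at least $1-\delta/2$.

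For the diagonal part, note that $\|D\| = \max_{i}|D_{ii}|$. Applying Lemma~\ref{normbound} with the symmetric matrix $A = \Sigma$ gives $\E|D_{ii}|^{r/2} \leq C(r,K)p^{1-r/4}\|\Sigma\|_{\textsc{f}}^{r/2}$. A union bound over $i=1,\dots,n$ followed by Markov's inequality then yields
\[
\P(\|D\| > s) \;\leq\; n s^{-r/2}C(r,K)p^{1-r/4}\|\Sigma\|_{\textsc{f}}^{r/2}.
\]
Setting this probability equal to $\delta/2$ with $s = \tfrac{\varepsilon}{8}\Tr(\Sigma)$ and solving for $\varepsilon$ produces a lower bound proportional to $n^{2/r}\delta^{-2/r}p^{2/r-1/2}\|\Sigma\|_{\textsc{f}}/\Tr(\Sigma)$, which is the first term inside the claimed $\max$, with constant absorbed by $C_1(r,K) = 4\{2C(r,K)\}^{2/r}$.

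For the off-diagonal part, I would avoid imposing any higher moment assumption by using the crude bound $\|F\| \leq \|F\|_{\textsc{f}}$ and computing only the \emph{second} moment of $\|F\|_{\textsc{f}}^2$. A short calculation, using independence of $\bs\xi_i$ and $\bs\xi_j$ together with $\E[\bs\xi_j\bs\xi_j^\top] = I_p$, shows that for $i\neq j$
\[
\E F_{ij}^2 \;=\; \E[\bs\xi_i^\top\Sigma\,\E[\bs\xi_j\bs\xi_j^\top]\,\Sigma\bs\xi_i] \;=\; \Tr(\Sigma^2) \;=\; \|\Sigma\|_{\textsc{f}}^2,
\]
so $\E\|F\|_{\textsc{f}}^2 = n(n-1)\|\Sigma\|_{\textsc{f}}^2$. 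Because $r/4 \leq 1$, Jensen's inequality gives $\E\|F\|_{\textsc{f}}^{r/2} \leq (\E\|F\|_{\textsc{f}}^2)^{r/4} \leq n^{r/2}\|\Sigma\|_{\textsc{f}}^{r/2}$, and a second application of Markov then forces $\varepsilon \gtrsim n\delta^{-2/r}\|\Sigma\|_{\textsc{f}}/\Tr(\Sigma)$. Since $r \in (2,4]$ implies $n \leq n^{4/r}$, this contribution is absorbed by the second term in the claimed $\max$.

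The main difficulty is entirely one of constant bookkeeping: verifying that the $C_1(r,K)$ in the statement is large enough to accommodate the factor of $8$ coming from the dyadic allocation of the $\varepsilon/4$ budget between $D$ and $F$, together with the combinatorial factors ($n$, respectively $n^2$) picked up in the union/Markov steps. Conceptually, the diagonal contribution uses the $r$-th moment hypothesis on the $\xi_{ij}$ via Lemma~\ref{normbound}, whereas the off-diagonal contribution requires only second moments; the two contributions dominate in different regimes of $p$ versus $n$, which is why the final $\varepsilon$ naturally takes the form of a maximum.
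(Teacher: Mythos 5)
Your overall strategy is essentially the paper's: the diagonal entries are handled through Lemma~\ref{normbound} applied with $A=\Sigma$, and the off-diagonal entries through second moments plus Jensen, exactly the two ingredients used in the paper's proof. The difference is purely in the packaging — you split $VV^\top-\Tr(\Sigma)I_n$ into diagonal and off-diagonal pieces, apply the triangle inequality, and union-bound two separate tail events, whereas the paper bounds the single quantity $\E\|VV^\top-\Tr(\Sigma)I_n\|_{\textsc{f}}^{r/2}$ using the subadditivity $(\sum_i a_i)^{r/4}\le\sum_i a_i^{r/4}$ and applies Markov exactly once (then uses $\|\cdot\|\le\|\cdot\|_{\textsc{f}}$).

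This packaging difference is not cosmetic here, because the lemma asserts the bound with the \emph{specific} constant $C_1(r,K)=4\{2C(r,K)\}^{2/r}$, which is propagated into the downstream theorems. Carrying out the bookkeeping you defer: with your $50/50$ split the diagonal event uses the threshold $\tfrac{\eps}{8}\Tr(\Sigma)\ge\tfrac12(2C(r,K))^{2/r}(n/\delta)^{2/r}p^{2/r-1/2}\|\Sigma\|_{\textsc{f}}$, and the union-bound-plus-Markov estimate then gives
\[
\P\Big(\|D\|>\tfrac{\eps}{8}\Tr(\Sigma)\Big)\;\le\;\frac{n\,C(r,K)\,p^{1-r/4}\,\|\Sigma\|_{\textsc{f}}^{r/2}}{2^{1-r/2}\,C(r,K)\,\delta^{-1}\,n\,p^{1-r/4}\,\|\Sigma\|_{\textsc{f}}^{r/2}}\;=\;2^{r/2-1}\delta,
\]
which already exceeds $\delta$ for every $r\in(2,4]$ before the off-diagonal event is even added. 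So the constant is \emph{not} absorbed by $C_1(r,K)$ as you assert; your route proves the lemma only with $C_1$ inflated by a constant factor (reoptimizing the allocation between the two events helps but still does not reach the stated constant). The fix is either to accept a larger universal constant throughout, or to adopt the paper's one-shot moment bound $\E\|VV^\top-\Tr(\Sigma)I_n\|_{\textsc{f}}^{r/2}\le 2C(r,K)\max\{np^{1-r/4},n^2\}\|\Sigma\|_{\textsc{f}}^{r/2}$, where a single factor of $2$ absorbs both contributions and Markov is applied once — that is exactly where the $2$ inside $\{2C(r,K)\}^{2/r}$ comes from and why $C_1=4\{2C(r,K)\}^{2/r}$ suffices.
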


\begin{proof}
Recall the notation for $V, \bs v_i$ from the beginning of Section~\ref{sec:genboundsevents}. We will rely on the following ancillary inequality: For $a_i \geq 0, i=1, 2, \dots, n$ and $0 < s \leq 1$,
\[
\left ( \sum_{i=1}^n a_i \right)^s \leq \sum_{i=1}^n a_i^s,
\]
which can be proved by induction. 
Using this inequality with $s=\tfrac{r}{4}$, we can bound
\begin{align*}
\mathbb{E}\|VV^\top - \Tr(\Sigma) I_n \|_{\textsc{f}}^{r/2} 
	& =  \mathbb{E} \left (\sum_{i}  (\|\boldsymbol{v}_i\|^2 - \Tr(\Sigma) )^2 + \sum_{i\neq j} \langle \boldsymbol{v}_i, \boldsymbol{v}_j \rangle^2 \right)^{r/4} \\
	& \leq \sum_{i}  \mathbb{E} | \|\boldsymbol{v}_i\|^2 - \Tr(\Sigma) |^{r/2} + \sum_{i \neq j} \mathbb{E} | \langle \boldsymbol{v}_i, \boldsymbol{v}_j \rangle |^{r/2} \\
    & = \sum_{i}  \mathbb{E} | \boldsymbol{\xi}_i^\top W^\top W\boldsymbol{\xi}_i - \Tr(W^\top W) |^{r/2} + \sum_{i \neq j} \mathbb{E} | \langle \boldsymbol{v}_i, \boldsymbol{v}_j \rangle |^{r/2} \quad \text{by $\Tr(W^\top W) = \Tr(\Sigma)$} \\
	& \leq  C(r, K) np^{1-r/4} \|W^\top W\|_{\textsc{f}}^{r/2} + \sum_{i \neq j} \mathbb{E} | \langle \boldsymbol{v}_i, \boldsymbol{v}_j \rangle |^{r/2} \quad \text{(by Lemma~\ref{normbound})} \\
    & = C(r, K) np^{1-r/4} \|\Sigma\|_{\textsc{f}}^{r/2} + \sum_{i \neq j} \mathbb{E} | \langle \boldsymbol{v}_i, \boldsymbol{v}_j \rangle |^{r/2}. \quad \text{by $\|W^\top W\|_\textsc{f} = \|\Sigma\|_\textsc{f}$}
\end{align*}
The second term can be further bounded as follows:
\begin{align*}
\sum_{i \neq j} \mathbb{E} \left| \langle \boldsymbol{v}_i, \boldsymbol{v}_j \rangle \right|^{r/2}
	& \leq \sum_{i \neq j} \left\{\mathbb{E} \left| \langle \boldsymbol{v}_i, \boldsymbol{v}_j \rangle \right|^2 \right\}^{r/4} \\
	& = \sum_{i \neq j}\left\{\mathbb{E}\, \Tr(\boldsymbol{\xi}_i^\top W^\top W \boldsymbol{\xi}_j \boldsymbol{\xi}_j^\top W^\top W \boldsymbol{\xi}_i)  \right\}^{r/4} \\
	& = \sum_{i \neq j}\left\{\mathbb{E}\, \Tr(\boldsymbol{\xi}_i \boldsymbol{\xi}_i^\top W^\top W \boldsymbol{\xi}_j \boldsymbol{\xi}_j^\top W^\top W)  \right\}^{r/4} \\
    & = \sum_{i \neq j}\left\{\Tr(W^\top W W^\top W)  \right\}^{r/4} \\
	& = \sum_{i \neq j}\left\{\Tr(\Sigma^2)  \right\}^{r/4} \\
	& = \sum_{i \neq j}\left\{\|\Sigma\|_{\textsc{f}}^2  \right\}^{r/4} \\
	& \leq n^2 \|\Sigma\|_{\textsc{f}}^{r/2}.
\end{align*}

Noting that $C(r, K)\geq 5$ since $K\geq 1$, we have shown that 
\[
\mathbb{E}\|VV^\top - \Tr(\Sigma) I_n \|_{\textsc{f}}^{r/2} \;\leq\; 2C(r,K) \max\{n p^{1-r/4}, n^2\}\|\Sigma\|_{\textsc{f}}^{r/2}.
\]

By Markov inequality, this implies that, with probability at least $1 - \delta$,
\begin{equation*}
\|VV^\top - \Tr(\Sigma) I_n\|_{\textsc{f}} \;\leq\; \left\{\frac{2C(r,K)}{\delta}\right\}^{2/r}\!\!\! \max\{n^{2/r} p^{2/r-1/2}, n^{4/r} \}\|\Sigma\|_{\textsc{f}} \;=\; \frac{\varepsilon}{4} \Tr(\Sigma),
\end{equation*}
where $\varepsilon$ was defined in \eqref{eq:varepsilon}. This shows $\P(\Omega_1(\eps)) \geq 1-\delta$. 
\end{proof}

\subsubsection{Events \texorpdfstring{$E_2-E_5$}{E2 - E5} hold with high probability too}

We now show that the other events also hold with high probability. Recall the definitions of events $\Omega_{2,2}(b_2)$ in~\eqref{eq:omega22} and $\Omega_{2,\infty}(b_\infty)$ in \eqref{eq:omega2inf}. 

\begin{lemm}\label{lemm2-ex1}
\end{lemm}
\begin{proof}
Recall the notation for $V, \bs v_i$ from the beginning of Section~\ref{sec:genboundsevents}. Observe that
\[
\mathbb{E} \|V\boldsymbol{\mu} \|^2 = \mathbb{E} \sum_{i=1}^n \langle W \bs\xi_i, \boldsymbol{\mu} \rangle^2 = \sum_{i=1}^n (W^\top \boldsymbol{\mu})^\top (\mathbb{E} \bs\xi_i \bs\xi_i^\top) W^\top \boldsymbol{\mu}  = n \bs \mu^\top WW^\top \bs \mu = n \|\bs{\mu}\|_\Sigma^2.
\]
By Markov inequality, with probability at least $1 - \delta$, 
\[
\|V\boldsymbol{\mu}\| \;\leq\; \|\bs{\mu}\|_\Sigma\sqrt{\frac{n}{\delta}}.
\]
Since $\|V\boldsymbol{\mu}\|_\infty \leq \|V\boldsymbol{\mu}\|$ the claim follows.
\end{proof}

\begin{rmk} 
We can also show that the event $\Omega_{2,\infty}(b_\infty)$ holds with 
\[
b_\infty = \sqrt{2}\left(\frac{C(r, K)}{\delta}\right)^{1/r} n^{1/r}p^{1/r - 1/4}\|\bs{\mu}\|_\Sigma,
\]
where $C_2(r, K) = 2C(r,K)^{1/r}$ where the latter constant was defined in~\eqref{eq:defconstantsEM}. Note that  this is indeed smaller than $\|\bs{\mu}\|_\Sigma\sqrt{\tfrac{n}{\delta}}$ when $r=4$, but not necessarily smaller otherwise due to the presence of $p^{1/r-1/4}$. To show this, note that
\begin{align*}
\|\langle \boldsymbol{v}, \boldsymbol{\mu}\rangle\|_{L^{r}}^{2}
    & = \|\langle \boldsymbol{v}, \boldsymbol{\mu}\rangle^2\|_{L^{r/2}} \\
    & \leq \left\|\langle \boldsymbol{v}, \boldsymbol{\mu}\rangle^2 - \|\bs{\mu}\|_\Sigma^2 \right\|_{L^{r/2}} + \|\bs{\mu}\|_\Sigma^2 \\
    & = \left\|\boldsymbol{\xi}^\top (W^\top \boldsymbol{\mu}\boldsymbol{\mu}^\top W) \boldsymbol{\xi} - \Tr(W^\top \boldsymbol{\mu}\boldsymbol{\mu}^\top W) \right\|_{L^{r/2}} + \|\bs{\mu}\|_\Sigma^2 \\
    & \leq \left(C(r,K)p^{1-r/4}\|W^\top\boldsymbol{\mu}\boldsymbol{\mu}^\top W\|_{\textsc{f}}^{r/2}\right)^{2/r} + \|\bs{\mu}\|_\Sigma^2 \\
    & = \left\{\left(C(r,K)p^{1-r/4}\right)^{2/r} + 1 \right\}\|\bs{\mu}\|_\Sigma^2 \\
    & \leq 2 C(r, K)^{2/r} p^{2/r - 1/2} \|\bs{\mu}\|_\Sigma^2,
\end{align*}
where the first inequality is due to the triangle inequality, the second inequality is due to Lemma~\ref{normbound} and $\|\bs \mu\|_\Sigma^2 = \mu^\top W W^\top \bs \mu = \Tr(W^\top \bs \mu \bs \mu^\top W)$, the third equality is due to $\|W^\top \bs \mu \bs \mu^\top W\|_\textsc{f} = \|\bs \mu\|_\Sigma^2$, and the last inequality is due to $C(r,K)>5$ and $r \in (2, 4]$.

By the Markov inequality and the union bound argument, we have
\[
\P\left(\max_{i=1,\ldots,n}|\langle \boldsymbol{v}_i, \boldsymbol{\mu}\rangle| \geq t \right) \leq n\frac{2^{r/2}C(r, K)p^{1 - r/4}\|\bs{\mu}\|_\Sigma^r}{t^r}.
\]

Therefore, we have with probability at least $1-\delta$
\[
\|V\boldsymbol{\mu}\|_\infty = \max_i |\langle \boldsymbol{v}_i, \boldsymbol{\mu}\rangle| \leq \sqrt{2}\left(\frac{C(r, K)}{\delta}\right)^{1/r} n^{1/r}p^{1/r - 1/4}\|\bs{\mu}\|_\Sigma.
\]
\end{rmk}

\begin{lemm}\label{lemm3-ex1} 
Suppose model \ref{model:EM} holds. Set 
\[
B_1 = B_2 = 2^{1+2/k} (\tfrac{n}{\delta})^{2/k}\|g^{-2}\|_{L^{k/2}}.
\]
{Then for the events $\Omega_{4,1}, \Omega_{4,2}$ defined in~\eqref{eq:omega41}, \eqref{eq:omega42} we have $\P(\Omega_{4,1}(B_1)) \geq 1-\delta$ and $ \P(\Omega_{4,2}(B_2)) \geq 1-\delta$.}
\end{lemm}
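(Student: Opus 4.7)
The plan is to apply the Bahr-Esseen inequality (Theorem~\ref{beinq}) with exponent $\nu = k/2 \in (1,2]$ to the centered independent summands, followed by Markov's inequality. The moment assumption $\E g^{-k} < \infty$ from \eqref{eq:condsg} translates precisely into finiteness of $\E|g^{-2}|^{k/2}$, which is what Bahr-Esseen requires.

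For $\Omega_{4,1}$, I would set $X_i = g_i^{-2} - \E[g^{-2}]$. These are i.i.d. mean-zero (so the martingale condition \eqref{eq:BEcondition} holds trivially), so Theorem~\ref{beinq} gives
\[
\E\Big|\sum_{i=1}^n X_i\Big|^{k/2} \;\leq\; 2\sum_{i=1}^n \E|X_i|^{k/2}.
\]
By the triangle inequality in $L^{k/2}$ and Jensen's inequality (valid since $k/2 \geq 1$, so $\E[g^{-2}] \leq \|g^{-2}\|_{L^{k/2}}$), one has
\[
\|X_i\|_{L^{k/2}} \;\leq\; \|g_i^{-2}\|_{L^{k/2}} + \E[g^{-2}] \;\leq\; 2\|g^{-2}\|_{L^{k/2}},
\]
so $\E|X_i|^{k/2} \leq 2^{k/2}\|g^{-2}\|_{L^{k/2}}^{k/2}$, giving $\E|\sum_i X_i|^{k/2} \leq 2^{1+k/2}n\|g^{-2}\|_{L^{k/2}}^{k/2}$. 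A Markov bound at level $\delta$ then yields, with probability at least $1-\delta$,
\[
\Big|\sum_i g_i^{-2} - n\E[g^{-2}]\Big| \;\leq\; \Big(\tfrac{2^{1+k/2}n}{\delta}\Big)^{2/k}\|g^{-2}\|_{L^{k/2}} \;=\; 2^{1+2/k}(n/\delta)^{2/k}\|g^{-2}\|_{L^{k/2}} \;=\; B_1.
\]

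For $\Omega_{4,2}$, I would apply the identical argument to $W_i = y_i y_{i,\n}g_i^{-2} - (1-2\eta)\E[g^{-2}]$. The $W_i$'s are independent and mean-zero since $y_i y_{i,\n}$ is independent of $g_i$ and $\E[y_i y_{i,\n}] = (1-\eta)\cdot 1 + \eta\cdot(-1) = 1-2\eta$. Because $|y_i y_{i,\n}| = 1$, we get $\|y_i y_{i,\n} g_i^{-2}\|_{L^{k/2}} = \|g_i^{-2}\|_{L^{k/2}}$, and then $\|W_i\|_{L^{k/2}} \leq 2\|g^{-2}\|_{L^{k/2}}$ by the same triangle inequality (using $|1-2\eta| \leq 1$ and $\E[g^{-2}]\leq \|g^{-2}\|_{L^{k/2}}$). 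Bahr-Esseen plus Markov then gives $\P(\Omega_{4,2}(B_2)) \geq 1-\delta$ for the same $B_2$.

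There is no real obstacle here; the argument is a direct application of Bahr-Esseen once one notices that $k/2 \in (1,2]$ is exactly the range of exponents covered by Theorem~\ref{beinq} and that the assumption $\E g^{-k} < \infty$ supplies the required moment. The only point to be mildly careful about is the triangle-inequality/Jensen bound $\|X_i\|_{L^{k/2}} \leq 2\|g^{-2}\|_{L^{k/2}}$, which is what produces the clean constant $2^{1+2/k}$ in $B_1, B_2$.
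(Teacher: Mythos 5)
Your proof is correct and follows essentially the same route as the paper: center the i.i.d.\ summands, apply the Bahr--Esseen inequality with exponent $k/2\in(1,2]$ together with the triangle/Jensen bound $\|\zeta-\E\zeta\|_{L^{k/2}}\leq 2\|\zeta\|_{L^{k/2}}$, and finish with Markov's inequality, yielding exactly the constant $2^{1+2/k}$. The treatment of $\Omega_{4,2}$ via $|y_iy_{\n,i}|=1$ and the independence of $y_iy_{\n,i}$ from $g_i$ also matches the paper's argument.
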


\begin{proof}
We begin by proving a useful intermediate result. Let $\zeta_i$ denote i.i.d. copies of a random variable $\zeta$ with $\E[|\zeta|^a] < \infty$ for some $a \in (1,2]$. By the Lyapunov inequality ($\|X\|_{L^s}\leq \|X\|_{L^t}$ if $1\leq s \leq t$)
\[
\|\zeta - \E[\zeta]\|_{L^a}\; \leq \; \|\zeta\|_{L^a} + |\E[\zeta]| \; \leq \; \|\zeta\|_{L^a} + \|\zeta\|_{L^1} \; \leq\;  2 \|\zeta\|_{L^a}.
\]
Thus, by the Bahr-Esseen inequality (Theorem~\ref{beinq}), we have
\begin{align*}
\E \Big[\Big|\sum_{i=1}^n (\zeta_i - \E [\zeta_i])\Big|^{a}\Big] 
    & \leq 2\sum_{i=1}^n \E [\left|\zeta_i - \E [\zeta_i]\right|^{a}] 
     = 2n \|\zeta - \E [\zeta]\|_{L^{a}}^{a} 
     \leq 2n \left(2\|\zeta\|_{L^a}\right)^{a}.
\end{align*}
The Markov inequality implies
\[
\P \Big(\Big|\sum_{i=1}^n (\zeta_i - \E [\zeta_i]) \Big| \geq t\Big) \;\leq\; \frac{2n \left(2\|\zeta\|_{L^a}\right)^{a}}{t^{a}}
\]
and thus we have with probability at least $1-\delta$
\begin{equation}\label{eq:boundzeta}
\Big|\sum_{i=1}^n (\zeta_i - \E [\zeta_i]) \Big| \;\leq\; 2^{1+1/a} \|\zeta\|_{L^a}(\tfrac{n}{\delta})^{1/a}.
\end{equation}

Applying~\eqref{eq:boundzeta} with $\zeta = g^{-2}$ and $a = k/2$ shows that $\P(\Omega_{4,1}(B_1)) \geq 1-\delta$. Similarly, applying~\eqref{eq:boundzeta} with $\zeta = yy_{\n}g^{-2}$ and noting that $\E[yy_{\n}g^{-2}] = (1-2\eta)\E[g^{-2}]$ by independence between $yy_{\n}$ and $g$ shows that $\P(\Omega_{4,2}(B_2)) \geq 1-\delta$.

\end{proof}

\begin{lemm} \label{lemm4-ex1}
\sloppypar{Suppose model \ref{model:EM} holds. Set $D = (\tfrac{n}{\delta})^{1/\ell} \|g\|_{L^\ell}$. Then for the event $\Omega_3$ defined in~\eqref{eq:omega3} we have $\P(\Omega_3(D)) \geq 1-\delta$. When $\ell = \infty$, we interpret $(n/\delta)^{1/\infty} = 1$.}  
\end{lemm}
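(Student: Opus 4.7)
The plan is to bound $\P(\max_i |g_i| > D)$ by a straightforward union bound followed by an application of Markov's inequality, and then to verify that the stated choice of $D$ is precisely what makes the resulting bound at most $\delta$.

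First, I would handle the case $\ell \in [2,\infty)$. Since the $g_i$ are i.i.d.\ copies of $g$, the union bound gives
\[
\P\!\left(\max_{i=1,\ldots,n} |g_i| > D\right) \;\leq\; n\,\P(|g| > D).
\]
Applying Markov's inequality to $|g|^\ell$ (which is integrable by the moment assumption \eqref{eq:condsg}) yields
\[
\P(|g| > D) \;=\; \P(|g|^\ell > D^\ell) \;\leq\; \frac{\E[|g|^\ell]}{D^\ell} \;=\; \frac{\|g\|_{L^\ell}^\ell}{D^\ell}.
\]
Substituting the stated value $D = (n/\delta)^{1/\ell}\|g\|_{L^\ell}$ gives $n\|g\|_{L^\ell}^\ell/D^\ell = \delta$, which proves $\P(\Omega_3(D)) \geq 1-\delta$.

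Second, for the edge case $\ell = \infty$, the convention $(n/\delta)^{1/\infty} = 1$ sets $D = \|g\|_{L^\infty}$, which is the essential supremum of $|g|$. Then $|g_i| \leq \|g\|_{L^\infty}$ almost surely for every $i$, so $\Omega_3(D)$ holds with probability $1 \geq 1-\delta$. I do not anticipate any real obstacle here; the argument is a one-line union-bound-plus-Markov estimate, and the only things to be careful about are correctly matching the exponent $\ell$ in the Markov step to the moment condition in \eqref{eq:condsg} and treating the $\ell = \infty$ case via the essential supremum rather than a moment bound.
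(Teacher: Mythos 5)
Your proof is correct and follows essentially the same route as the paper: both arguments reduce to the bound $n\,\E[|g|^\ell]/D^\ell=\delta$ via Markov's inequality on the $\ell$-th moment (the paper bounds $\E[\max_i|g_i|^\ell]\le n\,\E[|g|^\ell]$ and applies Markov once, while you apply a union bound and then Markov termwise — an interchangeable one-line difference), and the $\ell=\infty$ case is handled identically via the essential supremum.
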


\begin{proof}
For $\ell<\infty$, the result follows from the bound
\[
\E[\max_i |g_i|^\ell] \leq n \max_i \E[|g_i|^\ell] = n \E[|g|^\ell],
\]    
and the Markov inequality. For $\ell = \infty$, $\P(\Omega_3(D)) = 1$ holds with $D = \|g\|_{L^\infty}$.
\end{proof}

\subsection{Proof of Lemma~\ref{lemm5-ex1}}\label{sec:proof:lemm5-ex1}

\begin{proof}[Proof of Lemma~\ref{lemm5-ex1}]
From Lemma~\ref{lem:genboundsevents} we see that the event $\bigcap_{j=1}^5 E_j$ holds on the intersection of all events in~\eqref{eq:omega1}--\eqref{eq:omega42}. From Lemma~\ref{lemm1-ex1}--Lemma~\ref{lemm4-ex1} we find that the probability of the latter intersection is bounded below by $1-5\delta$. The precise forms of $\eps,\beta, \gamma, \rho, \alpha_1,\alpha_2$ follow from those lemmas after straightforward algebraic manipulations. Similarly, the event $\bigcap_{j=1}^4 E_j$ holds on the intersection of the events from ~\eqref{eq:omega1}--\eqref{eq:omega41}, that is, it does not require $\Omega_{4,2}$ and hence this intersection has probability bounded from below by $1-4\delta$. \end{proof}

\subsection{Proof of Theorem~\ref{thm:noiseless-ext-detail1} }\label{sec:proof:EM:noiseless1}

\begin{proof}[Proof of Theorem~\ref{thm:noiseless-ext-detail1}] 

Below, we will prove that the following conditions are sufficient to obtain the conclusion: let $\tilde C$ be the constant in Theorem~\ref{thm:noiseless-main}(i). Assume that $n\geq (\tfrac{4C_2(g)}{\delta^{2/k}})^\frac{k}{k-2}$, that
\begin{equation}\label{cond:noiseless-ext-detail1_1}
    \|\boldsymbol{\mu}\|^2 \geq 2\sqrt{2}\tilde C\|g^{-1}\|_{L^2}^{-1}\delta^{-1/2}\|\bs{\mu}\|_\Sigma,
\end{equation}
and that 
\begin{align} \label{cond:noiseless-ext-detail1_2}
\Tr(\Sigma) \geq \|g\|_{L^\ell}\|g^{-1}\|_{L^2}\left(\tfrac{n}{\delta}\right)^{1/\ell} & \max\left\{5\sqrt{\tfrac{3}{2}}C_1(r,K)\left(\tfrac{n}{\delta}\right)^{2/r}n^{1/2}\|\Sigma\|_F \max\{p^{\frac{2}{r}-\frac{1}{2}},n^{2/r}\}, \right. \\
& \qquad \qquad \left. 40\|g^{-1}\|_{L^2}\sqrt{\tfrac{n}{\delta}}n\|\bs{\mu}\|_\Sigma\right\}. \notag
\end{align}
By taking $C = \max\{2\sqrt{2}\tilde C\|g^{-1}\|_{L^2}^{-1},\|g\|_{L^\ell}\|g^{-1}\|_{L^2}\max\{5\sqrt{\tfrac{3}{2}}C_1(r,K),40\|g^{-1}\|_{L^2}\} \}$ we obtain the claim of theorem.

We note that the assumption~\eqref{cond:noiseless-ext-detail1_2} implies 
\begin{equation}\label{eq:noiseless-ext-tr-cond1}
\Tr(\Sigma) \geq 5\sqrt{\tfrac{3}{2}} C_1(r,K)\|g\|_{L^\ell}\|g^{-1}\|_{L^2}(\tfrac{n}{\delta})^{\frac{2}{r}+\frac{1}{\ell}}\max\{p^{\frac{2}{r} - \frac{1}{2}}, n^{\frac{2}{r}}\}\sqrt{n}\|\Sigma\|_F.
\end{equation}
This together with \eqref{eq:gnorm} yields
\[
1 \geq 5\sqrt{\tfrac{3}{2}}\|g\|_{L^\ell}\|g^{-1}\|_{L^2} \sqrt{n}\eps \geq 5\sqrt{\tfrac{3}{2}}\sqrt{n}\eps.
\]
Thus $\eps \leq \tfrac{1}{5}<\tfrac{1}{4}$, and the conditions of Lemma~\ref{lemm5-ex1} hold. The condition $n\geq (\tfrac{4C_2(g)}{\delta^{2/k}})^\frac{k}{k-2}$ implies $C_2(g)\delta^{-2/k}n^{-(1-2/k)} \leq \tfrac{1}{4}$ and this in turn yields $\beta <\tfrac{1}{2}$. We now check carefully that the other conditions of Theorem~\ref{thm:noiseless-main}(i) are also satisfied. 

The condition $\|\bs\mu\|\sqrt{(1-\beta)n\rho}\geq \tilde C\alpha_2$ is equivalent to 
$$
\|\bs\mu\|\sqrt{(1-\beta)n\tfrac{\E[g^{-2}]}{\Tr(\Sigma)}}\;\geq \tilde C \frac{2\sqrt{n}\|\bs{\mu}\|_\Sigma}{\sqrt{\delta\Tr(\Sigma)}\|\bs\mu\|},
$$
which, after some algebra, becomes
$$
\|\bs\mu\|^2\;\geq\;\frac{2}{\sqrt{1-\beta}}\tilde C\|g^{-1}\|^{-1}_{L^2}\delta^{-1/2}\|\bs{\mu}\|_\Sigma.
$$
Since $\beta< \tfrac12$, the above inequality follows by the assumption~\eqref{cond:noiseless-ext-detail1_1}.

The condition $\eps M\sqrt{(1+\beta)n\rho}\leq \tfrac{1}{4}$ is equivalent to
\[
\Tr(\Sigma) \geq 4\sqrt{1+\beta}(1+\eps)C_1(r,K)\|g\|_{L^\ell}\|g^{-1}\|_{L^2}(\tfrac{n}{\delta})^{\frac{2}{r}+\frac{1}{\ell}}\max\{p^{\frac{2}{r} - \frac{1}{2}}, n^{\frac{2}{r}}\}\sqrt{n}\|\Sigma\|_F,
\]
which follows from~\eqref{eq:noiseless-ext-tr-cond1} and the bounds $\eps < \tfrac{1}{4}, \beta<\tfrac{1}{2}$.

Next, note that the assumption~\eqref{cond:noiseless-ext-detail1_2} implies
\begin{equation}\label{eq:noiseless-ext-tr-cond3}
\Tr(\Sigma)\geq 40\|g\|_{L^\ell}\|g^{-1}\|_{L^2}^2\left(\tfrac{n}{\delta}\right)^{\frac{1}{2}+\frac{1}{\ell}}n\|\bs{\mu}\|_\Sigma. 
\end{equation}

The condition $\alpha_2\|\bs \mu\|\sqrt{(1+\beta)n\rho}\leq \tfrac{1}{4}$ is equivalent to
\[
\Tr(\Sigma) \geq 8\sqrt{1+\beta}\|g^{-1}\|_{L^2}\sqrt{n}\left(\tfrac{n}{\delta}\right)^{1/2}\|\bs{\mu}\|_\Sigma.
\]
This follows from~\eqref{eq:noiseless-ext-tr-cond3} and $\beta < \tfrac12$. 

Finally, $M\alpha_\infty \|\bs \mu\|(1+\beta)n\rho <\tfrac{3}{32}$ is equivalent to
\[
\Tr(\Sigma) > \tfrac{64}{3}(1+\eps)(1+\beta)\|g\|_{L^2}\|g^{-1}\|_{L^2}\left(\tfrac{n}{\delta}\right)^{\frac{1}{2}+\frac{1}{\ell}}n\|\bs{\mu}\|_\Sigma.
\]
This follows from~\eqref{eq:noiseless-ext-tr-cond3} and $\eps < \tfrac14, \beta < \tfrac12$. This completes the verification of the conditions of Theorem~\ref{thm:noiseless-main} (i). The conclusion follows from this result combined with the fact that $\rho = \E[g^{-2}]\Tr(\Sigma)^{-1}$.  
\end{proof}

\subsection{Proof of Theorem~\ref{thm:noiseless-ext-detail2} }\label{sec:proof:EM:noiseless2}

\begin{proof}[Proof of Theorem~\ref{thm:noiseless-ext-detail2}] 
By Lemma~\ref{lem:genboundsevents} event $E_3(M)$ holds with $M = D(1+\eps)\sqrt{\Tr(\Sigma)}$ on $\Omega_1(\eps) \cap \Omega_3(D)$. Use Lemma~\ref{lemm4-ex1} to see that $\P(\Omega_3(D)) \geq 1-\delta$ for $D = (\tfrac{n}{\delta})^{1/\ell} \|g\|_{L^\ell}$ and Lemma~\ref{lemm1-ex1} to get $\P(\Omega_1(\eps)) \geq 1-\delta$ for $\eps = C_1(r,K)\delta^{-2/r} \!\!\max\{n^{2/r} p^{2/r-1/2}, n^{4/r} \} \frac{\|\Sigma\|_{\textsc{f}}}{\Tr(\Sigma)}$. Thus $\P(E_3(M)) \geq \P(\Omega_1(\eps) \cap \Omega_3(D)) \geq 1-2\delta$. The condition~\eqref{cond2:noiseless-ext-detail2} implies that $\eps \leq \tfrac12$. Now the inequality $\|\bs\mu\| \geq CM$ follows by condition~\eqref{cond:noiseless-ext-detail2} and $\eps \leq \tfrac{1}{2}$. In consequence, the conditions of Theorem~\ref{thm:noiseless-main}(ii) hold and the conclusion follows. 
\end{proof}

\subsection{Proof of Theorem~\ref{thm:noisy-ext-detail1-simple}} \label{sec:proof:thm:noisy-ext-detail1-simple}

\begin{proof}[Proof of Theorem~\ref{thm:noisy-ext-detail1-simple}] 

We will prove that the following conditions are sufficient to obtain the conclusion of Theorem~\ref{thm:noisy-ext-detail1-simple}. From those conditions, it is clear how large the unspecified constant in the statement of Theorem~\ref{thm:noisy-ext-detail1-simple} needs to be.

Recall the definition $C_{2,\eta}=\max\{\tfrac{22}{\eta},\tfrac{17}{1-2\eta}\}$. For any $\delta \in (0, \tfrac{1}{5})$, if 
\begin{equation}\label{cond:noisy-ext-detail1}
    \|\boldsymbol{\mu}\|^2\; \geq \;2C_{2,\eta}\|g^{-1}\|_{L^2}^{-1}\delta^{-1/2}\|\bs{\mu}\|_\Sigma,
\end{equation} 
and
\begin{equation}\label{cond2:noisy-ext-detail1}
\Tr(\Sigma)\; \geq\; \tfrac{33}{16\eta}C_1(r,K)\|g\|_{L^\ell}\|g^{-1}\|_{L^2}(\tfrac{n}{\delta})^{2/r+1/\ell} n^{1/2}\max\{p^{2/r-1/2}, n^{2/r}\}\|\Sigma\|_{\textsc{f}}   
\end{equation}
together with  one of the following holds:
\begin{enumerate}
    \item[(i)] $\Tr(\Sigma) \geq 132 \|g\|_{L^\ell}\|g^{-1}\|_{L^2}^2(\tfrac{n}{\delta})^{1/2+1/\ell} n\|\bs{\mu}\|_\Sigma$, 
    \item[(ii)] $\|\boldsymbol{\mu}\|^2 \geq \tfrac{33}{16}C_{2,\eta}\|g\|_{L^l} (\tfrac{n}{\delta})^{1/2+1/\ell}\|\bs{\mu}\|_\Sigma$
    and \\
    $\displaystyle \Tr(\Sigma) \geq \tfrac{33}{8}C_{2,\eta} \|g\|_{L^l}\|g^{-1}\|_{L^2}(\tfrac{n}{\delta})^{1+1/\ell}n^{1/2}\frac{\|\bs{\mu}\|_\Sigma^2}{\|\boldsymbol{\mu}\|^2}$. 
\end{enumerate}
Assume further that $n \geq \delta^{-\frac{2}{k-2}}(\tfrac{16 C_2(g)}{\min\{\eta,1-2\eta\}})^{\tfrac{k}{k-2}}$. We now show that the above conditions are sufficient to obtain the conclusion of Theorem~\ref{thm:noisy-ext-detail1-simple}.

First, note that $C_2(g) \geq 4$ and $\tfrac{k}{k-2} \geq 2$ combined with the assumed lower bound on $n$ imply $n \geq \left(\tfrac{16}{\min\{\eta,1-2\eta\}}\right)^2 > 8^2$. 

Consider the definitions from Lemma~\ref{lemm5-ex1}. For $\eps$ defined therein, condition \eqref{cond2:noisy-ext-detail1} is equivalent to
\[
1 \geq \eps \frac{33}{16\eta}\|g\|_{L^\ell} \|g^{-1}\|_{L^2} (\tfrac{n}{\delta})^{1/\ell} n^{1/2} > 4 \sqrt{n} \eps
\]
where the last inequality uses \eqref{eq:gnorm}. The above inequality and $n>8^2$ imply $\eps < \tfrac{1}{32}$ which in turn shows that condition \eqref{cond:lemm5-ex1} in Lemma~\ref{lemm5-ex1} holds.
Consequently, with probability $1-5\delta$ the event $\bigcap_{i=1}^5 E_i$ holds with the parameters defined therein. The bound $\eps < \tfrac{1}{4\sqrt{n}}$ and $n \geq \left(\tfrac{16}{\min\{\eta,1-2\eta\}}\right)^2$ further show $\eps < \tfrac{\min\{\eta,1-2\eta\}}{16}$. By definition of $\beta, \gamma$, the assumption on $n$ implies $\eps\vee \beta\vee \gamma\leq \min\{\eta,1-2\eta\}/8$.

We now verify that the remaining conditions of  Theorem~\ref{detail-noisy-main-1-simple} hold by proving the conditions in $(N_{C_{2,\eta}})$. We first verify the condition $\eps M\sqrt{n\rho}\leq \tfrac{\eta}{2}$, which holds by \eqref{cond2:noisy-ext-detail1} and the fact that $\eps< \tfrac{1}{32}$. The condition $\|\bs\mu\|\geq C_{2,\eta}\tfrac{\alpha_2}{\sqrt{n\rho}}$ is equivalent to \eqref{cond:noisy-ext-detail1}. 

We now show that the conditions in (i) in the present Theorem~imply the conditions in (i) of $(N_{C_{2,\eta}})$.
Condition $\alpha_2\|\bs\mu\|\sqrt{n\rho}\leq \tfrac{1}{30}$ in part (i) of the definition of the event $(N_{C_{2,\eta}})$ is equivalent to $\Tr(\Sigma)\geq \tfrac{60}{\sqrt{\delta}}n\|g^{-1}\|_{L^2}\|\bs{\mu}\|_\Sigma$. Condition $\alpha_\infty\|\bs\mu\|Mn\rho\leq\tfrac{1}{64}$ is equivalent to 
$$
\Tr(\Sigma)\;\geq\; 128 (1+\eps)\|g\|_{L^\ell}\|g^{-1}\|^2_{L^2} n (\tfrac{n}{\delta})^{1/\ell+1/2}\|\bs{\mu}\|_\Sigma.
$$
It is easy to see both of these conditions hold by condition (i) in the theorem, the fact that $\eps< \tfrac{1}{32}$ and \eqref{eq:gnorm}. This completes the verification of (i) in the definition of ($N_{C_{2,\eta}}$).

We now verify part (ii) in the definition of ($N_{C_{2,\eta}}$) under (ii) of the present theorem. Condition $\|\bs\mu\|\geq C_{2,\eta}\alpha_\infty M$ is equivalent to 
$$
\|\bs\mu\|^2\;\geq\;2 C_{2,\eta}(1+\eps)\|g\|_{L^\ell}(\tfrac{n}{\delta})^{1/\ell+1/2}\|\bs{\mu}\|_\Sigma,
$$
which follows by the first condition in (ii) and the fact that $\eps < \tfrac{1}{32}$. Finally, to check the condition $\max\{\alpha_2^2,\alpha_2\alpha_\infty M\sqrt{n\rho}\}\leq C_{2,\eta}^{-1}$ is equivalent to 
\[
\Tr(\Sigma) \geq 4C_{2,\eta} \frac{n}{\delta}\max\left\{1,  (1+\eps) \|g\|_{L^\ell}\|g^{-1}\|_{L^2}\left(\frac{n}{\delta}\right)^{1/\ell}\sqrt{n}\right\}\frac{\|\bs{\mu}\|_\Sigma^2}{\|\bs \mu\|^2},
\]
which follows by the second condition in (ii), the fact that $\eps < \tfrac{1}{32}$ and \eqref{eq:gnorm}.

The conclusion now follows by Theorem~\ref{detail-noisy-main-1-simple}. 
\end{proof}

\subsection{Proof of Corollary~\ref{cor:sigmaid}}\label{sec:proof:cor:sigmaid}

\begin{proof}[Proof of Corollary~\ref{cor:sigmaid}] 
The conditions \eqref{cond:noisy-ext-detail1-simple} and \eqref{cond2:noisy-ext-detail1-simple} of Theorem~\ref{thm:noisy-ext-detail1-simple} can be summarized as
\[
\|\bs \mu\|\gtrsim 1 \quad \text{and} \quad p\gtrsim \max\left\{n^\frac{4+(1+2/\ell)r}{2(r-2)}, n^{\frac{8}{r}+ 1 + \frac{2}{\ell}}\right\}.
\]
Conditions (i) and (ii) of Theorem~\ref{thm:noisy-ext-detail1-simple} can be summarized as follows:
\begin{enumerate}
    \item[(i)] $p\gtrsim n^{\frac{3}{2} + \frac{1}{\ell}}\|\bs \mu\|$, 
    \item[(ii)] $\|\bs \mu\|\gtrsim n^{\frac{1}{2}+\frac{1}{\ell}}, p\gtrsim n^{\frac{3}{2} + \frac{1}{\ell}}$.
\end{enumerate}

Noting $r\in (2,4]$, we have $\frac{8}{r}+ 1 + \frac{2}{\ell} > \frac{3}{2} + \frac{1}{\ell}$. Therefore, the conditions of Theorem~\ref{thm:noisy-ext-detail1-simple} follow from:
\begin{enumerate}
    \item[(i)] $1 \lesssim \|\bs \mu\|\lesssim \frac{p}{n^{\frac{3}{2} + \frac{1}{\ell}}}$ and $p\gtrsim \max\left\{n^\frac{4+(1+2/\ell)r}{2(r-2)}, n^{\frac{8}{r}+ 1 + \frac{2}{\ell}}\right\}$, or 
    \item[(ii)] $\|\bs \mu\|\gtrsim n^{\frac{1}{2}+\frac{1}{\ell}}$ and $p\gtrsim \max\left\{n^\frac{4+(1+2/\ell)r}{2(r-2)}, n^{\frac{8}{r}+ 1 + \frac{2}{\ell}}\right\}$.
\end{enumerate}
Finally, noting that $\frac{p}{n^{\frac{3}{2} + \frac{1}{\ell}}} \gtrsim n^{\frac{1}{2}+\frac{1}{\ell}}$ holds under $p\gtrsim n^{\frac{8}{r}+ 1 + \frac{2}{\ell}}$, the valid regions for $\|\bs \mu\|$ (i) and (ii) actually overlap. Therefore, we can simply state the conditions as
\[
\|\bs \mu\|\gtrsim 1 \quad \text{and} \quad p\gtrsim \max\left\{n^\frac{4+(1+2/\ell)r}{2(r-2)}, n^{\frac{8}{r}+ 1 + \frac{2}{\ell}}\right\}.
\] 
The upper bound on the test error reduces to
\[
\P_{(\boldsymbol{x}, y)}(\langle \boldsymbol{\hat w}, y\boldsymbol{x}\rangle < 0) \leq \eta + \frac{c }{(1 - 2\eta)^2} \left(\eta\frac{\E[g^{-2}] n}{p} + \frac{1}{\|\bs \mu\|^2} + \frac{p}{\E[g^{-2}]n \|\boldsymbol{\mu}\|^4}\right).
\]
Under the assumption $p\gtrsim \max\left\{n^\frac{4+(1+2/\ell)r}{2(r-2)}, n^{\frac{8}{r}+ 1 + \frac{2}{\ell}}\right\}$ this bound converges to $\eta$ provided that $\|\bs\mu\| \gg (p/n)^{1/4}$. \end{proof}

\subsection{Proof of Theorem~\ref{thm:phasetransitionsimple}} \label{sec:proof:thm:phasetransitionsimple}

We start by providing lower and upper bounds for the probability $\mathbb{P}\left( \langle \boldsymbol{w}, y_\n \boldsymbol{x} \rangle  <0 \right)$ in a special setting. It can be seen as a version of Lemma~\ref{test-error-spherical} that is tailored to model \ref{model:EM}.

\begin{lemm}\label{lemm:lowupboundEM}
Suppose model \ref{model:EM} holds with $\bs z = gW\bs \xi$ with $\|\bs \xi\|$ and $\bs \xi/\|\bs \xi\|$ independent and $\bs \xi/ \|\bs \xi\|$ has a density $f$ with respect to the uniform distribution on the sphere $S^{p-1}$ such that $f_{min} \leq f \leq f_{max}$. Then we have
\begin{align*}
\frac{1-2\eta}{2}f_{min}\P \left(g\|\bs \xi\||u_1| > \frac{\langle \bs w, \bs \mu \rangle}{\lambda_{min}^{1/2}\| \bs w\|} \right)
\leq \mathbb{P}\left( \langle \boldsymbol{w}, y_\n \boldsymbol{x} \rangle  <0 \right)  - \eta \leq 
\frac{1-2\eta}{2} f_{max}\P \left(g\|\bs \xi\||u_1| > \frac{\langle \bs w, \bs \mu \rangle}{\lambda_{max}^{1/2}\| \bs w\|} \right),
\end{align*}
where $\lambda_{min}$ and $\lambda_{max}$ are the smallest and the largest eigenvalue of $\Sigma$, respectively and $u_1$ is the first component of a random vector $\bs u$ that has a uniform distribution on the sphere $S^{p-1}$ and is independent of $\|\bs\xi\|$ and $g$.
\end{lemm}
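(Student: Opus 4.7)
The plan is to adapt the argument of Lemma~\ref{test-error-spherical} to the anisotropic setting by reducing the problem to the isotropic one via the change of variables $\tilde{\bs w} := \Sigma^{1/2}\bs w$. First, I would use the standard conditioning on the noise indicator to write
\[
\mathbb{P}\left( \langle \boldsymbol{w}, y_\n \boldsymbol{x} \rangle  <0 \right) = \eta + (1-2\eta)\,\mathbb{P}\left( \langle \boldsymbol{w}, y\boldsymbol{x} \rangle  <0 \right),
\]
and then exploit the symmetry of $y$ together with the independence of $y$ and $\bs z$ to deduce
\[
\mathbb{P}\left( \langle \boldsymbol{w}, y\boldsymbol{x} \rangle  <0 \right) = \tfrac12\, \mathbb{P}\big( |\langle \bs w, \bs z\rangle| > \langle \bs w, \bs \mu\rangle\big),
\]
exactly as in the proof of Lemma~\ref{testerror}.

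Next I would substitute $\bs z = g\Sigma^{1/2}\bs \xi$ and set $\tilde{\bs w}=\Sigma^{1/2}\bs w$, so that $\langle \bs w, \bs z\rangle = g\,\|\tilde{\bs w}\|\,\langle \tilde{\bs w}/\|\tilde{\bs w}\|, \bs \xi\rangle$. Using the hypothesis that $\|\bs\xi\|$ and $\bs\xi/\|\bs\xi\|$ are independent and that $\bs\xi/\|\bs\xi\|$ has density $f\in[f_{min},f_{max}]$ with respect to the uniform measure $Q$ on $S^{p-1}$, and applying the same rotational-invariance trick as in Lemma~\ref{test-error-spherical} (pick $R\in O(p)$ with $R^\top \tilde{\bs w}/\|\tilde{\bs w}\| = \bs e_1$), one obtains
\[
f_{min}\,\mathbb{P}\!\left( g\|\bs\xi\|\,|u_1| > \tfrac{\langle \bs w,\bs\mu\rangle}{\|\tilde{\bs w}\|}\right)
\;\leq\; \mathbb{P}\!\left(|\langle \bs w,\bs z\rangle| > \langle \bs w,\bs\mu\rangle\right)
\;\leq\; f_{max}\,\mathbb{P}\!\left( g\|\bs\xi\|\,|u_1| > \tfrac{\langle \bs w,\bs\mu\rangle}{\|\tilde{\bs w}\|}\right),
\]
where $u_1$ is the first coordinate of a uniform vector on $S^{p-1}$ independent of $(g,\|\bs\xi\|)$. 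Here one has to be careful to carry out the conditioning on $g$ first and integrate against $f\,\mathrm{d}Q$ using the boundedness of $f$, then undo the conditioning; this is the only mildly delicate step.

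Finally I would translate the bound $\lambda_{min}\|\bs w\|^2 \leq \|\tilde{\bs w}\|^2 \leq \lambda_{max}\|\bs w\|^2$ into a threshold comparison. Because $\langle \bs w,\bs \mu\rangle>0$ (from the conditions of Theorem~\ref{detail-noisy-main-1-simple} which are implicitly in force here), replacing $\|\tilde{\bs w}\|$ by the larger quantity $\lambda_{max}^{1/2}\|\bs w\|$ lowers the threshold and thus increases the upper-tail probability, which gives the upper bound with $\lambda_{max}^{1/2}$ in the denominator. Symmetrically, replacing $\|\tilde{\bs w}\|$ by $\lambda_{min}^{1/2}\|\bs w\|$ gives the matching lower bound. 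Combining these with the factor $(1-2\eta)/2$ from the first display yields the claim. The main conceptual point is the anisotropy reduction via $\tilde{\bs w}$; no essential obstacle beyond bookkeeping is expected.
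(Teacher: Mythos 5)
Your proposal is correct and follows essentially the same route as the paper: reduce to $\tfrac12\P(|\langle \bs w,\bs z\rangle|>\langle\bs w,\bs\mu\rangle)$ by conditioning and symmetry, rewrite the event in terms of $\Sigma^{1/2}\bs w/\|\Sigma^{1/2}\bs w\|$ and $\bs\xi/\|\bs\xi\|$, apply the density bounds and rotational invariance as in Lemma~\ref{test-error-spherical}, and finally replace $\|\Sigma^{1/2}\bs w\|$ by $\lambda_{max}^{1/2}\|\bs w\|$ (resp.\ $\lambda_{min}^{1/2}\|\bs w\|$) using the monotonicity of the tail in the threshold, valid since $\langle\bs w,\bs\mu\rangle>0$. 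No gaps.
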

\begin{proof}[Proof of Lemma~\ref{lemm:lowupboundEM}]
The proof is the same as that of Lemma~\ref{test-error-spherical} except the way we handle $\P \left(|\langle \bs w, \bs z \rangle| > \langle \bs w, \bs \mu \rangle \right)$:
\begin{align*}
 \P \left(|\langle \bs w, \bs z \rangle| > \langle \bs w, \bs \mu \rangle \right) 
= & \P \left(g\|\bs \xi\| \left|\left\langle \bs w, W \frac{\bs \xi}{\|\bs \xi\|}  \right\rangle \right| > \langle \bs w, \bs \mu \rangle \right) \\
= & \P \left(g\|\bs \xi\| \left|\left\langle W^\top \bs w,  \frac{\bs \xi}{\|\bs \xi\|}  \right\rangle \right| > \langle \bs w, \bs \mu \rangle \right) \\
= & \P \left(g\|\bs \xi\| \left|\left\langle \frac{W^\top \bs w}{\|W^\top \bs w\|},  \frac{\bs \xi}{\|\bs \xi\|}  \right\rangle \right| > \frac{\langle \bs w, \bs \mu \rangle}{\|W^\top \bs w\|} \right) \\
= & \P \left(g\|\bs \xi\| \left|\left\langle \frac{W^\top \bs w}{\|W^\top \bs w\|},  \frac{\bs \xi}{\|\bs \xi\|}  \right\rangle \right| > \frac{\langle \bs w, \bs \mu \rangle}{\|\bs w\|_\Sigma} \right). \quad \text{(by $\|W^\top \bs w\| = \|\bs w\|_\Sigma$)}
\end{align*} 
By following the same arguments as in the proof of Lemma~\ref{test-error-spherical}, we have 
\begin{align*}
f_{min}\P \left(g\|\bs \xi\||u_1| > \frac{\langle \bs w, \bs \mu \rangle}{\lambda_{min}^{1/2}\| \bs w\|} \right)
\leq & f_{min}\P \left(g\|\bs \xi\||u_1| > \frac{\langle \bs w, \bs \mu \rangle}{\|\bs w\|_\Sigma} \right) \\
\leq & \P \left(|\langle \bs w, \bs z \rangle| > \langle \bs w, \bs \mu \rangle \right) \\
\leq & f_{max}\P \left(g\|\bs \xi\||u_1| > \frac{\langle \bs w, \bs \mu \rangle}{\|\boldsymbol{w}\|_\Sigma} \right)
\leq f_{max}\P \left(g\|\bs \xi\||u_1| > \frac{\langle \bs w, \bs \mu \rangle}{\lambda_{max}^{1/2}\| \bs w\|} \right).
\end{align*}

\end{proof}

We will now prove the following result, which implies Theorem~\ref{thm:phasetransitionsimple} as special case as we will explain below.

\begin{thm}\label{thm:phasetransitionsimple-complicated}
Suppose \ref{model:EM} holds in model \ref{model:M} such that there exists an $L<\infty$ with $\|\xi_j\|_{\psi_2} \leq L$ and $\Sigma$ has largest and smallest eigenvalues $\lambda_{max}, \lambda_{min} > 0$, respectively. Assume that moreover $\|\bs \xi\|$ and $\bs \xi/\|\bs \xi\|$ are independent and $\bs \xi/ \|\bs \xi\|$ has a density $f$ with respect to the uniform distribution on the sphere $S^{p-1}$ such that $f_{min} \leq f \leq f_{max}$.

Define $C_{\eta,\delta} := \min\{\eta,(1-2\eta)\}^{-1} \delta^{-1-1/\ell} \sqrt{\log \delta^{-1}}$ and 
\[
\kappa(t) := \tfrac{1-2\eta}{2} \P\left(g \|\bs\xi\| |u_1| > t \right)
\]
where $u_1$ has the same distribution as the first entry of a uniform distribution on the sphere and is independent of $g, \bs\xi$.

For any $\delta \in (0, 1/5]$, there exists a sufficiently large constant $C$ depending only on $L$ and the distribution of $g$ such that provided 
\[
p\geq C^2 C_{\eta,\delta}^2 \Big(\frac{ \lambda_{max}}{\lambda_{min}}\Big)^2 n^{2+2/\ell} \quad \text{and} \quad \|\bs\mu\| \geq C C_{\eta,\delta} \lambda_{max} 
\]
we have for $n \geq \delta^{-\frac{2}{k-2}}\left(\tfrac{16C_2(g)}{\min\{\eta, 1-2\eta\}}\right)^{\frac{k}{k-2}}$, with probability at least $1-5\delta$, 
\[
f_{min}\cdot \kappa(\lambda_{min}^{-1/2}\zeta_{\bs{\hat w}, \bs \mu}^{-1}) \;\leq\; \P_{(g, \bs \xi, y_\n)}(\langle \boldsymbol{\hat w}, y_\n\boldsymbol{x}\rangle < 0) - \eta \;\leq\; f_{max}\cdot  \kappa(\lambda_{max}^{-1/2}\zeta_{\bs{\hat w}, \bs \mu}^{-1}),
\]
where $\zeta_{\bs{\hat w}, \bs \mu}>0$ satisfies
\[
c^{-1}\frac{\lambda_{min}\wedge 1}{\lambda_{max}\vee 1} \leq
\zeta_{\bs{\hat w}, \bs \mu}^2 \Big\{\frac{1}{1-2\eta}\Big(\eta \frac{n\E[g^{-2}]}{p} + \frac{1}{\|\bs\mu\|^2} + \frac{p}{n\E[g^{-2}]\|\bs \mu\|^4} \Big)\Big\}^{-1} \leq  c \frac{\lambda_{max}\vee 1}{\lambda_{min}\wedge 1} 
\]
for a universal constant $c$.
\end{thm}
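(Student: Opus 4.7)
The plan is to combine Theorem~\ref{detail-noisy-main-1-simple} (which gives the expansion of $\zeta_{\bs{\hat w},\bs\mu} = \|\bs{\hat w}\|/\langle \bs{\hat w},\bs\mu\rangle$ through Lemma~\ref{noisy-mmc-bound1}) with Lemma~\ref{lemm:lowupboundEM} (which translates $\zeta_{\bs{\hat w},\bs\mu}$ into lower and upper test-error bounds under the spherical symmetry of $\bs\xi/\|\bs\xi\|$). The work then reduces to verifying that the high-level conditions of Theorem~\ref{detail-noisy-main-1-simple} hold with the parameters predicted by Lemma~\ref{lem:genboundsevents} in the sub-Gaussian setting.

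First I would produce sub-Gaussian versions of Lemmas~\ref{lemm1-ex1}--\ref{lemm4-ex1}. Using Hanson--Wright concentration for $\|VV^\top-\Tr(\Sigma)I_n\|$, standard Gaussian-type bounds on $\|V\bs\mu\|$ and $\|V\bs\mu\|_\infty$, and a Bernstein inequality for $\sum g_i^{-2}$, I can take
\[
\eps \lesssim L^2\!\left(\tfrac{n\log\delta^{-1}\|\Sigma\|+\sqrt{n\log\delta^{-1}}\|\Sigma\|_{\textsc{f}}}{\Tr(\Sigma)}\right), \quad b_2 \lesssim L\sqrt{\log\delta^{-1}}\,\|\Sigma^{1/2}\bs\mu\|,
\]
and analogous sharper values of $b_\infty, D, B_1, B_2$, all with poly-logarithmic rather than polynomial dependence on $\delta$. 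Feeding these into Lemma~\ref{lem:genboundsevents} gives $\eps, \beta, \gamma, \alpha_2, \alpha_\infty$ and $M$ whose orders are captured by the quantity $C_{\eta,\delta}$ in the statement.

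Second I would verify the simplified sufficient condition~\eqref{eq:condsimplenoisymain}. Using $\lambda_{min}p\leq \Tr(\Sigma)\leq \lambda_{max}p$ and $\|\Sigma^{1/2}\bs\mu\|\leq \lambda_{max}^{1/2}\|\bs\mu\|$, the hypotheses $\|\bs\mu\|\geq C\,C_{\eta,\delta}\lambda_{max}$ and $p\geq C^2 C_{\eta,\delta}^2(\lambda_{max}/\lambda_{min})^2 n^{2+2/\ell}$ make each of the three requirements $\|\bs\mu\|\gtrsim C_{2,\eta}\alpha_2 M$, $[\alpha_2\vee\eps]M\sqrt{n\rho}\leq 16/(17C_{2,\eta})$, $\beta\vee\gamma \leq 16/(17C_{2,\eta})$ straightforward to check by substituting the expressions from step one; the constraint $n\geq \delta^{-2/(k-2)}(16C_2(g)/\min\{\eta,1-2\eta\})^{k/(k-2)}$ handles $\beta,\gamma$ exactly as in Theorem~\ref{thm:noisy-ext-detail1-simple}. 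This takes place with probability at least $1-5\delta$ via the union bound of step one.

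Third, once the conditions of Theorem~\ref{detail-noisy-main-1-simple} hold, Lemma~\ref{noisy-mmc-bound1} gives
\[
\zeta_{\bs{\hat w},\bs\mu}^2 \;\asymp\; \frac{1}{(1-2\eta)^2}\Big\{\eta n\rho + \frac{1}{\|\bs\mu\|^2} + \frac{1}{n\rho\|\bs\mu\|^4}\Big\},
\]
with $\rho = \E[g^{-2}]/\Tr(\Sigma)$. Replacing $\Tr(\Sigma)$ by its two-sided bound $\lambda_{min}p\leq \Tr(\Sigma)\leq \lambda_{max}p$ delivers the stated two-sided bound on $\zeta_{\bs{\hat w},\bs\mu}^2$ with the multiplicative factor $(\lambda_{max}\vee 1)/(\lambda_{min}\wedge 1)$. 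Finally, a direct application of Lemma~\ref{lemm:lowupboundEM} with $\bs w = \bs{\hat w}$ and with $\langle \bs{\hat w},\bs\mu\rangle/\|\bs{\hat w}\| = \zeta_{\bs{\hat w},\bs\mu}^{-1}$ produces the claimed sandwich $f_{min}\kappa(\lambda_{min}^{-1/2}\zeta_{\bs{\hat w},\bs\mu}^{-1}) \leq \P(\langle \bs{\hat w},y_\n\bs x\rangle<0)-\eta \leq f_{max}\kappa(\lambda_{max}^{-1/2}\zeta_{\bs{\hat w},\bs\mu}^{-1})$.

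The main obstacle will be the careful constant-tracking in the second step: the condition~\eqref{eq:condsimplenoisymain} contains cross-terms such as $\alpha_2 M\sqrt{n\rho}$ and $\eps M\sqrt{n\rho}$ in which the sub-Gaussian parameters, the $\log\delta^{-1}$ factors, the factors of $\|g\|_{L^\ell}$ and $\|g^{-1}\|_{L^2}$, and the ratios $\lambda_{max}/\lambda_{min}$ all appear simultaneously, and they must be absorbed into a single constant $C$ that depends only on $L$ and the distribution of $g$ while keeping the dependence on $(n,p,\|\bs\mu\|)$ exactly as in the statement. The anisotropic setting is what forces the $\lambda_{max}/\lambda_{min}$ dependence that does not appear in the corresponding isotropic corollary.
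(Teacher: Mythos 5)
Your overall architecture matches the paper's: sub-Gaussian concentration to control the events $\Omega_1$--$\Omega_{4,2}$ and hence $E_1,\dots,E_5$, then Lemma~\ref{noisy-mmc-bound1} for the two-sided expansion of $\zeta_{\bs{\hat w},\bs\mu}$, the eigenvalue sandwich $\lambda_{min}p\leq \Tr(\Sigma)\leq\lambda_{max}p$, and finally Lemma~\ref{lemm:lowupboundEM}. However, your second step has a genuine gap. You propose to verify the simplified sufficient condition~\eqref{eq:condsimplenoisymain}, whose first requirement is $\|\bs\mu\|\geq\tfrac{17}{16}C_{2,\eta}\alpha_2 M$. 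With the sub-Gaussian parameters, $\alpha_2 M\asymp L\|g\|_{L^\ell}\sqrt{n\log\delta^{-1}}\,(n/\delta)^{1/\ell}\,\|\Sigma^{1/2}\bs\mu\|/\|\bs\mu\|$ — the $\sqrt{\Tr(\Sigma)}$ factors cancel — so this requirement forces $\|\bs\mu\|\gtrsim \lambda_{max}^{1/2}\,n^{1/2+1/\ell}$ up to logarithmic and $\delta$ factors. The theorem's hypothesis $\|\bs\mu\|\geq C\,C_{\eta,\delta}\lambda_{max}$ carries no $n$-dependence, so for large $n$ the condition you want to check simply fails (already in the isotropic case $\Sigma=I_p$, $g\equiv 1$), and making $p$ larger does not help since $\alpha_2 M$ is $p$-free. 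Condition~\eqref{eq:condsimplenoisymain} is a route into the large-signal regime (ii) of $(N_{C_{2,\eta}})$ only.

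What the paper does instead — and what your argument needs — is a dichotomy. First verify the conditions that are common to both regimes (the analogue of~\eqref{eq:esGtracebound} and~\eqref{eq:esGmubound} in Lemma~\ref{lemm:esG-noisy-what-bound}, both of which do follow from your hypotheses). Then split: if $\|\bs\mu\|$ exceeds the threshold $\asymp\lambda_{max}^{1/2}C_{2,\eta}L\|g\|_{L^\ell}\delta^{-1}(n/\delta)^{1/\ell}\sqrt{\log n}$, regime (ii) applies (its trace condition follows from $p\geq C^2C_{\eta,\delta}^2(\lambda_{max}/\lambda_{min})^2 n^{2+2/\ell}$); if it does not, that very upper bound on $\|\bs\mu\|$, combined with the lower bound on $p$, yields $p\gtrsim (\lambda_{max}^{1/2}/\lambda_{min})\,C\,C_{\eta,\delta}\,n^{2+1/\ell}\|\bs\mu\|(\log n)^{-1/2}$, which is exactly the small-to-intermediate-signal condition (i). Without this case analysis the proof does not close. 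A secondary point: you cannot get poly-logarithmic $\delta$-dependence for $B_1,B_2,D,b_\infty$ via Bernstein, since $g^{-2}$ is only assumed to have a finite $k/2$-th moment and $g$ a finite $\ell$-th moment; the Markov/Bahr--Esseen bounds give polynomial factors $\delta^{-2/k}$, $\delta^{-1/\ell}$, $\delta^{-1}$, which is precisely why $C_{\eta,\delta}$ contains $\delta^{-1-1/\ell}$ and why the separate lower bound on $n$ is needed to control $\beta\vee\gamma$.
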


\begin{proof}[Proof of Theorem~\ref{thm:phasetransitionsimple-complicated}]

We will prove a slightly sharper result that is explicit with respect to all constants. Specifically, we will show that the following assumptions suffice:

Let $\tilde C_1 $ be the constant in Lemma~\ref{lemm:esGOmega1}, $\tilde C_2 , \tilde C_3 $ be the constants in Lemma~\ref{lemm:esGOmega2}, $C_2(g)$ as defined in~\eqref{eq:defconstantsEM}, and $C_{2,\eta}:=\max\{\tfrac{22}{\eta},\tfrac{17}{1-2\eta}\}$. Suppose \ref{model:EM} holds in model \ref{model:M} such that there exists an $L<\infty$ with $\|\xi_j\|_{\psi_2} \leq L$ and with $\Sigma$ such that the smallest and largest eigenvalues are given by $\lambda_{min}, \lambda_{max}$, respectively. Assume that
    \begin{align*}
    p & \geq \left(\frac{\lambda_{max}}{\lambda_{min}}\frac{33\tilde C_1 L^2}{16\eta}  \|g\|_{L^\ell}\|g^{-1}\|_{L^2} (\tfrac{n}{\delta})^{1/\ell}\right)^2 n^2 \log (\tfrac{1}{\delta}),  \\
    \|\boldsymbol{\mu}\| & \geq 2 \lambda_{max}\tilde C_2 C_{2,\eta}\frac{L}{\|g^{-1}\|_{L^2}} \sqrt{\log(\tfrac{1}{\delta})},
    \end{align*}
    and further one of the following conditions holds:
    \begin{enumerate}
        \item[(i)] $\displaystyle p \geq \max\left\{60\tilde C_2  n\sqrt{\log(\tfrac{1}{\delta})}, 132\tilde C_3 \|g^{-1}\|_{L^2}\|g\|_{L^\ell}(\tfrac{n}{\delta})^{1+1/\ell}\sqrt{\log n}\right\} L\|g^{-1}\|_{L^2}    \|\boldsymbol{\mu}\|\tfrac{\lambda_{max}^{1/2}}{\lambda_{min}}$, 
        \item [(ii)] $\displaystyle \|\boldsymbol{\mu}\| \geq \lambda_{max}^{1/2}\tfrac{33}{16}\tilde C_3 C_{2,\eta}L\|g\|_{L^\ell}\delta^{-1}(\tfrac{n}{\delta})^{1/\ell}\sqrt{\log n}$ and \\       $\displaystyle p \geq 4\tfrac{\lambda_{max}}{\lambda_{min}}\tilde C_2  C_{2,\eta} L^2 \max\left\{\tilde C_2  n\log (\tfrac{1}{\delta}), \tfrac{33}{32}\tilde C_3\|g\|_{L^\ell}\|g^{-1}\|_{L^2}(\tfrac{n}{\delta})^{\frac{1}{2}+\frac{1}{\ell}}\sqrt{\tfrac{1}{\delta}\log(\tfrac{1}{\delta})\log n}\right\}$.
    \end{enumerate}

Before proceeding, we note that by a combination of Lemma~\ref{noisy-mmc-bound1}, Lemma~\ref{lemm:esGboundsevents} and Lemma~\ref{lemm:esG-noisy-what-bound} under the above conditions we have for $n \geq \delta^{-\frac{2}{k-2}}\left(\tfrac{16C_2(g)}{\min\{\eta, 1-2\eta\}}\right)^{\frac{k}{k-2}}$, with probability at least $1-5\delta$, $\langle \boldsymbol{\hat w}, \boldsymbol{\mu} \rangle>0$ and 
\begin{multline}\label{eq:lowupbounds}
\frac{c^{-1}}{(1-2\eta)^2}\left\{ \eta \frac{n \E[g^{-2}]}{p\lambda_{max}} + \frac{1}{\|\bs\mu\|^2} + \frac{p\lambda_{min}}{n \E[g^{-2}] \|\boldsymbol{\mu}\|^4} \right\} \leq \left(\frac{\|\boldsymbol{\hat w}\|}{\langle \boldsymbol{\hat w}, \boldsymbol{\mu} \rangle}\right)^2
\\
\leq \frac{c}{(1-2\eta)^2}\left\{ \eta \frac{n \E[g^{-2}]}{p\lambda_{min}} + \frac{1}{\|\bs\mu\|^2} + \frac{p\lambda_{max}}{n \E[g^{-2}] \|\boldsymbol{\mu}\|^4} \right\} 
\end{multline}
where $c$ is a universal constant. 

We will now show that the conditions mentioned above hold if we pick the constant $C$ in the statement of Theorem~\ref{thm:phasetransitionsimple-complicated} sufficiently large. Assume without loss of generality that $C \geq 1$. The condition $p\geq C^2 C_{\eta,\delta}^2 \Big(\frac{\lambda_{max}}{ \lambda_{min}}\Big)^2 n^{2+2/\ell}$ implies both of the following assumptions
\begin{align*}
p & \geq \left(\frac{\lambda_{max}}{\lambda_{min}}\frac{33\tilde C_1 L^2}{16\eta}  \|g\|_{L^\ell}\|g^{-1}\|_{L^2} (\tfrac{n}{\delta})^{1/\ell}\right)^2 n^2 \log (\tfrac{1}{\delta})
\\
p &\geq 4\tfrac{\lambda_{max}}{\lambda_{min}}\tilde C_2  C_{2,\eta} L^2 \max\left\{\tilde C_2  n\log (\tfrac{1}{\delta}), \tfrac{33}{32}\tilde C_3\|g\|_{L^\ell}\|g^{-1}\|_{L^2}(\tfrac{n}{\delta})^{\frac{1}{2}+\frac{1}{\ell}}\sqrt{\tfrac{1}{\delta}\log(\tfrac{1}{\delta})\log n}\right\},
\end{align*}
while the condition $\|\bs\mu\| \geq C C_{\eta,\delta}\lambda_{max}$ yields $\|\boldsymbol{\mu}\| \geq 2 \lambda_{max}\tilde C_2 C_{2,\eta}\frac{L}{\|g^{-1}\|_{L^2}} \sqrt{\log(\tfrac{1}{\delta})}$. 
Hence it suffices to establish that either the assumption in (i) or the assumption on $\|\bs\mu\|$ in (ii) holds. If the assumption $\|\boldsymbol{\mu}\| \geq \lambda_{max}^{1/2}\tfrac{33}{16}\tilde C_3 C_{2,\eta}L\|g\|_{L^\ell}\delta^{-1}(\tfrac{n}{\delta})^{1/\ell}\sqrt{\log n}$ fails, we must have 
\begin{align*}
\|\boldsymbol{\mu}\| &< \lambda_{max}^{1/2}\tfrac{33}{16}\tilde C_3 C_{2,\eta}L\|g\|_{L^\ell}\delta^{-1}(\tfrac{n}{\delta})^{1/\ell}\sqrt{\log n}
\\
&\leq \lambda_{max}^{1/2}C C_{\eta,\delta} n^{1/\ell}\sqrt{\log n}
\\
&\leq \lambda_{max}^{1/2}\Big(\frac{\lambda_{min}}{\lambda_{max}}\Big)^2\frac{p C C_{\eta,\delta} n^{1/\ell}\sqrt{\log n}}{C^2 C_{\eta,\delta}^2 n^{2+2/\ell}}
\\
& \leq \frac{\lambda_{min}}{\lambda_{max}^{1/2}}\frac{p \sqrt{\log n}}{C C_{\eta,\delta} n^{2+1/\ell}}.
\end{align*}
Thus $p \geq \frac{\lambda_{max}^{1/2}}{\lambda_{min}}C C_{\eta,\delta} n^{2+1/\ell} \|\boldsymbol{\mu}\| (\log n)^{-1/2}$. Since $\log n \leq n$ for $n \geq 1$, it follows that (i) is met.

In summary, we have established that the bounds in~\eqref{eq:lowupbounds} hold. The assumptions we placed on $\bs\xi$ imply the conditions of Lemma~\ref{lemm:lowupboundEM}, and the conclusion of Theorem~\ref{thm:phasetransitionsimple-complicated} follows after some elementary calculations.  

\end{proof}

\textbf{Proof of Theorem~\ref{thm:phasetransitionsimple}}
Theorem~\ref{thm:phasetransitionsimple-complicated} implies Theorem~\ref{thm:phasetransitionsimple} is because for $\bs \xi \sim N(0,I_p)$ we have independence between $\|\bs \xi\|$ and $\bs\xi/\|\bs \xi\|$. Moreover $\bs\xi/\|\bs \xi\|$ follows a uniform distribution on the sphere, so we can set $f_{min} = f_{max} = 1$. Moreover, for $\bs u$ a random vector with uniform distribution on the sphere that is independent of $g, \|\bs \xi\|$, the random variable $g \|\bs \xi\| |u_1|$ has the same distribution as $g|\xi_1|$. \hfill $\Box$

\subsection{Proof of Lemma~\ref{lem:noisy_geom}}\label{sec:proof:lem:noisy_geom}

\begin{proof}[Proof of Lemma~\ref{lem:noisy_geom}]
    We prove the expression for $\bs z_{\perp,\cs}+\bs\mu$ with the formula for $\bs z_{\perp,\n}-\bs\mu$ having an analogous proof.  Denote $D=\bar Z\bar Z^\top$ (diagonal matrix). Let $\bs e_{\cs}$ and $\bs e_{\textsc{n}}$ be  $0/1$-vectors with $1$'s corresponding to the clean and noisy samples respectively. Also, let $s=\bs 1^\top D^{-1}\bs 1$, $s_{\cs}=\bs e_{\cs}^\top D^{-1}\bs 1$, and $s_\n = \bs e_\n^\top D^{-1} \bs 1=s- s_\cs$.
    Using for the clean subsample the same argument as in the previous subsection we get that $$\frac{\hat{\bs w}_{\cs}}{\|\hat{\bs w}_{\cs}\|^2}\;=\;\bs \mu+\frac{\bar Z_{\cs}^\top(\bar Z_{\cs}\bar Z_{\cs}^\top)^{-1}\bs 1}{\bs 1^\top (\bar Z_{\cs}\bar Z_{\cs}^\top)^{-1}\bs 1},$$
    where $\bar Z_{\cs}$ is a submatrix of $\bar Z$ with rows corresponding to clean samples. Thus, to show $\bs z_{\perp,\cs}+\bs\mu=\frac{\hat{\bs w}_{\cs}}{\|\hat{\bs w}_{\cs}\|^2}$, it is enough to show that the last term is equal to $\bs z_{\perp,\cs}=\sum_{i:\rm{clean}} \tfrac{\alpha_i}{\nu_{\cs}}\bar{\bs z}_i$. Since $\bar{\bs z}_i$'s are orthogonal, it is equivalent to show that the vectors ${(\bar Z_{\cs}\bar Z_{\cs}^\top)^{-1}\bs 1}$ and $[(\bar X\bar X^\top)^{-1}\bs 1]_{\cs}$ are scalar multiple of each others.  Note that $\bar X=(\bs e_{\cs}-\bs e_{\n})\bs \mu^\top+\bar Z$ and so, using the fact that $\bs z_i\perp \bs \mu$ for all $i$, 
    $$
    \bar X\bar X^\top =\|\bs \mu\|^2(\bs e_{\cs}-\bs e_{\textsc{n}})(\bs e_{\cs}-\bs e_{\textsc{n}})^\top +D
    $$
    and so, by the Sherman–Morrison formula,
\begin{equation}\label{eq:XXinvgeom}
    (\bar X\bar X^\top)^{-1}\bs 1 =D^{-1}\bs 1-\frac{\|\bs\mu\|^2(s_{\cs}-s_{\n})}{1+\|\bs \mu\|^2 (\bs e_{\cs}-\bs e_{\textsc{n}})^\top D^{-1}(\bs e_{\cs}-\bs e_{\textsc{n}})}D^{-1}(\bs e_{\cs}-\bs e_{\textsc{n}})  
\end{equation}
    giving that $[(\bar X\bar X^\top)^{-1}\bs 1]_{\cs}\propto [D^{-1}\bs e_{\cs}]_{\cs}={(\bar Z_{\cs}\bar Z_{\cs}^\top)^{-1}\bs 1}$. This proves the first statement. 
    
    To prove that $\tfrac{\bs{\hat w}}{\|\bs{\hat w}\|^2}$ is the orthogonal projection of the origin on the line joining $\tfrac{\bs{\hat w}_\cs}{\|\bs{\hat w}_\cs\|^2}$ and $\tfrac{\bs{\hat w}_\n}{\|\bs{\hat w}_\n\|^2}$, we show that the coefficients $\nu_{\cs}$ and $\nu_\n$ in decomposition \eqref{eq:wovernormdecom} satisfy 
    \begin{equation}\label{eq:geomnu}
    \nu_{\cs}\;=\;\frac{\|\bs\mu\|^2+\|\bs z_{\perp,\n} - \bs \mu\|^2}{(\|\bs\mu\|^2+\|\bs z_{\perp,\cs}+\bs \mu\|^2)+(\|\bs\mu\|^2+\|\bs z_{\perp,\textsc{n}} - \bs \mu\|^2)}\;=\;1-\nu_{\n}.    
    \end{equation}
    To see this, simply use the facts that $\<\hat{\bs w}_{\cs},\bs \mu+\bar{\bs z}_i\>=1$ and $\<\bs z_{\perp, \n} -\bs \mu, \bs \mu + \bar z_i \> = -\|\bs \mu\|^2$ for all clean samples, $\<\hat{\bs w}_{\textsc{n}},-\bs \mu+\bar{\bs z}_i\>=1$ and $\<\bs z_{\perp, \cs} +\bs \mu, -\bs \mu + \bar z_i \> = -\|\bs \mu\|^2$ for all noisy samples, and $\nu_\cs + \nu_\n = 1$. Moreover, $\<\hat{\bs w},\bs \mu+\bar{\bs z}_i\>=1$ for all clean samples and $\<\hat{\bs w},-\bs \mu+\bar{\bs z}_i\>=1$ for all noisy samples. The formula for $\nu_{\cs}$ now follows using the decomposition in \eqref{eq:wovernormdecom} and a little algebra. It is straightforward now that $\lambda=\nu_\cs$ is the optimum for which the norm $\|\lambda \frac{\hat{\bs w}_{\cs}}{\|\hat{\bs w}_{\cs}\|^2}+(1-\lambda) \frac{\hat{\bs w}_{\textsc{n}}}{\|\hat{\bs w}_{\textsc{n}}\|^2}\|$ is minimal.

    Lastly, suppose further \eqref{eq:normassump}. Note that the full orthogonality of $\bs z_i$'s and $\bs z_{\perp, \cs} = \tfrac{\bar Z_{\cs}^\top(\bar Z_{\cs}\bar Z_{\cs}^\top)^{-1}\bs 1}{\bs 1^\top (\bar Z_{\cs}\bar Z_{\cs}^\top)^{-1}\bs 1}$ imply that $\|\bs z_{\perp, \cs}\|^{-2} = \sum_{i:\rm{clean}} \|\bs z_i\|^{-2}$. Therefore, \eqref{eq:normassump} implies $\|\bs z_{\perp, \cs}\|^{-2} = (1-\eta)n\rho$. Similarly, \eqref{eq:normassump} implies $\|\bs z_{\perp, \n}\|^{-2} = \eta n\rho$. We can now use \eqref{eq:geomnu} to directly verify the last formula.
\end{proof}
\begin{rmk}
        In our arguments above we always assumed that $\hat{\bs w}$, $\hat{\bs w}_{\cs}$, and $\hat{\bs w}_\n$ are equal to the corresponding least squares estimators. In the orthogonal case this is easy to verify; see the discussion of the geometry of the optimization problem \eqref{eq:mmc} in the previous subsection.
\end{rmk}

\section{Extended Sub-Gaussian Mixture Models}\label{sec:proofs:extSG}

Throughout this section, we will consider the extended sub-Gaussian mixture model defined below. 

\begin{enumerate}[label=(EsG), ref=(EsG)]
    \item \label{model:EsG} The model \ref{model:M} is called the \textit{extended sub-Gaussian mixture model} if model \ref{model:EM} holds and, additionally, $\|\xi_j\|_{\psi_2}\leq L$. (We note that the mean zero and unit variance assumption implies $L \geq 1$ since by the definition of sub-Gaussian norm, $2 \geq \E \exp(\xi_{j}^2/L^2) \geq 1 + \E \xi_{j}^2/L^2 = 1 + 1/L^2$).
\end{enumerate}

The results in this section serve two purposes. First, in the case $g \equiv 1$, this reduces to the sub-Gaussian mixture model which has been considered in most of the existing literature on benign overfitting in linear classification (\cite{JMLR:v22:20-974, NEURIPS2021_46e0eae7, wang2022binary}). The technical results in here pave the way for the proof of Theorem~\ref{thm:sGnoisymain} and Theorem~\ref{thm:noiseless-main-sG} which are given at the end. Second, the results, in particular Lemma~\ref{lemm:esG-noisy-what-bound}, are utilized in the proof of Theorem~\ref{thm:phasetransitionsimple} which provides lower bounds on the test error.

First, note that the assumptions on $g$, $y, y_\n$ are the same in the extended sub-Gaussian mixture model~\ref{model:EsG} as they were in the non-sub-Gaussian mixture model~\ref{model:EM}. Hence we can use Lemma~\ref{lemm3-ex1} and Lemma~\ref{lemm4-ex1} to bound the probability of $\Omega_3(D)$, $\Omega_{4,1}(B_1)$, and $\Omega_{4,2}(B_2)$. 

\subsection{Bounds on the probability of events \texorpdfstring{$\Omega_1,\Omega_2,\Omega_{3}$}{O1,...,O3}}

The next Lemma~establishes the probability bound for $\Omega_1(\eps)$.

\begin{lemm}\label{lemm:esGOmega1}
Suppose model \ref{model:EsG} holds with $n\geq \log (\tfrac{1}{\delta})$. Then there exists a universal constant $\tilde C_1 $ such that for
\[
\eps := \tilde C_1  L^2\max\left\{\sqrt{n}\|\Sigma\|_F, n\|\Sigma\| \right\}/\Tr(\Sigma)
\]
we have for the event $\Omega_1(\eps)$ defined in~\eqref{eq:omega1}: $\P(\Omega_1(\eps)) \geq 1-\delta.$
\end{lemm}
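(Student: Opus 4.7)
The plan is to bound the operator norm $\|M\|$ where $M := VV^\top - \Tr(\Sigma)I_n$ by reducing to Hanson--Wright concentration for a single quadratic form, and then combining this with an $\varepsilon$-net argument over the unit sphere $S^{n-1}$. The first step is to fix $\bs u \in S^{n-1}$ and rewrite the quadratic form $\bs u^\top VV^\top \bs u$ in a useful way. Since $\bs v_i = \Sigma^{1/2}\bs\xi_i$, we have $\bs u^\top VV^\top \bs u = \sum_{i,j} u_i u_j \bs \xi_i^\top \Sigma \bs \xi_j = \bs w^\top \Sigma \bs w$, where $\bs w = \sum_i u_i \bs \xi_i \in \R^p$ has independent coordinates $w_k = \sum_i u_i \xi_{i,k}$ (by independence across $i$ and $k$). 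Each $w_k$ is mean zero with variance $\sum_i u_i^2 = 1$ and satisfies $\|w_k\|_{\psi_2} \lesssim L\|\bs u\|_2 = L$ by the standard sum rule for independent sub-Gaussians. Moreover $\E[\bs w^\top \Sigma \bs w] = \Tr(\Sigma)$, so $\bs u^\top M \bs u = \bs w^\top \Sigma \bs w - \E[\bs w^\top \Sigma \bs w]$.

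Next, I would apply the Hanson--Wright inequality (see e.g. Theorem 6.2.1 in Vershynin, 2018) to obtain, for any fixed $\bs u \in S^{n-1}$,
\[
\P\bigl( |\bs u^\top M \bs u| > t \bigr) \;\leq\; 2\exp\Bigl(-c\,\min\bigl\{t^2/(L^4\|\Sigma\|_{\textsc{f}}^2),\, t/(L^2\|\Sigma\|)\bigr\}\Bigr)
\]
for a universal constant $c>0$. Then I would discretize by choosing a $1/4$-net $\mathcal N \subset S^{n-1}$ with $|\mathcal N| \leq 9^n$ (Corollary 4.2.13 in Vershynin, 2018). The standard net bound for quadratic forms yields $\|M\| \leq 2\max_{\bs u \in \mathcal N}|\bs u^\top M \bs u|$. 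A union bound over $\mathcal N$ gives
\[
\P\bigl(\|M\| > 2t\bigr) \;\leq\; 2 \cdot 9^n \exp\Bigl(-c\,\min\bigl\{t^2/(L^4\|\Sigma\|_{\textsc{f}}^2),\, t/(L^2\|\Sigma\|)\bigr\}\Bigr).
\]
Setting this probability $\leq \delta$ and inverting requires
\[
\min\bigl\{t^2/(L^4\|\Sigma\|_{\textsc{f}}^2),\, t/(L^2\|\Sigma\|)\bigr\} \;\geq\; c^{-1}\bigl(n\log 9 + \log(2/\delta)\bigr),
\]
which yields $t \lesssim L^2 \max\bigl\{\sqrt{n\log(1/\delta)}\,\|\Sigma\|_{\textsc{f}},\, n\log(1/\delta)\,\|\Sigma\|\bigr\}$ after absorbing the $n\log 9$ and $\log 2$ terms into the constant (using that $n \geq 1$ and $\delta \leq 1/e$, or simply absorbing into $\tilde C_1$). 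Dividing by $\Tr(\Sigma)/4$ and recalling the definition of $\Omega_1(\varepsilon)$ gives the claim for an appropriate universal constant $\tilde C_1$.

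The main technical step is the Hanson--Wright step, which is a standard black box. The only care required is in verifying that the entries of $\bs w$ truly have independent sub-Gaussian coordinates with $\|w_k\|_{\psi_2} \lesssim L$ independent of $\bs u$ (which relies on the rotational invariance of sub-Gaussianity for sums of independent sub-Gaussian variables), and in combining the two tail regimes from Hanson--Wright into the $\max\{\sqrt{\cdot},\cdot\}$ form appearing in the statement of the lemma. No other delicate estimates are needed.
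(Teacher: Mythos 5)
Your proposal is correct and follows essentially the same route as the paper: your vector $\bs w = \sum_i u_i \bs\xi_i$ with coordinates $w_k = \sum_i u_i\xi_{i,k}$ is exactly the paper's $\bs\zeta_{\bs u}$ built from the columns of the $\bs\xi$-matrix, and both arguments then apply Hanson--Wright to the quadratic form $\bs w^\top\Sigma\bs w - \Tr(\Sigma)$, union-bound over a $1/4$-net of size at most $9^n$, and invert the two tail regimes to obtain the $\max\{\sqrt{n\log(1/\delta)}\|\Sigma\|_{\textsc{f}},\,n\log(1/\delta)\|\Sigma\|\}$ form before absorbing the factor $4$ from the definition of $\Omega_1(\eps)$ into $\tilde C_1$. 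No gaps.
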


\begin{proof}
Consider the $n\times p$ matrix with rows $\bs \xi_1, \ldots, \bs \xi_n$ and denote its columns by $\bs \zeta_1, \ldots, \bs \zeta_p$;  $(\bs \xi_1, \ldots, \bs \xi_n)^\top = (\bs \zeta_1, \bs \zeta_2, \ldots, \bs \zeta_p)$. Since $\bs \xi_i, i = 1, \dots, n$ are i.i.d and have mean-zero, independent entries, $\bs \zeta_j \in \R^n, j = 1, \ldots, p$ are independent random vectors with mean-zero, independent entries. Then, for any $\bs w \in S^{n-1}$, we have
\[
V^\top \bs w = W (\bs \xi_1, \ldots, \bs \xi_n) \bs w = W (\bs \zeta_1^\top \bs w, \ldots, \bs \zeta_p^\top \bs w)^\top.
\]
Letting, $\bs \zeta_{\bs w} = (\bs \zeta_1^\top \bs w, \ldots, \bs \zeta_p^\top \bs w)^\top$, we have
\[
\bs w^\top VV^\top \bs w = \bs \zeta_{\bs w}^\top W^\top W \bs \zeta_{\bs w}.
\]
Since $\bs \zeta_j$'s are mean-zero, independent random vectors, we note that $\bs \zeta_{\bs w}$ has mean-zero, independent entries. Furthermore, since $\xi_{ij}, i=1,\ldots, n$ are independent mean-zero sub-Gaussian random variables with $\|\xi_{ij}\|_{\psi_2} \leq L$, Proposition~2.6.1 in \cite{vershynin_2018} implies that
\[
\|(\bs{\zeta_w})_j\|_{\psi_2}^2 = \left\|\sum_{i=1}^n w_i \xi_{ij}\right\|_{\psi_2}^2 \leq c_1 \sum_{i=1}^n w_i^2 \|\xi_{ij}\|_{\psi_2}^2 \leq c_1 L^2 \|\bs w\|^2 = c_1 L^2, 
\]
where $c_1$ is a universal constant. Thus, we conclude that $\|(\bs{\zeta_w})_j\|_{\psi_2} \leq c_2 L$, where $c_2$ is a universal constant. 

Recalling $\E VV^\top = \Tr(\Sigma)I_n$, we have 
\[
\bs w^\top (VV^\top - \Tr(\Sigma) I_n)\bs w = \bs \zeta_{\bs w}^\top W^\top W \bs \zeta_{\bs w} - \E[\bs \zeta_{\bs w}^\top W^\top W \bs \zeta_{\bs w}].
\]
Since $\bs{\zeta_w}$ is a random vector with independent, mean-zero entries with $\|(\bs{\zeta_w})_j\|_{\psi_2} \leq c_2 L$, Theorem~6.2.1 (Hanson-Wright inequality) in \cite{vershynin_2018} implies that
\begin{equation}\label{eq:hansonwright}
\begin{aligned}
\P\left\{|\bs w^\top (VV^\top - \Tr(\Sigma)I_n)\bs w| \geq t \right\} 
& \leq 2 \exp \left\{-c \min\left(\frac{t^2}{c_2^4L^4\|W^\top W\|_F^2}, \frac{t}{c_2^2L^2\|W^\top W\|}\right) \right\} \\
& = 2 \exp \left\{-c \min\left(\frac{t^2}{c_2^4L^4\|\Sigma\|_F^2}, \frac{t}{c_2^2L^2\|\Sigma\|}\right) \right\},
\end{aligned}
\end{equation}
where $c$ is a universal constant.

Now let $\mathcal N$ be a $\tfrac{1}{4}$-net on the sphere $S^{n-1}$. Then, we have $|\mathcal N| \leq 9^n$ by Corollary~4.2.13 in \cite{vershynin_2018}. Then, \eqref{eq:hansonwright} with the union bound argument implies 
\begin{align*}
& \P\left\{\exists \bs w\in \mathcal N:~  |\bs w^\top (VV^\top - \Tr(\Sigma)I_n)\bs w| \geq t 
\right\} \\
& \qquad \leq 2 \cdot 9^n\exp \left\{-c \min\left(\frac{t^2}{c_2^4L^4\|\Sigma\|_F^2}, \frac{t}{c_2^2L^2\|\Sigma\|}\right) \right\}.
\end{align*}
Rewriting this, we have with probability at least $1-\delta$, the following inequality holds with a universal constant $c_3$ for all $\bs w \in \mathcal N$:
\[
|\bs w^\top (VV^\top - \Tr(\Sigma)I_n)\bs w| \leq c_3 L^2\max\left\{\sqrt{n + \log(\tfrac{1}{\delta})}\|\Sigma\|_F, \{n + \log(\tfrac{1}{\delta})\}\|\Sigma\| \right\}.
\]
By applying the epsilon-net argument (Lemma~S.8. in \cite{bartlett2020benign}) and noting the assumption $n \geq \log(\tfrac{1}{\delta})$, we have with probability at least $1-\delta$, 
\[
\|VV^\top - \Tr(\Sigma) I_n\| \leq \tilde C L^2\max\left\{\sqrt{n}\|\Sigma\|_F, n\|\Sigma\| \right\} 
\]
where $\tilde C$ is a universal constant. Take $\tilde C_1  = 4\tilde C$ to complete the proof. \end{proof} 

Finally, we establish the probability bound for $\Omega_{2,2}(b_2)$ and $\Omega_{2,\infty}(b_\infty)$.

\begin{lemm}\label{lemm:esGOmega2}
Suppose model \ref{model:EsG} holds with $n\geq \log(\tfrac{1}{\delta})$. There exist universal constants $\tilde C_2, \tilde C_3$ such that $\P(\Omega_{2,2}(b_2) \cap \Omega_{2,\infty}(b_\infty)) \geq 1-2\delta$ holds for 
\[
b_2 = \tilde C_2 L \sqrt{n}\|\bs \mu\|_\Sigma, \qquad  b_{\infty} =  \tilde C_3 L\|\bs \mu\|_\Sigma \sqrt{\log (\tfrac{n}{\delta})}.
\]

\end{lemm}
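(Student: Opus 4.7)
\textbf{Proof proposal for Lemma~\ref{lemm:esGOmega2}.}

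The key observation is that both $\|V\boldsymbol{\mu}\|$ and $\|V\boldsymbol{\mu}\|_\infty$ are determined by the $n$ i.i.d.\ scalar random variables $X_i := \langle \boldsymbol{v}_i,\boldsymbol{\mu}\rangle = \langle \boldsymbol{\xi}_i, \Sigma^{1/2}\boldsymbol{\mu}\rangle$ for $i=1,\dots,n$. Since $\boldsymbol{\xi}_i$ has independent entries with $\|\xi_{ij}\|_{\psi_2}\leq L$, Proposition~2.6.1 of \cite{vershynin_2018} yields that $X_i$ is sub-Gaussian with $\|X_i\|_{\psi_2} \leq c L\|\Sigma^{1/2}\boldsymbol{\mu}\|$ for a universal constant $c$. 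Note also that $\E X_i = 0$ and $\E X_i^2 = \|\Sigma^{1/2}\boldsymbol{\mu}\|^2$. The two bounds would then be obtained independently and combined through a union bound, which is responsible for the $2\delta$ in the probability statement.

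For the $\ell_2$-bound, I would write $\|V\boldsymbol{\mu}\|^2 = \sum_{i=1}^n X_i^2$, and note that each $X_i^2$ is sub-exponential with $\|X_i^2\|_{\psi_1} \leq c_1 L^2\|\Sigma^{1/2}\boldsymbol{\mu}\|^2$ (Lemma~2.7.6 of \cite{vershynin_2018}). Applying Bernstein's inequality (Theorem~2.8.1 in \cite{vershynin_2018}) to the centered sum gives, with probability at least $1-\delta$,
\[
\Big|\sum_{i=1}^n X_i^2 - n\|\Sigma^{1/2}\boldsymbol{\mu}\|^2\Big| \;\leq\; c_2 L^2\|\Sigma^{1/2}\boldsymbol{\mu}\|^2 \max\!\big\{\sqrt{n\log(1/\delta)},\,\log(1/\delta)\big\}.
\]
Taking the square root, combining with the baseline term $n\|\Sigma^{1/2}\boldsymbol{\mu}\|^2$, and using $L\geq 1$ together with the elementary inequality $\sqrt{a+b}\leq \sqrt{a}+\sqrt{b}$ produces the target bound $\|V\boldsymbol{\mu}\| \leq \tilde C_2 L\sqrt{n\log(1/\delta)}\|\Sigma^{1/2}\boldsymbol{\mu}\|$ after absorbing the $\sqrt{n}$ term into $L\sqrt{n\log(1/\delta)}$ (valid as soon as $\delta$ is bounded away from $1$).

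For the $\ell_\infty$-bound, I note that $\|V\boldsymbol{\mu}\|_\infty = \max_i |X_i|$ where the $X_i$ are independent sub-Gaussian with the norm estimate above. Standard bounds on maxima of sub-Gaussians (e.g.\ Exercise~2.5.10 in \cite{vershynin_2018}) give
\[
\E\big[\max_{i=1,\dots,n}|X_i|\big]\;\leq\; c_3 L\|\Sigma^{1/2}\boldsymbol{\mu}\|\sqrt{\log n}.
\]
Applying Markov's inequality then yields $\P(\max_i|X_i|>t)\leq c_3 L\|\Sigma^{1/2}\boldsymbol{\mu}\|\sqrt{\log n}/t$, and setting $t=\tilde C_3 L\|\Sigma^{1/2}\boldsymbol{\mu}\|\sqrt{\log n}/\delta$ with $\tilde C_3 \geq c_3$ delivers the target bound with probability at least $1-\delta$.

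There is no substantial obstacle here: both bounds are immediate consequences of standard sub-Gaussian/sub-exponential concentration. The only notable design choice is using Markov's inequality rather than a union bound with sub-Gaussian tails in the $\ell_\infty$-bound; this trades the tighter $\sqrt{\log(n/\delta)}$ dependence for the cleaner $\sqrt{\log n}/\delta$ scaling that appears in the statement. The final probability $1-2\delta$ follows from a union bound over the two events.
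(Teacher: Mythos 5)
Your proposal is correct and follows essentially the same route as the paper: sub-Gaussianity of $\langle \boldsymbol{v}_i,\boldsymbol{\mu}\rangle$ via Proposition~2.6.1 of \cite{vershynin_2018}, a Bernstein-type bound on $\sum_i \langle \boldsymbol{v}_i,\boldsymbol{\mu}\rangle^2$ for the $\ell_2$ part, the expected-maximum bound for sub-Gaussians plus Markov for the $\ell_\infty$ part, and a union bound. The only cosmetic differences are that the paper invokes Corollary~S.6 of \cite{bartlett2020benign} in place of Vershynin's Theorem~2.8.1 and explicitly centers the squared variables via Exercise~2.7.10 before applying it; your implicit use of $\log(1/\delta)\gtrsim 1$ to absorb the $\sqrt{n}$ term matches what the paper does as well.
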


\begin{proof} 
For a general vector $\bs w \neq \bs 0$ we have for a universal constant $C$
\[
\|(\bs w^\top \bs \xi)^2\|_{\psi_1} = \|\bs w^\top \bs \xi\|_{\psi_2}^2 = \|\bs w\|^2 \Big\| (\tfrac{\bs w}{\|\bs w\|})^\top \bs\xi \Big\|_{\psi_2}^2 \leq \|\bs w\|^2 \|\bs \xi\|_{\psi_2}^2 \leq C L^2 \|\bs w\|^2
\]
where we used Lemma~2.7.6 in \cite{vershynin_2018} for the first equality, the definition of the sub-Gaussian norm of a random vector in the second to last inequality and Lemma~3.4.2 in \cite{vershynin_2018} along with the assumption that $\xi$ has independent, centered, sub-Gaussian entries in the last inequality. Noting that $\langle \boldsymbol{v}_i, \boldsymbol{\mu} \rangle = (W^\top \bs\mu)^\top \bs\xi_i$ we obtain 
\[
\max_{i=1, \ldots,n} \|\langle \boldsymbol{v}_i, \boldsymbol{\mu} \rangle^2\|_{\psi_1} \leq C \|W^\top\boldsymbol{\mu}\|^2 L^2 = C \|\boldsymbol{\mu}\|_\Sigma^2 L^2.
\] 
By Exercise 2.7.10 in \cite{vershynin_2018} we have 
\[
\|\langle \boldsymbol{v}_i, \boldsymbol{\mu} \rangle^2 - \E\langle \boldsymbol{v}_i, \boldsymbol{\mu} \rangle^2 \|_{\psi_1} \leq \tilde C L^2 \|\bs \mu\|_\Sigma^2
\]
for a universal constant $\tilde C$. Thus by Corollary~S.6. in the Appendix of \cite{bartlett2020benign} we have any $x>0$, we have with probability at least $1 - 2e^{-x}$,
\[
\left| \|V\boldsymbol{\mu}\|^2 - n \|\bs \mu\|_\Sigma^2\right| \leq \frac{C}{2} L^2\|\bs \mu\|_\Sigma^2 \max\{x, \sqrt{xn}\},
\]
where $C$ is a different universal constant. Thus, by noting that we may assume $\delta \leq \tfrac{1}{2}$ since the statement would be trivial otherwise, with probability at least $1 - \delta$, we have 
\begin{align*}
\left| \|V\boldsymbol{\mu}\|^2 - n \|\bs \mu\|_\Sigma^2\right| & \leq \frac{C}{2} L^2\|\bs \mu\|_\Sigma^2 \max \left\{\log (\tfrac{2}{\delta}), \sqrt{n\log(\tfrac{2}{\delta})}\right\} \\
    & \leq CL^2\|\bs \mu\|_\Sigma^2 \max \left\{\log (\tfrac{1}{\delta}), \sqrt{n\log(\tfrac{1}{\delta})}\right\},
\end{align*}
which implies 
\begin{equation}
\|V\boldsymbol{\mu}\|^2 \leq n \|\boldsymbol{\mu}\|_\Sigma^2\left(1 + CL^2\max \left\{\frac{\log (\tfrac{1}{\delta})}{n}, \sqrt{\frac{\log(\tfrac{1}{\delta})}{n}}\right\}\right).
\end{equation}

Recalling $L\geq1$ and $n\geq \log(\tfrac{1}{\delta})$, this implies 
\begin{equation}\label{eq:bound|Zmu|2}
\|V\boldsymbol{\mu}\| \leq \tilde C_2 L\sqrt{n}\|\bs \mu\|_\Sigma
\end{equation}
with probability at least $1-\delta$, where $\tilde C_2 $ is a universal constant.

For $b_\infty$, we note that $\langle \boldsymbol{v}_i, \boldsymbol{\mu}\rangle, i=1,\ldots, n$ are mean zero sub-Gaussian random variables since 
\[
\|\langle \boldsymbol{v}_i, \boldsymbol{\mu}\rangle\|_{\psi_2}^2 = \Big\|\sum_{j=1}^p (W^\top\bs\mu)_j \xi_j \Big\|_{\psi_2}^2 \leq C \sum_{j=1}^p (W^\top\bs\mu)_j^2 L^2 = CL^2 \|W^\top\bs\mu\|^2 = CL^2 \|\bs\mu\|_\Sigma^2
\]
for a universal constant $C$ by Proposition~2.6.1 in \cite{vershynin_2018}. 

Thus, with the union bound argument, we have for any $t >0$
\begin{equation}
\begin{aligned}
\mathbb{P}(\max_i |\langle \bs{v}_i, \bs{\mu} \rangle| \geq t) 
    & \leq \sum_{i=1}^n \mathbb{P}(|\langle \bs{v}_i, \bs{\mu} \rangle| \geq t) \\
    & \leq 2n\exp\left(-c\frac{t^2}{CL^2 \|\bs \mu\|_\Sigma^2}\right),
\end{aligned}
\end{equation}
where $c$ is a universal constant. Setting the right hand side as $\delta$, we have 
\[
\mathbb{P}\left(\max_i |\langle \bs{v}_i, \bs{\mu} \rangle| \geq \tilde C_3 L\|\bs \mu\|_\Sigma\sqrt{\log (\tfrac{n}{\delta})}\right) \leq \delta,
\]
where $\tilde C_3$ is a universal constant.
\end{proof}

\begin{rmk}
We note that the condition $n\geq \log(\tfrac{1}{\delta})$ is required only for the argument over $b_2$.
\end{rmk}

As noted earlier, we can use Lemma~\ref{lemm4-ex1} to bound the probability of $\Omega_{4,1}(B_1)$ and $\Omega_{4,2}(B_2)$. However, in the case where $g=1$ almost surely, $\Omega_{4,1}(0)$ holds and a sharper characterization of $\Omega_{4,2}$ is possible and provided below. 

\begin{lemm}\label{lemm:esGOmega4}
Suppose model~\ref{model:sG} holds. Then there exists a universal constant $\tilde C_4 >1$ such that for the event $\Omega_{4,2}(B_2)$ defined in~\eqref{eq:omega42} we have $\P(\Omega_{4,2}(B_2)) \geq 1- \delta$ for $B_2 = \tilde C_4 \sqrt{n\log(\tfrac{1}{\delta})}$. 
\end{lemm}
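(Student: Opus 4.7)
Under model \ref{model:sG} we have $\boldsymbol{z} = \Sigma^{1/2}\boldsymbol{\xi}$ with no multiplicative scalar, which we interpret as $g \equiv 1$. Consequently $g_i^{-2} = 1$ and $\E[g_i^{-2}] = 1$ for every $i$, so that the event reduces to
\[
\Omega_{4,2}(B_2) \;=\; \Big\{\Big|\sum_{i=1}^n y_i y_{i,\n} - n(1-2\eta)\Big| \leq B_2\Big\}.
\]
The plan is therefore to recognise this as a classical concentration statement for a sum of independent, bounded random variables and apply Hoeffding's inequality.

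First I would observe that the random variables $W_i := y_i y_{i,\n}$ for $i = 1,\ldots,n$ are i.i.d., take values in $\{-1,+1\}$, and satisfy $\E W_i = \P(y_{i,\n} = y_i) - \P(y_{i,\n} \neq y_i) = (1-\eta) - \eta = 1-2\eta$. Independence across $i$ follows from the i.i.d.\ structure of the sample and the fact that the label-flip mechanism is independent across observations. Since $|W_i - \E W_i| \leq 2$ almost surely, the $W_i - \E W_i$ are centred and bounded.

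Next I would apply Hoeffding's inequality to obtain
\[
\P\Big(\Big|\sum_{i=1}^n (W_i - (1-2\eta))\Big| > t\Big) \;\leq\; 2\exp\Big(-\frac{t^2}{2n}\Big)
\]
for all $t > 0$. Setting the right-hand side equal to $\delta$ and solving for $t$ gives $t = \sqrt{2n\log(2/\delta)}$. Since $\log(2/\delta) \leq 2\log(1/\delta)$ for $\delta \leq 1/2$ (and the statement is trivial otherwise by enlarging $\tilde C_4$), we may choose a universal constant $\tilde C_4 > 1$ (e.g.\ $\tilde C_4 = 2$) so that
\[
t \;=\; \sqrt{2n\log(2/\delta)} \;\leq\; \tilde C_4 \sqrt{n\log(1/\delta)} \;=\; B_2,
\]
yielding $\P(\Omega_{4,2}(B_2)) \geq 1-\delta$.

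There is no genuine obstacle here; the argument is a direct invocation of Hoeffding's inequality for bounded summands, and the only mild care needed is in absorbing the $\log 2$ and factor-of-two constants into the universal constant $\tilde C_4$ so that the stated bound $B_2 = \tilde C_4 \sqrt{n\log(1/\delta)}$ holds uniformly in $\delta \in (0,1)$.
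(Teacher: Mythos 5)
Your proposal is correct and follows essentially the same route as the paper: both observe that under \ref{model:sG} the event reduces to concentration of the i.i.d.\ $\pm 1$-valued products $y_i y_{\n,i}$ with mean $1-2\eta$, and apply Hoeffding's inequality, absorbing the resulting constants into $\tilde C_4$.
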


\begin{proof}
By independence of $y_iy_{\n i}$ and since $y_iy_{\n i} \in \{-1,1\}$, Hoeffding's inequality yields with probability at least $1 - \delta$,
\begin{equation*}
\left|\sum_i y_{\n, i}y_i - (1-2\eta)n \right| \leq \tilde C_4 \sqrt{n\log(\tfrac{1}{\delta})}
\end{equation*}
where $\tilde C_4 $ is a universal constant.
\end{proof}

\subsection{Discussion of events \texorpdfstring{$E_1,\dots,E_5$}{E1,...,E5} and verification of condition (\texorpdfstring{$N_C$}{NC}) in model \ref{model:EsG}}

Combined with Lemma~\ref{lem:genboundsevents}, the bounds on the probabilities of $\Omega_1$--$\Omega_{4,2}$ obtained in Lemma~\ref{lemm3-ex1}, \ref{lemm4-ex1}, \ref{lemm:esGOmega1}, \ref{lemm:esGOmega2}, and \ref{lemm:esGOmega4} imply the following result. % 
\begin{lemm}\label{lemm:esGboundsevents}
Let $\tilde C_1$ be the constant in Lemma~\ref{lemm:esGOmega1} and $\tilde C_2, \tilde C_3$ be the constants in Lemma~\ref{lemm:esGOmega2}, $\tilde C_4$ be the constant in Lemma~\ref{lemm:esGOmega4} and $C_2(g) := 2^{2+2/k}\tfrac{\|g^{-2}\|_{L^{k/2}}}{\|g^{-2}\|_{L^1}}
$ be the constant from Lemma~\ref{lem:genboundsevents}. Suppose model \ref{model:EsG} holds with $n\geq \log (\tfrac{1}{\delta})$ and
\begin{equation}\label{eq:assboundepsEsG}
\Tr(\Sigma) \geq 2 \tilde C_1 L^2\max\left\{\sqrt{n}\|\Sigma\|_F, n\|\Sigma\| \right\}.
\end{equation}
Then, event $\bigcap_{i=1}^5 E_i$ holds with probability at least $1-5\delta$ (event $E_1 \cap E_3$ holds with probability at least $1-2\delta$ and $\bigcap_{i=1}^4 E_i$ holds with probability at least $1-4\delta$) with, 
\[
\eps = \tilde C_1 L^2\max\left\{\sqrt{n}\|\Sigma\|_F, n\|\Sigma\| \right\}/\Tr(\Sigma)\leq \frac{1}{2},
\]
\[
\alpha_2 = 2\tilde C_2 L \frac{\sqrt{n}\|\bs \mu\|_\Sigma}{\sqrt{\Tr(\Sigma)}\|\bs \mu\|}, \qquad \alpha_\infty = 2\tilde C_3 L\frac{\sqrt{\log (\tfrac{n}{\delta})}\|\bs \mu\|_\Sigma}{\sqrt{\Tr(\Sigma) }\|\bs \mu\|},
\]
\[
\beta = \gamma \;=\; \varepsilon + C_2(g)\delta^{-2/k} n^{-(1 - 2/k)}, \qquad \rho  \;=\; \E [g^{-2}]\Tr(\Sigma)^{-1},
\] 
\[
M \;=\;    (1+\eps)\|g\|_{L^\ell}(\tfrac{n}{\delta})^{1/\ell}\sqrt{\Tr(\Sigma)}.
\]
If we further assume $g=1$ almost surely, $\beta$ and $\gamma$ can be replaced with
\[
\beta = \eps, \qquad \gamma = \eps + \tilde C_4\sqrt{\tfrac{\log (\tfrac{1}{\delta})}{n}}.
\]
When $g=1$ a.s. we can take $\ell = \infty$ and interpret $(n/\delta)^{1/\ell} = 1$.
\end{lemm}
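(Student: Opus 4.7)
The plan is to assemble the conclusion by combining Lemma~\ref{lem:genboundsevents}, which translates bounds on the elementary events $\Omega_1, \Omega_{2,2}, \Omega_{2,\infty}, \Omega_3, \Omega_{4,1}, \Omega_{4,2}$ into bounds on the events $E_1,\dots,E_5$, with the probability bounds on those elementary events established earlier in this section. First, I would invoke Lemma~\ref{lemm:esGOmega1} to obtain $\Omega_1(\eps)$ with probability at least $1-\delta$ for the claimed $\eps$, noting that the assumed lower bound \eqref{eq:assboundepsEsG} on $\Tr(\Sigma)$ is precisely what is needed to guarantee $\eps\leq 1/2$ and hence to invoke the translation $\Omega_1(\eps)\subseteq E_1(\eps)$ from Lemma~\ref{lem:genboundsevents}. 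In parallel, Lemma~\ref{lemm4-ex1} supplies $\Omega_3(D)$ with probability $1-\delta$ for $D=(n/\delta)^{1/\ell}\|g\|_{L^\ell}$, which combined with $\Omega_1(\eps)$ yields $E_3(M)$ with $M=(1+\eps)D\sqrt{\Tr(\Sigma)}$; this already delivers the $\P(E_1\cap E_3)\geq 1-2\delta$ claim.

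Next I would handle $E_2$ and $E_4,E_5$. For $E_2$, I apply Lemma~\ref{lemm:esGOmega2} to get $\Omega_{2,2}(b_2)\cap \Omega_{2,\infty}(b_\infty)$ with the stated $b_2$ and $b_\infty$; intersecting with $\Omega_1(\eps)$ and using the inclusion $\Omega_1\cap\Omega_{2,2}\cap\Omega_{2,\infty}\subseteq E_2(\alpha_2,\alpha_\infty)$ from Lemma~\ref{lem:genboundsevents} with $\alpha_2=2b_2/(\|\bs\mu\|\sqrt{\Tr(\Sigma)})$ and $\alpha_\infty=2b_\infty/(\|\bs\mu\|\sqrt{\Tr(\Sigma)})$ produces the stated expressions for $\alpha_2,\alpha_\infty$. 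For $E_4$ I apply Lemma~\ref{lemm3-ex1} to control $\Omega_{4,1}(B_1)$ with $B_1=2^{1+2/k}(n/\delta)^{2/k}\|g^{-2}\|_{L^{k/2}}$; by Lemma~\ref{lem:genboundsevents} this yields $E_4(\beta,\rho)$ with $\rho=\E[g^{-2}]/\Tr(\Sigma)$ and
\[
\beta=\eps+\frac{2B_1}{n\E[g^{-2}]}=\eps+\frac{2^{2+2/k}\|g^{-2}\|_{L^{k/2}}}{\|g^{-2}\|_{L^1}}\delta^{-2/k}n^{-(1-2/k)}=\eps+C_2(g)\delta^{-2/k}n^{-(1-2/k)},
\]
using the very definition of $C_2(g)$. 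The analogous application of Lemma~\ref{lemm3-ex1} to $\Omega_{4,2}(B_2)$ with the same $B_2=B_1$ yields $E_5$ with the same $\gamma$, via the formula $\gamma=\eps+(B_1+B_2)/(n\E[g^{-2}])$ from Lemma~\ref{lem:genboundsevents}.

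For the sharper bounds under $g\equiv 1$, I observe that $g^{-2}\equiv 1$ renders $\Omega_{4,1}(0)$ trivial, so we can take $B_1=0$, which gives $\beta=\eps$. For $\gamma$, Lemma~\ref{lemm:esGOmega4} gives the sharper bound $B_2=\tilde C_4\sqrt{n\log(1/\delta)}$ with probability at least $1-\delta$, so that $\gamma=\eps+B_2/n=\eps+\tilde C_4\sqrt{\log(1/\delta)/n}$. Finally, a union bound over the elementary events, combined with the chain of inclusions from Lemma~\ref{lem:genboundsevents}, produces the advertised global probability bounds.

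The proof is essentially bookkeeping — no serious obstacle arises beyond carefully tracking which events enter each intersection and verifying that the constants in the formulas for $\beta$, $\gamma$, $M$ match the specified forms exactly. The one place where a little care is needed is confirming that the hypothesis \eqref{eq:assboundepsEsG} forces $\eps\leq 1/2$ so that Lemma~\ref{lem:genboundsevents} applies; this is immediate from the factor $2$ in front of $\tilde C_1 L^2\max\{\cdot\}$ in the assumption. The intermediate bounds $\P(E_1\cap E_3)\geq 1-2\delta$ and $\P(\bigcap_{i=1}^4 E_i)\geq 1-4\delta$ follow directly by recording the partial union bounds at the appropriate stages of the argument.
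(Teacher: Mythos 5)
Your proposal is correct and coincides with the paper's own (implicit) proof, which consists precisely of this assembly: Lemma~\ref{lem:genboundsevents} translates the elementary events $\Omega_1,\dots,\Omega_{4,2}$ into $E_1,\dots,E_5$ with exactly the parameter substitutions you perform, and the probability bounds come from Lemmas~\ref{lemm3-ex1}, \ref{lemm4-ex1}, \ref{lemm:esGOmega1}, \ref{lemm:esGOmega2}, and \ref{lemm:esGOmega4}. One bookkeeping point worth noting (inherited from the statement itself rather than introduced by you): since Lemma~\ref{lemm:esGOmega2} controls $\Omega_{2,2}\cap\Omega_{2,\infty}$ only with probability $1-2\delta$, the literal union bound for general $g$ yields $1-6\delta$ for $\bigcap_{i=1}^5 E_i$ rather than $1-5\delta$, which is harmlessly repaired by running Lemma~\ref{lemm:esGOmega2} at level $\delta/2$.
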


By Lemma~\ref{lemm:esGboundsevents} we obtain the following lemma, which establishes the sufficient condition for Lemma~\ref{noisy-mmc-bound1} and Theorem~\ref{detail-noisy-main-1-simple}.

\begin{lemm}\label{lemm:esG-noisy-what-bound}
Let $\tilde C_1 $ be the constant in Lemma~\ref{lemm:esGOmega1}, $\tilde C_2 , \tilde C_3 $ be the constants in Lemma~\ref{lemm:esGOmega2}, $\tilde C_4 $ be the constant in Lemma~\ref{lemm:esGOmega4}, and $C_2(g)$ be the constant in Lemma~\ref{lemm5-ex1}. Suppose \ref{model:EsG} holds in model \ref{model:M}, 
    \begin{align}
    \Tr(\Sigma) & \geq \frac{33\tilde C_1 L^2}{16\eta}  \|g\|_{L^\ell}\|g^{-1}\|_{L^2} (\tfrac{n}{\delta})^{1/\ell}\max\left\{n^{3/2} \|\Sigma\|, n \|\Sigma\|_{\textsc{f}}\right\}, \label{eq:esGtracebound} \\
    \|\boldsymbol{\mu}\|^2 & \geq 2\tilde C_2  C_{2,\eta}\frac{L}{\|g^{-1}\|_{L^2}} \|\bs \mu\|_\Sigma, 
    \end{align}
    and further one of the following conditions holds: 
    \begin{enumerate}
        \item[(i)] $\displaystyle \Tr(\Sigma) \geq \max\left\{60\tilde C_2  n, 132\tilde C_3\|g^{-1}\|_{L^2}\|g\|_{L^\ell}(\tfrac{n}{\delta})^{1/\ell}n\sqrt{\log (\tfrac{n}{\delta}}\right\} L\|g^{-1}\|_{L^2}  \|\bs \mu\|_\Sigma$,   
        \item [(ii)] $\displaystyle \|\boldsymbol{\mu}\|^2 \geq \tfrac{33}{16}\tilde C_3 C_{2,\eta}L\|g\|_{L^\ell}(\tfrac{n}{\delta})^{1/\ell}\sqrt{\log (\tfrac{n}{\delta})}\|\bs \mu\|_\Sigma$ and \\       $\displaystyle \Tr(\Sigma) \geq 4\tilde C_2  C_{2,\eta} L^2 \max\left\{\tilde C_2  n,\tfrac{33}{32}\tilde C_3\|g\|_{L^\ell}\|g^{-1}\|_{L^2}(\tfrac{n}{\delta})^{1/\ell}n\sqrt{\log(\tfrac{n}{\delta})}\right\}\frac{\|\bs \mu\|_\Sigma^2}{\|\boldsymbol{\mu}\|^2}$. 
    \end{enumerate}
    Then, we have for $n \geq \delta^{-\frac{2}{k-2}}\left(\tfrac{16C_2(g)}{\min\{\eta, 1-2\eta\}}\right)^{\frac{k}{k-2}}$, with probability at least $1-5\delta$, ($N_{C_{2,\eta}}$) and $\eps\vee \beta \vee \gamma \leq \tfrac{\min\{\eta, 1-2\eta\}}{8}$ hold.

    \begin{sloppypar}
    If we further assume $g\equiv 1$ a.s., then ($N_{C_{2,\eta}}$) and $\eps\vee \beta \vee \gamma \leq \tfrac{\min\{\eta, 1-2\eta\}}{8}$ hold for $n\geq  \log(\tfrac{1}{\delta}) \left( \tfrac{16\tilde C_4}{\min\{\eta, 1-2\eta\}}\right)^2$. Moreover, we can set $\ell = \infty$ and interpret $(n/\delta)^{1/\ell} = 1$.     
    \end{sloppypar}
\end{lemm}

\begin{proof}[Proof of Lemma~\ref{lemm:esG-noisy-what-bound}] 

Throughout this proof, let $\eps, \beta, \gamma, \rho, \alpha_{2}, \alpha_\infty, M$ be as defined in Lemma~\ref{lemm:esGboundsevents}. Noting that $L \geq 1$, $\|g\|_{L^\ell}\|g^{-1}\|_{L^2} \geq 1$ by \eqref{eq:gnorm}, and $\eta \in (0, \tfrac12)$, we see that the condition \eqref{eq:esGtracebound} implies that the condition~\eqref{eq:assboundepsEsG} of Lemma~\ref{lemm:esGboundsevents} is satisfied. 

\begin{sloppypar}
Recalling that $C_2(g) \geq 1$ by \eqref{eq:gnorm}, $\tilde C_4 \geq 1$ and $k \in (2,4]$ we further see that $n \geq \delta^{-\frac{2}{k-2}}\left(\tfrac{16C_2(g)}{\min\{\eta, 1-2\eta\}}\right)^{\frac{k}{k-2}}$ implies $n \geq \log(1/\delta)$ and similarly $n\geq  \log(\tfrac{1}{\delta}) \left( \tfrac{16\tilde C_4}{\min\{\eta, 1-2\eta\}}\right)^2$ implies $n \geq \log(1/\delta)$ so this condition of Lemma~\ref{lemm:esGboundsevents} holds as well.    
\end{sloppypar}

Next, noting that $C_2(g) \geq 1$ by \eqref{eq:gnorm} and $\tilde C_4 \geq 1$, $\log \tfrac1\delta \geq 1$ by $\delta \leq \tfrac15$ and recalling $k \in (2,4]$ it follows that 
\begin{equation}\label{eq:boundnlemmaEsG}
n \geq \Big( \delta^{-\frac{2}{k-2}}\left(\tfrac{16C_2(g)}{\min\{\eta, 1-2\eta\}}\right)^{\frac{k}{k-2}}\Big) \wedge \Big( \log(\tfrac{1}{\delta}) \left( \tfrac{16\tilde C_4 }{\min\{\eta, 1-2\eta\}}\right)^2 \Big) \geq \Big(\tfrac{16}{\min\{\eta, 1-2\eta\}} \Big)^2 > 64.
\end{equation} 
In what follows, we will thus assume $n \geq 64$. Furthermore, \eqref{eq:esGtracebound} implies 
\begin{equation}\label{eq:boundepsinlemmaEsG}
1 \geq \tfrac{33}{16\eta}\|g\|_{L^\ell}\|g^{-1}\|_{L^2} \left(\tfrac{n}{\delta}\right)^{1/\ell}\sqrt{n}\eps \geq \tfrac{33}{16\eta}\sqrt{n}\eps \geq \tfrac{33}{8}\sqrt{n}\eps.
\end{equation}
Thus, we have $\eps \leq \tfrac{8}{33\sqrt{n}} < \tfrac{1}{32}$ for $n\geq 64$. We can thus additionally assume $\eps \leq \tfrac{1}{32}$ in what follows.

We will first show that $\eps\vee \beta \vee \gamma \leq \tfrac{\min\{\eta, 1-2\eta\}}{8}$. We note that $\eps \leq \tfrac{\eta}{2\sqrt{n}}$ by the first and second inequality in~\eqref{eq:boundepsinlemmaEsG}. Since $n \geq (\tfrac{16}{\min\{\eta, 1-2\eta\}} )^2$ by the first and second inequality in~\eqref{eq:boundnlemmaEsG}, we obtain $\eps \leq \tfrac{\min\{\eta, 1-2\eta\}}{16}$. It remains to bound $\gamma \vee \beta$, which we will do separately for the case of general $g$ and $g =1 $ a.s.

In the case of $g=1$ almost surely, we have $\beta \vee \gamma = \eps + \tilde C_4\sqrt{\tfrac{\log (1/\delta)}{n}}$ and hence it suffices to show $\tilde C_4\sqrt{\tfrac{\log (1/\delta)}{n}} \leq \tfrac{\min\{\eta, 1-2\eta\}}{16}$ which follows directly from the assumption we made on $n$ in this case.

Similarly, for the case of general $g$ we have $\beta \vee \gamma = \eps + C_2(g)\delta^{-2/k}n^{-(1-2/k)}$ and the bound $C_2(g)\delta^{-2/k}n^{-(1-2/k)} \leq\tfrac{\min\{\eta, 1-2\eta\}}{16}$ again follows from our assumption on $n$ in this case.

This completes the verification of $\eps\vee \beta \vee \gamma \leq \tfrac{\min\{\eta, 1-2\eta\}}{8}$. We will next verify all other assumptions of Theorem~\ref{detail-noisy-main-1-simple}.

Under the conclusions of Lemma~\ref{lemm:esGboundsevents}, the assumption $\eps M\sqrt{n\rho}\leq \tfrac{\eta}{2}$ of ($N_{C_{2,\eta}}$) is equivalent to 
\[
\Tr(\Sigma) \geq \frac{2(1+\eps)\tilde C_1 L^2}{\eta}\|g\|_{L^\ell}\|g^{-1}\|_{L^2} (\tfrac{n}{\delta})^{1/\ell} \max\left\{n^{3/2}\|\Sigma\|, n\|\Sigma\|_{\textsc{f}}\right\}.
\]

Recalling $\eps <\tfrac{1}{32}$, we see that \eqref{eq:esGtracebound} implies $\eps M\sqrt{n\rho}\leq \tfrac{\eta}{2}$.

After some tedious algebraic manipulations, we obtain equivalent expressions of the remaining assumptions of ($N_{C_{2,\eta}}$) as follows:
\begin{align*}
& \|\boldsymbol{\mu}\|\geq C_{2,\eta}\frac{\alpha_2}{\sqrt{n\rho}} \iff \|\boldsymbol{\mu}\|^2 \geq 2\tilde C_2 C_{2,\eta}\frac{L}{\|g^{-1}\|_{L^2}} \|\bs \mu\|_\Sigma, \\ 
(i) \; & \alpha_2 \|\boldsymbol{\mu}\|\sqrt{n\rho} \leq \tfrac{1}{30} \\
& \quad \iff \Tr(\Sigma)\geq 60 \tilde C_2 L\|g^{-1}\|_{L^2}\, n \|\bs \mu\|_\Sigma, \\ 
& \alpha_\infty\|\boldsymbol{\mu}\|Mn\rho \leq \tfrac{1}{64} \\
& \quad \iff \Tr(\Sigma) \geq 128(1+\eps) \tilde C_3 L\|g\|_{L^\ell}\|g^{-1}\|_{L^2}^2 (\tfrac{n}{\delta})^{1/\ell}n\sqrt{\log (\tfrac{n}{\delta})}\|\boldsymbol{\mu}\|_\Sigma, \\ 
(ii) \; & \|\boldsymbol{\mu}\| \geq C_{2,\eta}\alpha_\infty M \\
& \quad \iff \|\boldsymbol{\mu}\|^2 \geq 2(1+\eps)\tilde C_3C_{2,\eta}L \|g\|_{L^\ell}(\tfrac{n}{\delta})^{1/\ell} \sqrt{\log (\tfrac{n}{\delta})}\|\boldsymbol{\mu}\|_\Sigma, \\ 
& C_{2,\eta}\alpha_2^2 \leq 1 \\
& \quad \iff \Tr(\Sigma)\geq 4\tilde C_2^2 C_{2,\eta}L^2 n \frac{\|\boldsymbol{\mu}\|_\Sigma^2}{\|\boldsymbol{\mu}\|^2}, \\ 
& C_{2,\eta}\alpha_2 \alpha_\infty M\sqrt{n\rho} \leq 1 \\
& \quad \iff \Tr(\Sigma) \geq 4(1+\eps) \tilde C_2 \tilde C_3C_{2,\eta}L^2  \|g\|_{L^\ell}\|g^{-1}\|_{L^2} (\tfrac{n}{\delta})^{1/\ell}n \sqrt{\log (\tfrac{n}{\delta})} \frac{\|\boldsymbol{\mu}\|_\Sigma^2}{\|\boldsymbol{\mu}\|^2}. 
\end{align*}

Recalling $\eps \leq \tfrac{1}{32}$, it is easy to see the assumptions (i) and (ii) of this Lemma~imply (i) and (ii) of ($N_{C_{2,\eta}}$), respectively. 

\end{proof}

\subsection{Proof of Theorem~\ref{thm:sGnoisymain}}\label{sec:proof:thm:sGnoisymain}

\begin{proof}[Proof of Theorem~\ref{thm:sGnoisymain}]
We first show that $\|\boldsymbol{z}\|_{\psi_2}^2 \leq cL\|\Sigma\|$ for a universal constant $c$. For any $\boldsymbol{v}\in S^{p-1}$, we have 
\[
\langle \boldsymbol{v}, \boldsymbol{z}\rangle = \langle \boldsymbol{v}, W\boldsymbol{\xi} \rangle = \sum_{j=1}^p \left(W^\top\boldsymbol{v}\right)_{j} \xi_{j}.
\]
Since $\xi_{j}, j=1, \ldots, p$ are independent and centered, Proposition~2.6.1 in \cite{vershynin_2018} implies that there exists a universal constant $c^\prime$ such that 
\begin{align*}
\|\langle \boldsymbol{v}, \boldsymbol{z}\rangle\|_{\psi_2}^2
    & \leq c^\prime \sum_{j=1}^p \left(W^\top\boldsymbol{v}\right)_{j}^2 \|\xi_{j}\|_{\psi_2}^2 \\
    & \leq c^\prime L^2 \|W^\top\boldsymbol{v}\|^2 \\
    & = c^\prime L^2 \|\boldsymbol{v}\|_\Sigma^2 \\
    & \leq c^\prime L^2 \|\Sigma\|.
\end{align*}
Therefore, by the definition of sub-Gaussian norm of a random vector, we have $\|\boldsymbol{z}\|_{\psi_2} \leq \sqrt{c^\prime}L\|\Sigma\|$.

With this, the desired conclusion follows immediately from equation~\eqref{eq:testerrornoisygenexp} in the proof of Theorem~\ref{detail-noisy-main-1-simple},  and Lemma~\ref{lemm:esG-noisy-what-bound} where we use that
$
\tfrac{1}{n\rho\|\bm \mu\|^4} + n\rho \geq \tfrac{1}{\|\bm \mu\|^2}.
$

\end{proof}

\subsection{Proof of Theorem~\ref{thm:noiseless-main-sG}}\label{sec:proof:thm:noiseless-main-sG}

\begin{proof}[Proof of Theorem~\ref{thm:noiseless-main-sG}] 
For the regime (i), first note that, by $\tilde C_5 > 2\tilde C_1$, \eqref{eq:sGtrcond} implies that the condition of Lemma~\ref{lemm:esGboundsevents} is satisfied. Thus, with probability at least $1-4\delta$ event $\bigcap_{i=1}^4 E_i$ holds with $\eps$, $\alpha_2$, $\alpha_\infty$, $M$, $\beta$, and $\rho$ specified in Lemma~\ref{lemm:esGboundsevents}.

Under the conclusions of Lemma~\ref{lemm:esGboundsevents}, we have $\beta = \eps < \tfrac{1}{2}$ by $\tilde C_5 > 2\tilde C_1$. Furthermore, the assumption $\|\bs \mu\|\sqrt{(1-\beta)n\rho} \geq C\alpha_2$ in (i) of Theorem~\ref{thm:noiseless-main} is equivalent to
\[
\|\bs \mu\|^2 \geq \tfrac{2}{\sqrt{1-\beta}} C \tilde C_2 L\|\bs \mu\|_\Sigma.
\]
Similarly, we can obtain equivalent expressions of the remaining conditions of (i) of Theorem~\ref{thm:noiseless-main} as follows:
\begin{align*}
& \alpha_2\|\bs \mu\|\sqrt{(1+\beta)n\rho} \leq \tfrac{1}{4} \\
&\iff  \Tr(\Sigma) \geq 8\sqrt{1+\beta}\tilde C_2 L n \|\bs \mu\|_\Sigma, \\
& \eps M \sqrt{(1+\beta)n\rho} \leq \tfrac{1}{4} \\
&\iff \Tr(\Sigma) \geq 4\sqrt{1+\beta}(1+\eps)\tilde C_1 L^2 \max\left\{n\|\Sigma\|_F, n^{3/2}\|\Sigma \|\right\}, \\
&M \alpha_\infty \|\bs \mu\|(1+\beta) n\rho < \tfrac{3}{32} \\
&\iff \Tr(\Sigma) > \tfrac{64}{3}(1+\beta)(1+\eps) \tilde C_3 L n\sqrt{\log (\tfrac{n}{\delta})}\|\bs \mu\|_\Sigma.
\end{align*}
By $\eps = \beta < \tfrac{1}{2}$, it is easy to see all of these are satisfied under the regime (i) of this theorem. Note that under (i) we can assume $\delta \leq 1/4$ as the statement becomes trivial otherwise. This implies $\log(n/\delta) > 1$.

For the regime (ii), again the condition of Lemma~\ref{lemm:esGboundsevents} is satisfied. Thus, with probability at least $1-2\delta$ event $\bigcap_{i=1, 3} E_i$ holds with $\eps$ and $M$ specified in Lemma~\ref{lemm:esGboundsevents}.

Then, the condition (ii) of Theorem~\ref{thm:noiseless-main} is equivalent to $\|\bs \mu\|\geq C^\prime (1+\eps)\sqrt{\Tr(\Sigma)}$. Since $\eps \leq \tfrac{1}{2}$, it is easy to see this is satisfied under the regime (ii) of this theorem.

So far we have shown both conditions (i) and (ii) of Theorem~\ref{thm:noiseless-main} are satisfied under the regime (i) and (ii) of this theorem, respectively. Therefore, by Theorem~\ref{thm:noiseless-main} and its extension in~\eqref{eq:errboundnonoise-exp}, we obtain the desired test error bound where we use that
$
\tfrac{1}{n\rho\|\bm \mu\|^4} + n\rho \geq \tfrac{1}{\|\bm \mu\|^2}.
$
\end{proof}

\section{Detailed Comparison with previous work for sub-Gaussian Mixtures}\label{sec:detailedcomparison}

In this section, we provide a more detailed comparison of our results for \ref{model:sG} to the previous work \cite{JMLR:v22:20-974,NEURIPS2021_46e0eae7,wang2022binary, minsker2025classification}. 

Before proceeding to the details, we would like to remind readers that our main motivation is to present more universal framework that can be applicable to broader scenario rather than to attain the state-of-the-art in a specific setting. Nevertheless, our analysis also provides new insights in some classical settings such as sub-Gaussian predictors in the noiseless case, and the purpose of this section is to compare the commonalities and differences of our results and existing results in the literature.

We also note that model their setting is also simpler than ours. All of them only studied \ref{model:sG} with $W = U\Lambda^{1/2}$ while our results hold for any $W \in \R^{p\times p}$ that is a solution to $WW^\top = \Sigma$, which includes the more standard choice $W = U\Lambda^{1/2}U^\top$, as noted in Section~\ref{sec:setting}.

For the case of isotropic Gaussian mixtures, a comparison of our results with existing findings is summarized in Figure~\ref{fig:summary-svp-benign1} and Table~\ref{tb:summary-svp-benign1} for the noiseless case and Figure~\ref{fig:summary-svp-benign2} and Table~\ref{tb:summary-svp-benign2} for the noisy case.

\subsection{Noiseless sub-Gaussian Mixtures}\label{sec:detailedcomparison-noiseless}

As noted in Appendix~\ref{sec:recoveringSubGauss}, our Theorem~\ref{thm:noiseless-main-sG} essentially recovers previous work in noiseless sub-Gaussian mixtures. We here provide more detailed comparison.

Since many results in existing work are obtained with probability at least $1 - n^{-1}$, we take $\delta \asymp n^{-1}$ to allow for more direct comparison:

\begin{thm}[Theorem~\ref{thm:noiseless-main-sG} with $\delta \asymp n^{-1}$]\label{thm:sGnoiseless-comparison}
Suppose \ref{model:sG} holds with $\eta = 0$ and one of the following holds for a sufficiently large universal constant $C$:
\begin{enumerate}
    \item[(i)] $\|\bs \mu\|^2 \geq C L\|\bs \mu\|_\Sigma$ and 
    \begin{align*}
    \Tr(\Sigma) & \geq C \max\left\{L^2 n\|\Sigma\|_F, L^2n^{3/2} \|\Sigma\|, L n\sqrt{\log n} \|\bs \mu\|_\Sigma \right\}, %\label{eq:sGtrcond}
    \end{align*}
    \item[(ii)] $\|\bs \mu\| \geq \tfrac{3}{2}C \sqrt{\Tr(\Sigma)}$ and $\displaystyle \Tr(\Sigma) \geq C L^2\max\left\{ \sqrt{n}\|\Sigma\|_F, n \|\Sigma\|\right\}$.
\end{enumerate}
Then, we have with probability at least $1 - n^{-1}$
    \begin{equation*}
        \P_{(\boldsymbol{x}, y)}(\langle \boldsymbol{\hat w}, y\boldsymbol{x}\rangle < 0) \leq \exp\left\{-\frac{c}{L^2 }\frac{n\|\bs \mu\|^4}{\|\Sigma\|(n\|\bs \mu\|^2 + \Tr(\Sigma))}\right\},
    \end{equation*}
where $c$ is a universal constant.
\end{thm}

We first compare with \cite{wang2022binary} whose result corresponds to the special case of Gaussian mixtures. 

\begin{thm}[Theorem~3.1 \& 4.2, \cite{wang2022binary}]\label{thm:wangThrampoulidis-aniso-noiseless}
\begin{sloppypar}
Suppose $\bs \xi \sim \mathcal{N}(0, I_p)$ in model \ref{model:sG} (Gaussian mixtures) with $\eta = 0$, $\Tr(\Sigma) \geq C\max\{n\sqrt{\log n}\|\Sigma\|_F, n^{3/2}\log n \|\Sigma\|, n\sqrt{\log n}\|\bs \mu\|_\Sigma\}$ and $\|\bs \mu\|^2 \geq C \|\bs \mu\|_\Sigma$ for a sufficiently large universal constant $C$. Then with probability $1 - n^{-1}$ we have
\end{sloppypar}
\begin{equation}
\P_{(\bs x, y)}(\langle \bs{\hat w}, y\bs x\rangle <0) \leq \exp\left(- C^\prime \frac{n\|\bs \mu\|^4}{n\|\bs \mu\|_\Sigma^2 + n\|\Sigma\|_F^2 + \frac{\|\Sigma\|_F^2}{\Tr(\Sigma)^2}n^3\|\bs \mu\|_\Sigma^2}\right),
\end{equation}
where $C^\prime$ is a universal constant.
\end{thm}

By comparing the conditions, we see that the condition of \cite{wang2022binary}'s result is stronger than condition (i) in Theorem~\ref{thm:sGnoiseless-comparison} by a $\log n$ factor. Moreover, \cite{wang2022binary}'s result does not capture condition (ii) in our result.

For the test error bound, we see quite a bit of difference in the denominator of fractions inside the exponentials. While $\|\Sigma\|\|\bs \mu\|^2 \geq \|\bs \mu\|_\Sigma^2$ holds in general, we can show that our bound is tighter when $\lambda_{max}/\lambda_{min} = O(1)$. In fact, if $\lambda_{max}/\lambda_{min} = O(1)$, then we have $\|\Sigma\|\|\bs \mu\|^2 \asymp \|\bs \mu\|_\Sigma^2$ and $\|\Tr(\Sigma)\| \ll n\lambda_{min}\Tr(\Sigma) \leq n\frac{\Tr(\Sigma)}{p}\Tr(\Sigma) \leq n\|\Sigma\|_F^2$, where the last inequality is due to Cauchy-Schwartz inequality.

\bigskip

Now we turn to comparison with \cite{NEURIPS2021_46e0eae7}'s result:

\begin{thm}[Theorem~3.1, \cite{NEURIPS2021_46e0eae7}]\label{thm:cao2021}
Suppose in model \ref{model:sG} with $\eta = 0$ and $W = U\Lambda^{1/2}$ that $\Tr(\Sigma) \geq C\max \{n\|\Sigma\|_F, n^{3/2}\|\Sigma\|, n\sqrt{\log n}\|\bs \mu\|_\Sigma\}$ and $\|\bs \mu\|^2 \geq C\|\bs \mu\|_\Sigma$ for some constant $C$. Then with probability at least $1 - n^{-1}$ we have
\begin{equation}
\P_{(\bs x, y)}(\langle \bs{\hat w}, y\bs x\rangle <0) \leq \exp\left(- C^\prime \frac{n\|\bs \mu\|^4}{n\|\bs \mu\|_\Sigma^2 + \|\Sigma\|_F^2 + n\|\Sigma\|^2}\right),
\end{equation}
where $C^\prime$ is some constant.
\end{thm}

We see that their conditions match our condition (i) in Theorem~\ref{thm:sGnoiseless-comparison} except that the dependence of the constant $C$ on sub-Gaussian norm is not explicit in their work. Our condition (ii) in Theorem~\ref{thm:sGnoiseless-comparison} is not captured by their work. As noted in Appendix~\ref{sec:recoveringSubGauss}, their bound is better in general but matches with ours when we assume $\lambda_{max}/\lambda_{min} = O(1)$, e.g. $\Sigma = I_p$.

\bigskip

Lastly, we compare our result with \cite{minsker2025classification}. They extended \cite{NEURIPS2021_46e0eae7} and \cite{wang2022binary} in a different direction from ours while only studied noiseless \ref{model:sG}. To state their result we need to define the following quantity: for a universal constant $b>1$, we define $k^*$ by 
\begin{equation}\label{eq:k*}
    k^* := \min \Big\{k\geq 0, \sum_{j>k} \lambda_j \geq b n \lambda_{k+1}\Big\},
\end{equation}
where $\lambda_1 \geq \lambda_2 \geq \dots \geq \lambda_p$ are eigenvalues of $\Sigma$.

For $k^*$, we let $\pi_{k^*}: \R^p \to \R^p$ be the orthogonal projection onto the subspace spanned by eigenvectors corresponding to eigenvalues $\lambda_1, \ldots, \lambda_{k^*}$.

\begin{thm}[Theorem~3 \& 5, \cite{minsker2025classification}]
Suppose model \ref{model:sG} with $\eta = 0$, $W = U\Lambda^{1/2}$, and positive definite $\Sigma \in \R^{p\times p}$. Suppose further that $k^* \leq \tfrac{n}{2}$, $\|\pi_{k^*}(\bs \mu)\|\leq \tfrac{\|\bs \mu\|}{\sqrt{5}}$, $k^*(\log n)^2 \leq Cn$, $\sqrt{n\log n\sum_{j > k^*}\lambda_j^2} \leq C\sum_{j > k^*} \lambda_j$, and $n \sqrt{(1+k^*) \log n} \|\bs \mu\|_\Sigma \leq C \sum_{j > k^*} \lambda_j$ for some universal constant $C>0$. Then for some universal constants $C^\prime>0$ and $c>0$, we have with probability at least $1 - 3/n$
\begin{equation}
\begin{aligned}
& \P_{(\bs x, y)}\left(\langle \bs{\hat w}, y\bs x\rangle < 0 \right) \\
& \quad \leq C^\prime\exp\left(- c \frac{n\|\bs \mu\|^4}{(1+k^*)n\|\bs \mu\|_\Sigma^2 + k^*\lambda_{k^*}^2 + \sum_{j > k^*}\lambda_j^2 + (k^*\lambda_{k^*}^2 + \lambda_{k^*+1}^2)\log n}\right).    
\end{aligned}
\end{equation}
\end{thm}

Our assumption $\Tr(\Sigma) \gtrsim n\|\Sigma\|$ in Theorem~\ref{thm:sGnoiseless-comparison} implies $k^* = 0$. Thus, they allow more generality in the sense $k^*$ can be non-zero. 

When $k^* = 0$, i.e. $\Tr(\Sigma)\gtrsim n\|\Sigma\|$, their assumptions read: 
\[
\Tr(\Sigma)\gtrsim \max\{\sqrt{n\log n}\|\Sigma\|_F, n\|\Sigma\|, n\sqrt{\log n}\|\bs \mu\|_\Sigma\},
\]
and the test error bound becomes
\begin{equation}\label{eq:minskererror_k*=0}
\P_{(\bs x, y)}\left(\langle \bs{\hat w}, y\bs x\rangle < 0 \right) \leq C^\prime \exp \left(- c\frac{n\|\bs \mu\|^4}{n\|\bs \mu\|_\Sigma^2 + \|\Sigma\|_F^2 + \log n\|\Sigma\|^2} \right).    
\end{equation}

Comparing with our condition (i) in Theorem~\ref{thm:sGnoiseless-comparison}, we see that their assumption on $\Tr(\Sigma)$ is weaker than ours by a factor of $\sqrt{n}$ while we do not require $\Sigma$ to be positive definite and our result holds for a more general choice of $W$. Moreover, our condition (ii) in Theorem~\ref{thm:sGnoiseless-comparison} is not captured by their work. Their test error bound \eqref{eq:minskererror_k*=0} is slightly tighter than that of Theorem~\ref{thm:cao2021} (Theorem~3.1, \cite{NEURIPS2021_46e0eae7}). 

We note that their proof relies on their specific choice of $W = U\Lambda^{1/2}$ for diagonal $\Lambda$ which allows the following decomposition of $ZZ^\top$ in a sum of independent random matrices associated with the eigenvalues of $\Sigma$:
\begin{equation}
    ZZ^\top = \Xi \Lambda^{1/2}U^\top U \Lambda^{1/2} \Xi^\top = \Xi \Lambda \Xi^\top = \sum_{j=1}^p \lambda_j \bs \zeta_j \bs \zeta_j^\top.
\end{equation}
where $\Xi = (\bs \xi_1, \ldots, \bs \xi_n)^\top = (\bs \zeta_1, \ldots, \bs \zeta_p) \in \R^{n\times p}$. Such a decomposition is not possible with the more standard choice, i.e. $W = U\Lambda^{1/2}U^\top$. 

\subsection{Noiseless, isotropic Gaussian Mixtures}\label{sec:detailedcomparison-noiseless-isotropic}

We now provide detailed comparison for isotropic Gaussian mixtures, for which \cite{wang2022binary} obtained tighter results than Theorem~\ref{thm:wangThrampoulidis-aniso-noiseless} taking advantage of rotational invariance of isotropic Gaussian random variables and a special property of inverse Wishart matrices.

By taking $\Sigma = I_p$, Theorem~\ref{thm:sGnoiseless-comparison} becomes:

\begin{thm}[Theorem~\ref{thm:noiseless-main-sG} with $\delta \asymp n^{-1}$ and $\Sigma = I_p$]\label{thm:sGnoiseless-isotropic-comparison}
Suppose \ref{model:sG} holds with $\eta = 0$ and one of the following holds for a sufficiently large universal constant $C$:
\begin{enumerate}
    \item[(i)] $\|\bs \mu\| \geq C L$ and 
    \begin{equation}\label{eq:sG-what=wls}
    p \geq C \max\left\{L^4 n^2, L n\sqrt{\log n} \|\bs \mu\| \right\}, 
    \end{equation}
    \item[(ii)] $\|\bs \mu\| \geq \tfrac{3}{2}C \sqrt{p}$ and $\displaystyle p \geq C L^2 n $.
\end{enumerate}
Then, we have with probability at least $1 - n^{-1}$
    \begin{equation*}
        \P_{(\boldsymbol{x}, y)}(\langle \boldsymbol{\hat w}, y\boldsymbol{x}\rangle < 0) \leq \exp\left\{-\frac{c}{L^2 }\frac{n\|\bs \mu\|^4}{n\|\bs \mu\|^2 + p}\right\},
    \end{equation*}
where $c$ is a universal constant.
\end{thm}

As noted in the previous section, this result matches with \cite{NEURIPS2021_46e0eae7}.

\begin{rmk}
For the equivalence $\bs{\hat{w}} = \bs w_{\rm LS}$ to hold, it is actually enough to assume just \eqref{eq:sG-what=wls} in condition (i). We also note that condition (ii) does not necessarily guarantee the equivalence $\bs{\hat{w}} = \bs w_{\rm LS}$. 
\end{rmk}

Theorem~3.2 and 4.3 of \cite{wang2022binary} imply the following result:

\begin{thm}[Theorem~3.2 \& 4.3, \cite{wang2022binary}]
\begin{sloppypar}
Suppose $\bs z \sim \mathcal{N}(0, I_p)$ in model \ref{model:sG} (Gaussian mixtures) with $\eta = 0$, $p \geq C\max\{n\log n, n\sqrt{\log n}\|\bs \mu\|\}$ and $\|\bs \mu\| \geq C$ for a sufficiently large universal constant $C$. Then with probability $1 - n^{-1}$ we have     
\end{sloppypar}
\begin{equation}
\P_{(\bs x, y)}(\langle \bs{\hat w}, y\bs x\rangle <0) \leq \exp\left(- C^\prime \frac{n\|\bs \mu\|^4}{n\|\bs \mu\|^2 + p}\right),
\end{equation}
where $C^\prime$ is a universal constant.
\end{thm}

First note that their result corresponds to condition (i) in Theorem~\ref{thm:sGnoiseless-isotropic-comparison}, and hence the result under condition (ii) is not captured by their result. Comparing our condition (i) and theirs, we see that their condition $p\geq C\max\{n\log n, n\sqrt{\log n}\|\bs \mu\|\}$ is weaker than our corresponding condition \eqref{eq:sG-what=wls} under (i). Their proof relies on the special properties of isotropic Gaussian as noted earlier and their proof technique is not applicable to general isotropic sub-Gaussian mixtures. We note that \cite{minsker2025classification}'s result when applied to this case recovers their result. 

Lastly, we would like to note that the test error bounds are of the same form in this case in \cite{NEURIPS2021_46e0eae7,wang2022binary,minsker2025classification}, and also same as ours namely
\[
C\exp\left(- c\frac{n\|\bs \mu\|^4}{n\|\bs \mu\|^2 + p}\right)
\]
for some constants $C, c>0$. Thus, benign overfitting occurs when $\|\bs \mu\| \gg \max\{1, \sqrt{p/n}\}$. Since $p \gg n$ is required in all the results, it reduces to $\|\bs \mu\| \gg \sqrt{p/n}$. 
The difference appears in the sufficient conditions. While \cite{wang2022binary} and \cite{minsker2025classification} have weaker sufficient condition than (i) in Theorem~\ref{thm:sGnoiseless-isotropic-comparison}, their results do not capture condition (ii) in the theorem. Fixing $\|\bs \mu\| \asymp n^a$, $p \asymp n^b$ for some constant $a, b$ and letting $n\to \infty$, the sufficient conditions of benign overfitting are summarized in Figure~\ref{fig:summary-svp-benign1} and Table~\ref{tb:summary-svp-benign1}.

% ---------------- parameters for THIS figure only ----------------
\def\AmaxB{4}
\def\BmaxB{6}
\def\aQuarterB{0.25}
\def\aHalfB{0.5}
\def\aOneB{1}
\def\epsB{0.001} % for strict inequalities like a>0

\begin{figure}[H]
\begin{tikzpicture}
\begin{axis}[
  axis lines=middle,
  axis line style={->},
  xmin=0, xmax=\AmaxB,
  ymin=1, ymax=\BmaxB,
  width=14cm, height=8cm,
  clip=true,
  xlabel={$a$},
  ylabel={$b$},
  xlabel style={at={(axis description cs:1,0)},anchor=north west,xshift=2pt,yshift=-2pt},
  ylabel style={at={(axis description cs:0,1)},anchor=south east,xshift=-3pt,yshift=2pt},
  legend style={at={(0.5,-0.18)},anchor=north,draw=none,fill=none},
  legend columns=3,
  after end axis/.code={
  \node[anchor=east] at (axis cs:-0.02,1) {$1$};
}
]

% -------- named paths (define before fill between) --------
\addplot[name path=TopB,   domain=0:\AmaxB, draw=none] {\BmaxB};
\addplot[name path=BtwoB,  domain=0:\AmaxB, draw=none] {2};
\addplot[name path=BoneB,  domain=0:\AmaxB, draw=none] {1};

\addplot[name path=L4B,    domain=0:\AmaxB, draw=none] {1 + 4*x}; % 1+4a
\addplot[name path=L1aB,   domain=0:\AmaxB, draw=none] {1 + x};   % 1+a
\addplot[name path=L2aB,   domain=0:\AmaxB, draw=none] {2*x};     % 2a

% =========================================================
% (V)  {1+a < b < 1+4a}  (base light band)
\addplot[orange!70, opacity=0.25]
  fill between[of=L4B and L1aB, soft clip={domain=\epsB:\AmaxB}];

% (II) {a >= 0, b >= 2, 1+a <= b < 1+4a}
% Piece 1: a in [1/4, 1] -> lower bound is b=2
\addplot[red!70, opacity=0.45]
  fill between[of=L4B and BtwoB, soft clip={domain=\aQuarterB:\aOneB}];
% Piece 2: a in [1, 2] -> lower bound is b=1+a
\addplot[red!70, opacity=0.45]
  fill between[of=L4B and L1aB, soft clip={domain=\aOneB:\AmaxB}];

% (I) {b>=2, b>=1+4a, a>=0}
% Piece 1: a in [0, 1/4] -> b>=2 dominates
\addplot[blue!70, opacity=0.35]
  fill between[of=TopB and BtwoB, soft clip={domain=0:\aQuarterB}];
% Piece 2: a in [1/4, 2] -> b>=1+4a dominates
\addplot[blue!70, opacity=0.35]
  fill between[of=TopB and L4B, soft clip={domain=\aQuarterB:\AmaxB}];

% (IV) {a >= 0, 1+4a <= b < 2}
% Exists only for a in [0, 1/4]
\addplot[gray!60, opacity=0.35]
  fill between[of=BtwoB and L4B, soft clip={domain=0:\aQuarterB}];

% (III) {1 <= b <= min(2a, 1+a)}
% min switches at a=1:
%  - for a in [1/2, 1]: min = 2a
%  - for a in [1, 2]:   min = 1+a
\addplot[green!60, opacity=0.35]
  fill between[of=L2aB and BoneB, soft clip={domain=\aHalfB:\aOneB}];
\addplot[green!60, opacity=0.35]
  fill between[of=L1aB and BoneB, soft clip={domain=\aOneB:\AmaxB}];

% -------- boundary lines on top --------
\addplot[thick, domain=0:1] {2}
  node[pos=0.98, anchor=east, xshift=-2pt, yshift=8pt] {};
  
\addplot[thick, dashed,domain=1:\AmaxB] {2}
  node[pos=0.98, anchor=east, xshift=-2pt, yshift=8pt] {$b=2$};

\addplot[thick, domain=0:\AmaxB] {1+4*x}
  node[pos=0.3, anchor=north west, xshift=-5pt, yshift=-3pt] {$b=1+4a$};

\addplot[thick, domain=0:\AmaxB] {1+x}
  node[pos=0.75, anchor=north west, xshift=-2pt, yshift=2pt] {$b=1+a$};

\addplot[thick, domain=0:1] {2*x}
  node[pos=0.60, anchor=north west, xshift=2pt, yshift=2pt] {};

\addplot[thick, dashed, domain=1:\AmaxB] {2*x}
  node[pos=0.60, anchor=north west, xshift=2pt, yshift=2pt] {$b=2a$};

\addplot[thick, domain=0:\AmaxB] {1}
  node[pos=0.98, anchor=east, xshift=-2pt, yshift=8pt] {$b=1$};

% -------- region labels (plain) --------
\node at (axis cs:0.5,4.5) {\textbf{(I)}};
\node at (axis cs:1.7,4.5)  {\textbf{(II)}};
\node at (axis cs:0.5,1.8)  {\textbf{(V)}};
\node at (axis cs:0.11,1.8) {\textbf{(IV)}};
\node at (axis cs:2.5,2.5) {\textbf{(III)}};
\node at (axis cs:0.45,1.2) {\textbf{(VI)}};

\end{axis}
\end{tikzpicture}

\caption{Summary of existing results on the noiseless isotropic GMM. (I) - (V) on  the $(a,b)$-plane are the regions covered by this paper and the existing literature: $\text{\textbf{(I)}} = \{b\geq 2, b \geq 1 + 4a, a \geq 0\}$, $\text{\textbf{(II)}} = \{a \geq 0, b \geq 2, 1 + a \leq b < 1 + 4a\}$, $\text{\textbf{(III)}} = \{1 \leq b \leq \min\{2a, 1 + a\}\}$, $\text{\textbf{(IV)}} = \{a \geq 0, 1 + 4a \leq b < 2\}$, and $\text{\textbf{(V)}} = \{1 + a < b < 1 + 4a\}$. 
}
\label{fig:summary-svp-benign1}
\end{figure}

\begin{table}[H]
\caption{Conditions of $\bs{\hat w} = \bs w_{\rm LS}$ and Benign Overfitting in the noiseless isotropic GMM.}
\label{tb:summary-svp-benign1}
\begin{center}
\resizebox{\textwidth}{!}{%
\begin{tabular}{c|c|c}
& $\bs{\hat w} = \bs w_{\rm LS}$ & Benign Overfitting \\ \toprule
\textbf{(I)} & Holds (\textbf{This paper}, \cite{wang2022binary}, & \textbf{Fails (This paper, \cite{NEURIPS2021_46e0eae7})} \\ 
 & \cite{NEURIPS2021_46e0eae7,minsker2025classification}) & \\ \midrule
\textbf{(II)} & Holds (\textbf{This paper}, \cite{wang2022binary}, & Holds (\textbf{This paper}, \cite{wang2022binary}, \\ 
& \cite{NEURIPS2021_46e0eae7,minsker2025classification}) & \cite{NEURIPS2021_46e0eae7,minsker2025classification}) \\ \midrule
\textbf{(III)} & Unknown & \textbf{Holds (This paper)} \\ \midrule
\textbf{(IV)} & Holds(\cite{wang2022binary,minsker2025classification}) & Unknown \\ \midrule
\textbf{(V)} & Holds(\cite{wang2022binary,minsker2025classification}) & Holds(\cite{wang2022binary,minsker2025classification}) \\ \midrule
\textbf{(VI)} & Unknown & Unknown \\
\bottomrule
\end{tabular}
}
\end{center}
\end{table}

\subsection{Noisy sub-Gaussian mixtures}\label{sec:detailedcomparison-noisy}

The noisy sub-Gaussian mixture model is studied by \cite{JMLR:v22:20-974} and \cite{wang2022binary} only for the weak signal regime. Since \cite{wang2022binary} only considers the isotropic Gaussian case and \cite{JMLR:v22:20-974} assumes an approximately isotropic case, i.e. $\|\Sigma\|\gtrsim 1$ and $\Tr(\Sigma) \asymp p$, we provide below a special case of Theorem~\ref{thm:sGnoisymain}:

\begin{thm}[Theorem~\ref{thm:sGnoisymain} with $\|\Sigma\| \asymp 1$ and $\Tr(\Sigma) \asymp p$]
Suppose \ref{model:sG} holds with $\eta \in (0, 1/2)$, $\|\Sigma\| \asymp 1$, $\Tr(\Sigma) \asymp p$, $n \geq \frac{C}{\min\{\eta, 1-2\eta\}^2}\log(\tfrac{1}{\delta})$,
\begin{align*}
    p & \geq \frac{CL^2}{\eta}\max\{n^{3/2}, n\|\Sigma\|_F\}, \\
    \|\bs \mu\|^2 & \geq CL\|\bs \mu\|_\Sigma, 
\end{align*}
and further one of the following conditions holds:
\begin{enumerate}
    \item[(i)] $p\geq CL n\sqrt{\log (\tfrac{n}{\delta})}\|\bs \mu\|_\Sigma$,
    \item[(ii)] $\|\bs \mu\|^2 \geq C\sqrt{\log (\tfrac{n}{\delta})}\|\bs \mu\|_\Sigma$ and $p \geq CL^2 n\sqrt{\log(\tfrac{n}{\delta})}\frac{\|\bs \mu\|_\Sigma^2}{\|\bs \mu\|^2}$.
\end{enumerate}
Then, we have with probability at least $1 - \delta$
\[
\P_{(\bs x, y_\n)}(\langle \bs{\hat w}, y_\n\bs x\rangle <0) \leq \eta + (1-\eta)\exp\left\{ - c\frac{(1-2\eta)^2}{L^2}\left(\frac{n}{p} + \frac{p}{n\|\bs \mu\|^4}\right)^{-1}\right\},
\]
\end{thm}

We note that sufficient conditions of the equivalence $\bs{\hat w} = \bs w_{\rm LS}$ is same as above up to a constant factor.

We first compare with \cite{JMLR:v22:20-974}. We note that \cite{JMLR:v22:20-974} actually consider a more general label flipping noise which contains the same label flipping noise as a special case. For the purpose of direct comparison, we provide their result with the same label flipping noise as ours.

\begin{thm}[Theorem~4, \cite{JMLR:v22:20-974}]
Suppose \ref{model:sG} holds with $\eta \leq C^{-1}$, $W = U\Lambda^{1/2}$, $\delta < C^{-1}$, $\|\Sigma\|\asymp 1$, $\Tr(\Sigma)\asymp p$, $n\geq C\log (\tfrac{1}{\delta})$, 
\begin{align*}
    p & \geq C \max\{n\|\bs \mu\|^2, n^2\log (\tfrac{n}{\delta})\}, \\
    \|\bs \mu\|^2 & \geq Cn\log (\tfrac{n}{\delta}),
\end{align*}
for a sufficiently large constant $C$. Then, we have with probability at least $1- \delta$
\[
\P_{(\bs x, y_\n)}(\langle \bs{\hat w}, y_\n\bs x\rangle <0) \leq \eta + \exp\left( - c\frac{\|\bs \mu\|^4}{p}\right),
\]
where $c$ is a universal constant.
\end{thm}

Since they assume $\Tr(\Sigma)\asymp p$ and $p\geq Cn\|\bs \mu\|^2$, their result only captures the weak signal regime in our analysis. Comparing the test error bound, we see that their bound is substantially looser than ours by an factor of $n$ inside the exponentials.

In terms of the assumptions, we can show that their assumptions are stronger than ours under condition (ii). To see this, note that Cauchy-Schwartz inequality implies $\Tr(\Sigma) \leq \sqrt{p}\|\Sigma\|_F$, and thus $p/\|\Sigma\|_F \geq p^{3/2}/\Tr(\Sigma) \asymp \sqrt{p}$, where the last step is due to $\Tr(\Sigma)\asymp p$. With $p\gg n^2$, this further implies $p/\|\Sigma\|_F \gg n$, which shows $p \gtrsim n\|\Sigma\|_F$ is satisfied. The other conditions under condition (ii) should be straightforward by noting $\|\Sigma\|\asymp 1$.

We now turn to comparison with the result of \cite{wang2022binary}. Since their result is given for noisy isotropic Gaussian mixtures and  with $\delta \asymp n^{-1}$, we provide below a special case of Theorem~\ref{thm:sGnoisymain} when $\Sigma = I_p$ and $\delta \asymp n^{-1}$:

\begin{thm}[Theorem~\ref{thm:sGnoisymain} with $\Sigma = I_p$ and $\delta \asymp n^{-1}$]\label{thm:sGnoisy-comparison-isotropic}
Suppose \ref{model:sG} holds $\eta \in (0, 1/2)$, $\Sigma = I_p$, $n \geq \frac{C}{\min\{\eta, 1-2\eta\}^2}$,
\begin{align*}
    p & \geq \frac{CL^4}{\eta^2}n^2, \\
    \|\bs \mu\| & \geq CL, 
\end{align*}
and further one of the following conditions holds:
\begin{enumerate}
    \item[(i)] $p\geq CL n\sqrt{\log n}\|\bs \mu\|$,
    \item[(ii)] $\|\bs \mu\| \geq C\sqrt{\log n}$ and $p \geq CL^2 n\sqrt{\log n}$.
\end{enumerate}
Then, we have with probability at least $1 - n^{-1}$
\[
\P_{(\bs x, y_\n)}(\langle \bs{\hat w}, y_\n\bs x\rangle <0) \leq \eta + (1-\eta)\exp\left\{ - c\frac{(1-2\eta)^2}{L^2}\left(\frac{n}{p} + \frac{p}{n\|\bs \mu\|^4}\right)^{-1}\right\},
\]  
\end{thm}

We note that 

We now compare our result with the corresponding results of \cite{wang2022binary}:

\begin{thm}[Theorem~7.2 \& 7.3, \cite{wang2022binary}]
Suppose \ref{model:sG} with $\bs z \sim \mathcal{N}(0, I_p)$ with $n \geq C$,
\begin{align*}
    p & \geq C \max\{n\log n, n\sqrt{\log n}\|\bs \mu\|, n\|\bs \mu\|^2\}, \\
    \|\bs \mu\| & \geq C
\end{align*}
for a sufficiently large constant $C$. Then, we have with probability at least $1 - n^{-1}$
\[
\P_{(\bs x, y_\n)}(\langle \bs{\hat w}, y_\n\bs x\rangle <0) \leq \eta + \exp\left( - c\frac{n\|\bs \mu\|^4}{p}\right),
\]
where $c$ is a universal constant.
\end{thm}

Since $p \gtrsim n\|\bs \mu\|^2$ is assumed, their result only captures the weak signal regime in our analysis. Their result corresponds to Theorem~\ref{thm:sGnoisy-comparison-isotropic} under condition (i). Comparing the lower bounds of $p$, we see that they assume $p\gtrsim n\log n$ while ours requires $p \gtrsim n^2$. The discrepancy is again due to the special properties of isotropic Gaussian random vectors, and their proof technique is not applicable to general sub-Gaussian mixtures.

Fixing $\|\bs \mu\| \asymp n^a$, $p \asymp n^b$ for some constant $a, b$ and letting $n\to \infty$, the sufficient conditions of benign overfitting when $\Sigma = I_p$ are summarized in Figure~\ref{fig:summary-svp-benign2} and Table~\ref{tb:summary-svp-benign2} 

We note that $b\geq 1+2a$ corresponds to the weak signal regime $p\gtrsim n\|\bs \mu\|^2$. While \cite{wang2022binary}'s result is sharper in the weak signal regime taking advantage of the special properties of isotropic Gaussian distribution, our results allows the strong signal regime (when $b < 1+2a$).

\def\Amax{2}
\def\Bmax{6}

\begin{figure}[H]
\begin{tikzpicture}
\begin{axis}[
  axis lines=middle,
  axis line style={->},
  xmin=0, xmax=\Amax,
  ymin=1, ymax=\Bmax,
  width=13.5cm, height=8cm,
  clip=true,
  legend style={
  at={(0.5,-0.18)},
  anchor=north,
  draw=none,
  fill=none,
  },
  legend columns=2,
  xlabel={$a$},
  ylabel={$b$},
  xlabel style={at={(axis description cs:1,0)},anchor=north west,xshift=2pt,yshift=-2pt},
  ylabel style={at={(axis description cs:0,1)},anchor=south east,xshift=-3pt,yshift=2pt},
  after end axis/.code={
  \node[anchor=east] at (axis cs:-0.015,1) {$1$};
}
]

% Key value a = 1/4
\def\aCut{0.25}

% --- Define named paths for boundaries
% Top boundary
\addplot[name path=Top, domain=0:\Amax, draw=none] {\Bmax};
% b = 2
\addplot[name path=Btwo, domain=0:\Amax, draw=none] {2};
% b = 1 + 4a
\addplot[name path=Lfour, domain=0:\Amax, draw=none] {1 + 4*x};
% b = 1 + 2a
\addplot[name path=Ltwo, domain=0:\Amax, draw=none] {1 + 2*x};

% =========================
% (IV-a): between b=1+4a and b=2 for a in [0, 1/4]
\addplot[gray!60, opacity=0.75]
  fill between[of=Btwo and Lfour, soft clip={domain=0:\aCut}];

% (IV), piece 1: between b=1+2a and b=1+4a for a in [0, 1/4]
\addplot[gray!60, opacity=0.45]
  fill between[of=Lfour and Ltwo, soft clip={domain=0:\aCut}];

% (V), piece 2: between b=1+2a and b=2 for a in [1/4, 1/2]
\addplot[gray!60, opacity=0.45]
  fill between[of=Btwo and Ltwo, soft clip={domain=\aCut:0.5}];

% Emphasize the a=0, b>1 part as a thick segment on the y-axis
\addplot[very thick, gray!70, domain=1:\Bmax] ({0},{x});

% =========================
% ----------------------------
% (II) Red ∩ Gray:
% For a in [1/4, 1/2], ALL of red is already in gray (since 1+2a <= 2)
\addplot[red!70, opacity=0.55]
  fill between[of=Lfour and Btwo, soft clip={domain=\aCut:0.5}];

% For a in [1/2, \Amax], the gray threshold is above 2, so intersection is b in [1+2a, 1+4a]
\addplot[red!70, opacity=0.55]
  fill between[of=Lfour and Ltwo, soft clip={domain=0.5:\Amax}];

% (III) Red only:
% Exists ONLY for a >= 1/2: b in [2, 1+2a]
\addplot[red!70, opacity=0.25]
  fill between[of=Ltwo and Btwo, soft clip={domain=0.5:\Amax}];

% ----------------------------

% =========================
% (1) BLUE: a>=0, b>=2, b > 1+4a
% Piece 1: a in [0, 1/4], between Top and b=2
\addplot[blue!70, opacity=0.35]
  fill between[of=Top and Btwo, soft clip={domain=0:\aCut}];

% Piece 2: a in [1/4, Amax], between Top and (1+4a)
\addplot[blue!70, opacity=0.35]
  fill between[of=Top and Lfour, soft clip={domain=\aCut:\Amax}];

% --- Draw boundary lines on top (so they’re visible)
\addplot[thick, domain=0:\Amax] {2}
  node[
    pos=0.98,
    anchor=east,   % text goes LEFT of the point
    xshift=-2pt,
    yshift=5pt     % optional: lift slightly above the line
  ] {$b=2$};

\addplot[thick, domain=0:\Amax] {1+4*x} node[pos=0.60, anchor=north west] {$b=1+4a$};
\addplot[thick, domain=0:\Amax] {1+2*x} node[pos=0.85, anchor=north west] {$b=1+2a$};

% Mark (1/4,2)
\addplot[only marks, mark=*, mark size=1.5pt] coordinates {(0.25,2)};
\node[anchor=south] at (axis cs:0.25,2) {\small $(\tfrac14,2)$};

% ---- Region labels (plain text, no boxes)
\node at (axis cs:0.4,5.0) {\textbf{(I)}};   % Blue ∩ Gray
\node at (axis cs:1.0,4.0)  {\textbf{(II)}};  % Red ∩ Gray
\node at (axis cs:1.4,3.0)  {\textbf{(III)}}; % Red only
\node at (axis cs:0.08,1.7)  {\textbf{(IV)}};  % Gray only
\node at (axis cs:0.25,1.7)  {\textbf{(V)}};  % Gray only
\node at (axis cs:1.0,1.5)  {\textbf{(VI)}};  

\end{axis}
\end{tikzpicture}
\caption{Summary of existing results on the noisy isotropic GMM. (I) - (V) on  the $(a,b)$-plane are the regions covered by this paper and the existing literature: $\text{\textbf{(I)}}  =\{a\geq 0, b\geq 2, b \geq 1 + 4a\}$, $\text{\textbf{(II)}} =\{a\geq 0, b\geq 2, 1 + 2a \leq b < 1 + 4a\}$, $\text{\textbf{(III)}} =\{2\leq b < 1 + 2a\}$, $\text{\textbf{(IV)}} =\{a\geq 0, 1 + 4a \leq b< 2\}$, $\text{\textbf{(V)}} =\{a\geq 0, 1+2a \leq b < \min\{1 + 4a, 2\}\}$. 
}
\label{fig:summary-svp-benign2}
\end{figure}

\begin{table}[H]
\caption{Conditions of $\bs{\hat w} = \bs w_{\rm LS}$ and Benign Overfitting in the noisy isotropic GMM.}
\label{tb:summary-svp-benign2}
\begin{center}
\resizebox{\textwidth}{!}{%
\begin{tabular}{c|c|c}
& $\bs{\hat w} = \bs w_{\rm LS}$ & Benign Overfitting \\ \toprule
\textbf{(I)} & Holds (\textbf{This paper}, \cite{wang2022binary}) & \textbf{Fails (This paper)} \\ \midrule
\textbf{(II)} & Holds (\textbf{This paper}, \cite{wang2022binary}) & Holds (\textbf{This paper}, \cite{wang2022binary}) \\ \midrule
\textbf{(III)} & \textbf{Holds (This paper)} & \textbf{Holds (This paper)} \\ \midrule
\textbf{(IV)} & Holds(\cite{wang2022binary}) & Unknown \\ \midrule
\textbf{(V)} & Holds(\cite{wang2022binary}) & Holds(\cite{wang2022binary}) \\ \midrule
\textbf{(VI)} & Unknown & Unknown \\
\bottomrule
\end{tabular}
}
\end{center}
\end{table}

\section{Comparison with \cite{tsigler2025benign}}

In this section, we provide comparison with the concurrent work \cite{tsigler2025benign} which was posted on arxiv after we posted the initial version of our paper. We start by introducing the basic observational model in \cite{tsigler2025benign}. 
\begin{enumerate}[label=(T), ref=(T)]
\item \label{model:T} Suppose in model \ref{model:M}, $\bs z = \Sigma^{1/2}\bs \xi$, where $\Sigma = \rm{diag}(\lambda_1, \ldots, \lambda_p)$ and $\bs \xi$ is an isotropic random vector.
\end{enumerate}

Before proceeding to the comparison, we note that there are several major differences between their setting and ours which make a full comparison impossible. We will now describe the main differences, and subsequently focus on specific scenarios where our and their results can be compared in a meaningful way.

First, we note that their primary object of interest is different from ours. Specifically, \cite{tsigler2025benign} study the ridge regression solution:
\begin{equation}\label{eq:ridge}
    \bs{w}_\textrm{ridge}:= X^\top (XX^\top + \lambda I_n)^{-1}\bs y_\n,
\end{equation}
where $\lambda >0$ is a regularization parameter. 
All results in \cite{tsigler2025benign} are derived for $\bs{w}_\textrm{ridge}$, which does not always coincide with the max margin classifier even when $\lambda = 0$. This means that they do not study conditions which ensure that $\bs{\hat w} = \bs w_{\rm LS}$. Establishing the latter is a substantial part of our analysis that leads to many of our conditions.

Second, \cite{tsigler2025benign} provide their results under the assumption that certain events introduced in Definitions~\ref{def:eventA},~\ref{def:eventB} below hold. We also follow this approach with different events $E_i$ defined in~\eqref{eq:E1}--\eqref{eq:E5}. A substantial part of our work is dedicated to proving that events $E_1$--$E_5$ hold with high probability in model~\ref{model:EM}, while \cite{tsigler2025benign} comment on their events more briefly and do not provide a formal verification in the heavy-tailed case.

Third, the results in \cite{tsigler2025benign} are stated in terms of the quantity $k^*$ defined by \eqref{eq:k*}. In this respect, \cite{tsigler2025benign} is similar to the work of \cite{minsker2025classification} and more general than ours. We assume $\Tr(\Sigma) \gtrsim n\|\Sigma\|$ under both models \ref{model:EM} and \ref{model:sG}, which implies $k^* = 0$. For this reason, the comparisons below will focus on the case $k^* = 0$.

Fourth, \cite{tsigler2025benign} provide bounds on the quantity $\frac{\|\bs{w}_{\rm LS}\|_\Sigma}{\langle \bs{w}_{\rm LS}, \bs \mu \rangle}$ rather than on the classification error. This quantity is also a key ingredient in our analysis of the classification error, so below we will compare the bounds that we obtain for this quantity in Lemma~\ref{noiseless-mmc-bound1}, \ref{noiseless-mmc-bound2}, and \ref{noisy-mmc-bound1} with the bounds from \cite{tsigler2025benign} in the case $k^* = 0$, $\lambda = 0$.

\bigskip

For the reader's convenience, we recall some key notation and results from \cite{tsigler2025benign}. We first introduce the following events. Note that our definition of $Z$ differs from theirs. %To state their results, we need to introduce two events.

\begin{defn}\label{def:eventA}
Suppose model~\ref{model:T}. For any $C\geq 1$, we denote by $\mathcal{A}(C)$ the event 
\[
\mathcal{A}(C) := \left\{\frac{1}{C}\Tr(\Sigma) \leq \lambda_{min}(Z Z^\top) \leq \lambda_{max}(Z Z^\top) \leq C \Tr(\Sigma) \right\}.
\]
\end{defn}

\begin{defn}\label{def:eventB}
Suppose model~\ref{model:T}. For any $C_B > 0$, denote by $\mathcal{B}(C_B)$ the event on which all of the following hold:
\begin{enumerate}
    \item $\|Z\bs \mu\|^2 \leq C_B n\|\bs \mu\|_\Sigma^2$, 
    \item $\Tr(Z\Sigma Z^\top) \leq C_B n\|\Sigma\|_F^2$, 
    \item $\|Z\Sigma Z^\top\| \leq C_B (\|\Sigma\|_F^2 + n\|\Sigma\|^2)$.
\end{enumerate}
\end{defn}

We note that some of the results in \cite{tsigler2025benign} are established under the events above. In this sense their approach is similar to ours, where the probabilistic and geometric analysis are separated. A direct comparison of our events $E_1$--$E_5$ and the events above is difficult since different aspects of the problem are captured. Compare for instance our event $E_2$ with their part 1 of Definition~\ref{def:eventB}. In our event, only the angles between $\bm z_i$ and $\bs \mu$ play a role, whereas the event in \cite{tsigler2025benign} also puts restriction on the norms of $\bm z_i$. This is also apparent in their event $\mathcal{A}$: if $\Sigma$ is the identity and all $\bm z_i$ are exactly orthogonal, this event would require $\|\bm z_i\|$ to all be of order $p$ while our assumptions on the norms of $\bm z_i$ are much milder. One can thus argue that our events have a more intuitive geometric interpretation.

Next, we comment on the types of bounds which \cite{tsigler2025benign} and we obtain. To this end we need to introduce the following amount which depends on the noise rate $\eta$:
\[
\sigma_\eta = \left(\ln \frac{3+\eta^{-1}}{2}\right)^{-1}.
\]

Below we state two key results from \cite{tsigler2025benign} that are closest to ours.

\begin{thm}[Theorem~8, \cite{tsigler2025benign} when $k^* = 0$]\label{thm:tsiglerupper}
Suppose in model \ref{model:M}, $\bs z = \Sigma^{1/2}\bs \xi$, where $\Sigma = \rm{diag}(\lambda_1, \ldots, \lambda_p)$ and $\bs \xi$ is an isotropic random vector. 
For any $C_B >0$ and $C>1$, there exists a constant $c$ that only depends on $c_B$ and $C$, such that the following holds.

Assume that $\eta < c^{-1}$, $\Tr(\Sigma) > c\max\{n\|\Sigma\|, \sqrt{n}\|\Sigma\|_F^2\}$, and $\|\bs \mu\|^2 > \frac{2c^2 t}{\sqrt{n}}\|\bs \mu\|_\Sigma$. Then for any $t \in (0, \sqrt{n}/c)$, conditionally on the event $\mathcal{A}(C)\cap \mathcal{B}(c_B)$, with probability at least $1 - ce^{-t^2/2}$ over the draw of $(\bs y, \bs y_\n)$, the following inequality holds:
\[
\frac{\|\bs{w}_{\rm LS}\|_\Sigma}{\langle \bs{w}_{\rm LS}, \bs \mu \rangle} \leq 4c^2\frac{(\Tr(\Sigma) + \sigma_\eta n\|\bs \mu\|^2)(\sqrt{n}\|\Sigma\|_F + \sqrt{n}t\|\Sigma\|) + n\|\bs \mu\|_\Sigma\Tr(\Sigma)}{n\|\bs \mu\|^2 \Tr(\Sigma)}.
\]
\end{thm}

In the noiseless case ($\eta =0$), we note that the upper bound in Theorem~\ref{thm:tsiglerupper} gives
\begin{equation}\label{eq:tsiglernoiselessupper}
\left(\frac{\|\bs{w}_{\rm LS}\|_\Sigma}{\langle \bs{w}_{\rm LS}, \bs \mu \rangle}\right)^2 \lesssim \frac{\|\Sigma\|_F^2 + t^2\|\Sigma\|^2 + n\|\bs \mu\|_\Sigma^2}{n\|\bs \mu\|^4}.    
\end{equation}

In the noisy case ($\eta > 0$), we note that the upper bound gives
\begin{equation}\label{eq:tsiglernoisylessupper}
\left(\frac{\|\bs{w}_{\rm LS}\|_\Sigma}{\langle \bs{w}_{\rm LS}, \bs \mu \rangle}\right)^2 \lesssim \left(\frac{1}{n\|\bs \mu\|^2} + \sigma_\eta \frac{1}{\Tr(\Sigma)}\right)^2 (n\|\Sigma\|_F^2 + nt^2\|\Sigma\|^2) + \frac{\|\bs \mu\|_\Sigma^2}{\|\bs \mu\|^4}
\end{equation}

A lower bound is only obtained for the noiseless case with additional distributional assumption:

\begin{thm}[Theorem~9, \cite{tsigler2025benign} when $k^* = 0$]
Suppose that, in model \ref{model:M}, $\bs z = \Sigma^{1/2}\bs \xi$, where $\Sigma = \rm{diag}(\lambda_1, \ldots, \lambda_p)$ and $\bs \xi$ is an isotropic sub-Gaussian random vector with $\|\bs \xi\|_{\psi_2}\leq L$. For any $C>1$, there are large constants $a, c$ that only depend on $L$ and $C$ and an absolute constant $c_y$ such that the following holds. 

Suppose that $\Tr(\Sigma) > c\left(n\|\Sigma\| + \sqrt{n}\|\Sigma\|_F\right)$, the distribution of $Z$ is symmetric, and $\|\bs \mu\|^2 \geq \frac{1}{a\sqrt{n}}\|\bs \mu\|_\Sigma$. Then for any $t \in (0, \sqrt{n}/c_y)$, the probability of the event
\begin{equation}\label{eq:tsiglernoiselesslower}
\left\{\frac{\|\bs{w}_{\rm LS}\|_\Sigma}{\langle \bs{w}_{\rm LS}, \bs \mu \rangle} \geq \frac{\sqrt{n}\|\Sigma\|_F + n\|\bs \mu\|_\Sigma}{\sqrt{2}c(1+t)n\|\bs \mu\|^2}\right\}    
\end{equation}
is at least
\[
(c_y^{-1} - c_ye^{-t^2/c_y} - c_ye^{-n/c})_+ (\P(\mathcal{A}(C)) - e^{-n/c})_+,
\]
where $u_+ = \max\{u, 0\}$ for any $u\in \R$.
\end{thm}

The lower bound takes the form,
\begin{equation}\label{eq:tsiglernoiselesslower}
\left(\frac{\|\bs{w}_{\rm LS}\|_\Sigma}{\langle \bs{w}_{\rm LS}, \bs \mu \rangle}\right)^2 \gtrsim \frac{\|\Sigma\|_F^2 + n\|\bs \mu\|_\Sigma^2}{(1+t)^2n\|\bs \mu\|^4}.    
\end{equation}

\bigskip

Before proceeding with the comparison, we once again emphasize that the setting and objectives in our work and in \cite{tsigler2025benign} are quite different and thus the comparison below is only about very specific aspects of both works.

First, note that our Lemma~\ref{noiseless-mmc-bound1}, \ref{noiseless-mmc-bound2}, and \ref{noisy-mmc-bound1} provide matching (up to constants) upper and lower bounds for a slightly different quantity: we consider $\Big(\frac{\|\hat{\bs{w}}\|}{\langle \hat{\bs{w}}, \bs \mu \rangle}\Big)^2$.

When $\Sigma$ is the identity and $\hat{\bs{w}} = \bs{w}_{\rm{LS}}$, the quantity we bound coincides with the quantity studied by \cite{tsigler2025benign}. In general, we have
\[
\lambda_{min} \left(\frac{\|\bs{\hat w}\|}{\langle \bs{\hat w}, \bs \mu \rangle}\right)^2 
\lesssim 
\left(\frac{\|\bs{\hat w}\|_\Sigma}{\langle \bs{\hat w}, \bs \mu \rangle}\right)^2 
\lesssim 
\lambda_{max} \left(\frac{\|\bs{\hat w}\|}{\langle \bs{\hat w}, \bs \mu \rangle}\right)^2.
\]
When $\lambda_{max}/\lambda_{min} = O(1)$, our results also provide upper and lower bounds that match up to constant factors under the completely general model \ref{model:M} on certain events. In contrast, the analysis in \cite{tsigler2025benign} only gives lower bounds under much stronger conditions than upper bounds and only considers lower bounds in the noiseless case. In addition, their lower bound does not necessarily hold with high probability. However, their upper bounds hold under different conditions than ours and for a different quantity. Even when $\hat{\bs w} = \bs w_{\rm{LS}}$, there are many settings that are covered by our results but not by \cite{tsigler2025benign} and vice versa.

We now compare the corresponding upper and lower bounds in our work (Lemma~\ref{noiseless-mmc-bound1}, \ref{noiseless-mmc-bound2}, and \ref{noisy-mmc-bound1}), which are given for our most general model \ref{model:M}. 

Under the assumptions of any one of Lemma~\ref{noiseless-mmc-bound1} or \ref{noiseless-mmc-bound2} or \ref{noisy-mmc-bound1} we have
\begin{equation*}
\begin{aligned}
\frac{\lambda_{min}(\Sigma)}{(1-2\eta)^2}\left\{\eta n\rho + \frac{1}{\|\bs \mu\|^2} + \frac{1}{n\rho\|\bs \mu\|^4}\right\} \lesssim \left(\frac{\|\bs{\hat w}\|_\Sigma}{\langle \bs{\hat w}, \bs \mu \rangle}\right)^2  \lesssim \frac{\lambda_{max}(\Sigma)}{(1-2\eta)^2}\left\{\eta n\rho + \frac{1}{\|\bs \mu\|^2} + \frac{1}{n\rho\|\bs \mu\|^4}\right\}.    
\end{aligned}
\end{equation*}
If additionally $g \equiv 1$ and the norms of $\|\bs z_i\|$, we have $\rho = 1/\Tr(\Sigma)$, see Lemma~\ref{lemm5-ex1} for specific conditions that ensure this under model \ref{model:EM}. When $\Sigma = I_p$, we see that our upper bound matches the upper bounds of~\cite{tsigler2025benign} in~\eqref{eq:tsiglernoiselessupper} and~\eqref{eq:tsiglernoisylessupper} up to constants in both, the noisy and the noiseless case. This bound also matches results in \cite{minsker2025classification} in the isotropic noiseless model.
When there is a substantial deviation between $\lambda_{\max}$ and $\lambda_{min}$, the bounds differ. In this case,~\eqref{eq:tsiglernoiselessupper} and~\cite{minsker2025classification} capture some aspects of the alignment between $\bs \mu$ and $\Sigma$ that our analysis does not capture. It would be an interesting future direction to obtain sufficient conditions for $\bs{\hat w} = \bs w_{\rm LS}$ which allow for $k^* >0$ and a matching lower bound which holds in more generality.

\section{Simulation results}

In this section, we present simulations that illustrate our main results: (i) phase transition between the weak and signal regime and (ii) theoretical benign overfitting threshold.

\subsection{Isotropic Gaussian Mixtures}

All the simulation results here are for isotropic Gaussian mixtures with $\|\bs \mu\| = n^a$ and $p = n^b$. In this setting, the phase transition between the weak signal regime $a < \frac{b-1}{2}$ and the strong signal regime $a > \frac{b-1}{2}$ occurs at $a = \frac{b-1}{2}$.
Benign overfitting provably fails if $a \leq \frac{b-1}{4}$ (see the last paragraph of Section~\ref{subsec:main-noisy}). 

For each value of $n$, a linear classifier is trained using gradient descent \eqref{eq:gdsc}. Gradient descent is run until directional convergence, defined as stabilization of the normalized iterate $\bs w_t/\|\bs w_t\|$, which is detected numerically by monitoring the cosine similarity between consecutive normalized iterates. This stopping criterion detects stabilization of the classifier direction, corresponding to convergence toward the max-margin solution, while maintaining computational efficiency. The experiment is repeated independently 10 times, and the empirical test errors are averaged across repetitions. 

Figure~\ref{fig:simulation-gmm-b12} shows the simulation results for $b=1.2$ with different values of $a$, which confirm that benign overfitting occurs if $a=0.2$ (under strong signal regime) while fails if $a = 0.05$. 

Similarly, Figure~\ref{fig:simulation-gmm-b18} shows the results for $b = 1.45$ with different values of $a$, which confirm that benign overfitting occurs if $a = 0.2$ (weak signal regime) and fails if $a = 0.1125$.

\begin{figure}[H]
    \centering
    \includegraphics[width=1.0\linewidth]{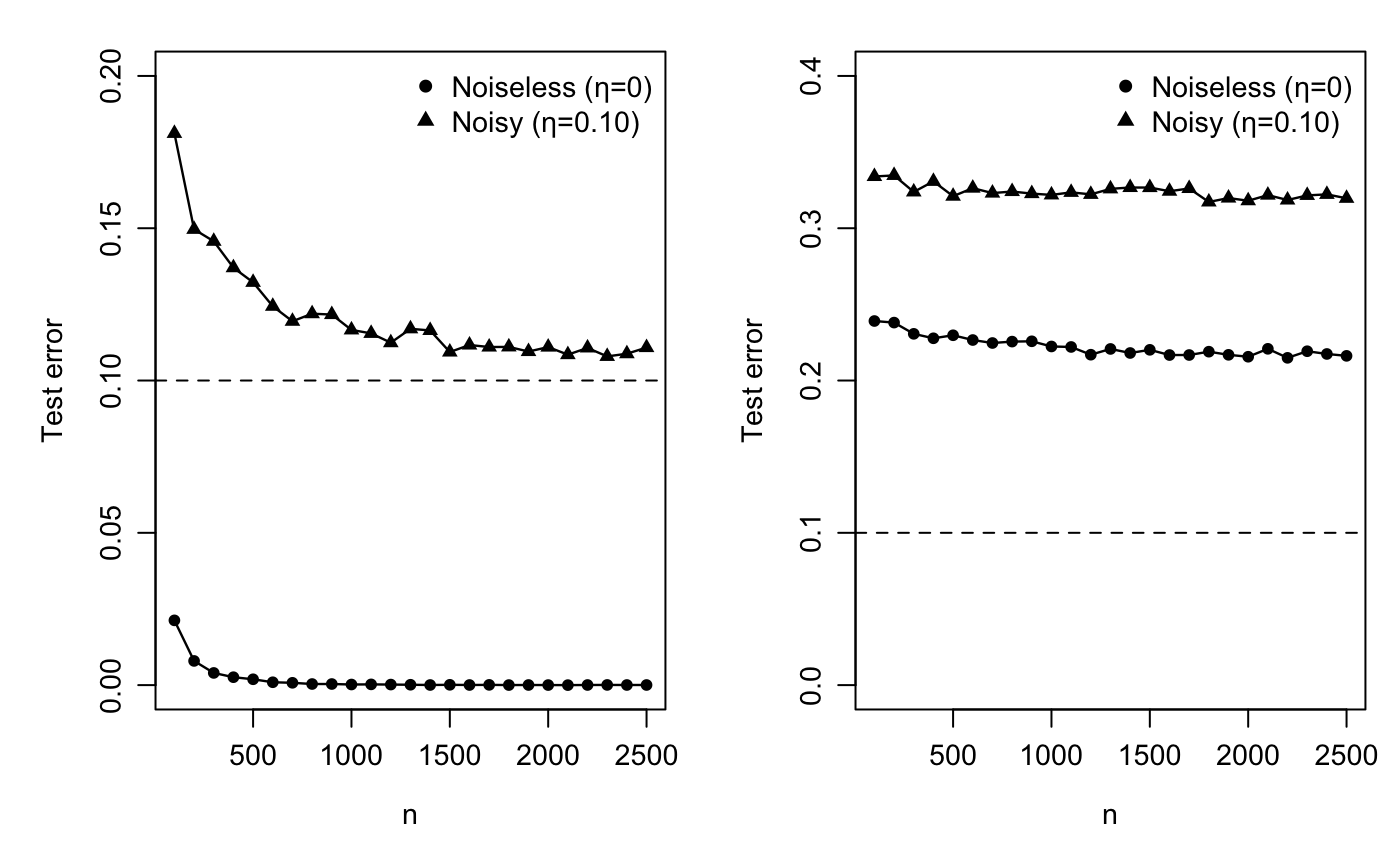}
    \caption{Empirical test error for $b = 1.2$ with different values of $a$: $a= 0.2$ (left) and $a=0.05$ (right). When $a = 0.2$, the model is in the strong signal regime, and $\bs{\hat w}$ is expected to behave differently between the noiseless and noisy case. This difference appears in the theoretical upper test error bounds which are in the form of $C\exp(-c n^{0.4})$ for the noiseless case and $0.1 + C\exp(-cn^{0.2})$ for the noisy case where the constants $C,c$ can differ between the noisy and noiseless case. The simulation result confirms faster decay in the noiseless case as expected by the theoretical bounds. When $a = 0.05$, benign overfitting is expected to fail.}
    \label{fig:simulation-gmm-b12}
\end{figure}

\begin{figure}[H]
    \centering
    \includegraphics[width=1.0\linewidth]{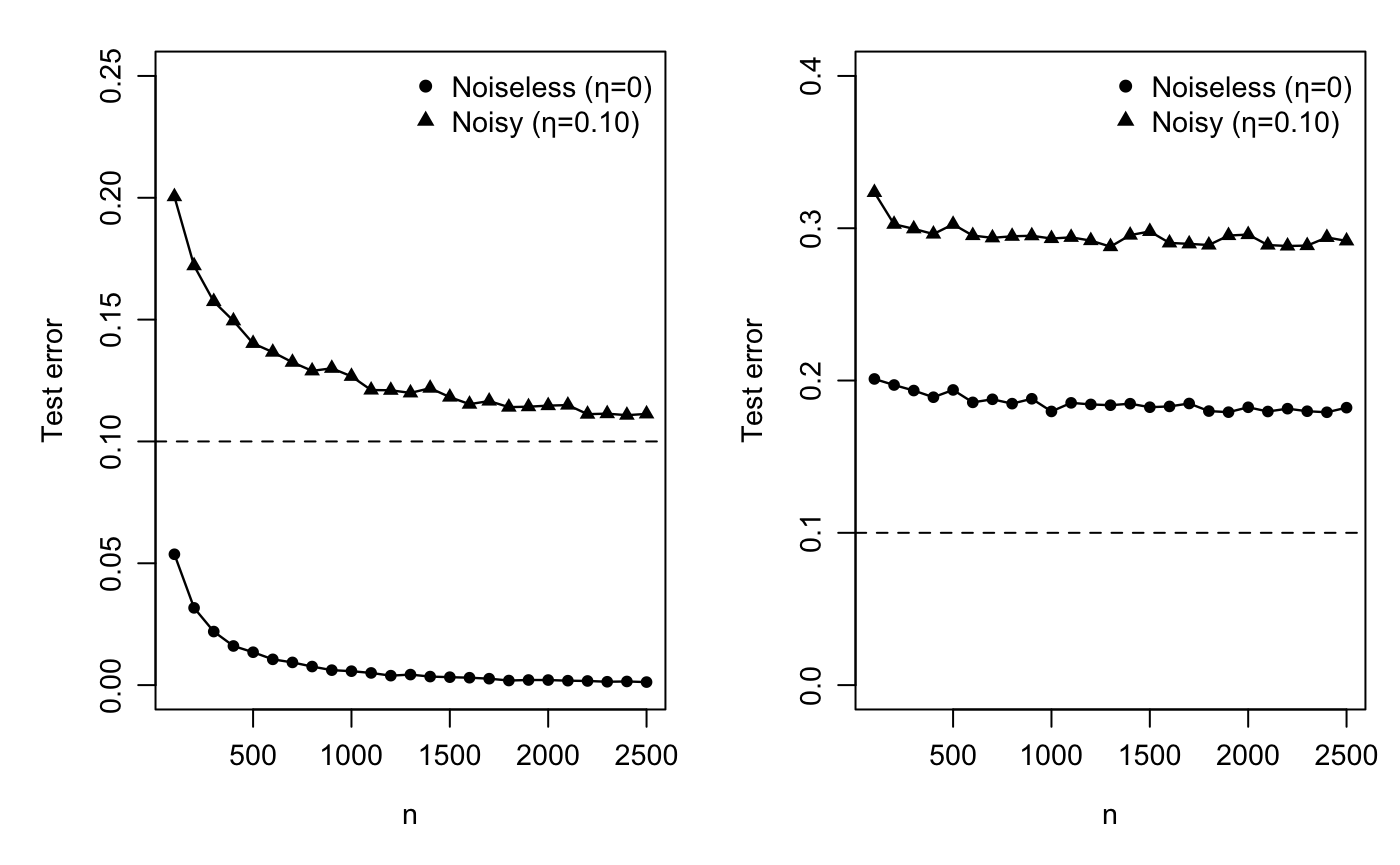}
    \caption{Empirical test error for $b = 1.45$ with different values of $a$: $a= 0.2$ (left) and $a=0.1125$ (right). When $a = 0.2$, the model is in the weak signal regime, and the theoretical upper test error bounds are in the form of $\eta + C\exp(-c n^{0.35})$ where the constants $c,C$ can differ between the noisy and noiseless case. 
    When $a = 0.1125$, benign overfitting is expected to fail.}
    \label{fig:simulation-gmm-b18}
\end{figure}

\newpage

\subsection{Heavy tailed Mixtures}

In this subsection, we present simulation results under a heavy-tailed mixture model. We consider model \ref{model:EM} with $\Sigma = I_p$ and $\xi_i$ follow the Student's $t$-distribution scaled by $\tfrac{1}{4}$ with degrees of freedom $df = r + 0.001$, i.e. $r$-th moment is finite while $(r+0.001)$-th moment diverges. 

Figure~\ref{fig:simulation-heavy} shows the experimental results for $a=0.3, b = 1.8$ (weak signal regime) with varying  values of $r$. For $r = 2.1, 2.4, 3.0$, we confirm that benign overfitting occurs as expected by our main results. On the contrary, benign overfitting seems to fail with $r=1.8, 1.9$. This result seems to indicate that the finite moment condition $r>2$ is necessary for benign overfitting and cannot be relaxed further. 

\begin{figure}[H]
    \centering
    \includegraphics[width=1.0\linewidth]{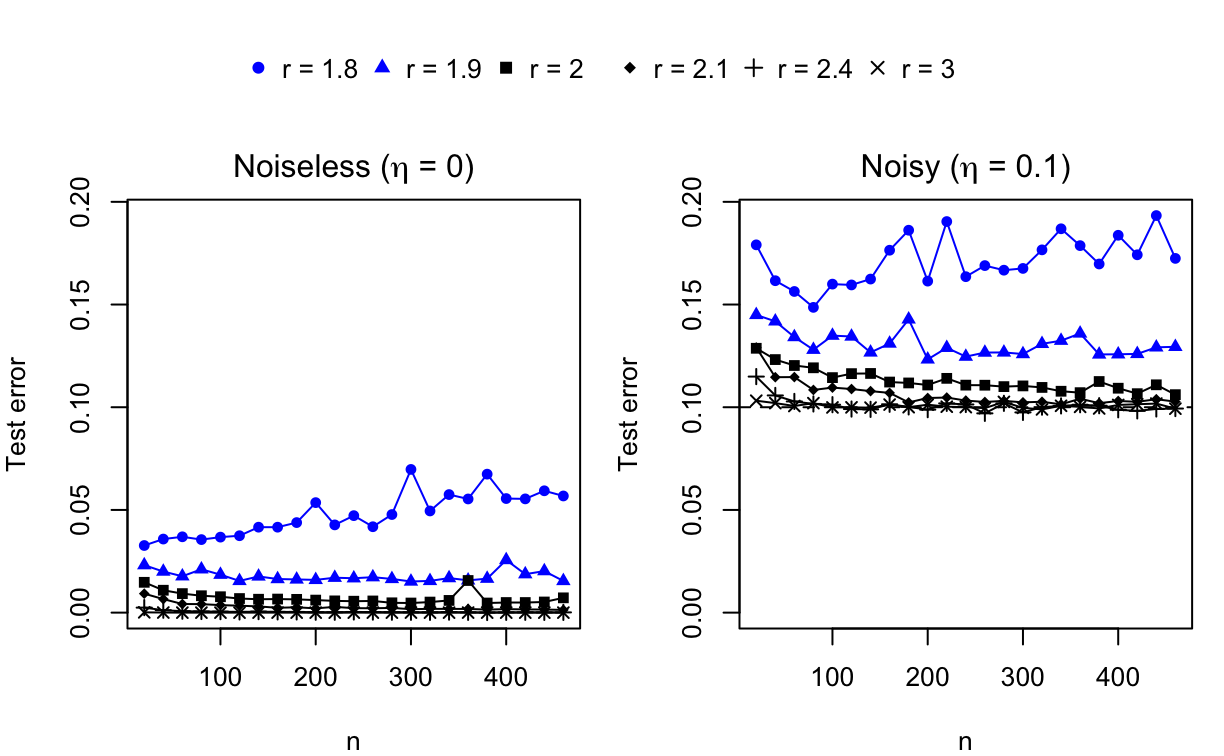}
    \caption{Empirical test error for the noiseless model (left) and noisy model (right) with $a = 0.3, b = 1.8$, and varying values of $r$. When $r > 2$, we confirm that the empirical test error decreases toward $\eta$ as $n$ increases. When $r< 2$ we see that both the noisy and noiseless model fail to generalize well.}
    \label{fig:simulation-heavy}
\end{figure}

\end{document}